\newtheorem{theorem}{Theorem}[section]
\newtheorem{definition}[theorem]{Definition}
\setlist{leftmargin=25pt,labelindent=25pt}
\setlist[enumerate]{align=left}
\newcommand{\beq}{\begin{equation}}
\newcommand{\eeq}{\end{equation}}
\newcommand\norm[1]{\left\lVert#1\right\rVert}
\renewcommand\vec[1]{\operatorname{vec}#1}
\newcommand\E{\mathbb{E}}
\newcommand\I{\mathbb{I}}
\newcommand\R{\mathbb{R}}
\newcommand{\bbr}{\mathbb{R}}
\newcommand{\bzero}{\mathbf{0}}
\renewcommand{\v}{\mathbf{v}}
\newcommand{\x}{\mathbf{x}}
\newcommand{\z}{\mathbf{z}}
\renewcommand{\I}{\mathbb{I}}
\newcommand{\cL}{{\cal L}}
\newcommand{\cN}{{\cal N}}
\newcommand{\cX}{{\cal X}}
\newcommand{\cY}{{\cal Y}}
\newcommand{\cF}{{\cal F}}
\newcommand{\cO}{{\cal O}}
\newcommand{\bI}{\mathbf{I}}
\newcommand{\bh}{\mathbf{h}}
\newcommand{\bH}{\mathbf{H}}
\DeclareMathOperator{\arginf}{arginf}
\DeclareMathOperator*{\argmin}{argmin}
\DeclareMathOperator{\euc}{Euc}
\DeclareMathOperator{\frob}{Frob}
\DeclareMathOperator{\spec}{Spec}
\DeclareMathOperator{\poly}{poly}
\DeclareMathOperator{\ntk}{NTK}
\DeclareMathOperator{\sq}{\text{Sq}}
\DeclareMathOperator{\kl}{\text{KL}}
\DeclareMathOperator{\rg}{\text{reg}}
\DeclareMathOperator{\reg}{\text{Reg$_{\text{CB}}$}}
\newtheorem{asmp}{Assumption}
\theoremstyle{definition} 
\newtheorem{remark}{Remark}[section]
\newcommand {\commentout}[1] {}
\newcommand{\hy}{{\hat{y}}}
\def\ints{{{\rm Z} \kern -.35em {\rm Z} }}  % ints
\def\smallints{{{\rm Z} \kern -.3em {\rm Z} }}  % small ints
\def\pints{{{\rm I} \kern -.15em {\rm N} }}      % pints
\newcommand{\reals}{\mathbb R}
\def\cplx{{{\rm I} \kern -.45em {\rm C} }}       % complex
\def\l2{\rm {\mathcal L}^{2}(\reals)}            % l2
\newcommand{\be}{\begin{eqnarray}}
\newcommand{\ee}{\end{eqnarray}}
\newcommand{\bea}{\begin{eqnarray}}
\newcommand{\eea}{\end{eqnarray}}
\newcommand{\beaa}{\begin{eqnarray*}}
\newcommand{\eeaa}{\end{eqnarray*}}
\newcommand{\bnad}{\begin{nad}}
\newcommand{\enad}{\end{nad}}
\newcommand{\veps}{\varepsilon}
\newcommand{\bbR}{\mathbb{R}}
\newcommand{\IGNORE}[1]{}
\newcommand{\RegCB}{\mathtt{Reg}_\mathtt{CB}}
\newcommand{\set}[1] {\{ #1\}}
\newcommand{\coloneqq}{\mathrel{\mathop:}=}
\let\strokeL\L
\DeclareRobustCommand{\L}{\ifmmode\mathbf{L}\else\strokeL\fi}
\newcommand{\ball}{B_{\rho, \rho_1}^{\spec}({\theta_0})}
\newcommand{\Rsq}{\text{R}_{\sq}}
\newcommand{\Rsqt}{\tilde{\text{R}}_{\sq}}
\newcommand{\Rl}{\text{R}}
\newcommand{\Rkl}{\text{R}_{\kl}}
\newcommand{\Rklt}{\tilde{\text{R}}_{\kl}}
\newcommand{\bnucb}{\text{B}_{\text{NUCB}}}
\newcommand{\bnts}{\text{B}_{\text{NTS}}}
\newcommand{\hatcL}{{\hat{\ell}}}
\newcommand{\balleuc}{B_{L\rho+\rho_1}^{\euc}({\theta_0})}
\newcommand{\ballfrob}{B^{\frob}_{\rho,\rho_1}(\theta_0)}
\newcommand{\lossonline}[1]{\ell(y_t,f(#1;\x_t))}
\newcommand{\lossonlineg}[1]{\ell(y_t,g(#1;\x_t))}
\newcommand{\epsvec}{\bm{\veps}}
\newcommand{\ftilde}[1]{\tilde{f}(#1,\x_{t},\bm{\veps})}
\newcommand{\losstildesq}[1]{\ell_{\sq}\big(y_t,\tilde{f}(#1,\x_{t},\bm{\veps})\big)}
\newcommand{\losssq}[1]{\ell_{\sq}\big(y_t,{f}(#1,\x_{t})\big)}
\title{\huge Contextual Bandits with Online Neural Regression}
\author{%
  Rohan Deb \\
  University of Illinois Urbana-Champaign\\
  \texttt{rd22@illinois.edu}
  \and
  Yikun Ban \\
  University of Illinois Urbana-Champaign\\
  \texttt{yikunb2@illinois.edu}
  \and
  Shiliang Zuo\\
  University of Illinois Urbana-Champaign\\
  \texttt{szuo3@illinois.edu} \\
  \and
  Jingrui He\\
  University of Illinois Urbana-Champaign\\
  \texttt{jingrui@illinois.edu}
  \and
  Arindam Banerjee \\
  University of Illinois Urbana-Champaign\\
  \texttt{arindamb@illinois.edu}
}
\date{}
\begin{document}

\maketitle

\begin{abstract}
Recent works have shown a reduction from contextual bandits to online regression under a realizability assumption \citep{foster2020beyond,foster2021efficient}. 
In this work, we investigate the use of neural networks for such online regression and associated Neural Contextual Bandits (NeuCBs).
Using existing results for wide networks, one can readily show a  ${\mathcal{O}}(\sqrt{T})$ regret for online regression with square loss, which via the reduction implies a ${\mathcal{O}}(\sqrt{K} T^{3/4})$ regret for NeuCBs.
Departing from this standard approach, we first show a $\mathcal{O}(\log T)$ regret for online regression with almost convex losses that satisfy QG (Quadratic Growth) condition, a generalization of the PL (Polyak-\L ojasiewicz) condition, and that have a unique minima.
Although not directly applicable to wide networks since they do not have unique minima, we show that adding a suitable small random perturbation to the network predictions surprisingly makes the loss satisfy QG with unique minima. 
Based on such a perturbed prediction, we show a ${\mathcal{O}}(\log T)$ regret for online regression with both squared loss and KL loss, and subsequently convert these respectively to $\tilde{\mathcal{O}}(\sqrt{KT})$ and $\tilde{\mathcal{O}}(\sqrt{KL^*} + K)$ regret for NeuCB, where $L^*$ is the loss of the best policy.
Separately, we also show that existing regret bounds for NeuCBs are $\Omega(T)$ or assume i.i.d. contexts, unlike this work. Finally, our experimental results on various datasets demonstrate that our algorithms, especially the one based on KL loss, persistently outperform existing algorithms.
\end{abstract}

\section{Introduction}
\label{sec:intro}
Contextual Bandits (CBs) provide a powerful framework for sequential decision making problems, where a learner takes decisions over $T$ rounds based on partial feedback from the environment. 
At each round, the learner is presented with $K$ context vectors to choose from, and a scalar output is generated based on the chosen context. 
The objective is to minimize\footnote{We use the loss formulation instead of rewards.} the accumulated output in $T$ rounds.
Many existing works assume that the expected output at each round depends linearly on the chosen context. 
This assumption has enabled tractable solutions, such as UCB-based approaches~\citep{chu2011contextual, abbasi2011improved} and Thompson Sampling~\citep{agrawal2013thompson}. 
However, in many real-world applications, the output function may not be linear, rendering these methods inadequate.
Recent years have seen progress in the use of neural networks for contextual bandit problems \citep{zhou2020neural_ucb, zhang2021neural_ts} by leveraging the representation power of overparameterized models, especially wide networks ~\citep{allen2019convergence, cao2019generalization,du2019gradient,arora2019exact}. 
These advances focus on learning the output function in the Neural Tangent Kernel (NTK) regime and drawing on results from the kernel bandit literature~\citep{valko2013finite}. 

Separately,~\citep{foster2020beyond} adapted the inverse gap weighting idea from \citet{abe1999associative,abe2003reinforcement} and gave an algorithm (SquareCB)
that relates the regret of CBs, $\reg(T)$ to the regret of online regression with square loss $\Rsq(T)$.
The work uses a realizability assumption: the true function generating the outputs belongs to some function class $\cF$.
In this approach, the learner learns a score for every arm (using online regression) and computes the probability of choosing an arm based on the inverse of the gap in scores leading to a regret bound
\(\reg(T) = \mathcal{O}\left(\sqrt{KT \Rsq(T)}\right).\)
Subsequently, \citet{foster2021efficient} revisited SquareCB and provided a modified algorithm (FastCB), with binary Kullback–Leibler (KL) loss and a re-weighted inverse gap weighting scheme that attains a \emph{first-order} regret bound.
A \emph{first-order} regret bound is data-dependent in the sense that it scales sub-linearly with the loss of the best policy $L^*$ instead of $T$. 
They showed that if regret of online regression with KL loss is $\Rkl(T)$ then the regret for the bandit problem can be bounded by \(\reg(T) = \mathcal{O}(\sqrt{KL^* \Rkl(T)} + K\Rkl(T)).\)
Further, under the realizability assumption, \citet{SimchiLevi2020BypassingTM} showed that an offline regression oracle with $\mathcal{O}(\log T)$ calls can also achieve an optimal regret for CBs. This improves upon the $\mathcal{O}(T)$ calls to an online regression oracle made by SquareCB and FastCB but works only for the stochastic setting, i.e., when the contexts are drawn i.i.d. from a fixed distribution. 

In this work we develop novel regret bounds for online regression with neural networks and subsequently use the reduction in \citet{foster2020beyond,foster2021efficient} to give regret guarantees for NeuCBs. Before we unpack the details, we discuss the research gaps in existing algorithms for NeuCBs to better motivate our contributions in the context of available literature.

\subsection{Research Gaps in Neural Contextual Bandits}
\label{subsec:research-gap}
We discuss some problems and restrictive assumptions with existing NeuCB algorithms. Table~\ref{table:Comparison} summarizes these comparisons. We also discuss why naively extending existing results for wide networks with the CBs to online regression reduction lead to sub-optimal regret bounds. In Section~\ref{sec:related_work} we further outline some related works in contextual bandits and overparameterized neural models.
\begin{enumerate}
    \item \textbf{Neural UCB (\citep{zhou2020neural_ucb}) and Neural TS (\citep{zhang2021neural_ts}):} Both these works focus on learning the loss function in the Neural Tangent Kernel (NTK) regime and drawing on results from the kernel bandit literature~\citep{valko2013finite}. 
    The regret bound is shown to be $\tilde{\cO}(\tilde{d}\sqrt{T})$, where $\tilde{d}$ is the effective dimension of the NTK matrix. When the eigen-values of the kernel decreases polynomially, one can show that $\tilde{d}$ depends logarithmically in $T$ (see Remark 4.4 in \citep{zhou2020neural_ucb} and Remark 1 in \citep{valko2013finite}) and therefore the final regret is still $\tilde{\cO}(\sqrt{T})$. However in Appendix \ref{app:Appendix_Effective_dimension} we show that under the assumptions in the papers, the regret bounds for both NeuralUCB and NeuralTS is $\Omega(T)$ in worst case.
    
    \item \textbf{EE-Net \citep{ban2022eenet}:} This work uses an exploitation network for learning the output function and an exploration network to learn the potential gain of exploring at the current step. Although EE-Net avoids picking up a $\tilde{d}$ dependence in its regret bound, it has two drawbacks. (1) It assumes that the contexts are chosen i.i.d. from a given distribution, an assumption that generally does not hold for real world CB problems. (2) It needs to store all the previous networks until the current time step and then makes a prediction by randomly picking a network from all the past networks (see lines 32-33 in Algorithm 1 of \citet{ban2022eenet}), a strategy that does not scale to real world deployment.
    \item \textbf{SquareCB \citep{foster2020beyond} and FastCB \citep{foster2021efficient}:} Both these works provide regret bounds for CBs in terms of regret for online regression. Using online gradient descent for online regression (as in this paper) with regret $\cO(\sqrt{T})$, SquareCB and FastCB provide $\cO(\sqrt{K}T^{3/4})$ and $\cO(\sqrt{KL^*}T^{1/4} + K\sqrt{T})$ regret for CBs (see Example 2 in Section 2.3 of \citet{foster2020beyond} and Example 5 in Section 4 of \citet{foster2021efficient} respectively). 
    Existing analysis with neural models that use almost convexity of the loss (see \citet{chen2021provable}) show $\cO(\sqrt{T})$ regret for online regression, and naively combining it with the SquareCB and FastCB lead to the same sub-optimal regret bounds for NeuCBs.
\end{enumerate}
% The work gives regret bounds that typically scale with the complexity of the function class $\cF$. 
%It is well known that regret guarantees for the contextual bandit problem scale with the complexity of the problem~\cite{foster2020beyond,agarwal2014taming}. For example, for `difficult' non-differentible bandits, sublinear regret is impossible, whereas for `nice' structured bandits such as linear bandits, the minimax regret scales as $\sqrt{T}$. A key problem in contextual bandits is getting \textit{instance dependent} regret bounds. 
\begin{table*}[t]
\begin{center}
\renewcommand{\arraystretch}{1.3}
\begin{tabular}{|c | c | c  |} 
 \hline
 Algorithm & Regret & Remarks\\ [0.5ex] 
 \hline\hline 
 Neural UCB \citep{zhou2020neural_ucb} & $\tilde{\cO}(\tilde{d}\sqrt{T})$ & Bound depends on $\tilde{d}$ and could be $\Omega(T)$ in worst case.  \\ 
 \hline
 Neural TS \cite{zhang2021neural_ts} & $\tilde{\cO}(\tilde{d}\sqrt{T})$ &  Bound depends on $\tilde{d}$ and could be $\Omega(T)$ in worst case.  \\
 \hline
 \multirow{2}{*}{EE-Net \citep{ban2022eenet}} & \multirow{2}{*}{$\tilde{\cO}(\sqrt{T})$} & Assumes that the contexts at every round are drawn \\ [-1ex] 
 & &  i.i.d and needs to store all the previous networks. \\
 \hline
 \multirow{2}{*}{NeuSquareCB (This work)} & \multirow{2}{*}{$\tilde{\cO}(\sqrt{KT})$} & No dependence on $\tilde{d}$ and holds even when the contexts \\ [-1ex] 
 & & are chosen adversarially. \\
 \hline
 \multirow{3}{*}{NeuFastCB (This work)} & \multirow{3}{*}{$\tilde{\cO}(\sqrt{L^*K}+K)$}& No dependence on $\tilde{d}$ and holds even when the contexts \\ [-1ex] 
 & &  are chosen adversarially. Further, this is the first \\ [-1ex] 
 & & data-dependent regret bound for neural bandits.  \\ [1ex] 
 \hline
\end{tabular}
\caption{Comparison with prior works. $T$ is the horizon length, $L^*$ is the cumulative loss of the best policy and $\tilde{d}$ is the effective dimension of the NTK matrix. See section~\ref{subsec:research-gap} and section~\ref{subsec:our-contributions} for detailed discussions.}
\label{table:Comparison}
\end{center}
\end{table*}
\subsection{Our Contributions}
\label{subsec:our-contributions}
% In this work, we combine the advantages of realizability-based approaches and the learning power of over-parameterized neural models to obtain sharp regret bounds for neural contextual bandits. 
Our key contributions are summarized below:
\begin{enumerate}[left=0em]
     \item \textbf{Lower Bounds:} As outlined in Section~\ref{subsec:research-gap}, we formally show that an (oblivious) adversary can always choose a set of context vectors and a reward function at the beginning, such that the regret bounds for NeuralUCB (\citep{zhou2020neural_ucb}) and NeuralTS (\citep{zhang2021neural_ts}) becomes $\Omega(T)$. See Appendix~\ref{app:subsec-NeuralUCB} and \ref{app:subsec-NeuralTS} for the corresponding theorems and their proofs.
    \item \textbf{QG Regret:} We provide $\cO(\log T) + \epsilon T$ regret for online regression when the loss function satisfies (i) $\epsilon$- almost convexity, (ii) QG condition, and (iii) has unique minima (cf. Assumption~\ref{asmp:qg_thm_asmp}) as long as the minimum cumulative loss in hindsight (interpolation loss) is $O(\log T)$. This improves over the $\cO(\sqrt{T}) + \epsilon T$ bound in \citet{chen2021provable} that only exploits $\epsilon$- almost convexity.
    % under QG (Quadratic Growth) condition, a generalization of the PL (Polyak-\L ojasiewicz) condition when the loss function has unique minima (Theorem~\ref{thm:PLregret}). Previously, \citep{ogd} provided $\cO(\sqrt{T})$ regret for general convex losses, and \citep{ogd-sc} provided $\cO(\log T)$ regret for strongly convex losses, and as such our result significantly relaxes the assumptions in the literature.
    \item \textbf{Regret for wide networks:} While the QG result is not directly applicable for neural models, since they do not have unique minima, we show adding a suitably small random perturbation to the prediction \cref{eq:output_reg_def}, makes the losses satisfy QG with unique minima. Using such a perturbed neural prediction, we provide regret bounds with the following loss functions:
    \begin{enumerate}
        \item \textbf{Squared loss:} We provide $\cO(\log T)$ regret for online regression with the perturbed network (Theorem~\ref{thm:hp_reg_sq_loss_online}) and thereafter using the online regression to CB reduction obtain $\tilde{\cO}(\sqrt{KT})$ regret for NeuCBs with our algorithm $\mathsf{NeuSquareCB}$ (Algorithm~\ref{algo:neural-igw}).
        \item \textbf{Kullback–Leibler (KL) loss:} We further provide an $\cO(\log T)$ regret for online regression with KL loss using the perturbed network (Theorem~\ref{thm:hp_reg_kl_loss_online}). To the best of our knowledge, this is the first result that shows PL/QG condition holds for KL loss, and would be of independent interest. Further, using the reduction, we obtain the first data dependent regret bound of $\tilde{\mathcal{O}}(\sqrt{L^*K} + K)$ for NeuCBs with our algorithm $\mathsf{NeuFastCB}$ (Algorithm~\ref{algo:neural-rigw}).
    \end{enumerate}
    \item \textbf{Empirical Performance:} Finally, in Section~\ref{sec:expt} we compare our algorithms against baseline algorithms for NeuCBs. Unlike previous works, we feed in contexts to the algorithms in an adversarial manner (see Section~\ref{sec:expt} for details). Our experiments on various datasets demonstrate that the proposed algorithms (especially $\mathsf{NeuFastCB}$) consistently outperform existing NeuCB algorithms, which themselves have been shown to outperform their linear counterparts such as LinUCB and LinearTS \citep{zhou2020neural_ucb,zhang2021neural_ts,ban2022eenet}.
\end{enumerate}
We also emphasize that our regret bounds are independent of the effective dimension that appear in kernel bandits \citep{valko2013finite} and some recent neural bandit algorithms~\citep{zhou2020neural_ucb, zhang2021neural_ts}. 
Further our algorithms are efficient to implement, do not require matrix inversions unlike NeuralUCB ~\citep{zhou2020neural_ucb} and NeuralTS \citep{zhang2021neural_ts}, and work even when the contexts are chosen adversarially unlike approaches specific to the i.i.d. setting~\citep{ban2022eenet}.

\section{Neural Online Regression: Setting and Formulation}
\label{sec:Neural_online_learning_formulation}

%% May 13 version backup file NOR_Formulation0.tex

\textbf{Problem Formulation:} At round $t \in [T]$, the learner is presented with an input $\x_{t}  \in \cX \subset \bbr^d$ and is required to make real-valued predictions $\hat{y}_{t}$. Then, the true outcome $y_{t} \in \cY = [0, 1]$ is revealed.  

\begin{asmp} [\textbf{Realizability}]
The conditional {expectation}  of $y_{t}$ given $\x_{t}$ is given by some unknown function $h$: $\cX \mapsto \cY $, i.e.,
\(
\E[y_{t} |\x_{t}] = h(\x_{t}).
\) 
Further, the context vectors satisfy $\|\x_{t}\| \leq 1$  $\forall t \in [T]$.
\label{asmp:real}
\end{asmp}

\textbf{Neural Architecture:}
We consider a feedforward neural network with smooth activations as in \citet{du2019gradient,banerjee2023restricted}
and the network output is given by
    %\hecomment{maybe use the following form instead to be consistent with ReLU?}
    %\bancomment{Yes, good suggestions.}
\begin{align}
\label{eq:DNN_smooth}    
  f({\theta_t}; \x) := {m}^{-1/2} \v_t^\top\phi({m}^{-1/2}{W_t}^{(L)}\phi( \cdots ~ \phi({m}^{-1/2}{W_t}^{(1)} \x)~\cdots))~,
\end{align}
% \begin{align}
% \label{eq:DNN_smooth}    
%   f({\theta}; \x) = \frac{1}{\sqrt{m}} \v^\top\phi\left(\frac{1}{\sqrt{m}}{W}^{(L)}\phi\left( \cdots ~ \phi\left(\frac{1}{\sqrt{m}}{W}^{(1)} \x\right)~\cdots\right)\right)~,
% \end{align}
where $L$ is the number of hidden layers and $m$ is the width of the network. Further, ${W_t}^{(1)} \in \bbR^{m \times d}, W_t^{(l)} \in \R^{m \times m},\forall l \in \{2, \dots, L\}$ are layer-wise weight matrices with $W_t^{(l)} = [w^{(l)}_{t,i,j}]$, $\v_t\in\R^{m}$ is the last layer vector and $\phi(\cdot )$ is a lipschitz and smooth (pointwise) activation function.

We define 
\begin{align}
\theta_t := (\vec(W_t^{(1)})^\top,\ldots,\vec(W_t^{(L)})^\top, \v^\top )^\top ~
\label{eq:theta_def}
\end{align} 
as the vector of all parameters in the network. Note that $\theta_t \in \R^p$, where $p = md + (L-1)m^2 + m$ is total number of parameters. 

\textbf{Regret:} At time $t$ an algorithm picks a $\theta_t \in B$, where $B \subset \R^p$  is some comparator class, and outputs the prediction $f(\theta_t;\x_{t})$. Consider a loss function $\ell : \cY \times \cY \mapsto \R$ that measures the error between $f(\theta_t;\x_{t})$ and the true output $y_{t}$.
Then the regret of neural online regression with loss $\ell$ is defined as
\begin{equation}
\label{eq:online_reg}
\Rl(T) := \sum_{t =1}^T  \lossonline{\theta_t} - \inf_{\theta \in B} \sum_{t=1}^T \lossonline{\theta}~,
\end{equation}
\begin{remark}
    Given the definition of regret in \cref{eq:online_reg}, one might wonder if we are assuming that the function $h$ in Assumption~\ref{asmp:real} is somehow $f(\tilde{\theta}; \cdot)$ for some $\tilde{\theta} \in B$. Wide networks in fact can realize any function $h$ on a finite set of $T$ points (see Theorem~\ref{thm:interpolation} in Appendix~\ref{app:app_interpolation}).
\end{remark}

Using almost convexity of the loss function for wide networks, \citet{chen2021provable} show $\Rl(T)=\cO(\sqrt{T})$ regret. Instead, we work with a small random perturbation to the neural model prediction denoted by $\tilde{f}$ (see Section~\ref{sec:Neural_online_learning_regret}) with $\E[\tilde{f}] = f$ and consider the following regret:
\begin{equation}
\label{eq:online_reg2}
\tilde{\Rl}(T) := \sum_{t =1}^T  \ell(y_t, \tilde{f}(\theta_t;\x_t)) -  \inf_{\theta \in B} \sum_{t=1}^T \ell(y_t, \tilde{f}(\theta;\x_t)) ~.
\vspace{-0.25cm}
\end{equation}
 
As we shortly show in Section~\ref{sec:Neural_online_learning_regret}, 
the surprising aspect of working with such mildly perturbed $\tilde{f}$ is that we will get $\tilde{R}(T) = O(\log T)$. Further, the cumulative loss with such $\tilde{f}$ will also be competitive against the best non-perturbed $f$ with $\theta \in B$ in hindsight (Remark~\ref{rem:com_f}). 

\textbf{Notation:} $n = \mathcal{O}(t)$ (and $\Omega(t)$  respectively) means there exists constant $c>0$ such that $n \leq c t $ (and $n \geq c t$ respectively). Further the notation $n = \Theta(t)$ means there exist constants $c_1,c_2>0$ such that $c_1 t \leq n \leq c_2 t$. The notations $\tilde{\mathcal{O}}(t),\tilde{\Omega}(t),\tilde{\Theta}(t)$ further hide the dependence on logarithmic terms. 

% \textbf{Algorithm} \ref{algo:onlineneural} presents a simple and standard projected online gradient descent (OGD) algorithm where the projection operator is defined as $\prod_{\ballfrob}(\theta) = \argmin_{\theta \in \ballfrob} \| \theta - \theta_0 \|_2$.
% Our objective is to bound the regret of this algorithm for various choices of loss functions in Section \ref{sec:Neural_online_learning_regret}.

\section{Neural Online Regression: Regret Bounds}
\label{sec:Neural_online_learning_regret}
Our objective in this section is to provide regret bounds for projected Online Gradient Descent (OGD) with the projection operator $\displaystyle \prod_{B}(\theta) = {\arginf}_{\theta' \in B} \| \theta' - \theta \|_2$. The iterates are given by
\begin{align}
\label{eq:pogd}
    {\theta}_{t+1} = \prod_{B}\big({\theta}_t - \eta_t \nabla \lossonline{\theta_t}\big).
\end{align}
\vspace{-0.5cm}
\begin{definition}[{\bf Quadratic Growth (QG) condition}]
    Consider a function $J:\R^p \rightarrow \R$ and let the solution set be $\mathcal{J}^* = \{\theta': \theta' \in \arg \min_{\theta} J(\theta)\}$. Then $J$ is said to satisfy the QG condition with QG constant $\mu$, if $
        J(\theta) - J(\theta^*) \geq \frac{\mu}{2}\|\theta - \theta^*\|^2~, \forall \theta \in  \R^p \setminus \mathcal{J}^*,$
where $\theta^*$ is the projection of $\theta$ onto $\mathcal{J}^*$.
\end{definition}
\begin{remark}[\textbf{PL} $\Rightarrow$ \textbf{QG}]
\label{rem:PL_and_QG}
The recent literature has extensively studied the PL condition and how neural losses satisfy the PL condition~\citep{karimi2016linear, liu2020linearity,liu2022loss}. While the Quadratic Growth (QG) condition has not been as widely studied, one can show that the PL condition implies the QG condition~\citep[Appendix A]{karimi2016linear}, i.e., for $\mu > 0$
\begin{align*}
    J(\theta) - J(\theta^*)  \leq \frac{1}{2\mu} \| \nabla J(\theta) \|_2^2 \;\;\;\;\text{{(PL)}} \quad
  \Rightarrow \quad  J(\theta) - J(\theta^*)  \geq \frac{\mu}{2} \| \theta - \theta^* \|_2^2 \;\;\;\;\text{{(QG)}}
\end{align*}
\end{remark}

\begin{asmp}\label{asmp:qg_thm_asmp}
Consider a predictor $g(\theta;\x)$ and suppose the loss $\ell(y_t,g(\theta;\x_t)), \forall t \in [T]$, satisfies
\begin{enumerate}[label={\textbf{(\alph*)}},
  ref={\theasmp(\alph*)}]
\item \label{asmp:QG-thm-A}%
    Almost convexity, i.e., there exists $\epsilon > 0$, such that $\forall \theta,\theta' \in B$, 
    \begin{gather}
    \label{eq:RSC-formula-NN2}
        \ell(y_t,g(\theta;\x_t)) \geq \ell(y_t,g(\theta';\x_t))  + \langle \theta-\theta', \nabla_\theta\ell(y_t,g(\theta';\x_t)) \rangle - \epsilon.
    \end{gather}
\item \label{asmp:QG-thm-B}%
    QG condition i.e., $\exists \mu > 0$, such that $\forall \theta \in B \setminus \Theta^*_t$, where $\Theta^*_t = \left\{ \theta_t | \theta_t \in \arginf_{\theta} \lossonlineg{\theta} \right\}$ and $\theta^*_t$ is the projection of $\theta$ onto $\Theta_t^*$ we have
    \begin{align}
        \lossonlineg{\theta} - \lossonlineg{\theta_t^*} \geq \frac{\mu}{2} \| \theta - \theta^*_t \|_2^2.
    \label{eq:qgcore}
    \end{align}
    
    \vspace{-0.4cm}
    \item \label{asmp:QG-thm-C}%
    has a unique minima.
\end{enumerate}
\end{asmp}

Following~\citep{liu2020linearity, liu2022loss}, we also assume the following for the activation and loss function.
\begin{asmp}
 With $\ell_{t} :=\ell(y_{t},\hat{y}_{t})$, $\ell_{t}' := \frac{d \ell_{t}}{d \hat{y}_{t}}$, and $\ell_{t}'' := \frac{d^2 \ell_{t}}{d \hat{y}^2_{t}}$, we assume that the loss $\ell(y_{t},\hat{y}_{t})$ is Lipschitz, i.e., $|\ell_{t}'|  \leq \lambda,$ strongly convex, i.e.,  $\ell''_{t} \geq a$ and smooth, i.e., $\ell''_{t} \leq b$, for some $\lambda, a, b > 0.$
\label{asmp:smooth}
\end{asmp}

\begin{restatable}[\bf Regret Bound under QG condition]{theo}{QGregretsmooth}
\label{thm:PLregret}
Under Assumptions~\ref{asmp:qg_thm_asmp} and \ref{asmp:smooth} the regret of projected OGD with step size $\eta_t = \frac{4}{\mu t}$, where $\mu$ is the QG constant from Assumption~\ref{asmp:QG-thm-B}, satisfies 
\begin{gather}
    % \sum_{t=1}^{T} \cL(\theta_t;\z_t) - \cL(\theta^*;\z_t) 
    \tilde{\Rl}(T)\leq \cO\left( \frac{\lambda^2}{\mu}\log T\right) + \epsilon T + 2 \underset{\theta \in B}{\inf}\sum_{t=1}^T\lossonlineg{\theta}.
    \label{eq:qg_regret_bound}
\end{gather}
\end{restatable}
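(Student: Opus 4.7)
The plan is to run the standard projected OGD potential argument against the best-in-hindsight comparator $\theta^\star \in \argmin_{\theta \in B} \sum_{t=1}^T \lossonlineg{\theta}$ and then use the QG condition to convert the residual distance terms into loss values, so that the analysis closes with a first-order (data-dependent) bound.

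First, I would expand $\|\theta_{t+1} - \theta^\star\|^2$ using non-expansiveness of the projection onto $B$, substitute the almost-convexity bound of Assumption~\ref{asmp:QG-thm-A} applied at the pair $(\theta_t, \theta^\star)$ to replace $\langle \nabla \lossonlineg{\theta_t}, \theta_t - \theta^\star\rangle$ by $\lossonlineg{\theta_t} - \lossonlineg{\theta^\star} - \epsilon$, and control $\|\nabla_\theta \lossonlineg{\theta_t}\|^2$ by $\cO(\lambda^2)$ using the $|\ell_t'| \le \lambda$ bound from Assumption~\ref{asmp:smooth} together with the standard boundedness of $\nabla_\theta g$. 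Rearranging produces the per-round mirror-descent-style inequality
\[
\lossonlineg{\theta_t} - \lossonlineg{\theta^\star} \le \tfrac{1}{2\eta_t}\bigl(\|\theta_t - \theta^\star\|^2 - \|\theta_{t+1} - \theta^\star\|^2\bigr) + \epsilon + \tfrac{\eta_t \lambda^2}{2}.
\]

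Second, summing from $t=1$ to $T$, the gradient-norm term contributes $\tfrac{\lambda^2}{2}\sum_t \eta_t = \cO\bigl(\tfrac{\lambda^2}{\mu}\log T\bigr)$ via the harmonic sum, and the potential differences weighted by $1/(2\eta_t) = \mu t/8$ collapse by Abel summation,
\[
\sum_{t=1}^T \tfrac{\mu t}{8}\bigl(\|\theta_t - \theta^\star\|^2 - \|\theta_{t+1} - \theta^\star\|^2\bigr) \le \tfrac{\mu}{8}\sum_{t=1}^T \|\theta_t - \theta^\star\|^2,
\]
after dropping the non-positive boundary term at $t=T+1$.

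Third, and this is where the main obstacle lies, I need to upper bound $\|\theta_t - \theta^\star\|^2$ by the current loss values, even though the QG lower bound in Assumption~\ref{asmp:QG-thm-B} is anchored at the \emph{per-round} minimizer $\theta_t^\star$ (well defined by Assumption~\ref{asmp:QG-thm-C}) rather than at $\theta^\star$. The plan is to apply the triangle inequality $\|\theta_t - \theta^\star\|^2 \le 2\|\theta_t - \theta_t^\star\|^2 + 2\|\theta_t^\star - \theta^\star\|^2$, invoke QG on each summand, and use non-negativity of $\ell$ to discard the resulting $\lossonlineg{\theta_t^\star}$ contributions, ending with $\|\theta_t - \theta^\star\|^2 \le \tfrac{4}{\mu}\bigl(\lossonlineg{\theta_t} + \lossonlineg{\theta^\star}\bigr)$. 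Plugging this back, the cumulative inequality becomes
\[
\sum_t \bigl(\lossonlineg{\theta_t} - \lossonlineg{\theta^\star}\bigr) \le \tfrac{1}{2}\sum_t \lossonlineg{\theta_t} + \tfrac{1}{2}\sum_t \lossonlineg{\theta^\star} + \epsilon T + \cO\bigl(\tfrac{\lambda^2}{\mu}\log T\bigr),
\]
and moving $\tfrac{1}{2}\sum_t \lossonlineg{\theta_t}$ to the left and rearranging yields the stated bound, with the surviving $2\inf_{\theta \in B}\sum_t \lossonlineg{\theta}$ term emerging from the $\lossonlineg{\theta^\star}$ on the right. The uniqueness of $\theta_t^\star$ guaranteed by Assumption~\ref{asmp:QG-thm-C} and the non-negativity of the loss are both essential for this distance-to-loss conversion; without them the QG-based bound on $\|\theta_t - \theta^\star\|^2$ would either be ill-defined or leak an uncontrolled $\lossonlineg{\theta_t^\star}$ term.
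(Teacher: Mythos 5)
Your proposal is correct and follows essentially the same strategy as the paper's proof: a standard projected-OGD potential argument with almost convexity, a triangle-inequality split of $\|\theta_t-\theta^\star\|^2$ around the per-round unique minimizer $\theta_t^*$, and then the QG condition together with non-negativity of the loss to convert the residual distance terms into loss values. The only difference is cosmetic: the paper carries out the final step by separately lower-bounding $\tilde{\Rl}(T)$ via QG, multiplying by $-\tfrac{1}{2}$, and adding to the upper bound, whereas you substitute the distance-to-loss conversion $\|\theta_t - \theta_t^*\|^2 \le \tfrac{2}{\mu}\lossonlineg{\theta_t}$ and $\|\theta^\star - \theta_t^*\|^2 \le \tfrac{2}{\mu}\lossonlineg{\theta^\star}$ directly and rearrange; the algebra is equivalent and yields the same bound up to constants. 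You also correctly identify the two load-bearing facts that make the conversion go through — uniqueness of $\theta_t^*$ (so the two QG applications are anchored at the same point, which the triangle inequality requires) and non-negativity of the loss (so the $-\lossonlineg{\theta_t^*}$ contributions can be dropped) — which the paper uses in the same way.
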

The proof of Theorem~\ref{thm:PLregret} can be found in Appendix~\ref{app:app_QG_regret}.

\begin{remark}
    Under Assumption~\ref{asmp:QG-thm-A}, \citet{chen2021provable} show $R(T) = \cO(\sqrt{T}) + \epsilon T$. 
    In contrast our bound improves the first term to $\cO(\log T)$ but has an extra term: cumulative loss of the best $\theta$, and uses two additional assumptions (\ref{asmp:QG-thm-B} and \ref{asmp:QG-thm-C}). Note that the third term vanishes if $g$ interpolates and the interpolation loss is zero (holds for e.g., with over-parameterized networks and square loss). Further for over-parameterized networks \citet{chen2021provable} show that $\epsilon = 1/\text{poly}(m)$, and therefore the $\epsilon T$ term is $\cO(1)$ for large enough $m$. A similar argument also holds for our model.
\end{remark}

Next we discuss if Theorem~\ref{thm:PLregret} can indeed be used for neural loss functions. For concreteness consider the square loss $\ell_{\sq}(y_{t}, f(\theta_t;\x_t)) := \frac{1}{2} (y_{t} - f(\theta_t;\x_t))^2$. Following \cite{liu2020linearity,banerjee2023restricted}, we make the following assumption on the initial parameters of the network.

\begin{asmp}
    We initialize $\theta_0$ with $w_{0,ij}^{(l)} \sim \cN(0,\sigma_0^2)$ for $l\in[L]$ where $\sigma_0 = \frac{\sigma_1}{2\left(1 + \frac{\sqrt{\log m}}{\sqrt{2m}}\right)}, \sigma_1 > 0$, and $\v_0$ is a random unit vector with $\norm{\v_0}_2=1$. 
    \label{asmp:init}
\end{asmp}

Next we define $K_{\ntk}(\theta) := [\langle \nabla f(\theta;\x_{t}), \nabla f(\theta;\x_{t'}) \rangle] \in \R^{T \times T}$ to be the so-called Neural Tangent Kernel (NTK) matrix at $\theta$ \citep{jacot2018neural} and make the following assumption at initialization.
\begin{asmp}
${K_{{\ntk}}(\theta_0)}$ is positive definite, i.e., $K_{\ntk}(\theta_0) \succeq \lambda_0 \I$ for some $\lambda_0 > 0$.
\label{asmp:ntk}
\end{asmp}
\begin{remark}
The above assumption on the NTK is common in the deep learning literature \citep{du2019gradient,arora2019exact,Cao2019GeneralizationEB} and is ensured as long any two context vectors $\x_{t}$ do not overlap. All existing regret bounds for NeuCBs make this assumption
(see Assumption 4.2 in \citet{zhou2020neural_ucb}, Assumption 3.4 in \citet{zhang2021neural_ts} and Assumption 5.1 in \citet{ban2022eenet}).
\end{remark}
Finally we choose the comparator class $B = \ballfrob$, the layer-wise Frobenius ball around the initialization $\theta_0$ with radii $\rho,\rho_1$ (to be chosen as part of analysis) which is defined as
\begin{gather}
\label{eq:frobball} 
\ballfrob := \{ \theta \in \R^p ~\text{as in}~\cref{eq:theta_def}: \|  \vec(W^{(l)}) - \vec(W_0^{(l)}) \|_2 \leq \rho, l \in [L], \| \v - \v_0 \|_2 \leq \rho_1 \}.
\end{gather}

Consider a network $f(\theta;\x)$ that satisfies Assumption~\ref{asmp:ntk} with $\theta_0$ initialized as in Assumption~\ref{asmp:init}. Then for $B = \ballfrob$ we can show the following: (i) $\ell_{\sq}$ satisfies Assumption~\ref{asmp:QG-thm-A} with $\epsilon = \cO({\text{poly}(L,\rho,\rho_1)}/{\sqrt{m}})$ (Lemma~\ref{lemm:almst_c}) and therefore with $m = \Omega(\text{poly}(T,L,\rho,\rho_1))$, the second term in \cref{eq:qg_regret_bound} is $\cO(1)$.
(ii) $\ell_{\sq}$ for a wide networks satisfies PL condition \citep{liu2022loss} which implies QG, and therefore Assumption~\ref{asmp:QG-thm-B} is satisfied. (iii) Further the network interpolates (Theorem~\ref{thm:interpolation}) which ensures that the third term in $\cref{eq:qg_regret_bound}$ is 0, which makes the rhs $\cO(\log T)$. However neural models do not have a unique minima and as such we cannot take advantage of Theorem~\ref{thm:PLregret} as Assumption~\ref{asmp:QG-thm-C} is violated.
To mitigate this, in the next subsection we construct a randomized predictor $\tilde{f}$ with $\E[\tilde{f}] = f$ and show that Assumption \ref{asmp:QG-thm-A},\ref{asmp:QG-thm-B} and \ref{asmp:QG-thm-C} hold in high probability.
\subsection{Regret bounds for Squared Loss}
\label{subsec:sq_loss}

To ensure that the loss has a unique minimizer at every time step, we consider a random network with a small perturbation to the output. In detail, given the input $\x_{t}$, we define a perturbed network as
\begin{align}
\label{eq:output_reg_def}
     \ftilde{\theta_t} = f(\theta_t; \x_{t} ) + c_{p}\sum_{j=1}^{p} \frac{(\theta_t - \theta_0)^Te_j \varepsilon_j}{m^{1/4}}~,
\end{align}
where $f(\theta_t;\x_t)$ is the output of the network as defined in \cref{eq:DNN_smooth}, $c_p$ is the perturbation constant to be chosen later,
$\{e_j\}_{j=1}^p$ are the standard basis vectors and $\epsvec = (\varepsilon_1, \ldots, \varepsilon_p)^T$ is a random vector where
$\varepsilon_j$ is drawn i.i.d from a Rademacher distribution, i.e., $P(\varepsilon_j = +1) =$ $P(\varepsilon_j = -1) = 1/2$. 

Note that $\E [\tilde{f}] = f$. Further $\tilde{f}$ ensures that the expected loss $\E_{\epsvec}\losstildesq{\theta}$ has a unique minimizer (see Lemma~\ref{lemma:sq_sc_exp}). However, since running projected OGD on $\E_{\epsvec}\losstildesq{\theta}$ is not feasible, we next consider an average version of it. Let ${\epsvec}_s = (\varepsilon_{s,1},\varepsilon_{s,2},\ldots,\varepsilon_{s,p})^T$ where each $\varepsilon_{s,i}$ is i.i.d. Rademacher. Consider $S$ such i.i.d. draws $\{{\epsvec}_s\}_{s=1}^S$, and define the predictor
\begin{align}
    \label{eq:ftildeS}
    \tilde{f}^{(S)}\left(\theta;\x_{t},{\epsvec}^{(1:S)}\right) = \frac{1}{S} \sum_{s=1}^S\tilde{f}(\theta;\x_{t},{\epsvec}_{s}),
\end{align}
where $\tilde{f}(\theta;\x_{t},{\epsvec}_{s})$ is as defined in \cref{eq:output_reg_def}.

We define the corresponding regret with square loss as

\begin{align}
\label{eq:reg_sq}
    {\Rsqt}(T) = \sum_{t=1}^{T} \ell_{\sq}\left(y_t,\tilde{f}^{(S)}\big(\theta_t;\x_{t},{\epsvec}^{(1:S)}\big)\right) - \min_{\theta \in \ballfrob} \sum_{t=1}^{T} \ell_{\sq}\left(y_t,\tilde{f}^{(S)}\big(\theta;\x_{t},{\epsvec}^{(1:S)}\big)\right).
\end{align}

However, instead of running projected OGD with the loss $\ell_{\sq}\left(y_t,\tilde{f}^{(S)}\big(\theta;\x_{t},{\epsvec}^{(1:S)}\big)\right)$, we use 
\begin{align}
\label{eq:avg_loss}
    \cL_{\sq}^{(S)}\Big(y_t,\big\{\tilde{f}(\theta;\x_t,\epsvec_s)\big\}_{s=1}^{S}\Big) := \frac{1}{S}\sum_{s=1}^S \ell_{\sq}\left(y_t,\tilde{f}(\theta;\x_t,\epsvec_s)\right)
\end{align}
Note that since $\ell_{\sq}$ is convex in the second argument, using Jensen we have
\begin{align}
\label{eq:loss_jensen}
    \ell_{\sq}\left(y_{t},\tilde{f}^{(S)}\big(\theta;\x_{t},{\epsvec}^{(1:S)}\big)\right) = \ell_{\sq}\Big(y_{t},\frac{1}{S} \sum_{s=1}^S\tilde{f}(\theta;\x_{t},{\epsvec}_{s})\Big) \leq \frac{1}{S} \sum_{s=1}^S \ell_{\sq}\left(y_{t},\tilde{f}(\theta;\x_{t},{\epsvec}_{s})\right).
\end{align}
Subsequently we will show via \cref{eq:loss_jensen} that bounding the regret with \cref{eq:avg_loss} implies a bound on \cref{eq:reg_sq}.

\begin{restatable}[\textbf{Regret Bound for square loss}]{theo}{hpthmregsq}
    \label{thm:hp_reg_sq_loss_online}
     Under Assumption \ref{asmp:real},\;\ref{asmp:init} and \ref{asmp:ntk} with appropriate choice of step-size sequence $\{\eta_t\}$, width $m$, and perturbation constant $c_{p}$ in \cref{eq:output_reg_def}, with probability at least $\big(1 - \frac{C}{T^4}\big)$ for some constant $C>0$, over the randomness of initialization and $\{\epsvec\}_{s=1}^{S}$, the regret in $\cref{eq:reg_sq}$ of projected OGD with loss $\cL_{\sq}^{(S)}\Big(y_t,\big\{\tilde{f}(\theta;\x_t,\epsvec_s)\big\}_{s=1}^{S}\Big)$, $S = \Theta(\log m)$ and projection ball $\ballfrob$ with $\rho = \Theta(\sqrt{T}/\lambda_0)$ and $\rho_1 = \Theta(1)$ is given by $\emph{$\Rsqt(T)$} = \cO(\log T).$
\end{restatable}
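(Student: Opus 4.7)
The plan is to verify Assumptions~\ref{asmp:QG-thm-A}--\ref{asmp:QG-thm-C} for the averaged perturbed loss $\cL_{\sq}^{(S)}$ on the ball $\ballfrob$, invoke Theorem~\ref{thm:PLregret} with that loss, and then convert the resulting cumulative-loss bound into one on $\Rsqt(T)$ using Jensen's inequality \cref{eq:loss_jensen}.

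Almost convexity comes essentially for free. The perturbation in \cref{eq:output_reg_def} is an affine functional of $\theta-\theta_0$, so for each fixed $\epsvec_s$ the map $\theta\mapsto\tilde{f}(\theta;\x_t,\epsvec_s)$ differs from $f(\theta;\x_t)$ only by a linear term. Composing with the smooth convex square loss and applying the almost-convexity lemma~\ref{lemm:almst_c} for the unperturbed network yields Assumption~\ref{asmp:QG-thm-A} for each $s$ with the same $\epsilon=\cO(\mathrm{poly}(L,\rho,\rho_1)/\sqrt{m})$; averaging over $s$ preserves the constant, and choosing $m=\mathrm{poly}(T,L,\rho,\rho_1)$ makes $\epsilon T=\cO(1)$.

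The main technical step is establishing QG together with the unique-minimum property for $\cL_{\sq}^{(S)}$. A direct calculation gives
\begin{align*}
\E_{\epsvec}\,\ell_{\sq}\bigl(y_t,\tilde f(\theta;\x_t,\epsvec)\bigr) \;=\; \tfrac{1}{2}(y_t-f(\theta;\x_t))^2 \;+\; \tfrac{c_p^2}{2\sqrt{m}}\,\|\theta-\theta_0\|_2^2 ,
\end{align*}
so the expected per-round loss is the unperturbed squared loss plus an explicit $\ell_2$ ridge. Under Assumptions~\ref{asmp:init}--\ref{asmp:ntk}, the unperturbed squared loss already satisfies PL (hence QG by Remark~\ref{rem:PL_and_QG}) on $\ballfrob$ with constant $\Omega(\lambda_0)$ per \cite{liu2022loss}; the ridge term selects a unique minimum on the (otherwise flat) interpolation set, yielding QG with a unique minimizer for the expected loss (this is the content of Lemma~\ref{lemma:sq_sc_exp}). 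To transfer these properties to the empirical average $\cL_{\sq}^{(S)}$ uniformly over $\ballfrob$, I would decompose $\cL_{\sq}^{(S)}(\theta)=\E_{\epsvec}\,\ell_{\sq}(\cdots)+\Delta_S(\theta)$ and bound the mean-zero fluctuation $\Delta_S$ together with its Hessian by Hanson--Wright / matrix Bernstein concentration combined with an $\eps$-net argument on $\ballfrob$. Because the only non-affine $\theta$-dependence inside $\Delta_S$ enters through $f(\theta;\x_t)$, which is near-linear in $\theta$ in the NTK regime, $S=\Theta(\log m)$ draws suffice to make both the value and the Hessian fluctuation of $\Delta_S$ lower-order uniformly on $\ballfrob$ with probability at least $1-C/T^4$, giving Assumptions~\ref{asmp:QG-thm-B} and~\ref{asmp:QG-thm-C} for $\cL_{\sq}^{(S)}$ with $\mu=\Omega(\lambda_0)$ on this event.

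Assembling everything, Theorem~\ref{thm:PLregret} applied to $\cL_{\sq}^{(S)}$ with $\eta_t=4/(\mu t)$ gives
\begin{align*}
\sum_{t=1}^T \cL_{\sq}^{(S)}(\theta_t) \;\leq\; \cO\!\Bigl(\tfrac{\lambda^2}{\mu}\log T\Bigr)+\epsilon T + 3\,\inf_{\theta\in\ballfrob}\sum_{t=1}^T \cL_{\sq}^{(S)}(\theta) .
\end{align*}
The first term is $\cO(\log T)$ since $\lambda=\cO(1)$ and $\mu=\Omega(\lambda_0)$; the second is $\cO(1)$ by the choice of $m$; and for the infimum I would evaluate at an interpolating $\theta^{\star}\in\ballfrob$ (existence by Theorem~\ref{thm:interpolation} for $\rho=\Theta(\sqrt{T}/\lambda_0),\;\rho_1=\Theta(1)$): then $\cL_{\sq}^{(S)}(\theta^{\star})=\tfrac{c_p^2}{2\sqrt{m}\,S}\sum_{s}\bigl((\theta^{\star}-\theta_0)^{\top}\epsvec_s\bigr)^2=\cO(c_p^2 T/\sqrt{m})$ w.h.p.\ by Hanson--Wright, so picking $c_p$ and $m=\mathrm{poly}(T)$ with $c_p^2 T^2/\sqrt{m}=\cO(\log T)$ makes the cumulative infimum $\cO(\log T)$. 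Finally, $\Rsqt(T)\le\sum_t \ell_{\sq}\bigl(y_t,\tilde f^{(S)}(\theta_t;\x_t,\epsvec^{(1:S)})\bigr)\le\sum_t \cL_{\sq}^{(S)}(\theta_t)$ by \cref{eq:loss_jensen} together with non-negativity of the comparator loss, giving $\Rsqt(T)=\cO(\log T)$ with probability at least $1-C/T^4$. The hardest step will be the uniform concentration of $\Delta_S$ and its Hessian on $\ballfrob$ with only $S=\Theta(\log m)$ samples, together with the coupled tuning of $c_p,\rho,\rho_1,m,S$ so that the QG constant remains $\Omega(\lambda_0)$ while the perturbed interpolation loss stays $\cO(\log T)$ in aggregate.
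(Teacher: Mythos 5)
Your high-level scaffold matches the paper (verify Assumptions~\ref{asmp:QG-thm-A}--\ref{asmp:QG-thm-C}, invoke Theorem~\ref{thm:PLregret}, bound the infimum at an interpolating point, convert to $\Rsqt$ via \cref{eq:loss_jensen}), and the final assembly step is essentially identical. However, your proposed route to the crucial step --- QG with $\mu=\cO(1)$ and the unique minimum for $\cL_{\sq}^{(S)}$ --- is genuinely different from the paper's, and it has two gaps.

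First, you assert that the expected per-round loss
$\E_\epsvec\ell_\sq\bigl(y_t,\tilde f(\theta;\x_t,\epsvec)\bigr)=\tfrac12(y_t-f(\theta;\x_t))^2+\tfrac{c_p^2}{2\sqrt m}\|\theta-\theta_0\|_2^2$
satisfies QG with constant $\Omega(\lambda_0)$. This is not established anywhere, and it is not a routine consequence of the PL property of the unregularized square loss: the gradient of the expected loss is $(f-y_t)\nabla f+\tfrac{c_p^2}{\sqrt m}(\theta-\theta_0)$, whose squared norm has a sign-indefinite cross term; the minimizer is shifted off the interpolation set $\{f(\theta;\x_t)=y_t\}$; and the sum of a PL function and a small ridge is not PL/QG with a comparable constant in general, because the minimizers move. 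The result you cite, Lemma~\ref{lemma:sq_sc_exp}, only proves strong convexity with parameter $\nu=\cO(1/\sqrt m)$, which gives a QG constant $\cO(1/\sqrt m)$ --- far too weak to yield $\cO(\log T)$ after multiplying by $1/\mu$ in Theorem~\ref{thm:PLregret}. The paper never claims QG for the expected loss.

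Second, and more fundamentally, the plan to transfer QG from $\E_\epsvec\ell_\sq$ to $\cL_{\sq}^{(S)}$ by concentrating $\Delta_S$ and its Hessian uniformly on $\ballfrob$ with only $S=\Theta(\log m)$ draws does not work. The fluctuation $\Delta_S(\theta)$ contains a term proportional to $(y_t-f(\theta;\x_t))\cdot\tfrac{c_p}{m^{1/4}}\cdot\tfrac1S\sum_s\langle\theta-\theta_0,\bar\epsvec_s\rangle$, and
$\sup_{\|\theta-\theta_0\|\le L\rho+\rho_1}\tfrac1S\bigl|\sum_s\langle\theta-\theta_0,\bar\epsvec_s\rangle\bigr|=(L\rho+\rho_1)\tfrac1S\bigl\|\sum_s\bar\epsvec_s\bigr\|_2\approx(L\rho+\rho_1)\sqrt{p/S}$,
which is of order $m/\sqrt{\log m}$ since $p=\Theta(m^2)$ --- nowhere near lower order. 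A naive $\eps$-net over $\ballfrob$ would need $S\gtrsim p$ samples, and the "near-linearity of $f$ in the NTK regime" does not reduce the relevant dimension because the perturbation direction $\theta-\theta_0$ lives in $\R^p$. The paper avoids this entirely: in Lemma~\ref{lemma:QG_sq_loss_lemma}, $\|\nabla\cL_{\sq}^{(S)}\|_2^2$ is written as the quadratic form $\tfrac1{S^2}(F-y_t\mathbbm{1}_S)^\top\tilde K(\theta,\x_t)(F-y_t\mathbbm{1}_S)$ for an $S\times S$ kernel $\tilde K$, and $\lambda_{\min}(\tilde K)$ is lower-bounded by the contribution $\tfrac{c_p^2}{\sqrt m}\epsvec_M^\top\epsvec_M$ with $\epsvec_M\in\{\pm1\}^{p\times S}$; since $p\gg S$, this has smallest singular value $\Omega(\sqrt p)$ whp by Vershynin's bound, which dominates everything else and, crucially, is independent of $\theta$. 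The $\theta$-dependent pieces enter only through the bounded quantity $\nabla f(\theta;\x_t)$ as lower-order corrections, and the only union bound needed is over the $S$-dimensional sphere (metric entropy $\Theta(S)$), which is why $S=\Theta(\log m)$ suffices. That structural insight --- analyze the gradient norm through the $S$-dimensional span of perturbations, not through uniform concentration in parameter space --- is the missing piece in your argument.

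A minor point: your almost-convexity step leans on Lemma~\ref{lemm:almst_c} for the unperturbed network, but since $\|\nabla\tilde f\|_2=\Theta(c_p m^{3/4})$ you cannot simply substitute $\tilde f$ for $f$ in that lemma's constants. The clean observation (which the paper realizes via strong convexity of $\cL_{\sq}^{(S)}$ in Lemma~\ref{lemma:sq_sc}) is that $\nabla^2\ell_\sq(y_t,\tilde f)=\nabla\tilde f\nabla\tilde f^\top+(\tilde f-y_t)\nabla^2 f$, where the first term is PSD and the second is $\cO(\lambda c_H/\sqrt m)$ because the perturbation is affine, hence $\nabla^2\tilde f=\nabla^2 f$; so almost convexity with $\eps=\cO(1/\sqrt m)$ follows without ever bounding $\|\nabla\tilde f\|_2$.
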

\emph{Proof sketch.} 
The proof of the theorem follows along four key steps as described below. All of these hold with high probability over the randomness of initialization and $\{\epsvec_s\}_{s=1}^{S}$. A detailed version of the proof along with all intermediate lemmas and their proofs are in Appendix~\ref{app-subsec:sq_regret_proof}. Note that we do not use Assumptions \ref{asmp:qg_thm_asmp} and \ref{asmp:smooth} and, but rather explicitly prove that they hold.
\begin{enumerate}[left=0em]
     \item \textbf{Square loss} \textbf{is Lipschitz, strongly convex, and smooth w.r.t. the output:} This step ensures that Assumption~\ref{asmp:smooth} is satisfied. Strong convexity and smoothness follow trivially from the definition of the $\ell_{\sq}$. To show that $\ell_{\sq}$ is Lipschitz we show that the output $\tilde{f}(\theta;\x,\epsvec)$ is bounded for any $\theta \in \ballfrob$. Also note from Theorem~\ref{thm:PLregret} that the lipschitz parameter of the loss, $\lambda$ appears in the $\log T$ term and therefore to obtain a $\cO(\log T)$ regret we also ensure that $\lambda = \cO(1)$.
     
    \item \textbf{The average loss in \cref{eq:avg_loss} is almost convex and has a unique minimizer:} We show that with $S=\Theta(\log m),$ the average loss in \cref{eq:avg_loss} is  $\nu$ - Strongly Convex (SC) with $\nu = \cO\left(\frac{1}{\sqrt{m}}\right)$ w.r.t. $\theta \in \ballfrob$, $\forall t\in [T]$ which immediately implies Assumption~\ref{asmp:QG-thm-A} and \ref{asmp:QG-thm-C}. 
    
    \item \textbf{The average loss in \cref{eq:avg_loss} satisfies the QG condition: } It is known that square loss with wide networks under Assumption~\ref{asmp:ntk} satisfies the PL condition (eg. \cite{liu2022loss}) with $\mu = \cO(1)$. We show that the average loss in \cref{eq:avg_loss} with $S=\Theta(\log m),$ also satisfies the PL condition with $\mu = \cO(1)$, which implies that it satisfies the QG condition with same $\mu$.
    
    \item \textbf{Bounding the final regret: }
    Steps 1 and 2 above surprisingly show that with a small output perturbation, square loss satisfies (a) almost convexity, (b) QG, and (c) unique minima as in Assumption~\ref{asmp:qg_thm_asmp}. Combining with step 3, we have that all the assumptions of Theorem~\ref{thm:PLregret} are satisfied by the average loss. Using union bound over the three steps, invoking Theorem~\ref{thm:PLregret} we get with high probability
    \begin{gather}
        \sum_{t=1}^{T}\cL_{\sq}^{(S)}\Big(y_t,\big\{\tilde{f}(\theta_t;\x_t,\epsvec_s)\big\}_{s=1}^{S}\Big) - \inf_{\theta \in \ballfrob} \sum_{t=1}^{T}\cL_{\sq}^{(S)}\Big(y_t,\big\{\tilde{f}({\theta};\x_t,\epsvec_s)\big\}_{s=1}^{S}\Big)\nonumber\\
        \leq 2\inf_{\theta \in \ballfrob} \sum_{t=1}^{T}\cL_{\sq}^{(S)}\Big(y_t,\big\{\tilde{f}({\theta};\x_t,\epsvec_s)\big\}_{s=1}^{S}\Big) + \cO(\log T).
        \label{eq:final_regret_exp}
    \end{gather}
    Finally we show that $\inf_{\theta \in \ballfrob}\sum_{t=1}^{T}\cL_{\sq}^{(S)}\big(y_t,\big\{\tilde{f}(\tilde{\theta}^*;\x_t,\epsvec_s)\big\}_{s=1}^{S}\big) = \cO(1)$ using the fact that wide networks interpolate (Theorem~\ref{thm:interpolation}) which implies $\sum_{t=1}^{T}\cL_{\sq}^{(S)}\big(y_t,\big\{\tilde{f}(\theta_t;\x_t,\epsvec_s)\big\}_{s=1}^{S}\big) = \cO(\log T)$ which using \cref{eq:loss_jensen} and recalling the definition in \cref{eq:avg_loss} implies $\Rsqt(T)= \cO(\log T)$\qed
\end{enumerate}
\begin{remark}
\label{rem:com_f}
    Note that $\sum_{t=1}^{T}\cL_{\sq}^{(S)}\big(y_t,\big\{\tilde{f}(\theta_t;\x_t,\epsvec_s)\big\}_{s=1}^{S}\big) = \cO(\log T)$ from step-4 above also implies that $\sum_{t=1}^{T}\cL_{\sq}^{(S)}\big(y_t,\big\{\tilde{f}(\theta_t;\x_t,\epsvec_s)\big\}_{s=1}^{S}\big) - \min_{\theta \in \ballfrob}\sum_{t=1}^{T}\losssq{\theta} = \cO(\log T)$ and therefore our predictions are competitive against $f$ as defined in $\cref{eq:DNN_smooth}$ as well.
\end{remark}

\begin{remark}
Although the average loss in $\cref{eq:avg_loss}$ is SC (Lemma~\ref{lemma:sq_sc}), we do not use standard results from \citet{shai,hazan2021introduction} to obtain $\mathcal{O}(\log T)$ regret. This is because, the strong convexity constant $\nu=\mathcal{O}(1/\sqrt{m})$, and although OGD ensures $\mathcal{O}(\log T)$ regret for SC functions, the constant hidden by $\cO$ scales as $\frac{1}{\nu} = \sqrt{m}$. 
For large width models, $m>>T$, and therefore this approach does not yield a $\mathcal{O}(\log T)$ bound. The key idea is to introduce bare minimum strong convexity using \cref{eq:output_reg_def}, to ensure unique minima, without letting go of the QG condition with $\mu = \cO(1)$.
\qed 
\end{remark}
\subsection{Regret bounds for KL Loss}
\label{subsec:kl_loss}
Next we consider the binary KL loss, defined as
$
 \ell_{\kl}(y_{t},\hat{y}_{t}) = y_{t} \ln\left(\frac{y_t}{\sigma(\hat{y}_{t})}\right) + (1- y_{t})\ln\left(\frac{1 - y_t} {1 - \sigma(\hat{y}_{t})}\right),
$
where $\sigma(y) = \frac{1}{1+e^{-y}}$ is the sigmoid function. Following the approach outlined in Section \ref{subsec:sq_loss}, we consider a perturbed network as defined in \cref{eq:output_reg_def}.

Note that here the output of the neural network is finally passed through a sigmoid.
As in $\cref{eq:ftildeS}$, we will consider a combined predictor. With slight abuse of notation, we define the prediction and the corresponding regret with $\ell_{\kl}$ respectively as
\begin{align}
    \label{eq:ftildeS_kl}
    \sigma\big(\tilde{f}^{(S)}\big(\theta;\x_{t},{\epsvec}^{(1:S)}\big)\big) &= \frac{1}{S} \sum_{s=1}^S \sigma(\tilde{f}(\theta;\x_{t},{\epsvec}_{s}))\\
    \Rklt(T) = \sum_{t=1}^{T} \ell_{\kl}\big(y_t,\sigma\big(\tilde{f}^{(S)}\big(\theta_t;\x_{t},{\epsvec}^{(1:S)}\big)\big)\big) &- \min_{\theta \in \ballfrob}\sum_{t=1}^{T}\ell_{\kl}\big(y_t,\sigma\big(\tilde{f}^{(S)}\big(\theta;\x_{t},{\epsvec}^{(1:S)}\big)\big)\big).\nonumber
\end{align}

\begin{restatable}[\textbf{Regret Bound for KL Loss}]{theo}{hpthmregkl}
    \label{thm:hp_reg_kl_loss_online}
     Under Assumption \ref{asmp:real},\;\ref{asmp:init} and \ref{asmp:ntk} for $y_t \in [z , 1-z]$, $0<z<1$, with appropriate choice of step-size sequence $\{\eta_t\}$, width $m$, and perturbation constant $c_{p}$, with high probability over the randomness of initialization and $\{\epsvec\}_{s=1}^{S}$, the regret of projected OGD with loss $\cL_{\kl}^{(S)}\big(y_t,\big\{\sigma(\tilde{f}(\theta;\x_t,\epsvec_s))\big\}_{s=1}^{S}\big) = \frac{1}{S}\sum_{s=1}^S \ell_{\kl}\left(y_t,\sigma(\tilde{f}\big(\theta;\x_t,\epsvec_s)\big)\right)$, $S = \Theta(\log m)$ and projection ball $\ballfrob$ with $\rho = \Theta(\sqrt{T}/\lambda_0)$ and $\rho_1 = \Theta(1)$ is given by $\emph{$\Rkl(T)$} = \cO(\log T)$.

\end{restatable}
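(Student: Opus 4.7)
The proof follows the four-step architecture of Theorem~\ref{thm:hp_reg_sq_loss_online}, with the essential new ingredient concentrated in step 3, where I would establish that the KL loss composed with sigmoid and a wide network satisfies the PL/QG condition with an order-one constant. This is the result flagged in the theorem statement as being of independent interest.

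First, I would verify Assumption~\ref{asmp:smooth} for $\ell_{\kl}(y_t,\sigma(\hat{y}))$ on the range attained by $\tilde{f}(\theta;\x_t,\epsvec)$ with $\theta\in\ballfrob$. Direct computation gives $\frac{d}{d\hat{y}}\ell_{\kl}(y,\sigma(\hat{y}))=\sigma(\hat{y})-y$ and $\frac{d^2}{d\hat{y}^2}\ell_{\kl}(y,\sigma(\hat{y}))=\sigma(\hat{y})(1-\sigma(\hat{y}))$. As in the square-loss case, the choice $\rho=\Theta(\sqrt{T}/\lambda_0)$, $\rho_1=\Theta(1)$, together with the Gaussian initialization of Assumption~\ref{asmp:init}, gives a high-probability bound on $|\tilde{f}|$, hence $|\ell_t'|\le\lambda=\mathcal{O}(1)$ and $a\le\ell_t''\le 1/4$ for some $a>0$; the assumption $y_t\in[z,1-z]$ is used to ensure $a$ stays bounded away from zero uniformly in $t$ after arguing that the scalar minimizer $\sigma^{-1}(y_t)$ lies in a bounded set. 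Second, for almost convexity and uniqueness (Assumptions~\ref{asmp:QG-thm-A} and~\ref{asmp:QG-thm-C}) I would mirror the argument used for $\ell_{\sq}$: the Rademacher perturbation in \cref{eq:output_reg_def} with $S=\Theta(\log m)$ draws injects $\nu$-strong convexity into the averaged loss $\cL_{\kl}^{(S)}$ over $\ballfrob$ with $\nu=\mathcal{O}(1/\sqrt{m})$; the extra factor $a$ from step 1 only changes constants. The almost-convexity slack $\epsilon$ stemming from linearization error of $f$ around $\theta_0$ remains $\mathcal{O}(\poly(L,\rho,\rho_1)/\sqrt{m})$, so $\epsilon T=\mathcal{O}(1)$ for $m$ polynomially large in $T$.

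The main obstacle is step 3. Unlike the square-loss case where the PL property of $\ell_{\sq}$ on wide networks was already available from \citet{liu2022loss}, no such result exists for $\ell_{\kl}\circ\sigma$. My plan is to chain one-dimensional strong convexity with NTK lower boundedness. For each Rademacher draw,
\begin{equation*}
\nabla_\theta \ell_{\kl}(y_t,\sigma(\tilde{f}(\theta;\x_t,\epsvec_s)))=\bigl(\sigma(\tilde{f})-y_t\bigr)\,\nabla_\theta \tilde{f}.
\end{equation*}
Under Assumption~\ref{asmp:ntk} and standard NTK stability throughout $\ballfrob$, $\|\nabla_\theta \tilde{f}\|^2\ge\lambda_0/2$. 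Strong convexity of the scalar loss, together with the fact that its minimum over $\hat{y}$ is attained at $\hat{y}=\sigma^{-1}(y_t)$ with value $0$, gives the scalar PL inequality $(\sigma(\hat{y})-y_t)^2\ge 2a\,\ell_{\kl}(y_t,\sigma(\hat{y}))$. Combining these yields $\|\nabla_\theta \ell\|^2\ge a\lambda_0\,\ell$, and since the wide network can interpolate the targets (Theorem~\ref{thm:interpolation}), the joint minimum $\ell^*=0$ equals the scalar minimum, producing PL in $\theta$ with $\mu=a\lambda_0/2$. Averaging over $s$ preserves PL with the same order-one $\mu$ because all $S$ scalar losses share the common interpolating minimizer $\theta^*$. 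By Remark~\ref{rem:PL_and_QG} this delivers Assumption~\ref{asmp:QG-thm-B}.

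Finally, with all parts of Assumptions~\ref{asmp:qg_thm_asmp} and~\ref{asmp:smooth} in place, I would invoke Theorem~\ref{thm:PLregret} on $\cL_{\kl}^{(S)}$ and take a union bound over the high-probability events of steps 1--3. The cumulative-loss-of-best-$\theta$ term is controlled via Theorem~\ref{thm:interpolation}: picking $\theta^*\in\ballfrob$ with $\sigma(f(\theta^*;\x_t))=y_t$, the perturbed prediction $\sigma(\tilde{f}(\theta^*;\x_t,\epsvec_s))$ differs from $y_t$ only by $\mathcal{O}(c_p\,\poly(\rho,\rho_1)/m^{1/4})$, and smoothness of $\ell_{\kl}(y_t,\sigma(\cdot))$ near $y_t\in[z,1-z]$ renders each per-round loss $\mathcal{O}(c_p^2\,\poly(\rho,\rho_1)/\sqrt{m})$, making the cumulative infimum $\mathcal{O}(1)$ for $m$ polynomial in $T$. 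Since $\ell_{\kl}(y,\sigma(\cdot))$ is convex in its second argument, Jensen's inequality (the KL analog of \cref{eq:loss_jensen}) transfers the $\mathcal{O}(\log T)$ regret bound on $\cL_{\kl}^{(S)}$ to the surrogate regret $\Rklt(T)$ defined via $\tilde{f}^{(S)}$, completing the proof. I expect the delicate point to be tracking constants so that $\mu=a\lambda_0/2$ and $\lambda=\mathcal{O}(1)$ remain order-one while the perturbation scale $c_p$ and width $m$ are simultaneously chosen to kill $\epsilon T$ and the interpolation residual.
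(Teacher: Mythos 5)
Your proposal tracks the paper's proof closely in overall structure and identifies the right new ingredient: both you and the paper prove Assumption~\ref{asmp:smooth} by computing $\ell_{\kl}'=\sigma(\hat y)-y_t$ and $\ell_{\kl}''=\sigma(\hat y)(1-\sigma(\hat y))$ on the bounded range attained by $\tilde f$ over $\ballfrob$; both recycle the Rademacher perturbation machinery for the $\nu=\mathcal{O}(1/\sqrt m)$ strong-convexity/unique-minimizer step; and both bound the cumulative interpolation loss via Theorem~\ref{thm:interpolation}. Your observation that the scalar inequality $(\sigma(\hat y)-y_t)^2\ge 2a\,\ell_{\kl}(y_t,\sigma(\hat y))$ (from one-dimensional strong convexity with minimum value $0$) is what drives the new PL/QG result for KL loss is exactly the content of the paper's appeal to reverse Pinsker's inequality in Lemma~\ref{lemma:QG_kl_loss_lemma}; these are the same inequality, with $a=q(1-q)$ in the paper's notation. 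Your Step~4 variant that bounds the interpolation loss via smoothness of $\ell_{\kl}(y_t,\sigma(\cdot))$ near $y_t$ rather than via reverse Pinsker plus Lemma~\ref{lemma:opt_sq_loss_lemma} is a harmless and arguably cleaner variation.

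The one genuine gap is in your Step~3 transition from a per-draw PL inequality to a PL inequality for the averaged loss $\cL_{\kl}^{(S)}$. You derive $\|\nabla_\theta \ell_{\kl,s}\|^2 \geq a\lambda_0\,\ell_{\kl,s}$ for each fixed draw $\epsvec_s$, and then assert that ``averaging over $s$ preserves PL with the same order-one $\mu$ because all $S$ scalar losses share the common interpolating minimizer $\theta^*$.'' This does not follow: $\|\nabla\cL^{(S)}\|^2 = \big\|\frac{1}{S}\sum_s \nabla\ell_{\kl,s}\big\|^2$ contains cross terms $\langle\nabla\tilde f_s,\nabla\tilde f_{s'}\rangle$ that could in principle cause cancellation, and sharing a zero-loss minimizer does not by itself control these. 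The paper's Lemma~\ref{lemma:QG_kl_loss_lemma} avoids this by working directly with the quadratic form $\frac{1}{S^2}(\sigma(F)-y_t\mathbbm{1}_S)^\top\tilde K(\theta,\x_t)(\sigma(F)-y_t\mathbbm{1}_S)$, where $\tilde K$ is the $S\times S$ Gram matrix of the perturbed gradients, and then lower-bounding $\lambda_{\min}(\tilde K)$ uniformly via a random-matrix concentration argument on the Rademacher block $\epsvec_M^\top\epsvec_M$ (inherited from the square-loss Lemma~\ref{lemma:QG_sq_loss_lemma}). That is the mechanism that makes the averaging benign. Relatedly, your $\|\nabla_\theta\tilde f\|^2\ge\lambda_0/2$ understates where the bound really comes from: for the perturbed predictor the $\frac{c_p^2}{\sqrt m}\|\bar\epsvec_s\|^2$ contribution dominates $\langle\nabla f,\nabla f\rangle$, and it is this perturbation-induced mass, not the raw NTK floor $\lambda_0$, that yields the uniform lower eigenvalue of $\tilde K$ in the paper's argument. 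To close the gap you would need to replace the per-draw PL + shared-minimizer appeal with the full $\tilde K$ quadratic-form bound (or an equivalent argument that explicitly controls the cross terms).
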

The proof of the theorem follows a similar approach as in proof of Theorem~\ref{thm:hp_reg_sq_loss_online} (See Appendix~\ref{subsec:Appendix_log_proof}).

% \section{High Probability Regret Bound}
% \label{sec:high_prob_regret}
% \input{./sec/High_prob_regret}

\section{Neural Contextual Bandits: Formulation and Regret Bounds}
\label{sec:bandit_regret}
\begin{algorithm}[!t]
\caption{Neural SquareCB ($\mathtt{NeuSquareCB}$); Uses Square loss}
\begin{algorithmic}[1]
\State Initialize $\bm{\theta}_0$, $\gamma, \set{\eta_t}$\;
\For {$t = 1, 2, ..., T$} 
\State Receive contexts $\x_{t,1}, ..., \x_{t, K}$, and compute $\hat{y}_{t, a} = \tilde{f}^{(S)}\left(\theta;\x_{t,a},{\epsvec}^{(1:S)}\right), \forall a\in[K]$ using $\eqref{eq:ftildeS}$\;
\State Let $b = \arg\min_{a} \hat{y}_{t, a},$\; $  p_{t, a} = \frac{1}{K + \gamma (\hat{y}_{t,b} - \hat{y}_{t, a})},$ and $ p_{t, b} = 1 - \sum_{a\neq b}p_{t, a}$\;
\State Sample arm $a_t \sim p_t$ and observe output $y_{t, a_t}$\;
\State Update ${\theta}_{t+1} =\prod_{\ballfrob}\Big( {\theta}_t - \eta_t \nabla \cL_{\sq}^{(S)}\big(y_{t,a_t},\big\{\tilde{f}(\theta;\x_{t,a_t},\epsvec_s)\big\}_{s=1}^{S}\big)\Big)$. 
\EndFor
\end{algorithmic}
\label{algo:neural-igw}
\end{algorithm}

We consider a contextual bandit problem where a learner needs to make sequential decisions over $T$ time steps.
At any round $t \in [T]$, the learner observes the context for $K$ arms $\x_{t,1}, ,...,\x_{t,K} \in \bbR^d$, where the contexts can be chosen adversarially.
The learner chooses an arm  $a_t \in[K]$ and then the associated output $y_{t, a_t} \in [0,1]$ is observed. We make the following assumption on the output.

\begin{asmp}
The conditional expectation of $y_{t,a}$ given $\x_{t,a}$ is given by $h$: $\R^d \mapsto [0,1]$, i.e.,
\(
\E[y_{t,a} |\x_{t,a}] = h(\x_{t,a}).
\) 
Further, the context vectors satisfy $\|\x_{t,a}\| \leq 1$, $t \in [T],a \in [K]$.
\label{asmp:realizability_bandit}
\end{asmp}

The learner's goal is to minimize the regret of the contextual bandit problem and is defined as the expected difference between the cumulative output of the algorithm and that of the optimal policy:
\begin{equation}
\reg(T) =    \E\Big[\sum_{t=1}^T \left( y_{t, a_t} - y_{t, a_t^*} \right)\Big] ,
\end{equation}
where $a_t^* = \arg\min_{k \in[K]} h(\x_{t,a})$ is the best action minimizing the expected output in round $t$. 

$\mathtt{NeuSquareCB}$ and $\mathtt{NeuFastCB}$ are summarized in \textbf{Algorithm} \ref{algo:neural-igw} and \textbf{Algorithm} \ref{algo:neural-rigw} respectively.
At time $t$, the algorithm computes $\tilde{f}^{(S)}\left(\theta;\x_{t,a},{\epsvec}^{(1:S)}\right), \forall a\in[K]$ using $\cref{eq:ftildeS}$ (see line 4). It then computes the probability of selecting an arm using the gap between learned outputs following inverse gap weighting scheme from \citet{abe1999associative} (see line 6) and samples an action $a_t$ from this distribution (see line 7). It then receives the true output for the selected arm $y_{t,a_t}$, and updates the parameters of the network using projected online gradient descent.

$\mathtt{NeuFastCB}$ employs a similar approach, except that it uses KL loss to update the parameters of the network and uses a slightly different weighting scheme to compute the action distribution \citep{foster2021efficient}. 

\begin{restatable}[\textbf{Regret bound for $\mathtt{NeuSquareCB}$}]{theo}{NeuSquareCBmainregret}
\label{thm:NeuSquareCB_main_regret}
    Under Assumption~\ref{asmp:realizability_bandit} and \ref{asmp:ntk} with appropriate choice of the parameter $\gamma$, step-size sequence $\{\eta_t\}$ width $m$, and regularization parameter $c_p$, with high probability over the randomness in the initialization and $\{\epsvec\}_{s=1}^S$ the regret for $\mathtt{NeuSquareCB}$ with $\rho = \Theta(\sqrt{T}/\lambda_0), \rho_1 = \Theta(1)$ is given by
        ${\emph{$\reg(T)$}} \leq \tilde{\cO}(\sqrt{KT})$.
\end{restatable}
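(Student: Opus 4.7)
The plan is to invoke the online-regression-to-contextual-bandits reduction of \citet{foster2020beyond} via inverse gap weighting, using Theorem~\ref{thm:hp_reg_sq_loss_online} as the online regression oracle. Concretely, if the cumulative squared error with respect to the Bayes predictor satisfies $\sum_{t=1}^{T} \E[(h(\x_{t,a_t}) - \hat{y}_{t,a_t})^2] = \tilde{\cO}(\log T)$, where $\hat{y}_{t,a} := \tilde{f}^{(S)}(\theta_t;\x_{t,a},\epsvec^{(1:S)})$, then the standard IGW analysis with $\gamma = \Theta(\sqrt{KT/\log T})$ yields $\reg(T) \le \tilde{\cO}(\sqrt{KT})$.

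The first step is to condition on the high-probability event of Theorem~\ref{thm:hp_reg_sq_loss_online} that $\Rsqt(T) = \cO(\log T)$; the $\cO(1/T^4)$ failure probability contributes only a constant to the expected regret since per-round regret is bounded by $1$. Next, I would convert this comparator regret (w.r.t.\ the best $\theta \in \ballfrob$) into an excess risk against $h$. Since $\E_{\epsvec}[\tilde{f}(\theta;\x,\epsvec)] = f(\theta;\x)$ and, by Theorem~\ref{thm:interpolation}, with $\rho = \Theta(\sqrt{T}/\lambda_0)$ there exists $\theta^\star \in \ballfrob$ that interpolates $h$ on the observed sequence, a bias-variance decomposition (using Assumption~\ref{asmp:realizability_bandit} to kill the cross term in conditional expectation) gives
\[
\sum_{t=1}^{T} \E[(h(\x_{t,a_t}) - \hat{y}_{t,a_t})^2] \;\le\; \Rsqt(T) + \sum_{t=1}^{T} \E_{\epsvec^{(1:S)}}\big[(\tilde{f}^{(S)}(\theta^\star;\x_{t,a_t},\epsvec^{(1:S)}) - h(\x_{t,a_t}))^2\big].
\]

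Then I would apply the IGW analysis of \citet{foster2020beyond}: for the sampling distribution in line 4 of Algorithm~\ref{algo:neural-igw}, the per-round bound $\E_{a \sim p_t}[h(\x_{t,a}) - h(\x_{t,a^\star})] \le K/\gamma + \gamma\,\E_{a \sim p_t}[(h(\x_{t,a}) - \hat{y}_{t,a})^2]$ holds. Summing over $t$, taking expectations, substituting the displayed squared-error bound and optimizing $\gamma$ deliver the claim.

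The main obstacle is controlling the residual variance-of-perturbation term in the displayed inequality: we must ensure the Rademacher noise injected in \cref{eq:output_reg_def} does not inflate the interpolating comparator's squared loss beyond $\cO(\log T)$ in aggregate. The scales $c_p \cdot m^{-1/4}$ in the perturbation and $S = \Theta(\log m)$ in the averaged predictor are precisely what allow a concentration argument (sub-Gaussian Rademacher sums plus a union bound over $t \in [T]$) to yield a total variance contribution of $\cO(1)$, by essentially the same calculations used inside the proof of Theorem~\ref{thm:hp_reg_sq_loss_online}. A secondary and routine technicality is converting the high-probability statement of Theorem~\ref{thm:hp_reg_sq_loss_online} into an expected-regret guarantee, which follows from the trivial $[0,1]$ bound on the per-round bandit regret.
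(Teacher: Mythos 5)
Your proposal takes essentially the same route as the paper: invoke the $\mathtt{SquareCB}$ reduction of Foster and Rakhlin, plug in the $\cO(\log T)$ online-regret bound from Theorem~\ref{thm:hp_reg_sq_loss_online}, observe that the comparator's cumulative loss (including the injected Rademacher perturbation) is $\cO(1)$ so that comparator regret upgrades to regret against $h$, and absorb the small failure probability via the trivial per-round bound. The paper's proof in Appendix~\ref{sec:app_bandit_reg} simply applies Theorem~1 of Foster and Rakhlin as a black box with $\delta=1/T$ and $\gamma=\sqrt{KT/(\Rsq(T))+\log(2T)}$, and handles the comparator-vs.-$h$ discrepancy in a standalone remark, whereas you unpack the per-round IGW inequality and the bias--variance step explicitly; the substance is identical.
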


\begin{restatable}[\textbf{\textbf{Regret bound for $\mathtt{NeuFastCB}$}}]{theo}{NeuFastCBmainregret}
\label{thm:NeuFastCB_main_regret}
    Under Assumption~\ref{asmp:realizability_bandit} and \ref{asmp:ntk} with appropriate choice of the parameter $\gamma$, step-size sequence $\{\eta_t\}$ width $m$, and regularization parameter $c_p$, with high probability over the randomness in the initialization and $\{\epsvec\}_{s=1}^S$, the regret for $\mathtt{NeuFastCB}$ with $\rho = \Theta(\sqrt{T}/\lambda_0), \rho_1 = \Theta(1)$ is given by
        $\emph{$\reg(T)$} \leq \tilde{\cO}(\sqrt{L^*K}+K),$ where $L^* = \sum_{t=1}^{T} y_{t,a_t^*}$.
\end{restatable}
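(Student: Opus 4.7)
The plan is to reduce the contextual bandit problem to online KL regression using the FastCB reduction of Foster--Krishnamurthy and then plug in Theorem~\ref{thm:hp_reg_kl_loss_online} for the regret of the regression oracle. Concretely, I would follow the FastCB template applied to $\mathsf{NeuFastCB}$: at each round the reweighted inverse gap distribution (line~6 of the algorithm) induces an instantaneous bandit regret that is bounded, via the standard potential argument, by a combination of $K/\gamma$, a bias term in $1/\gamma$ scaled by $L^*$, and an excess-loss term $\gamma \cdot (\text{KL gap between the learner and the Bayes predictor})$. Summing over $t$, taking expectation, and invoking realizability yields, on the high-probability event of Theorem~\ref{thm:hp_reg_kl_loss_online},
\[
\reg(T)\;\leq\;\cO\!\Bigl(\tfrac{K L^*}{\gamma}+\gamma\,\Rklt(T)+K\,\Rklt(T)\Bigr).
\]

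Next I would verify that realizability (Assumption~\ref{asmp:realizability_bandit}) transfers to the regression subproblem with the competitor class $\ballfrob$. By the interpolation result for wide networks (Theorem~\ref{thm:interpolation}), there exists $\tilde\theta\in\ballfrob$ with $\rho=\Theta(\sqrt{T}/\lambda_0)$ and $\rho_1=\Theta(1)$ such that $\sigma(f(\tilde\theta;\x_{t,a}))=h(\x_{t,a})$ for every observed pair $(t,a)$. Since $\sigma$ is $1/4$-Lipschitz and the perturbation in \cref{eq:output_reg_def} is of order $\cO(1/\poly(m))$, the cumulative KL loss at $\tilde\theta$ is $\cO(1)$ for $m$ sufficiently large, so Theorem~\ref{thm:hp_reg_kl_loss_online} is applicable and delivers $\Rklt(T)=\cO(\log T)$ with probability at least $1-\cO(T^{-4})$. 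Combining this with the reduction and optimizing $\gamma=\Theta\!\bigl(\sqrt{K L^*/\Rklt(T)}\bigr)$ (handled by a standard doubling trick on $L^*$ since $L^*$ is unknown in advance) balances the first two terms and gives
\[
\reg(T)\;\leq\;\cO\!\left(\sqrt{K L^*\,\Rklt(T)}+K\,\Rklt(T)\right)\;=\;\tilde\cO\!\left(\sqrt{K L^*}+K\right),
\]
with $\tilde\cO$ absorbing the $\log T$ factors from $\Rklt(T)$. On the complementary $\cO(T^{-4})$ probability event the regret is at most $T$, which contributes only $o(1)$ to the expectation.

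The main obstacle will be transferring the guarantee of Theorem~\ref{thm:hp_reg_kl_loss_online}, which bounds the regret of the averaged loss $\cL_{\kl}^{(S)}\!\bigl(y_t,\{\sigma(\tilde f(\theta;\x_t,\epsvec_s))\}_{s=1}^{S}\bigr)$, to the predictions actually used by the bandit algorithm, namely the average $\sigma(\tilde f^{(S)}(\theta_t;\x_{t,a},\epsvec^{(1:S)}))$ in \cref{eq:ftildeS_kl}. The convexity-of-loss direction (the KL analogue of \cref{eq:loss_jensen}) handles one side cleanly via Jensen; for the other side I would rely on concentration of the $S=\Theta(\log m)$ Rademacher averages and on the fact that both the perturbation bias and the Monte-Carlo variance are $\cO(1/\poly(m))$ by the same calculations used inside Theorem~\ref{thm:hp_reg_kl_loss_online}, so that the bandit predictor and the regression competitor differ in cumulative KL by at most a constant. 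A union bound over the initialization, the Rademacher draws $\{\epsvec_s\}_{s=1}^{S}$, and the algorithm's own randomness finally assembles the pieces into the stated high-probability (hence in-expectation) regret bound.
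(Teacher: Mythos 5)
Your proposal is correct and follows essentially the same route as the paper: invoke Theorem~1 of Foster--Krishnamurthy (FastCB) to reduce the bandit regret to $\cO(\sqrt{K L^*\,\Rkl(T)} + K\,\Rkl(T))$, then plug in $\Rkl(T)=\cO(\log T)$ from Theorem~\ref{thm:hp_reg_kl_loss_online}, with the transfer from the averaged perturbed loss to the true-label KL regret handled via the interpolation argument (which the paper notes in a remark after the proof). The only cosmetic difference is that you spell out the potential argument inside FastCB and a doubling trick for unknown $L^*$, whereas the paper simply cites Foster--Krishnamurthy's Theorem~1 for both.
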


The proof of the Theorem~\ref{thm:NeuSquareCB_main_regret} and \ref{thm:NeuFastCB_main_regret} follow using the reduction from \citet{foster2020beyond} and \citep{foster2021efficient} respectively, and crucially using our sharp regret bounds for online regression in Section~\ref{sec:Neural_online_learning_regret}. We provide both proofs in Appendix~\ref{sec:app_bandit_reg} for completeness.
\begin{remark}
    Note that since $L^* \leq T$ then $\mathcal{O}(\sqrt{KL^*}) \leq \mathcal{O}(\sqrt{KT})$. Therefore NeuFastCB is expected to perform better in most settings ``in practice'', especially when $L^*$ is small, i.e., the best policy has low regret. Also note that going by the upper bounds on the regret, especially the dependence on $K$, NeuSquareCB could outperform NeuFastCB only if $L^* = \Theta(T)$ and $K>>T$.
\end{remark}
 
\begin{remark}
In the linear setting, \citep{Azoury} gives $\Rsq(T) \leq \cO(p\log(T/p))$, where $p$ is the feature dimension (Section 2.3, \citet{foster2020beyond}). This translates to $\RegCB(T) \leq \tilde{\cO}(\sqrt{pKT})$. Further with KL loss, using continuous exponential weights gives $\R_{\kl}(T) = \cO(p\log T/p)$ which translates to $\RegCB(T) \leq \cO(\sqrt{L^* Lp\log T/p} + Kp\log T/p)$ (Section 4, \citet{foster2021efficient}).
However, with over-parameterized networks, (with $p >> T$), both bounds are $\Omega(T)$. Therefore it becomes essential to obtain regret bounds that are independent of the number of parameters in the network, which our results do.
\end{remark}

\begin{algorithm}[!t]
\caption{Neural FastCB ($\mathsf{NeuFastCB}$); Uses KL loss}
\begin{algorithmic}[1]
\State Initialize $\bm{\theta}_0$, $\gamma, \set{\eta_t}$\;
\For {$t = 1, 2, ..., T$} 
\State Receive contexts $\x_{t,1}, ..., \x_{t, K}$ and compute $\hat{y}_{t, k},\;\forall k \in [K]$ using $\cref{eq:ftildeS_kl}$\;
\State  Let $\displaystyle b_t = \argmin_{k \in [K]} \hat{y}_{t, k}$, \; $p_{t, k} = \frac{ \hy_{t,b_t}}{K\hy_{t,b_t} + \gamma (\hy_{t,k} - \hy_{t, b_t})},  k \in [K], $ and $ p_{t, b_t} = 1 - \sum_{k \neq b_t}p_{t, k}.$\;
\State Sample arm $a_t \sim p_t$ and observe output $y_{t, a_t}$\;
\State Update ${\theta}_{t+1} = \prod_{\ballfrob}\big({\theta}_t - \eta_t \nabla \cL_{\kl}^{(S)}\big(y_{t,a_t},\big\{\tilde{f}(\theta;\x_{t,a_t},\epsvec_s)\big\}_{s=1}^{S}\big)\big)$. %as defined in \cref{eq:avg_loss_kl} 
\EndFor
\end{algorithmic}
\label{algo:neural-rigw}
\end{algorithm}

\section{Related Work}
\label{sec:related_work}
\textbf{Contextual Bandits:}
The contextual bandit setting with linear losses has received extensive attention over the past decade (see for eg. \citealp{abe2003reinforcement,chu2011contextual,abbasi2011improved,agrawal2013thompson, ban2020generic, ban2021local}). Owing to its remarkable success, \citet{Lu_Van_Roy, zahavy2020neural, riquelme2018deep} adapted neural models to the contextual bandit setting.
In these initial works all but the last layers of a DNN were utilized as a feature map to transform contexts from the raw input space to a low-dimensional space and a linear exploration policy was then learned on top of the last hidden layer of the DNN. 
Although these attempts have yielded promising empirical results, no regret guarantees were provided.
Subsequently \citep{zhou2020neural_ucb} introduced the first neural bandit algorithm with provable regret guarantees that uses a UCB based exploration and \citet{zhang2021neural_ts} further extended it to the Thompson sampling approach. 
Both these approaches rely on Kernel bandits \citep{valko2013finite} and have a linear dependence on effective dimension $\tilde{d}$ of the (NTK) Neural Tangent Kernel (see \citealp{allen2019convergence,jacot2018neural,cao2019generalization}).
Moreover both these algorithms require the inversion of a matrix of size equal to the number of parameters in the model at each step of the algorithm.
Recently, \citet{ban2022eenet} attained a regret bound independent of $\tilde{d}$, but makes distributional assumptions on the context. 
\citep{qi2022neural, qi2023graph, ban2021multi, ban2022neural} shows the successful application of neural bandits on the recommender systems.

\vspace{2pt}

\textbf{Overparameterized Models:}
Considerable progress has been made in understanding the expressive power of Deep Neural Networks in the overparameterized regime \citep{du2019gradient, allen2019convergence, allen2019learning,cao2019generalization, arora2019fine}. It has been shown that the dynamics of the Neural Tangent Kernel always stays close to random initialization when the network is wide enough \citep{jacot2018neural, arora2019exact}. Further \citet{cao2019generalization} demonstrate that the loss function of neural network has the almost convexity in  the overparameterized regime while \citet{liu2020linearity,liu2022loss, frei2021proxy, pmlr-v80-charles18a} study neural models under Polyak-Lojasiewicz (PL) Type conditions \citep{POLYAK1963864,lojasiewicz1963topological,karimi2016linear}. 
Recently, \citet{banerjee2023restricted} provided a bound on the spectral norm of the Hessian of the netowrk over a larger layerwise spectral norm radius ball (in comparison to the radius in \citet{liu2020linearity}) and show geometric convergence in deep learning optimization using restricted strong convexity. Our regret analysis makes use of these recent advances in deep learning.

\section{Experiments}
\label{sec:expt}
We evaluate the performance of $\mathsf{NeuSquareCB}$ and $\mathsf{NeuFastCB}$, without output perturbation against some popular NeuCB algorithms. We include a discussion on the effect of output perturbation in Appendix~\ref{sec:app_experiments}.

\begin{figure}[!htbp]
    \centering
    \subfigure{\includegraphics[width=0.46\textwidth]{./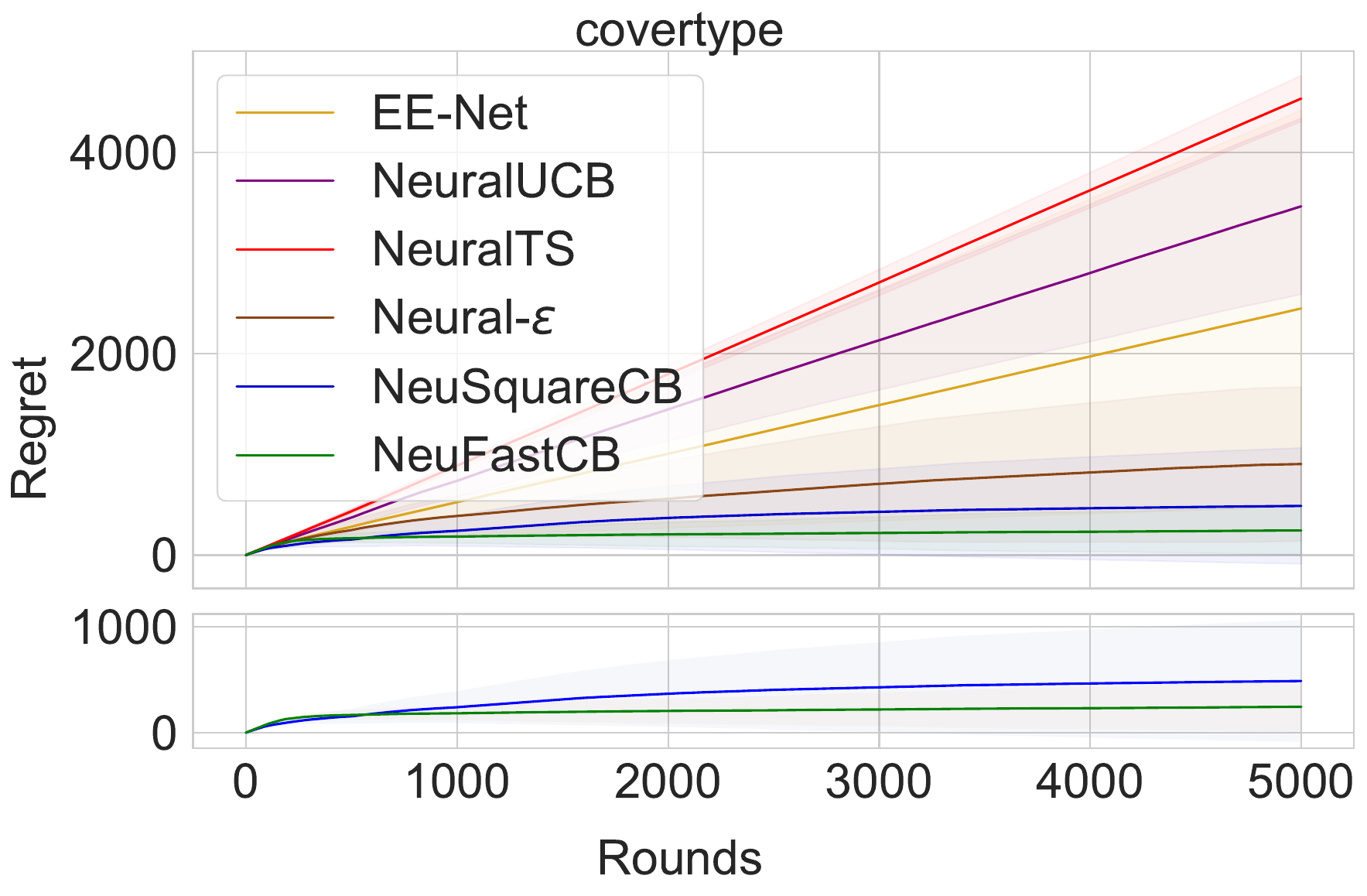}} 
    \subfigure{\includegraphics[width=0.46\textwidth]{./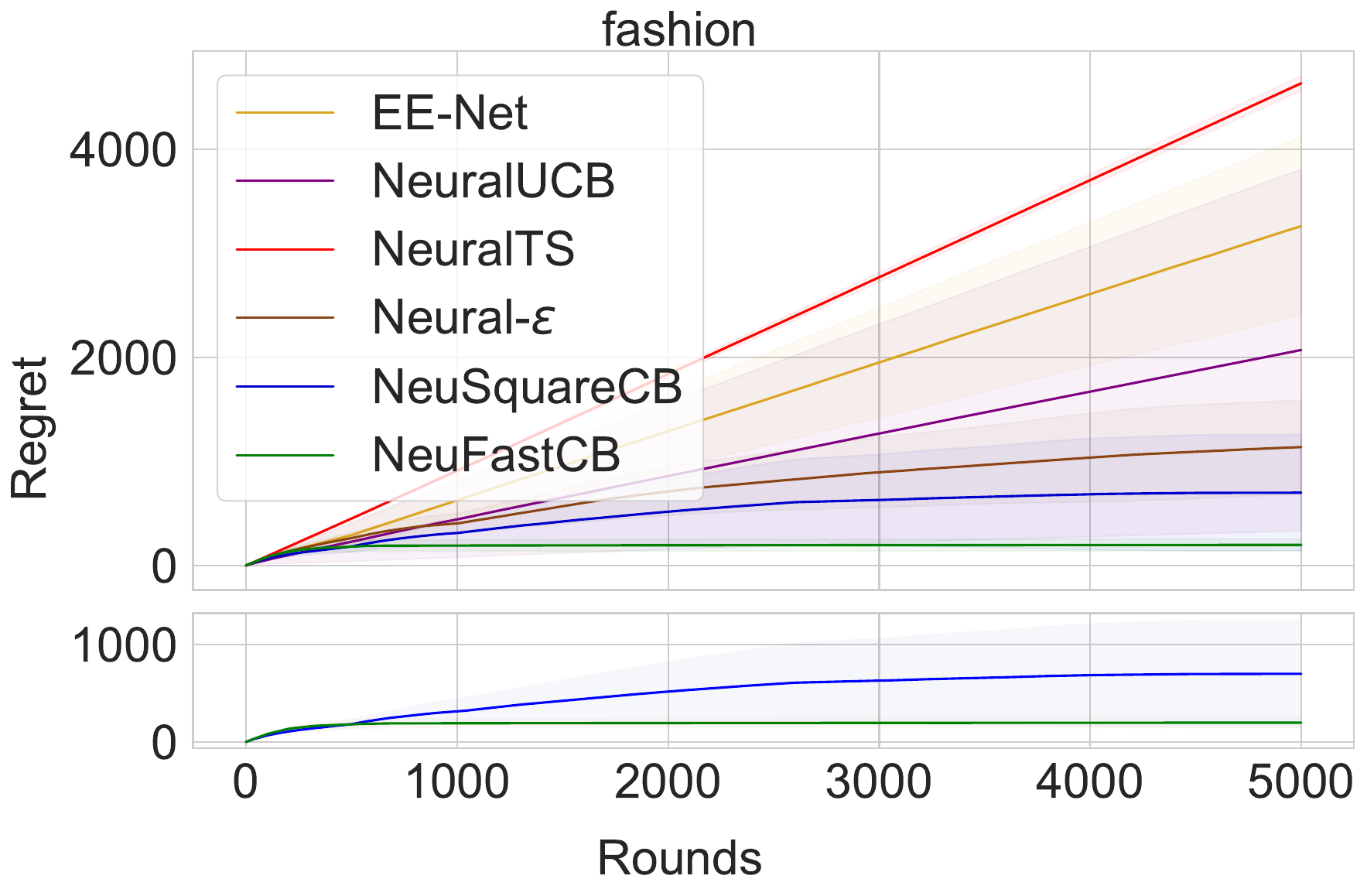}} 
    \subfigure{\includegraphics[width=0.46\textwidth]{./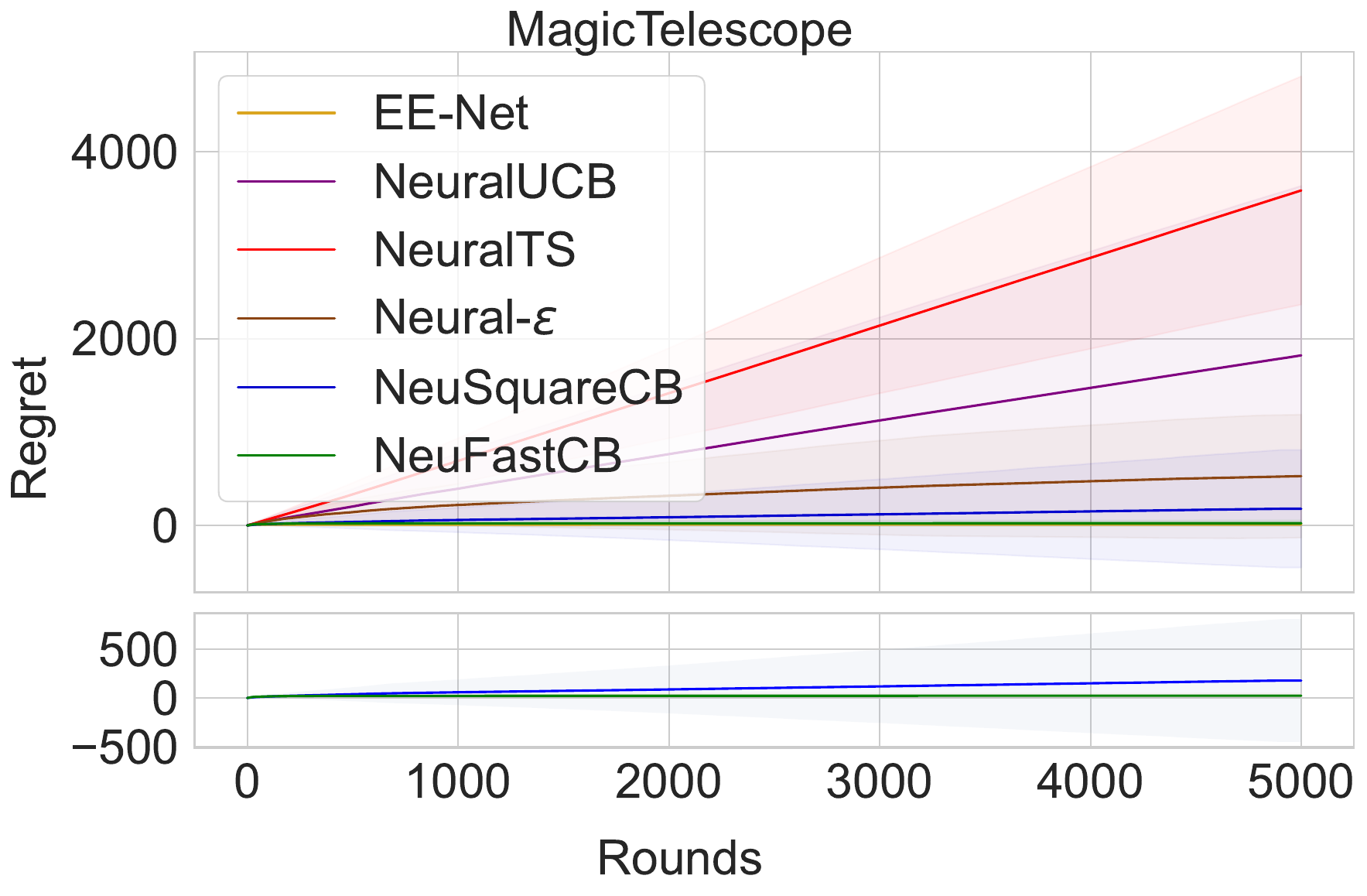}}
    %\caption{Comparison of NeuralAD with baselines on }
    \label{fig:1}
    \centering
    \subfigure{\includegraphics[width=0.46\textwidth]{./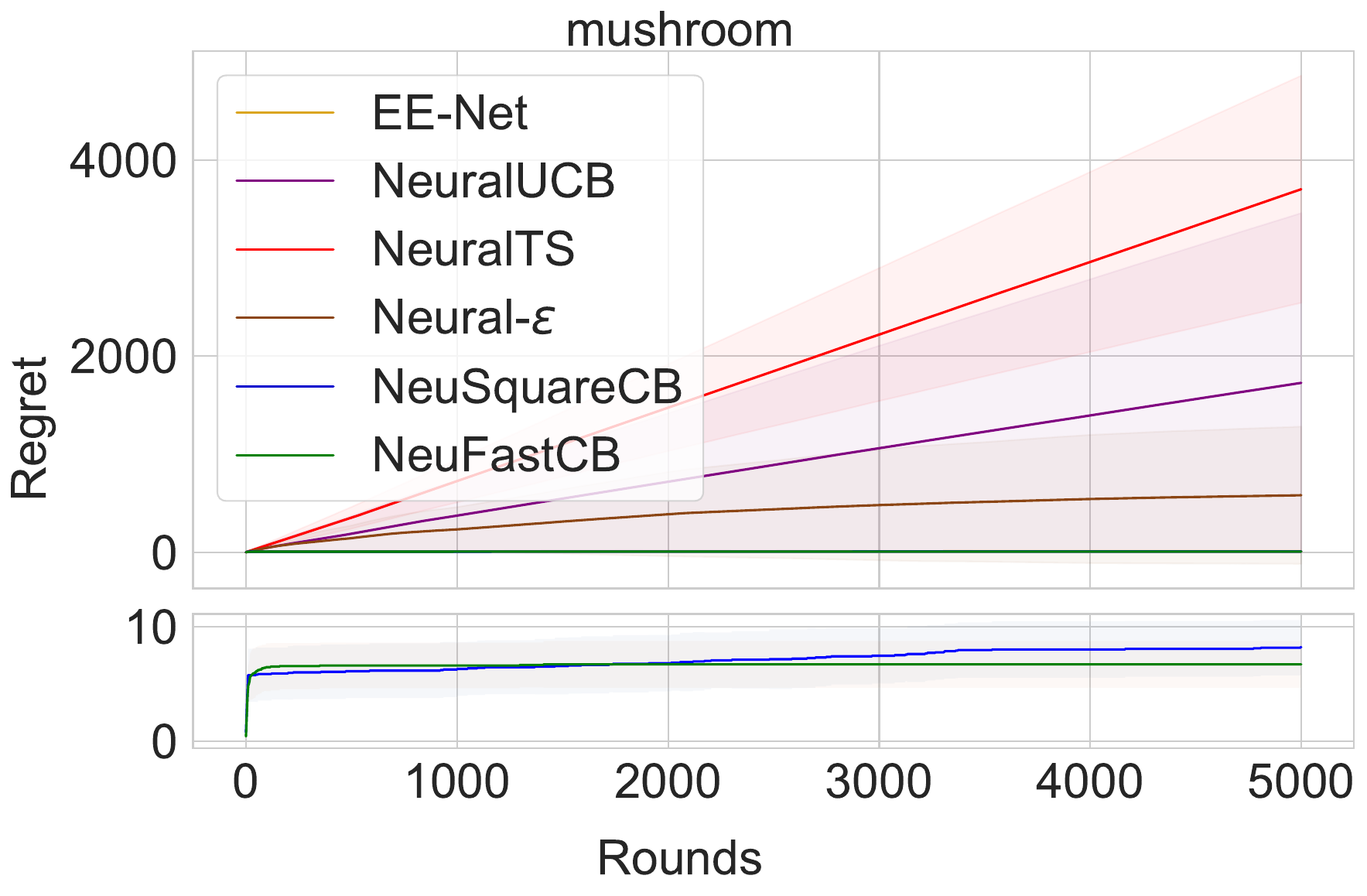}} 
    \subfigure{\includegraphics[width=0.46\textwidth]{./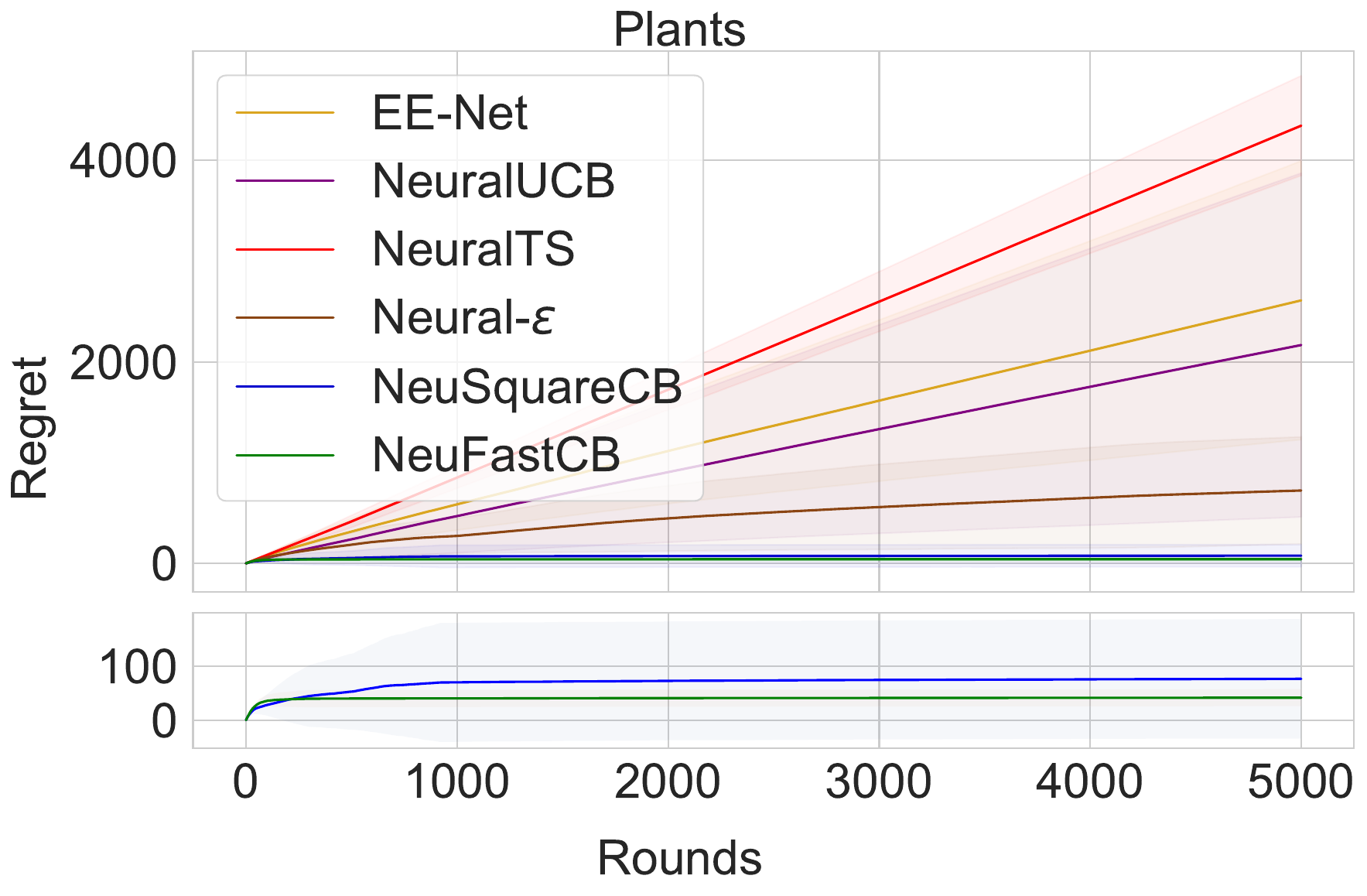}} 
    \subfigure{\includegraphics[width=0.46\textwidth]{./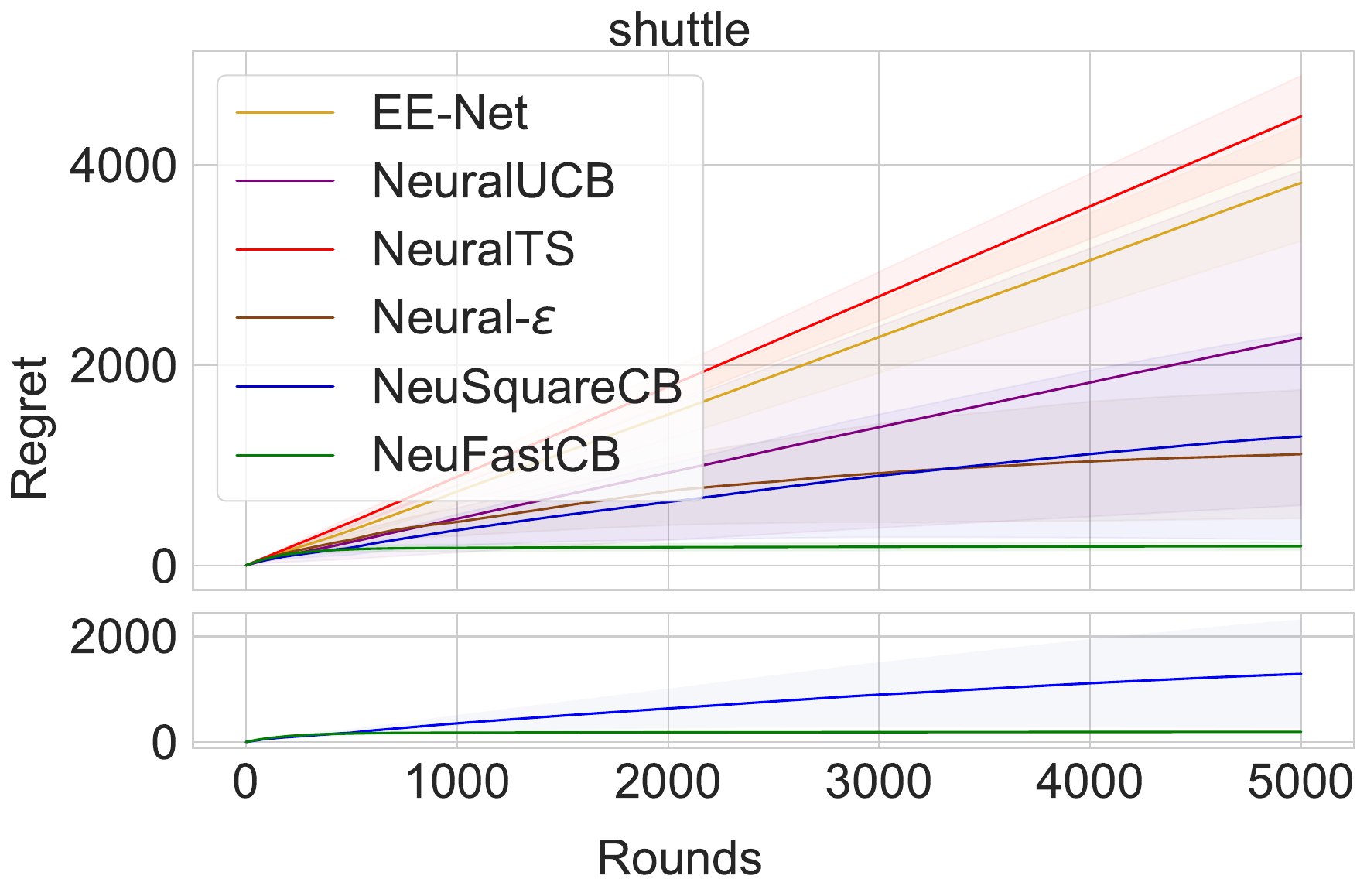}}
    \caption{Comparison of cumulative regret of $\mathsf{NeuSquareCB}$ and $\mathsf{NeuFastCB}$ with baselines on openml datasets (averaged over 20 runs).} 
\end{figure}

\clearpage

\textbf{Baselines.} We choose four neural bandit algorithms: (1) NeuralUCB \citep{zhou2020neural_ucb} maintains a confidence bound at every step using the gradients of the network and selects the most optimistic arm. (2) NeuralTS (Neural Thompson Sampling) \citep{zhang2021neural_ts} estimates the rewards by drawing them from a normal distribution whose mean is the output of the neural network and the variance is a quadratic form of the gradients of the network. The arm with the maximum sampled reward from this distribution is selected. (3) EE-Net \citep{ban2022eenet}: In addition to employing an Exploitation network for learning the output function, it uses another Exploration network to learn the potential gain of exploring in relation to the current estimated reward. (4) NeuralEpsilon employs the $\epsilon$-greedy strategy: with probability $1- \epsilon$ it chooses the arm with the maximum estimated reward generated by the network and with probability $\epsilon$ it chooses a random arm.

\textbf{Datasets.} We consider a collection of 6 multiclass classification based datasets from the $\mathtt{openml.org}$ platform: covertype, fashion, MagicTelescope, mushroom, Plants and shuttle. Following the evaluation setting of existing works \citep{zhou2020neural_ucb, ban2022eenet}, given an input $\x_t \in \R^d$ for a $K$-class classification problem, we transform it into $dK$ dimensional context vectors for each arm: $\x_{t,1} = (\x_t,\bzero,\bzero,\ldots,\bzero)^T$, $\x_{t,2} = (\bzero,\x_t,\bzero,\ldots,\bzero)^T), \ldots, \x_{t,K} = (\bzero,\bzero,\ldots,\bzero,\x_t)^T$. The reward is defined as $1$ if the index of selected arm equals $\x$'s ground-truth class; otherwise, the reward is $0$.

\textbf{Architecture:} Both $\mathsf{NeuSquareCB}$ and $\mathsf{NeuFastCB}$ use a 2-layered ReLu network with 100 hidden neurons. The last layer in $\mathtt{NeuRIG}$ uses a linear activation while $\mathsf{NeuFastCB}$ uses a sigmoid. 
Following the scheme in \citet{zhou2020neural_ucb} and \citet{zhang2021neural_ts} we use a diagonal matrix approximation in both NeuralUCB and NeuralTS to save computation cost in matrix inversion. 
Both use a 2-layered ReLu network with 100 hidden neurons and the last layer uses a linear activation.
We perform a grid-search over the regularization parameter $\lambda$ over $(1,0.1,0.01)$ and the exploration parameter $\nu$ over $(0.1,0.01,0.001)$.
NeuralEpsilon uses the same neural architecture and the exploration parameter $\epsilon$ is searched over $(0.1, 0.05, 0.01)$. 
For EE-Net we use the architecture from \url{https://github.com/banyikun/EE-Net-ICLR-2022}.
For all the algorithms we also do a grid-search for the step-size over $(0.01,0.005,0.001)$.

\textbf{Adversarial Contexts.} 
In order to evaluate the performance of the algorithms on adversarially chosen contexts, we feed data points to each of the model in the following manner: for the first 500 rounds, an instance $\x$ is uniformly sampled from each of the classes, transformed into context vectors as described above and presented to the model. We calculate the accuracy for each class by recording the rewards of this class divided by the number of instances drawn from this class. 
In the subsequent 500 rounds, we increase the probability of sampling instances from the class which had the least accuracy in the previous rounds. We repeat this procedure every 500 rounds.

\textbf{Results.} 
Figure \ref{fig:1} plots the cumulative regret of all the algorithms across different rounds. All experiments were averaged across 20 rounds and the standard deviation is plotted along with the average performance.
Although all the algorithms use a neural network to model the potential non-linearity in the reward, the baseline algorithms show erratic performance with a lot of variance. 
Both our algorithms $\mathsf{NeuSquareCB}$ and $\mathsf{NeuFastCB}$ show consistent performance across all the datasets.
Moreover, $\mathsf{NeuFastCB}$ persistently outperforms all baselines for all the datasets.

\section{Conclusion}
\label{sec:conc}
In this work, we develop novel regret bounds for online regression with neural networks and subsequently give regret guarantees for NeuCBs.
We provide a sharp $\cO(\log T)$ regret for online regression when the loss  satisfies almost convexity, QG condition, and has unique minima. 
We then propose a network with a small random perturbation, and show that surprisingly this makes the neural losses satisfy all three conditions. 
Using these results we obtain $\cO(\log T)$ regret bound with both square loss and KL loss and thereafter, convert these bounds to regret bounds for NeuCBs.
Separately, our lower bound results for two popular NeuCB algorithms, namely Neural UCB\citep{zhou2020neural_ucb} and Neural TS\citep{zhang2021neural_ts} show that even an oblivious adversary can choose a sequence of contexts and a reward function that makes their regret bounds $\Omega(T)$.
Our algorithms in contrast guarantee $\cO(\sqrt{T})$ regret, are efficient to implement (no matrix inversions as in NeuralUCB and NeuralTS), work even for contexts drawn by an adaptive adversary and does not need to store previous networks (unlike \citep{ban2022eenet}).
Additionally, our experimental comparisons with the baselines on various datasets further highlight the advantages of our methods and therefore significantly advances the state of the art in NeuCBs from both theoretical and empirical perspectives.

\paragraph{Acknowledgements.}  The work was supported in part by the National Science Foundation (NSF) through awards IIS 21-31335, OAC 21-30835, DBI 20-21898, as well as a C3.ai research award. 

\bibliographystyle{abbrvnat} % We choose the "plain" reference style
\bibliography{refs}

\begin{thebibliography}{44}
\providecommand{\natexlab}[1]{#1}
\providecommand{\url}[1]{\texttt{#1}}
\expandafter\ifx\csname urlstyle\endcsname\relax
  \providecommand{\doi}[1]{doi: #1}\else
  \providecommand{\doi}{doi: \begingroup \urlstyle{rm}\Url}\fi

\bibitem[Abbasi-Yadkori et~al.(2011)Abbasi-Yadkori, P{\'a}l, and
  Szepesv{\'a}ri]{abbasi2011improved}
Y.~Abbasi-Yadkori, D.~P{\'a}l, and C.~Szepesv{\'a}ri.
\newblock Improved algorithms for linear stochastic bandits.
\newblock \emph{Advances in neural information processing systems}, 24, 2011.

\bibitem[Abe and Long(1999)]{abe1999associative}
N.~Abe and P.~M. Long.
\newblock Associative reinforcement learning using linear probabilistic
  concepts.
\newblock In \emph{ICML}, pages 3--11. Citeseer, 1999.

\bibitem[Abe et~al.(2003)Abe, Biermann, and Long]{abe2003reinforcement}
N.~Abe, A.~W. Biermann, and P.~M. Long.
\newblock Reinforcement learning with immediate rewards and linear hypotheses.
\newblock \emph{Algorithmica}, 37\penalty0 (4):\penalty0 263--293, 2003.

\bibitem[Agrawal and Goyal(2013)]{agrawal2013thompson}
S.~Agrawal and N.~Goyal.
\newblock Thompson sampling for contextual bandits with linear payoffs.
\newblock In \emph{International conference on machine learning}, pages
  127--135. PMLR, 2013.

\bibitem[Allen-Zhu et~al.(2019{\natexlab{a}})Allen-Zhu, Li, and
  Liang]{allen2019learning}
Z.~Allen-Zhu, Y.~Li, and Y.~Liang.
\newblock Learning and generalization in overparameterized neural networks,
  going beyond two layers.
\newblock \emph{Advances in neural information processing systems}, 32,
  2019{\natexlab{a}}.

\bibitem[Allen-Zhu et~al.(2019{\natexlab{b}})Allen-Zhu, Li, and
  Song]{allen2019convergence}
Z.~Allen-Zhu, Y.~Li, and Z.~Song.
\newblock A convergence theory for deep learning via over-parameterization.
\newblock In \emph{International Conference on Machine Learning}, pages
  242--252. PMLR, 2019{\natexlab{b}}.

\bibitem[Arora et~al.(2019{\natexlab{a}})Arora, Du, Hu, Li, and
  Wang]{arora2019fine}
S.~Arora, S.~Du, W.~Hu, Z.~Li, and R.~Wang.
\newblock Fine-grained analysis of optimization and generalization for
  overparameterized two-layer neural networks.
\newblock In \emph{International Conference on Machine Learning}, pages
  322--332. PMLR, 2019{\natexlab{a}}.

\bibitem[Arora et~al.(2019{\natexlab{b}})Arora, Du, Hu, Li, Salakhutdinov, and
  Wang]{arora2019exact}
S.~Arora, S.~S. Du, W.~Hu, Z.~Li, R.~R. Salakhutdinov, and R.~Wang.
\newblock On exact computation with an infinitely wide neural net.
\newblock \emph{Advances in Neural Information Processing Systems}, 32,
  2019{\natexlab{b}}.

\bibitem[Azoury and Warmuth(2001)]{Azoury}
K.~S. Azoury and M.~K. Warmuth.
\newblock Relative loss bounds for on-line density estimation with the
  exponential family of distributions.
\newblock \emph{Mach. Learn.}, 43\penalty0 (3):\penalty0 211–246, jun 2001.
\newblock ISSN 0885-6125.
\newblock \doi{10.1023/A:1010896012157}.
\newblock URL \url{https://doi.org/10.1023/A:1010896012157}.

\bibitem[Ban and He(2020)]{ban2020generic}
Y.~Ban and J.~He.
\newblock Generic outlier detection in multi-armed bandit.
\newblock In \emph{Proceedings of the 26th ACM SIGKDD International Conference
  on Knowledge Discovery \& Data Mining}, pages 913--923, 2020.

\bibitem[Ban and He(2021)]{ban2021local}
Y.~Ban and J.~He.
\newblock Local clustering in contextual multi-armed bandits.
\newblock In \emph{Proceedings of the Web Conference 2021}, pages 2335--2346,
  2021.

\bibitem[Ban et~al.(2021)Ban, He, and Cook]{ban2021multi}
Y.~Ban, J.~He, and C.~B. Cook.
\newblock Multi-facet contextual bandits: A neural network perspective.
\newblock In \emph{Proceedings of the 27th ACM SIGKDD Conference on Knowledge
  Discovery \& Data Mining}, pages 35--45, 2021.

\bibitem[Ban et~al.(2022{\natexlab{a}})Ban, Qi, Wei, and He]{ban2022neural}
Y.~Ban, Y.~Qi, T.~Wei, and J.~He.
\newblock Neural collaborative filtering bandits via meta learning.
\newblock \emph{arXiv preprint arXiv:2201.13395}, 2022{\natexlab{a}}.

\bibitem[Ban et~al.(2022{\natexlab{b}})Ban, Yan, Banerjee, and
  He]{ban2022eenet}
Y.~Ban, Y.~Yan, A.~Banerjee, and J.~He.
\newblock {EE}-net: Exploitation-exploration neural networks in contextual
  bandits.
\newblock In \emph{International Conference on Learning Representations},
  2022{\natexlab{b}}.
\newblock URL \url{https://openreview.net/forum?id=X_ch3VrNSRg}.

\bibitem[Banerjee et~al.(2023)Banerjee, Cisneros-Velarde, Zhu, and
  Belkin]{banerjee2023restricted}
A.~Banerjee, P.~Cisneros-Velarde, L.~Zhu, and M.~Belkin.
\newblock Restricted strong convexity of deep learning models with smooth
  activations.
\newblock In \emph{The Eleventh International Conference on Learning
  Representations}, 2023.
\newblock URL \url{https://openreview.net/forum?id=PINRbk7h01}.

\bibitem[Cao and Gu(2019{\natexlab{a}})]{Cao2019GeneralizationEB}
Y.~Cao and Q.~Gu.
\newblock Generalization error bounds of gradient descent for learning
  over-parameterized deep relu networks.
\newblock In \emph{AAAI Conference on Artificial Intelligence},
  2019{\natexlab{a}}.

\bibitem[Cao and Gu(2019{\natexlab{b}})]{cao2019generalization}
Y.~Cao and Q.~Gu.
\newblock Generalization bounds of stochastic gradient descent for wide and
  deep neural networks.
\newblock \emph{Advances in neural information processing systems}, 32,
  2019{\natexlab{b}}.

\bibitem[Charles and Papailiopoulos(2018)]{pmlr-v80-charles18a}
Z.~Charles and D.~Papailiopoulos.
\newblock Stability and generalization of learning algorithms that converge to
  global optima.
\newblock In J.~Dy and A.~Krause, editors, \emph{Proceedings of the 35th
  International Conference on Machine Learning}, volume~80 of \emph{Proceedings
  of Machine Learning Research}, pages 745--754. PMLR, 10--15 Jul 2018.

\bibitem[Chen et~al.(2021)Chen, Minasyan, Lee, and Hazan]{chen2021provable}
X.~Chen, E.~Minasyan, J.~D. Lee, and E.~Hazan.
\newblock Provable regret bounds for deep online learning and control.
\newblock \emph{arXiv preprint arXiv:2110.07807}, 2021.

\bibitem[Chu et~al.(2011)Chu, Li, Reyzin, and Schapire]{chu2011contextual}
W.~Chu, L.~Li, L.~Reyzin, and R.~Schapire.
\newblock Contextual bandits with linear payoff functions.
\newblock In \emph{Proceedings of the Fourteenth International Conference on
  Artificial Intelligence and Statistics}, pages 208--214. JMLR Workshop and
  Conference Proceedings, 2011.

\bibitem[Du et~al.(2019)Du, Lee, Li, Wang, and Zhai]{du2019gradient}
S.~Du, J.~Lee, H.~Li, L.~Wang, and X.~Zhai.
\newblock Gradient descent finds global minima of deep neural networks.
\newblock In \emph{International conference on machine learning}, pages
  1675--1685. PMLR, 2019.

\bibitem[Foster and Rakhlin(2020)]{foster2020beyond}
D.~Foster and A.~Rakhlin.
\newblock Beyond ucb: Optimal and efficient contextual bandits with regression
  oracles.
\newblock In \emph{International Conference on Machine Learning}, pages
  3199--3210. PMLR, 2020.

\bibitem[Foster and Krishnamurthy(2021)]{foster2021efficient}
D.~J. Foster and A.~Krishnamurthy.
\newblock Efficient first-order contextual bandits: Prediction, allocation, and
  triangular discrimination.
\newblock \emph{Advances in Neural Information Processing Systems}, 34, 2021.

\bibitem[Frei and Gu(2021)]{frei2021proxy}
S.~Frei and Q.~Gu.
\newblock Proxy convexity: A unified framework for the analysis of neural
  networks trained by gradient descent.
\newblock In A.~Beygelzimer, Y.~Dauphin, P.~Liang, and J.~W. Vaughan, editors,
  \emph{Advances in Neural Information Processing Systems}, 2021.
\newblock URL \url{https://openreview.net/forum?id=NVpGLJUuPx5}.

\bibitem[Hazan(2021)]{hazan2021introduction}
E.~Hazan.
\newblock Introduction to online convex optimization, 2021.

\bibitem[Honorio and Jaakkola(2014)]{pmlr-v33-honorio14}
J.~Honorio and T.~Jaakkola.
\newblock {Tight Bounds for the Expected Risk of Linear Classifiers and
  PAC-Bayes Finite-Sample Guarantees}.
\newblock In S.~Kaski and J.~Corander, editors, \emph{Proceedings of the
  Seventeenth International Conference on Artificial Intelligence and
  Statistics}, volume~33 of \emph{Proceedings of Machine Learning Research},
  pages 384--392, Reykjavik, Iceland, 22--25 Apr 2014. PMLR.
\newblock URL \url{https://proceedings.mlr.press/v33/honorio14.html}.

\bibitem[Jacot et~al.(2018)Jacot, Gabriel, and Hongler]{jacot2018neural}
A.~Jacot, F.~Gabriel, and C.~Hongler.
\newblock Neural tangent kernel: Convergence and generalization in neural
  networks.
\newblock \emph{Advances in neural information processing systems}, 31, 2018.

\bibitem[Karimi et~al.(2016)Karimi, Nutini, and Schmidt]{karimi2016linear}
H.~Karimi, J.~Nutini, and M.~Schmidt.
\newblock Linear convergence of gradient and proximal-gradient methods under
  the {P}olyak-lojasiewicz condition.
\newblock In \emph{Joint European conference on machine learning and knowledge
  discovery in databases}, pages 795--811. Springer, 2016.

\bibitem[Liu et~al.(2020)Liu, Zhu, and Belkin]{liu2020linearity}
C.~Liu, L.~Zhu, and M.~Belkin.
\newblock On the linearity of large non-linear models: when and why the tangent
  kernel is constant.
\newblock \emph{Advances in Neural Information Processing Systems},
  33:\penalty0 15954--15964, 2020.

\bibitem[Liu et~al.(2022)Liu, Zhu, and Belkin]{liu2022loss}
C.~Liu, L.~Zhu, and M.~Belkin.
\newblock Loss landscapes and optimization in over-parameterized non-linear
  systems and neural networks.
\newblock \emph{Applied and Computational Harmonic Analysis}, 59:\penalty0
  85--116, 2022.

\bibitem[Lojasiewicz(1963)]{lojasiewicz1963topological}
S.~Lojasiewicz.
\newblock A topological property of real analytic subsets (in french).
\newblock In \emph{Les \'equations aux d\'eriv\'ees partielles}, pages 87--89.
  Coll. du CNRS, 1963.

\bibitem[Lu and Van~Roy(2017)]{Lu_Van_Roy}
X.~Lu and B.~Van~Roy.
\newblock Ensemble sampling.
\newblock In \emph{Proceedings of the 31st International Conference on Neural
  Information Processing Systems}, NIPS'17, page 3260–3268, Red Hook, NY,
  USA, 2017. Curran Associates Inc.
\newblock ISBN 9781510860964.

\bibitem[Polyak(1963)]{POLYAK1963864}
B.~Polyak.
\newblock Gradient methods for the minimisation of functionals.
\newblock \emph{USSR Computational Mathematics and Mathematical Physics},
  3\penalty0 (4):\penalty0 864--878, 1963.
\newblock ISSN 0041-5553.
\newblock \doi{https://doi.org/10.1016/0041-5553(63)90382-3}.
\newblock URL
  \url{https://www.sciencedirect.com/science/article/pii/0041555363903823}.

\bibitem[Qi et~al.(2022)Qi, Ban, and He]{qi2022neural}
Y.~Qi, Y.~Ban, and J.~He.
\newblock Neural bandit with arm group graph.
\newblock In \emph{Proceedings of the 28th ACM SIGKDD Conference on Knowledge
  Discovery and Data Mining}, pages 1379--1389, 2022.

\bibitem[Qi et~al.(2023)Qi, Ban, and He]{qi2023graph}
Y.~Qi, Y.~Ban, and J.~He.
\newblock Graph neural bandits.
\newblock In \emph{Proceedings of the 29th ACM SIGKDD Conference on Knowledge
  Discovery and Data Mining}, pages 1920--1931, 2023.

\bibitem[Riquelme et~al.(2018)Riquelme, Tucker, and Snoek]{riquelme2018deep}
C.~Riquelme, G.~Tucker, and J.~Snoek.
\newblock Deep bayesian bandits showdown: An empirical comparison of bayesian
  deep networks for thompson sampling.
\newblock In \emph{International Conference on Learning Representations}, 2018.
\newblock URL \url{https://openreview.net/forum?id=SyYe6k-CW}.

\bibitem[Sason(2015)]{sason2015reverse}
I.~Sason.
\newblock On reverse pinsker inequalities, 2015.

\bibitem[Shalev-Shwartz(2012)]{shai}
S.~Shalev-Shwartz.
\newblock Online learning and online convex optimization.
\newblock \emph{Foundations and Trends® in Machine Learning}, 4\penalty0
  (2):\penalty0 107--194, 2012.
\newblock ISSN 1935-8237.
\newblock \doi{10.1561/2200000018}.
\newblock URL \url{http://dx.doi.org/10.1561/2200000018}.

\bibitem[Simchi-Levi and Xu(2020)]{SimchiLevi2020BypassingTM}
D.~Simchi-Levi and Y.~Xu.
\newblock Bypassing the monster: A faster and simpler optimal algorithm for
  contextual bandits under realizability.
\newblock \emph{ArXiv}, abs/2003.12699, 2020.

\bibitem[Valko et~al.(2013)Valko, Korda, Munos, Flaounas, and
  Cristianini]{valko2013finite}
M.~Valko, N.~Korda, R.~Munos, I.~Flaounas, and N.~Cristianini.
\newblock Finite-time analysis of kernelised contextual bandits.
\newblock \emph{arXiv preprint arXiv:1309.6869}, 2013.

\bibitem[Vershynin(2011)]{vershynin2011introduction}
R.~Vershynin.
\newblock Introduction to the non-asymptotic analysis of random matrices, 2011.

\bibitem[Zahavy and Mannor(2020)]{zahavy2020neural}
T.~Zahavy and S.~Mannor.
\newblock Neural linear bandits: Overcoming catastrophic forgetting through
  likelihood matching, 2020.
\newblock URL \url{https://openreview.net/forum?id=r1gzdhEKvH}.

\bibitem[Zhang et~al.(2021)Zhang, Zhou, Li, and Gu]{zhang2021neural_ts}
W.~Zhang, D.~Zhou, L.~Li, and Q.~Gu.
\newblock Neural thompson sampling.
\newblock In \emph{International Conference on Learning Representation (ICLR)},
  2021.

\bibitem[Zhou et~al.(2020)Zhou, Li, and Gu]{zhou2020neural_ucb}
D.~Zhou, L.~Li, and Q.~Gu.
\newblock Neural contextual bandits with ucb-based exploration.
\newblock In \emph{International Conference on Machine Learning}, pages
  11492--11502. PMLR, 2020.

\end{thebibliography}

%%%%%%%%%%%%%%%%%%%%%%%%%%%%%%%%%%%%%%%%%%%%%%%%%%%%%%%%%%%%
% \clearpage
\appendix

\clearpage

\section{Comparison with recent Neural Contextual Bandit Algorithms}
\label{app:Appendix_Effective_dimension}

%We consider the setting with $K=1$. 
In this section we show that the regret bounds for NeuralUCB \citep{zhou2020neural_ucb} and Neural Thompson Sampling (NeuralTS) \citep{zhang2021neural_ts} is $\Omega(T)$ in the worst case. We start with a brief description of the notations used in these works.
$\bH$ is the Neural Tangent Kernel (NTK) matrix computed from all the context vectors $\x_{t,i}$, $t \in [T], i \in [K]$, and $h(\x)$ is the true reward function given a context $\x$. The cumulative reward vector is defined as $\bh = (h(\x_1),\ldots,h(\x_{TK}))^{\top}$.

With $\lambda$ as the regularization parameter in the loss, the effective dimension $\tilde{d}$ of the Neural Tangent Kernel $\bH$ on the contexts $\{\x_{t}\}_{t=1}^{TK}$ is defined as:
\begin{align*}
    \tilde{d} = \frac{\log \det(\bI+\bH/\lambda)}{\log(1+T/\lambda)}
\end{align*}
Further it is assumed that $\bH \succeq \lambda_0 \I$ (see Assumption 4.2 in \citep{zhou2020neural_ucb} and Assumption 3.4 in \citep{zhang2021neural_ts}).
\subsection{\texorpdfstring{$\Omega(T)$}{alpha} regret for NeuralUCB}
% \subsection{The classes \texorpdfstring{$\mathcal{L}(\gamma)$}{Lg}}

\label{app:subsec-NeuralUCB}
The bound on the regret for NeuralUCB \citep{zhou2020neural_ucb} is given by (ignoring constants):
\begin{align*}
    \reg(T) &\leq \sqrt{T}\sqrt{\tilde{d}\log\left(1+\frac{TK}{\lambda}\right)} \left(\sqrt{\tilde{d}\log\left(1+\frac{TK}{\lambda}\right)} + \sqrt{\lambda} S\right)\\
    &= \sqrt{T} \left(\tilde{d}\log\left(1+\frac{TK}{\lambda}\right) + S\sqrt{\tilde{d} \log\left(1+\frac{TK}{\lambda}\right)\lambda}\right) \coloneqq \bnucb(T,\lambda),
\end{align*}
where, $\lambda$ is the regularization constant and $S \geq \sqrt{\bh^TH^{-1}\bh}$ with $\bh = (h(\x_1),\ldots,h(\x_{TK}))^T$. We provide two $\Omega(T)$ regret bounds for NeuralUCB. 

The first result creates an instance that an oblivious adversary can choose before the algorithm begins such that the regret bound is $\Omega(T)$. The second result provides an $\Omega(T)$ bound for any reward function and set of context vectors as long as $\displaystyle\frac{\kappa}{\|\bh\|}$ is $\Theta(1)$.
\begin{theorem}
\label{prop:nucb_1}
    There exists a reward vector $\bh$ such that the regret bound for NeuralUCB is lower bounded as
    \[\emph{$\bnucb$}(\lambda,T) \geq \displaystyle\frac{1}{\sqrt{2}}\sqrt{K}T.\]
\end{theorem}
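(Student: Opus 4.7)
The plan is to exhibit a single reward vector $\bh \in [0,1]^{TK}$ that, combined with the constraint $S \geq \sqrt{\bh^\top \bH^{-1} \bh}$, forces $\bnucb(\lambda, T) \geq \frac{1}{\sqrt{2}}\sqrt{K}T$ for every choice of regularization $\lambda > 0$ the algorithm might use. Recall that $\bnucb(\lambda, T) = \sqrt{T}\tilde{d}\log(1+TK/\lambda) + \sqrt{T}\, S\sqrt{\tilde{d}\lambda\log(1+TK/\lambda)}$, and that $\bh$ only enters through $S$, so the game is to maximize $\bh^\top \bH^{-1}\bh$ subject to $\bh \in [0,1]^{TK}$, and then to combine this with lower bounds on $\tilde{d}\log(1+TK/\lambda)$ extracted from $\bH \succeq \lambda_0 \bI$.

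First I would take $\bh = \mathbf{1}$, the all-ones vector. Then $\|\bh\|^2 = TK$, and since the NTK has bounded diagonal entries ($k(\x,\x) = O(1)$ when $\|\x\|\leq 1$), we have $\lambda_{\max}(\bH) \leq \tr(\bH) = O(TK)$. The Rayleigh-quotient inequality $\bh^\top \bH^{-1}\bh \geq \|\bh\|^2/\lambda_{\max}(\bH)$ then yields $S = \Omega(1)$. Second, I would apply the assumption $\bH \succeq \lambda_0\bI$ to obtain $\log\det(\bI + \bH/\lambda) \geq TK\log(1+\lambda_0/\lambda)$, and use $\log(1+TK/\lambda) \geq \log(1+T/\lambda)$ together with $\log(1+x)\geq x/(1+x)$ to deduce
\[
\tilde{d}\log(1+TK/\lambda) \;\geq\; TK\log(1+\lambda_0/\lambda)\;\geq\; \frac{TK\lambda_0}{\lambda+\lambda_0},
\]
and hence $\tilde{d}\lambda\log(1+TK/\lambda) \geq TK\lambda_0 \cdot \lambda/(\lambda+\lambda_0)$.

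With these two ingredients in place, the proof splits on the size of $\lambda$. For $\lambda \leq \lambda_0$, the first term alone gives $\bnucb \geq \sqrt{T}\cdot TK\log 2$, which is much larger than $\sqrt{K}T$. For $\lambda \geq \lambda_0$, the second term gives $\bnucb \geq \sqrt{T}\,S\sqrt{TK\lambda_0/2}$. Combining with $S = \Omega(1)$ yields $\bnucb \geq \Omega(T\sqrt{K\lambda_0})$, which meets the target $\frac{1}{\sqrt{2}}\sqrt{K}T$ when $\lambda_0$ is bounded away from zero. Optimizing the constants (and choosing $\bh$ slightly more carefully — e.g., aligning a perturbation of $\mathbf{1}$ with the smallest eigenvector of $\bH$ while keeping entries in $[0,1]$) is what delivers the explicit constant $1/\sqrt{2}$ stated in the theorem.

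The main obstacle is the regime $\lambda \gg \lambda_0$ with very small $\lambda_0$: the naive bound $S = \Omega(1)$ degrades to $\Omega(T\sqrt{K\lambda_0})$, which falls short of $\sqrt{K}T$ when $\lambda_0 \ll 1$. The fix is to sharpen the lower bound on $\bh^\top\bH^{-1}\bh$: taking $\bh$ aligned with the smallest eigenvector of $\bH$ (with a suitable shift so that all entries lie in $[0,1]$) boosts $\bh^\top \bH^{-1}\bh$ to a quantity of order $1/\lambda_0$, so that $S \gtrsim 1/\sqrt{\lambda_0}$ and the $\sqrt{\lambda_0}$ factor cancels against the $\sqrt{\lambda_0}$ coming from $\tilde{d}\lambda\log(1+TK/\lambda) \geq TK\lambda_0/2$. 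This cancellation is the crux of the argument and is what produces the dimension-free $\sqrt{K}T$ scaling on the right-hand side.
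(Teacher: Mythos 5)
Your approach is genuinely different from the paper's, and it is a reasonable strategy, but the crucial step is only gestured at and needs to be made airtight before it delivers the claim.

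The paper's proof is a per-eigenvalue argument with no case split on $\lambda$ and no appeal to $\lambda_0$ at all. Writing the effective dimension as $\tilde d\log(1+TK/\lambda)=\sum_i\log(1+\lambda_i/\lambda)$ and lower-bounding $S\geq\xi\sqrt{\sum_i 1/\lambda_i}$ with $\xi=\min_i\lvert u_i^\top\bh\rvert$, it substitutes $y_i=\lambda/\lambda_i$ and reduces everything to the pointwise inequality $\log(1+1/y)(1+\xi y)\geq\frac{1+\xi y}{1+y}\geq\xi$, summed over $i\in[TK]$. This holds uniformly in $\lambda$ and every eigenvalue, so no regime analysis is needed; choosing $\bh$ with $(u_i^\top\bh)^2\geq 1/2$ for all $i$ gives $\xi=1/\sqrt2$ and the bound $T\sqrt K/\sqrt 2$ immediately. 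Your argument instead replaces $\sum_i\log(1+\lambda_i/\lambda)$ by $TK\log(1+\lambda_0/\lambda)$, splits on $\lambda\lessgtr\lambda_0$, and controls $S$ via a Rayleigh-quotient bound. That route can be made to work, but the two $\lambda_0$'s that are supposed to cancel are not automatically the same object: the determinant bound uses the $\lambda_0$ from Assumption~\ref{asmp:ntk}, while the $S$ bound (via the smallest eigenvector $u_{\min}$) involves $\lambda_{\min}(\bH)$. You need to set $\lambda_0:=\lambda_{\min}(\bH)$ explicitly, which is fine since $\lambda_0$ does not appear in $\bnucb$, but it should be stated.

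The real gap is the one-sentence "align a perturbation of $\mathbf 1$ with $u_{\min}$ while keeping entries in $[0,1]$." This is the crux, and it does have to be verified: with $\bh=\tfrac12(\mathbf 1+u_{\min}/\|u_{\min}\|_\infty)$ the cross term $\mathbf 1^\top\bH^{-1}u_{\min}=\mathbf 1^\top u_{\min}/\lambda_{\min}$ could be negative, so you need to observe it can be made nonnegative by choosing the sign of $u_{\min}$; you also need $\|u_{\min}\|_\infty\leq 1$ so that the pure $u_{\min}$ term gives $\bh^\top\bH^{-1}\bh\geq 1/(4\lambda_{\min})$, i.e.\ $S\geq 1/(2\sqrt{\lambda_{\min}})$. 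Plugging that in gives $\bnucb\geq T\sqrt K/(2\sqrt2)$, a factor of $2$ short of the stated constant; recovering $1/\sqrt2$ requires either a different $\bh$ or accepting that your route yields a slightly weaker constant. Also note that your all-ones bound $S=\Omega(1)$ in the $\lambda_{\max}(\bH)=O(TK)$ regime relies on a trace bound on the NTK that the paper never invokes; it is plausible but is an extra assumption you are smuggling in. In short: same target, different decomposition, but the $S\gtrsim 1/\sqrt{\lambda_0}$ step needs to be written out with the cross-term and normalization details, and the identification $\lambda_0=\lambda_{\min}(\bH)$ must be explicit.
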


\begin{proof}
    Consider the eigen-decomposition of $\bH = U \Sigma_{\bH} U^T$,
where columns of $U \in \R^{TK \times TK}$ are the normalized eigen-vectors $\{u_i\}_{i=1}^{TK}$ of $\bH$ and $\Sigma_{\bH}$ is a diagonal matrix containing the eigenvalues $\{\lambda_{i}(\bH)\}_{i=1}^{TK}$ of $H$. Now,
\begin{align}
\label{eq:S_lower}
    S &\geq \sqrt{\bh^T \bH^{-1}\bh} = \sqrt{\bh^TU \Sigma_{\bH}^{-1} U^T \bh} = \sqrt{\sum_{i=1}^{TK} \frac{1}{\lambda_i(\bH)} (u_i^Th)^2}\geq \xi \sqrt{\sum_{i=1}^{TK} \frac{1}{\lambda_i(\bH)}},\\
    &\qquad \qquad \qquad \qquad \qquad \text{where } \xi = \min_{i} (u_i^T \bh) \nonumber
\end{align}
Further observe that we can rewrite the effective dimension as follows:
\begin{align*}
    \tilde{d} = \frac{\log \det(\bI+\bH/\lambda)}{\log(1+TK/\lambda)} = \frac{\log \left(\displaystyle\prod_{i=1}^{TK}\left(1+\frac{\lambda_{i}(\bH)}{\lambda}\right)\right)}{\log(1+TK/\lambda)}
    = \frac{\displaystyle \sum_{i=1}^{TK} \left(1+\frac{\lambda_{i}(\bH)}{\lambda}\right)}{\log (1+{TK}/{\lambda})}\\
\end{align*}

Using these we can lower bound $\bnucb(T,\lambda)$ as follows:
\begin{align*}
    \bnucb(T,\lambda) &\geq \sqrt{T} \Bigg( \sum_{i=1}^{TK}\frac{\log \left(1+\frac{\lambda_{i}(\bH)}{\lambda}\right)}{\log\left(1+\frac{TK}{\lambda}\right)}\log\left(1+\frac{TK}{\lambda}\right) \\
    &\qquad \qquad + \xi \sqrt{ \sum_{i=1}^{TK}\frac{\lambda}{\lambda_{i}(\bH)}\sum_{i=1}^{TK}\frac{\log(1 + \frac{\lambda_{i}(\bH)}{\lambda})}{\log\left(1+\frac{TK}{\lambda}\right)}\log\left(1+\frac{TK}{\lambda}\right)}\Bigg)\\
    &= \sqrt{T} \left(\sum_{i=1}^{TK}{\log \left(1+\frac{\lambda_{i}(\bH)}{\lambda}\right)} + \xi \sqrt{ \sum_{i=1}^{TK}\frac{\lambda}{\lambda_{i}(\bH)}\sum_{i=1}^{TK}{\log\left(1 + \frac{\lambda_{i}(\bH)}{\lambda}\right)}}\right)\\
    &\geq \sqrt{T} \left(\sum_{i=1}^{TK}{\log \left(1+\frac{\lambda_{i}(\bH)}{\lambda}\right)} + \xi \sqrt{ \sum_{i=1}^{TK}\frac{\lambda}{\lambda_{i}(\bH)}{\log\left(1 + \frac{\lambda_{i}(\bH)}{\lambda}\right)}}\right)\\
    &= \sqrt{T} \left(\sum_{i=1}^{TK}{\log \left(1+\frac{1}{y_i}\right)} + \xi \sqrt{\sum_{i=1}^{TK}y_i{\log\left(1 + \frac{1}{y_i}\right)}}\right),\\
\end{align*}
where $y_i = \frac{\lambda}{\lambda_{i}(\bH)}$, $i \in [TK]$. Using this we can further bound $\bnucb(T,\lambda)$ as follows:
\begin{align*}
    \bnucb(T,\lambda) 
    &\geq \sqrt{T} \left(\sum_{i=1}^{TK}{\log \left(1+\frac{1}{y_i}\right)} +  \frac{\xi}{\sqrt{TK}} \sum_{i=1}^{TK}\sqrt{y_i{\log\left(1 + \frac{1}{y_i}\right)}}\right)\\
    &\geq \frac{1}{\sqrt{K}} \left(\sum_{i=1}^{TK}{\log \left(1+\frac{1}{y_i}\right)} + {\xi}{y_i{\log\left(1 + \frac{1}{y_i}\right)}}\right)\\
    &\geq \frac{1}{\sqrt{K}} \left(\sum_{i=1}^{TK} \log\left(1 + \frac{1}{y_i}\right)\big(1 + \xi y_i\big) \right)\\
    &\geq \frac{1}{\sqrt{K}} \left(\sum_{i=1}^{TK} \frac{\frac{1}{y_i}}{1 + \frac{1}{y_i}}\big(1 + \xi y_i\big) \right)\\
    &= \frac{1}{\sqrt{K}}\left(\sum_{i=1}^{TK}\frac{1 + \xi y_i}{1 + y_i} \right)\\
    &\geq T \sqrt{K} {\xi}.
\end{align*}
Recall that $\xi = \underset{i}{\min} (u_i^T\bh)^2$. Now, consider an $\bh$ that makes a $\pi/4$ angle with all the eigen-vectors $u_i$, $i\in[TK]$ and therefore $\xi = \frac{1}{\sqrt{2}}$. Note that for the positive definite assumption of NTK to hold, all the contexts need to be distinct and therefore an oblivious adversary can always choose such an $h$. In such a case, the regret bound for NeuralUCB is $\bnucb(T,\lambda) = \Omega(T)$.
\end{proof}
\begin{remark}
    Note that the $\Omega(T)$ regret holds for any $\bh$ whose dot product with all the eigen vectors of $\bH$ is lower bounded by a constant.
\end{remark}
\begin{theorem}
\label{prop:nucb_2}
    For any cumulative reward vector $\bh$, with $\kappa$ as the condition number of the NTK matrix, the regret bound for NeuralUCB is 
    \[\emph{$\bnucb$}(\lambda,T) \geq \displaystyle\frac{\|\bh\|_2}{\sqrt{\kappa}}\sqrt{K}T.\]
\end{theorem}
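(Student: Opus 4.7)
Following the same template as the proof of Theorem \ref{prop:nucb_1}, the approach is to replace the component-wise lower bound $\xi = \min_i |u_i^T\bh|$---which can be zero for a generic $\bh$---with a norm-based lower bound on $S$ that naturally surfaces the condition number $\kappa$. First, eigendecompose $\bH = U\Sigma_{\bH}U^T$, and rather than preserving each term $(u_i^T\bh)^2$ individually as in the preceding proof, simply relax $1/\lambda_i(\bH) \geq 1/\lambda_{\max}(\bH)$ inside the sum to obtain
\[
    S \;\geq\; \sqrt{\sum_{i=1}^{TK} \frac{(u_i^T\bh)^2}{\lambda_i(\bH)}} \;\geq\; \frac{\|\bh\|_2}{\sqrt{\lambda_{\max}(\bH)}}.
\]
Since $\lambda_{\min}(\bH)/\lambda_{\max}(\bH) = 1/\kappa$, this relaxation is exactly what will yield the $1/\sqrt{\kappa}$ in the final bound once paired with a companion lower bound that contributes $\sqrt{\lambda_{\min}(\bH)}$.

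Next, drop the non-negative first summand of $\bnucb$ and work with $\bnucb(T,\lambda) \geq \sqrt{T}\cdot S \cdot \sqrt{A\lambda}$, where $A = \tilde{d}\log(1+TK/\lambda) = \sum_i \log(1+\lambda_i(\bH)/\lambda)$. Using $\lambda_i \geq \lambda_{\min}(\bH)$ and monotonicity of $\log$, the plan is to lower bound $A\lambda \geq TK\cdot \lambda\log(1+\lambda_{\min}(\bH)/\lambda)$. The key elementary lemma is that $f(\lambda) := \lambda\log(1+\lambda_{\min}(\bH)/\lambda)$ is strictly increasing on $(0,\infty)$; this follows from
\[
    f'(\lambda) \;=\; \log\!\left(1+\tfrac{\lambda_{\min}(\bH)}{\lambda}\right) - \frac{\lambda_{\min}(\bH)/\lambda}{1+\lambda_{\min}(\bH)/\lambda} \;>\; 0,
\]
via the standard inequality $\log(1+x) > x/(1+x)$ for $x>0$, together with $f(\lambda_{\min}(\bH)) = \lambda_{\min}(\bH)\log 2$. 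Hence for $\lambda \geq \lambda_{\min}(\bH)$, $A\lambda \geq TK\cdot\lambda_{\min}(\bH)\log 2$, and combining with the lower bound on $S$ gives
\[
    \bnucb(T,\lambda) \;\geq\; \sqrt{T}\cdot \frac{\|\bh\|_2}{\sqrt{\lambda_{\max}(\bH)}}\cdot \sqrt{TK\,\lambda_{\min}(\bH)\log 2} \;=\; \sqrt{\log 2}\cdot T\sqrt{K}\cdot \frac{\|\bh\|_2}{\sqrt{\kappa}},
\]
matching the claimed bound up to an absolute constant.

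For the complementary regime $\lambda < \lambda_{\min}(\bH)$, I would instead keep the first summand: $A \geq TK\log(1+\lambda_{\min}(\bH)/\lambda) \geq TK\log 2$, so $\bnucb \geq \sqrt{T}\,A \geq T^{3/2}K\log 2$, which dominates $T\sqrt{K}\|\bh\|_2/\sqrt{\kappa}$ (up to the same absolute constant) using the a priori bounds $\|\bh\|_2 \leq \sqrt{TK}$ (each $h(\x_i)\in[0,1]$) and $\kappa \geq 1$. The main obstacle in the plan is choosing the right relaxation for $S$: any per-eigencomponent bound inherits a dependence on $\min_i|u_i^T\bh|$ that vanishes for a generic $\bh$, so it is crucial to use the $\ell_2$-norm bound $\|\bh\|_2/\sqrt{\lambda_{\max}(\bH)}$ and to lean on the companion factor $\sqrt{A\lambda}$ to produce a $\sqrt{\lambda_{\min}(\bH)}$ that pairs with $\sqrt{\lambda_{\max}(\bH)}$ in the denominator to generate the $1/\sqrt{\kappa}$. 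The rest is the monotonicity lemma for $f(\lambda)$ plus routine bookkeeping across the two $\lambda$ regimes.
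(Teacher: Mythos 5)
You take the same starting point as the paper, namely the norm-based lower bound $S \geq \sqrt{\bh^T\bH^{-1}\bh} \geq \|\bh\|_2/\sqrt{\lambda_{\max}(\bH)}$, but diverge afterward. The paper keeps both summands of $\bnucb$, rewrites the second as $\frac{\|\bh\|_2}{\sqrt{\kappa}}\sqrt{y\log(1+1/y)}$ with $y=\lambda/\lambda_0$, and then uses the elementary scalar inequalities $y\log(1+1/y)\leq 1$ and $\log(1+1/y)\geq \frac{1}{1+y}$ to reduce the task to bounding $\frac{1+ay}{1+y}\geq a$ (a weighted average argument that, as written, implicitly needs $\|\bh\|_2\leq\sqrt\kappa$). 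You instead drop one of the two summands depending on whether $\lambda \geq \lambda_{\min}(\bH)$ or $\lambda<\lambda_{\min}(\bH)$: in the first regime, you retain only the $S\sqrt{A\lambda}$ term and invoke a fresh lemma that $\lambda\mapsto\lambda\log(1+\lambda_{\min}(\bH)/\lambda)$ is increasing, giving $A\lambda\geq TK\lambda_{\min}(\bH)\log 2$; in the second regime, you retain only the $A$ term and use the a priori bound $\|\bh\|_2 \leq \sqrt{TK}$. This case split and the monotonicity lemma are genuinely different from the paper's uniform $y$-substitution. The tradeoff: your route is arguably more transparent and avoids the paper's implicit restriction $\|\bh\|_2\leq\sqrt\kappa$, but it loses the explicit absolute constant, yielding $\sqrt{\log 2}\,\cdot T\sqrt K\,\|\bh\|_2/\sqrt\kappa$ rather than the exact constant $1$ stated in the theorem. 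Since the theorem's substance is the $\Omega(T)$ conclusion for which constants are immaterial, your proof establishes the same result up to a universal constant, and you are upfront about that; but note that as literally stated the inequality you derive is weaker by a factor of $\sqrt{\log 2}\approx 0.83$ (resp.\ $\log 2$ in the second regime), so the proof would need the theorem to be restated with an absolute constant, or you would need to sharpen the bound $\log 2\to 1$ somehow, to match the paper verbatim.
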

\begin{proof}
Using the same notation as in the previous section, we can lower bound $S$ and $\Tilde{d}$ as follows:
\begin{align*}
    S &\geq \sqrt{\lambda_{\min}(\bH^{-1})\|\bh\|_{2}^2} = \frac{1}{\sqrt{\lambda_{\max}(\bH)}}\|\bh\|_{2}\\
    \tilde{d} = \frac{\log \det(\bI+\bH/\lambda)}{\log(1+TK/\lambda)} &\geq \frac{\log\left(\lambda_{\min}\left(\bI + \bH/\lambda\right)^{TK}\right)}{\log\left(1+TK/\lambda\right)} \geq TK \frac{\log(1 + \frac{\lambda_0}{\lambda})}{\log\left(1+\frac{TK}{\lambda}\right)}
\end{align*}
Using these we can lower bound $\bnucb(T,\lambda)$ as follows:
\begin{align*}
    \bnucb(T,\lambda) &\geq \sqrt{T} \Bigg( TK \frac{\log(1 + \frac{\lambda_0}{\lambda})}{\log\left(1+\frac{TK}{\lambda}\right)}\log\left(1+\frac{TK}{\lambda}\right) \\
    &\qquad\qquad + \|\bh\|_{2}\sqrt{TK \frac{\lambda}{\lambda_{\max}(H)}\frac{\log(1 + \frac{\lambda_0}{\lambda})}{\log\left(1+\frac{TK}{\lambda}\right)}\log\left(1+\frac{TK}{\lambda}\right)} \Bigg)\\
    &= T\sqrt{K} \left( \sqrt{TK}\log\left(1 + \frac{\lambda_0}{\lambda}\right) + \frac{\|\bh\|_{2}}{\sqrt{\kappa}}\sqrt{\frac{\lambda}{\lambda_0}\log\left(1 + \frac{\lambda_0}{\lambda}\right)} \right)\\
    &= T\sqrt{K} \left(\sqrt{TK}\log\left(1 + \frac{1}{y}\right) + \frac{\|\bh\|_{2}}{\sqrt{\kappa}}\sqrt{y\log\left(1 + \frac{1}{y}\right)} \right),
\end{align*}
where $y = \frac{\lambda}{\lambda_0}$ and $\kappa = \frac{\lambda_{\max(H)}}{\lambda_{\min}(H)}$ is the condition number of of the NTK. Since $y\log(1 + \frac{1}{y}) \leq 1$ and for $T,K \geq 1$, we have
\begin{align}
    \bnucb(T,\lambda) &\geq T\sqrt{K} \left( \log\left(1 + \frac{1}{y}\right) \left(1 + \frac{y\|\bh\|_{2}}{\sqrt{\kappa}}\right) \right) \\
    &\geq T\sqrt{K} \left(\frac{1+\frac{\|\bh\|_{2}}{\sqrt{\kappa}}y}{1+y} \right)\\
    &\geq T \sqrt{K} \frac{\|\bh\|_{2}}{\sqrt{\kappa}}
\end{align}
If $\displaystyle\frac{\kappa}{\|\bh\|}$ is $\Theta(1)$ then $\bnucb(T,\lambda)$ is $\Omega(T)$.
\end{proof}

\subsection{\texorpdfstring{$\Omega(T)$}{alpha} regret for Neural Thompson Sampling}
\label{app:subsec-NeuralTS}
The bound on the regret for Neural Thompson sampling \citep{zhang2021neural_ts} is given by (ignoring constants):
\begin{align*}
    \reg(T) &\leq \sqrt{T}(1 + \sqrt{\log T + \log K}) \left(S + \sqrt{\tilde{d} \log(1+TK/\lambda)}\right)\sqrt{\lambda\tilde{d}\log(1 + TK)}\\
    &:= B(T)
\end{align*}
We present two lower bounds on $\bnts(T)$ below. As in NeuralUCB, the first result creates an instance that an oblivious adversary can choose before the algorithm begins such that the regret bound is $\Omega(T)$ and the second result provides an $\Omega(T)$ bound for any reward function and set of context vectors as long as $\lambda_0$ is $\Theta(1)$. 

\begin{theorem}
\label{prop:nts_1}
    There exists a reward vector $\bh$ such that the regret bound for NeuralUCB is lower bounded as
    \[\emph{$\bnucb$}(\lambda,T) \geq \displaystyle\frac{1}{2\sqrt{2}}\sqrt{K}T.\]
\end{theorem}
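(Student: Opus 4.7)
The plan is to mirror the argument of Theorem~\ref{prop:nucb_1}, since the claim differs only in the constant ($\tfrac{1}{2\sqrt{2}}$ instead of $\tfrac{1}{\sqrt{2}}$). First I would spectrally decompose the NTK matrix $\bH = U\Sigma_{\bH}U^{\top}$ with orthonormal eigenvectors $\{u_i\}_{i=1}^{TK}$ and eigenvalues $\{\lambda_i(\bH)\}$. Setting $\xi := \min_i |u_i^{\top}\bh|$, I would reproduce the chain
\[
S \;\geq\; \sqrt{\bh^{\top}\bH^{-1}\bh} \;=\; \sqrt{\sum_i \tfrac{(u_i^{\top}\bh)^2}{\lambda_i(\bH)}} \;\geq\; \xi\sqrt{\sum_i \tfrac{1}{\lambda_i(\bH)}},
\]
and the identity $\tilde d \log(1 + TK/\lambda) = \sum_i \log(1 + \lambda_i(\bH)/\lambda)$. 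Substituting these into $\bnucb(\lambda,T)$ and writing $y_i = \lambda/\lambda_i(\bH)$ yields a lower bound of the form $\sqrt{T}\bigl(\sum_i \log(1+1/y_i) + \xi\sqrt{\sum_i y_i \log(1+1/y_i)\cdot\sum_i \log(1+1/y_i)}\bigr)$.

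Next I would simplify by (i) using the AM--QM bound $\sqrt{\sum_i a_i} \geq \tfrac{1}{\sqrt{TK}}\sum_i \sqrt{a_i}$, and (ii) the elementary inequality $\log(1+1/y) \geq 1/(1+y)$. Combining these in exactly the same way as in the proof of Theorem~\ref{prop:nucb_1} collapses the sums to $\tfrac{1}{\sqrt{K}} \sum_i \tfrac{1+\xi y_i}{1+y_i}$, which is at least $T\sqrt{K}\,\xi$. Hence any lower bound on $\xi$ immediately translates into a lower bound of the form $T\sqrt{K}\cdot(\text{constant})$.

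To obtain the constant $\tfrac{1}{2\sqrt{2}}$, I would construct an oblivious adversarial choice of the reward function $h$ so that the induced cumulative reward vector $\bh = (h(\x_1),\ldots,h(\x_{TK}))^{\top}$ satisfies $|u_i^{\top}\bh| \geq \tfrac{1}{2\sqrt{2}}$ for every eigenvector $u_i$ of $\bH$. As in Theorem~\ref{prop:nucb_1}, one natural choice is $\bh$ equi-angled with all $u_i$ at roughly $\pi/4$; the additional factor of $1/2$ (compared to $1/\sqrt{2}$ in Theorem~\ref{prop:nucb_1}) arises because the rewards lie in $[0,1]$ rather than $[-1,1]$, so the achievable projections on the eigenbasis are halved after normalizing $\bh$. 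Since the NTK's eigenbasis depends only on the contexts, which the oblivious adversary also fixes in advance, such an $h$ can indeed be prescribed before the algorithm begins.

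The main obstacle will be explicitly exhibiting a reward function $h:\cX\to[0,1]$ (consistent with Assumption~\ref{asmp:realizability_bandit}) whose induced vector $\bh$ has all eigen-projections uniformly bounded below by $\tfrac{1}{2\sqrt{2}}$, and rigorously tracking the constants so that the looser chain of inequalities (which now incorporates the $[0,1]$ boundedness of rewards and the AM--QM step) yields exactly $\tfrac{1}{2\sqrt{2}}\sqrt{K}\,T$ rather than $\tfrac{1}{\sqrt{2}}\sqrt{K}\,T$. Aside from this constant bookkeeping, every other algebraic manipulation is identical to the proof of Theorem~\ref{prop:nucb_1}, so no new analytic machinery is required.
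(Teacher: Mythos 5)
There is a genuine mismatch here: despite the typo in the statement (which says ``NeuralUCB'' and $\bnucb$), this theorem sits in the Neural Thompson Sampling subsection and the paper's proof is about the NeuralTS bound $\bnts(T)$, whose structure is different from $\bnucb$: it is $\sqrt{T}(1+\sqrt{\log T+\log K})\bigl(S+\sqrt{\tilde d\log(1+TK/\lambda)}\bigr)\sqrt{\lambda\tilde d\log(1+TK)}$, and crucially NeuralTS fixes $\lambda=1+1/T\in[1,2]$. The paper exploits this: it lower bounds $\tilde d\geq \sum_i\log\bigl(1+\lambda_i(\bH)/2\bigr)/\log(1+TK)$, keeps the same $S\geq\xi\sqrt{\sum_i 1/\lambda_i(\bH)}$ with $\xi=\min_i u_i^\top\bh$, and after the same AM--QM and $\log(1+x)\geq x/(1+x)$ manipulations arrives at $\bnts(T)\geq T\sqrt{K}\,\xi/2$, where the extra factor $1/2$ comes purely from the bound $\lambda\leq 2$ (concretely from the step $\tfrac{1+\xi y_i}{1+2y_i}\geq\xi/2$), and then the same $\pi/4$ construction gives $\xi\geq 1/\sqrt{2}$, hence $\tfrac{1}{2\sqrt{2}}\sqrt{K}\,T$. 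Your proposal never touches $\bnts$ or the $\lambda=1+1/T$ choice, so it does not prove the result the paper intends; it only re-derives (a weakening of) Theorem~\ref{prop:nucb_1}.

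Moreover, the mechanism you invoke to produce the constant $\tfrac{1}{2\sqrt{2}}$ is not right. If you can construct $\bh$ with $\xi\geq 1/\sqrt{2}$ (exactly as in Theorem~\ref{prop:nucb_1}), then the literal inequality $\bnucb\geq\tfrac{1}{2\sqrt{2}}\sqrt{K}T$ is an immediate corollary of $\bnucb\geq\tfrac{1}{\sqrt{2}}\sqrt{K}T$ and needs no new construction; conversely, your claim that the projections are ``halved after normalizing $\bh$'' because rewards lie in $[0,1]$ has no counterpart in the paper (neither proof normalizes $\bh$, and the $[0,1]$ range plays no role in either constant) and is not justified in your sketch. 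The factor of $2$ in the target constant is an artifact of the NeuralTS-specific bound $\lambda\leq 2$, not of the reward range, so the ``constant bookkeeping'' you defer is precisely the part of the argument that requires the NeuralTS regret expression you have not analyzed.
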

\begin{proof}
For Neural Thompson sampling the regularization parameter $\lambda$ is chosen to be $\lambda = 1 + 1/T$ (see Theorem 3.5 in \citet{zhang2021neural_ts}) and therefore $1 \leq \lambda \leq 2$ for any $T\geq 1$. 
Therefore we can lower bound the effective dimension as,
    \begin{align*}
    \tilde{d} &= \frac{\log \det(\bI+\bH/\lambda)}{\log(1+TK/\lambda)} = \frac{\log \left(\displaystyle\prod_{i=1}^{TK}\left(1+\frac{\lambda_{i}(\bH)}{\lambda}\right)\right)}{\log(1+TK/\lambda)}\\
    &= \frac{\displaystyle \sum_{i=1}^{TK}\log  \left(1+\frac{\lambda_{i}(\bH)}{\lambda}\right)}{\log (1+{TK}/{\lambda})}
    \geq \frac{\displaystyle \sum_{i=1}^{TK} \log  \left(1+\frac{\lambda_{i}(\bH)}{2}\right)}{\log (1+{TK})}
\end{align*}
\begin{align*}
    \bnts(T) &\geq \sqrt{T}\left(
    \sqrt{\bh^T \bH^{-1}\bh}\sqrt{\displaystyle \sum_{i=1}^{TK} \log \left(1+\frac{\lambda_{i}(\bH)}{2}\right)} + \sum_{i=1}^{TK} \log\left(1+\frac{\lambda_{i}(\bH)}{2}\right)
    \right)\\
    &\geq \sqrt{T} \left(\xi \sqrt{\sum_{i=1}^{TK}\frac{1}{\lambda_{i}(\bH)}\sum_{i=1}^{TK} \log\left(1+\frac{\lambda_{i}(\bH)}{2}\right)} + \sum_{i=1}^{TK} \log\left(1+\frac{\lambda_{i}(\bH)}{2}\right)
    \right)
\end{align*}
where recall from $\eqref{eq:S_lower}$ that
\begin{align}
    S &\geq \xi \sqrt{\sum_{i=1}^{TK} \frac{1}{\lambda_i(\bH)}}\nonumber,\\
     &\text{where } \xi = \min_{i} (u_i^T \bh) \nonumber
\end{align}
Therefore
\begin{align*}
    \bnts(T) &\geq \sqrt{T} \left( \xi \sqrt{\sum_{i=1}^{TK} y_i \log\left(1 + \frac{1}{2y_i}\right)} + \sum_{i=1}^{TK} \log\left(1+\frac{1}{2y_i}\right)\right)\\
    &\geq \sqrt{T} \left( \xi \frac{1}{\sqrt{TK}} \sum_{i=1}^{TK} \sqrt{y_i \log\left(1+\frac{1}{2y_i}\right) } + \sum_{i=1}^{TK} \log\left(1+\frac{1}{2y_i}\right) \right)\\
    &\geq \sqrt{T} \left( \frac{1}{\sqrt{TK}} \sum_{i=1}^{TK} \xi y_i \log \left( 1 + \frac{1}{2y_i}\right) + \log \left( 1 + \frac{1}{2y_i}\right)  \right)\\
    &= \sqrt{T} \left( \frac{1}{\sqrt{TK}} \sum_{i=1}^{TK} \log \left( 1 + \frac{1}{2y_i}\right) (1+\xi y_i) \right)\\
    &\geq \sqrt{T} \left( \frac{1}{\sqrt{TK}} \sum_{i=1}^{TK} \frac{1/2y_i}{1 + 1/2y_i}(1+\xi y_i) \right)\\
    &= \sqrt{T} \left( \frac{1}{\sqrt{TK}} \sum_{i=1}^{TK} \frac{1+\xi y_i}{1 + 2y_i} \right)\\
    &\geq \sqrt{T} \frac{1}{\sqrt{TK}} TK \frac{\xi}{2}\\
    &= T\sqrt{K}\frac{\xi}{2}. 
\end{align*}

As in the proof of Theorem~\ref{prop:nucb_1}, for $\bh$ making an angle of $\pi/4$ with all eigen-vectors of $\bH$, $\xi \geq \frac{1}{\sqrt{2}}$, which proves the claim.

\begin{theorem}
    For any cumulative reward vector $\bh$, the regret bound for Neural Thompson sampling is 
    \[\emph{$\bnts$}(\lambda,T) \geq T\sqrt{T}K \frac{\lambda_0}{2 + \lambda_0}\]
\end{theorem}
Recall, $1 \leq \lambda \leq 2$ for any $T\geq 1$, and therefore we can lower bound the effective dimension as,
\begin{align*}
   \tilde{d} = \frac{\log \det (I + \bH/\lambda)}{\log (1 + TK/\lambda)} \geq  \displaystyle  TK \frac{\log (1 + {\lambda_0}/{\lambda})}{\log(1 +  TK/\lambda)} \geq TK \frac{\log (1 + {\lambda_0}/{\lambda})}{\log(1 +  TK)}
\end{align*}
Therefore, using $1 \leq \lambda \leq 2$ and $S\geq \sqrt{\bh^T \bH^{-1}\bh}$,
\begin{align*}
    \bnts(T) &\geq \sqrt{T}\left(\sqrt{\bh^T \bH^{-1}\bh} + \sqrt{TK \log(1 + \lambda_0/2)}\right)\sqrt{TK \log(1+\lambda_0/2) }\\
    &\geq T\sqrt{T}K\log(1+\lambda_0/2)\\
    &\geq T\sqrt{T}K\frac{\lambda_0/2}{1 + \lambda_0/2}\\
    &\geq T\sqrt{T}K \frac{\lambda_0}{2 + \lambda_0}
\end{align*}
If $\lambda_0 = \Theta(1)$, $\bnts(T)$ is $\Omega(T)$.
% \begin{align*}
%     B(T) &\geq {T}\sqrt{K}\sqrt{\bh^T \bH^{-1}\bh}\sqrt{\displaystyle \sum_{i=1}^{TK} \log \left(1+\frac{\lambda_{i}(H)}{2}\right)}\\
%     &\geq T\sqrt{K}\xi \sqrt{\sum_{i=1}^{TK}\frac{1}{\lambda_{i}(\bH)}\displaystyle \sum_{i=1}^{TK} \log \left(1+\frac{\lambda_{i}(H)}{2}\right)}\\
%     &\geq T\sqrt{K}\xi \sqrt{\sum_{i=1}^{TK} y_i\log \left(1 + \frac{1}{2y_i} \right)}\\
%     &= T\sqrt{T} \xi \frac{1}{\sqrt{TK}}\\
%     &\geq T\sqrt{T} \xi \frac{1}{\sqrt{KT}} \sum_{i=1}^{KT}\sqrt{y_i \log \left(1 + \frac{1}{2y_i} \right)}\\
%     &\geq T\sqrt{K}\xi \sqrt{\sum_{i=1}^{TK} y_i \frac{1/2y_i}{1 + 1/2y_i}}
% \end{align*}

\end{proof}

\section{Background and Preliminaries for Technical Analysis}
\label{app:prelims}

Before we proceed with the proof of the claims in Section~\ref{sec:Neural_online_learning_regret}, we state a few recent results from \citet{banerjee2023restricted} that we will use throughout our proofs. We assume the loss to be the squared loss throughout this subsection.
%
% For our analysis, we focus on the cumulative square loss over all $T$ time-steps.
% \begin{align}
% \label{eq:interpolation_loss}
%     \hatcL(\theta) := \sum_{t=1}^T \ell(y_t,f(\theta;\x_t))~,
% \end{align}
% where the $(\x_n,y_n), n \in [N]$ is introduced for convenience.
%
% We also define the NTK matrix at $\theta_t$ as $K_{\ntk}(\theta_t) = [\langle\nabla_{\theta}{f}(\theta;\x_{t,i}), \nabla_{\theta}{f}(\theta;\x_{t,j}) \rangle] \in \R^{N\times N}$ \abcomment{why equation without a number+label, we must be using it somewhere}
% \begin{align}
% \label{NTK_theta_t}
%     K_{\ntk}(i,j;\theta_t) = \Big\langle\nabla_{\theta}{f}(\theta;\x_{t,i}), \nabla_{\theta}{f}(\theta;\x_{t,j}) \Big\rangle
% \end{align}
% and the NTK matrix $K_{\ntk}(\theta_t) = [K_{\ntk}(i,j;\theta_t)]$ at $\theta_t$.
\begin{restatable}[\textbf{Hessian Spectral Norm Bound}, Theorem 4.1 and Lemma 4.1 in \citet{banerjee2023restricted} ]{lemmma}{boundhess}
\label{theo:bound-Hess}
Under Assumptions~\ref{asmp:smooth} and \ref{asmp:init}, for any $\x \in \cX$, $\theta \in B_{\rho,\rho_1}^{\spec}(\theta_0)$,  with probability at least $(1-\frac{2(L+1)}{m})$, we have 
\begin{equation}
\label{eq:bound_Hessian}
   \norm{ \nabla^2_\theta f(\theta;\x)}_2 \leq \frac{c_H}{\sqrt{m}}\quad \text{and} \quad \| \nabla_{\theta} f(\theta;\x) \|_2 \leq \varrho~,
\end{equation}
where,
\begin{gather*}
c_H = L(L^2\gamma^{2L}+L\gamma^L+1)\cdot (1+\rho_1) \cdot \psi_H \cdot \max_{l\in[L]}\gamma^{L-l} +L\gamma^L \max_{l\in[L]} h(l)~,\\
\gamma = \sigma_1 + \frac{\rho}{\sqrt{m}}, \quad h(l) =\gamma^{l-1}+|\phi(0)|\sum^{l-1}_{i=1}\gamma^{i-1}, \\
% \vspace*{-3mm}
\label{eq:constant_Hessian}
\psi_H  = \max_{1\leq l_1<l_2\leq L}\left\{\beta_\phi h(l_1)^2~,~h(l_1)\left(\frac{\beta_\phi}{2}(\gamma^2+h(l_2)^2)+1\right)~,~\beta_\phi\gamma^2 h(l_1) h(l_2) \right\}~,\\
\varrho^2 = {(h(L+1))^2+\frac{1}{m}(1+\rho_1)^2\sum_{l=1}^{L+1}(h(l))^2\gamma^{2(L-l)}}.
\end{gather*}
\end{restatable}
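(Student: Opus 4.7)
The plan is to prove both bounds by an inductive decomposition of the network along its $L$ layers, controlling forward activations, backward error signals, and second-order mixed derivatives, all on the high-probability event that layer-wise operator norms are tame.

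First I would establish layer-wise operator norm control. Under Assumption~\ref{asmp:init}, standard random matrix concentration for Gaussian matrices gives $\|m^{-1/2} W^{(l)}_0\|_2 \leq \sigma_1(1 + \sqrt{\log m/(2m)})$ for each $l\in[L]$ with failure probability at most $2/m$. A union bound over the $L+1$ layer matrices (including $\v_0$), combined with the spectral-ball constraint $\|W^{(l)} - W^{(l)}_0\|_2 \leq \rho$ and $\|\v-\v_0\|_2 \leq \rho_1$, yields $\|m^{-1/2} W^{(l)}\|_2 \leq \sigma_1 + \rho/\sqrt{m} = \gamma$ for all $l$ simultaneously with probability at least $1 - 2(L+1)/m$. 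The rest of the argument is then deterministic on this event.

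Next I would bound the forward activations $\alpha^{(l)}(\x) := \phi(m^{-1/2}W^{(l)}\alpha^{(l-1)}(\x))$, with $\alpha^{(0)}=\x$, $\|\x\|\leq 1$. Using Lipschitzness of $\phi$ (with constant $1$) and $|\phi(z)|\leq |\phi(0)|+|z|$, an induction on $l$ gives $\|\alpha^{(l)}\|_2 \leq h(l)\sqrt{m}$ with $h(l) = \gamma^{l-1} + |\phi(0)|\sum_{i=1}^{l-1}\gamma^{i-1}$, matching the statement. For the gradient bound, I would apply the back-propagation identity to write $\nabla_{W^{(l)}} f = m^{-1/2}\,\Delta^{(l)}\bigl(\alpha^{(l-1)}\bigr)^{\top}$, where the backward signal $\Delta^{(l)}$ satisfies a recursion involving a diagonal of $\phi'$ (entries bounded by $1$) and multiplication by $m^{-1/2}(W^{(l+1)})^\top$, giving $\|\Delta^{(l)}\|_2 \leq (1+\rho_1)\gamma^{L-l}$. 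Summing the squared Frobenius norms across layers, adding the last-layer contribution $\|\nabla_\v f\|_2 \leq h(L+1)$, and simplifying yields exactly the stated $\varrho^2$.

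For the Hessian bound I would expand $\nabla^2_\theta f$ as a block matrix indexed by pairs $(W^{(l_1)}, W^{(l_2)})$ and by the last-layer vector $\v$. Each block decomposes into products of (i) the output vector $\v$ contributing a $(1+\rho_1)m^{-1/2}$ factor, (ii) diagonal matrices of $\phi'$ and $\phi''$ at intermediate preactivations, controlled by $1$ and $\beta_\phi$ respectively, (iii) normalized weight matrices $m^{-1/2}W^{(l)}$ each of spectral norm at most $\gamma$, and (iv) activation vectors $\alpha^{(l)}$ of norm $O(h(l)\sqrt{m})$. Bounding the operator norm of each block by the product of these factors and summing the $O(L^2)$ blocks gives the claimed $c_H/\sqrt{m}$ bound, with $\psi_H$ packaging the worst case over the three structural forms of a block (two $\phi'$ factors, one $\phi'$ and one $\phi''$, or two $\phi''$ factors at the same layer).

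The main obstacle will be the block decomposition of the Hessian: the entries split into several topologically distinct cases, namely same-layer blocks, cross-layer blocks with $l_1 < l_2$, and blocks involving $\v$. Each case needs its own chain-rule expansion, and the qualitative gain of $1/\sqrt{m}$ comes entirely from the single last-layer normalization, not from cancellations, so the bookkeeping of where each $m^{-1/2}$ sits is delicate. I would organize the argument via a uniform template that represents every mixed derivative as a product of activation Jacobian diagonals, normalized weight matrices, and activation vectors, and then bound the spectral norm of this template by $\psi_H \cdot \prod \gamma \cdot h(\cdot)$, obtaining the closed form for $c_H$ exactly as written in the statement.
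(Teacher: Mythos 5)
This lemma is not proved in the paper at all: it is imported verbatim from \citet{banerjee2023restricted} (their Theorem 4.1 and Lemma 4.1), so there is no in-paper proof to compare against beyond the citation. Your sketch is essentially a reconstruction of how that reference establishes the bound — condition on the event that all layer operator norms satisfy $\|m^{-1/2}W^{(l)}\|_2 \leq \gamma$ (union bound over $L+1$ layers, giving the $1-\tfrac{2(L+1)}{m}$ probability), then argue deterministically via forward activation bounds, backward-signal bounds for the gradient, and a block decomposition of $\nabla^2_\theta f$ whose per-block operator norms are products of $\gamma$'s, $h(\cdot)$'s, the $(1+\rho_1)$ last-layer factor, and $\phi'$/$\phi''$ bounds — so the approach is the right one and there is no genuine gap.

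A few bookkeeping points would need care in a full write-up. First, your activation bound has an index shift: with $h(1)=1$ and $\|\x\|\leq 1$ the clean induction gives $\|\alpha^{(l)}\|_2 \leq h(l+1)\sqrt{m}$ (so that $\|\alpha^{(L)}\|_2\leq h(L+1)\sqrt{m}$ and $\|\nabla_{\v}f\|_2 = m^{-1/2}\|\alpha^{(L)}\|_2 \leq h(L+1)$, matching the first term of $\varrho^2$), not $h(l)\sqrt{m}$. Second, the initialization constant matters: $\sigma_0$ is chosen as $\sigma_1/\bigl(2(1+\sqrt{\log m}/\sqrt{2m})\bigr)$ precisely so that the Gaussian operator-norm bound $\|W_0^{(l)}\|_2 \lesssim \sigma_0(2\sqrt{m}+\cdot)$ collapses to $\|m^{-1/2}W_0^{(l)}\|_2\leq \sigma_1$; your stated concentration inequality is slightly off but the conclusion you draw from it is the intended one. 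Third, the claim that the $1/\sqrt{m}$ in the Hessian bound comes \emph{entirely} from the last-layer normalization is an oversimplification: the gain comes from the combined bookkeeping of the per-layer $m^{-1/2}$ scalings against activation vectors of norm $\Theta(\sqrt{m})$, with the $(1+\rho_1)$-bounded output vector supplying the factor that is not cancelled; your "uniform template" for the blocks is the right organizing device, and $\psi_H$ indeed packages the worst case over the same-layer and cross-layer ($l_1<l_2$) block structures, but the exact three terms in $\psi_H$ correspond to slightly different block types than the taxonomy you describe. None of these affect the viability of the argument.
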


\begin{restatable}[\textbf{Loss bounds}, Lemma 4.2 in \citet{banerjee2023restricted}]{lemmma}{lemLbounds}
Under Assumptions~\ref{asmp:smooth} and \ref{asmp:init}, for $\gamma = \sigma_1 + \frac{\rho}{\sqrt{m}}$, each of the following inequalities hold with probability at least $\left(1-\frac{2(L+1)}{m}\right)$: 
$\ell(\theta_0)\leq c_{0,\sigma_1}$ and 
$\ell(\theta)\leq c_{\rho_1,\gamma}$ 
for $\theta \in \ballfrob$, where $c_{a,b}={2}\sum^N_{i=1}y_i^2+2(1+a)^2|g(b)|^2$ and $g(a)=a^L+|\phi(0)|\sum^L_{i=1}a^i$ for any $a,b\in\R$.
\label{lem:BoundTotalLoss}
%\vspace*{-2mm}
\end{restatable}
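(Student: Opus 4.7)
The plan is to bound the total squared loss by first separating the label contribution from the network output contribution via the elementary inequality $(y_i - f(\theta;\x_i))^2 \leq 2 y_i^2 + 2 f(\theta;\x_i)^2$. Summing over $i$, this reduces both claims to a uniform bound on $|f(\theta;\x)|$ for $\|\x\|\leq 1$, separately at $\theta_0$ and at any $\theta \in \ballfrob$. The label terms $\sum_i y_i^2$ then fall through untouched to produce the first summand of $c_{a,b}$.

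To control the network output, I would propagate a norm estimate through the layers. Letting $\z^{(0)} = \x$ and $\z^{(l)} = \phi(m^{-1/2} W^{(l)} \z^{(l-1)})$, Lipschitzness of $\phi$ with constant $\beta_\phi$ and offset $|\phi(0)|$, combined with the operator-norm factorization, gives $\|\z^{(l)}\|_2 \leq \beta_\phi \|m^{-1/2} W^{(l)}\|_2 \|\z^{(l-1)}\|_2 + \sqrt{m}\,|\phi(0)|$. Unrolling this recursion when every layerwise spectral norm $\|m^{-1/2} W^{(l)}\|_2$ is uniformly bounded by some scalar $a$ produces exactly the closed form $g(a) = a^L + |\phi(0)|\sum_{i=1}^L a^i$ appearing in the statement. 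Combined with $|f(\theta;\x)| \leq m^{-1/2}\|\v\|_2 \|\z^{(L)}\|_2$, this yields $|f(\theta;\x)| \leq \|\v\|_2\,|g(a)|$, which is the target to feed back into the $(y_i - f_i)^2$ decomposition.

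It remains to pin down the common spectral-norm bound. At initialization, each $W_0^{(l)}$ has i.i.d.\ $\cN(0,\sigma_0^2)$ entries, so standard non-asymptotic Gaussian matrix concentration (Davidson--Szarek) gives $\|W_0^{(l)}\|_2 \leq \sigma_0(2\sqrt{m}+t)$ with probability at least $1-2e^{-t^2/2}$. Picking $t = \sqrt{2\log m}$ together with the precise $\sigma_0 = \sigma_1/[2(1+\sqrt{\log m/(2m)})]$ from Assumption~\ref{asmp:init} forces $\|m^{-1/2} W_0^{(l)}\|_2 \leq \sigma_1$ with probability at least $1-2/m$ per layer. A union bound over the $L$ hidden layers and over the unit vector $\v_0$ produces the failure probability $2(L+1)/m$ stated in the lemma. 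Substituting $a=\sigma_1$ and $\|\v_0\|_2 = 1$ into the output bound, then plugging into the decomposition, gives $\ell(\theta_0) \leq c_{0,\sigma_1}$.

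For $\theta \in \ballfrob$, on the same high-probability event, the triangle inequality yields $\|W^{(l)}\|_2 \leq \|W_0^{(l)}\|_2 + \|W^{(l)} - W_0^{(l)}\|_2 \leq \sigma_1\sqrt{m} + \rho$, where the last step uses that the spectral norm is dominated by the Frobenius norm and that $\|W^{(l)}-W_0^{(l)}\|_F \leq \rho$ by definition of the ball. Dividing by $\sqrt{m}$ gives $\|m^{-1/2}W^{(l)}\|_2 \leq \sigma_1 + \rho/\sqrt{m} = \gamma$, and similarly $\|\v\|_2 \leq 1+\rho_1$. Substituting $a = \gamma$ yields $|f(\theta;\x)| \leq (1+\rho_1)\,|g(\gamma)|$, and the $(y_i-f_i)^2$ decomposition then delivers $\ell(\theta)\leq c_{\rho_1,\gamma}$. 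The main technical obstacle is the concentration step: calibrating $\sigma_0$ and $t$ so that a single event of probability $1-2(L+1)/m$ simultaneously controls all $L+1$ spectral norms while producing the clean constant $\sigma_1$; once that event is secured, the rest is a routine layerwise induction combined with the triangle inequality argument inside the Frobenius ball.
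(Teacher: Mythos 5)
The paper does not prove this lemma; it is imported verbatim as Lemma~4.2 of Banerjee et al.~(2023), so there is no in-paper argument to compare against. Your proposal is the standard route one would take and, as far as I can tell, the one that yields the stated constants: decompose $(y_i - f(\theta;\x_i))^2 \le 2y_i^2 + 2f(\theta;\x_i)^2$, bound $|f(\theta;\x)|$ by a layerwise norm-propagation recursion, control each scaled spectral norm $m^{-1/2}\|W_0^{(l)}\|_2$ by $\sigma_1$ at initialization via Gaussian matrix concentration with $t=\sqrt{2\log m}$, and extend to the Frobenius ball using $\|A\|_2 \le \|A\|_F$ together with the triangle inequality. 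The calibration of $\sigma_0$ in Assumption~\ref{asmp:init} is designed precisely so that Davidson--Szarek lands on the clean constant $\sigma_1$; you identify that correctly.

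A few small points worth tightening. First, the per-example bound $|f(\theta;\x_i)| \le (1+a)|g(b)|$ is uniform in $i$, so summing over $i=1,\dots,N$ produces an extra factor of $N$ in front of $2(1+a)^2|g(b)|^2$; the expression for $c_{a,b}$ as transcribed in this paper does not show that factor, and the downstream usage (the proof of Theorem~\ref{thm:conv-NTK} writes $\hatcL(\theta_0) \le T\,\bar c_{0,\cdot}$ with the $T$ explicit) suggests your accounting is the consistent one. Second, you attach the symbol $\beta_\phi$ to the Lipschitz constant of $\phi$ in the recursion, whereas in the Hessian constants of Lemma~\ref{theo:bound-Hess} the symbol $\beta_\phi$ is the pointwise smoothness (second-derivative) constant; recovering $g(a)$ with no activation prefactor implicitly requires $\phi$ to be $1$-Lipschitz, which is the convention these papers use. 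Third, Assumption~\ref{asmp:init} fixes $\|\v_0\|_2=1$ deterministically, so the union bound only needs to cover the $L$ weight matrices; this is compatible with, and slightly stronger than, the stated failure probability $2(L+1)/m$. None of these is a logical gap — your sketch captures the right mechanism and the only obstacle, as you say, is the bookkeeping that makes the per-layer spectral bound equal exactly $\sigma_1$.
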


\begin{restatable}[\textbf{Loss gradient bound}, Corollary 4.1 in \citet{banerjee2023restricted}]{lemmma}{corrtotalbnd}
\label{cor:total-bound}
Under Assumptions~ \ref{asmp:smooth} and \ref{asmp:init}, for $\theta \in \ballfrob$, with probability at least $\left(1-\frac{2(L+1)}{m}\right)$, we have
$\|\nabla_\theta \ell(\theta)\|_2  \leq 2\sqrt{\ell(\theta)}\varrho\leq 2\sqrt{c_{\rho_1,\gamma}}\varrho$,
%\end{align}
with $\varrho$ as in Lemma~\ref{theo:bound-Hess} and $c_{\rho_1,\gamma}$ as in Lemma~\ref{lem:BoundTotalLoss}.
\end{restatable}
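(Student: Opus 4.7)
The plan is to differentiate the squared loss $\ell(\theta) = \frac{1}{2}\sum_i (y_i - f(\theta;\x_i))^2$ directly and then bound the resulting expression using quantities already controlled by Lemmas \ref{theo:bound-Hess} and \ref{lem:BoundTotalLoss}. By the chain rule, $\nabla_\theta \ell(\theta) = -\sum_i (y_i - f(\theta;\x_i))\,\nabla_\theta f(\theta;\x_i) = -J(\theta)^\top r(\theta)$, where $J(\theta)$ is the Jacobian whose rows are the per-point gradients $\nabla_\theta f(\theta;\x_i)^\top$ and $r(\theta)$ is the residual vector with entries $y_i - f(\theta;\x_i)$. Observe that by the definition of the squared loss one has $\|r(\theta)\|_2 = \sqrt{2\,\ell(\theta)}$, which is the crucial identity connecting residual magnitude to the loss value itself.

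First I would apply the sub-multiplicative inequality $\|J(\theta)^\top r(\theta)\|_2 \le \|J(\theta)\|_2 \cdot \|r(\theta)\|_2$ to decouple the residual norm from the Jacobian. Combined with the above identity, this yields the intermediate bound $\|\nabla_\theta \ell(\theta)\|_2 \le \|J(\theta)\|_2 \cdot \sqrt{2\,\ell(\theta)}$. Next I would plug in the per-point gradient bound $\|\nabla_\theta f(\theta;\x)\|_2 \le \varrho$ from Lemma \ref{theo:bound-Hess}, which holds uniformly over $\theta \in \ballfrob$ and over every training input on a single high-probability event of measure at least $1 - 2(L+1)/m$ coming from the random initialization (Assumption \ref{asmp:init}). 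This controls $\|J(\theta)\|_2$ in terms of $\varrho$, yielding the first inequality $\|\nabla_\theta \ell(\theta)\|_2 \le 2\sqrt{\ell(\theta)}\,\varrho$.

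For the second inequality in the claim, I would simply substitute the a priori loss bound $\ell(\theta) \le c_{\rho_1,\gamma}$ from Lemma \ref{lem:BoundTotalLoss}, giving $\|\nabla_\theta \ell(\theta)\|_2 \le 2\sqrt{c_{\rho_1,\gamma}}\,\varrho$. Both Lemmas \ref{theo:bound-Hess} and \ref{lem:BoundTotalLoss} are proved on the same initialization-driven high-probability event, so no additional union bound degrades the probability, and the final statement holds with probability at least $1 - 2(L+1)/m$ as claimed.

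The main obstacle, as always in such bounds, is keeping the constants clean and tracking how $\|J(\theta)\|_2$ relates to the per-point gradient bound $\varrho$: depending on whether one uses the Frobenius bound, a per-row bound, or a bound with a normalization already folded into the definition of $\varrho$ in Lemma \ref{theo:bound-Hess}, the constant in front of $\sqrt{\ell(\theta)}\,\varrho$ will change. Concretely, the factor $\sqrt{2}$ from $\|r\|_2 = \sqrt{2\ell(\theta)}$ must combine with the spectral-norm control on $J(\theta)$ so as to yield the clean leading constant $2$ stated in the lemma; this is a bookkeeping step rather than a conceptual one, but it is the only delicate part of the proof.
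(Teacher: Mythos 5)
The paper does not actually prove this lemma; it is imported verbatim as Corollary~4.1 of \citet{banerjee2023restricted}, so there is no in-paper argument to compare against. Your overall chain $\nabla_\theta\ell(\theta) = J(\theta)^\top r(\theta)$, then submultiplicativity, then Lemma~\ref{lem:BoundTotalLoss} for the second inequality, is the right shape. The gap is in the step you dismiss as "bookkeeping rather than conceptual": you claim Lemma~\ref{theo:bound-Hess} "controls $\|J(\theta)\|_2$ in terms of $\varrho$," but that lemma only gives the per-input bound $\|\nabla_\theta f(\theta;\x)\|_2 \le \varrho$. For a Jacobian $J(\theta) \in \R^{T\times p}$ with $T$ such rows the best generic consequence is $\|J(\theta)\|_2 \le \|J(\theta)\|_F \le \sqrt{T}\,\varrho$; there is no inequality of the form $\|J(\theta)\|_2 \le \varrho$, and Assumption~\ref{asmp:ntk} only lower-bounds the NTK spectrum, it does not cap $\|J\|_2$. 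With your stated choice $\ell(\theta) = \tfrac{1}{2}\sum_i (y_i - f(\theta;\x_i))^2$, so that $\|r(\theta)\|_2 = \sqrt{2\ell(\theta)}$, this yields $\|\nabla_\theta\ell(\theta)\|_2 \le \sqrt{2T}\,\varrho\sqrt{\ell(\theta)}$, which is off by a factor $\sqrt{T/2}$ from the claimed $2\varrho\sqrt{\ell(\theta)}$, and no amount of bookkeeping makes a $\sqrt{T}$ disappear.

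The stated constant $2$ is only consistent with a different reading of $\ell(\theta)$ than the one you wrote. If $\ell(\theta)$ is the single-sample loss $(y - f(\theta;\x))^2$ with no $1/2$ prefactor, then no Jacobian is needed at all: $\nabla_\theta\ell = -2(y - f(\theta;\x))\nabla_\theta f(\theta;\x)$, so $\|\nabla_\theta\ell\|_2 = 2|y-f|\,\|\nabla_\theta f\|_2 \le 2\sqrt{\ell(\theta)}\,\varrho$ exactly. Alternatively, if $\ell(\theta)$ is the $1/T$-averaged empirical loss $\tfrac{1}{T}\sum_i(y_i - f(\theta;\x_i))^2$, the $1/T$ exactly cancels the $\sqrt{T}\cdot\sqrt{T}$ coming from $\|J\|_F$ and from $\|r\|_2 = \sqrt{T\ell(\theta)}$. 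Your proof needs to pin down which normalization is meant before it can be said to be correct (the constant $c_{\rho_1,\gamma} = 2\sum_i y_i^2 + \cdots$ that you plug into the second inequality also constrains the choice); as written, the combination of a $1/2$ prefactor, a cumulative sum over $T$ points, and a Frobenius bound on $J$ cannot produce the constant $2$.
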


\begin{restatable}[\textbf{Local Smoothness}, Theorem 5.2 in \citet{banerjee2023restricted}]{lemmma}{theosmoothnes}
\label{theo:smoothnes}
Under Assumptions~ \ref{asmp:smooth} and \ref{asmp:init}, with probability at least $(1 - \frac{2(L+1)}{m})$, $\forall \theta,\theta' \in \ballfrob$, 
\begin{equation}
\label{eq:Smooth-formula-NN}
\begin{split}
\hspace*{-5mm} 
\ell(\theta') & \leq \ell(\theta) + \langle \theta'-\theta, \nabla_\theta\hatcL(\theta) \rangle + \frac{\beta}{2} \| \theta' - \theta \|_2^2~, \quad
\text{with} \quad
\beta = b\varrho^2 + \frac{c_H\sqrt{c_{\rho_1,\gamma}}}{\sqrt{m}} ~,
\end{split}
\end{equation}
with $c_H$ as in~Lemma~\ref{theo:bound-Hess}, $\varrho$ as in Lemma~\ref{theo:bound-Hess}, and $c_{\rho_1,\gamma}$ as in Lemma~\ref{lem:BoundTotalLoss}. Consequently, $\hatcL$ is locally $\beta$-smooth. 
\end{restatable}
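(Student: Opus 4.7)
The plan is to derive the descent inequality from a second-order Taylor expansion, reducing the problem to a spectral-norm bound on the loss Hessian on the Frobenius ball. Since $\ballfrob$ is convex, for any $\theta,\theta' \in \ballfrob$ the segment $\theta_s := \theta + s(\theta'-\theta)$ lies in the ball for all $s \in [0,1]$, so the integral form of Taylor's theorem gives
\begin{equation*}
\hatcL(\theta') = \hatcL(\theta) + \langle \nabla_\theta \hatcL(\theta), \theta'-\theta \rangle + \int_0^1 (1-s)\,(\theta'-\theta)^\top \nabla^2_\theta \hatcL(\theta_s)(\theta'-\theta)\,ds,
\end{equation*}
so it suffices to show $\sup_{\tilde\theta \in \ballfrob} \|\nabla^2_\theta \hatcL(\tilde\theta)\|_2 \leq \beta$ with $\beta = b\varrho^2 + c_H\sqrt{c_{\rho_1,\gamma}}/\sqrt{m}$.

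For any fixed datapoint $(\x_t,y_t)$, the chain rule gives
\begin{equation*}
\nabla^2_\theta \ell(y_t, f(\theta;\x_t)) = \ell_t'' \, \nabla_\theta f(\theta;\x_t)\,\nabla_\theta f(\theta;\x_t)^\top + \ell_t' \, \nabla^2_\theta f(\theta;\x_t),
\end{equation*}
so by the triangle inequality on the spectral norm, $\|\nabla^2_\theta \ell(y_t,f(\theta;\x_t))\|_2 \leq |\ell_t''|\,\|\nabla_\theta f(\theta;\x_t)\|_2^2 + |\ell_t'|\,\|\nabla^2_\theta f(\theta;\x_t)\|_2$. I would then plug in the three ingredients: (i) the smoothness bound $|\ell_t''| \leq b$ from Assumption~\ref{asmp:smooth}, (ii) the Jacobian and Hessian spectral bounds $\|\nabla_\theta f\|_2 \leq \varrho$ and $\|\nabla^2_\theta f\|_2 \leq c_H/\sqrt{m}$ from Lemma~\ref{theo:bound-Hess}, and (iii) a bound on $|\ell_t'|$. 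For the squared loss the identity $|\ell_t'| = |y_t - f(\theta;\x_t)| = \sqrt{2\,\ell(y_t,f(\theta;\x_t))}$ combined with Lemma~\ref{lem:BoundTotalLoss} yields $|\ell_t'| \leq \sqrt{2c_{\rho_1,\gamma}}$ (absorbing the $\sqrt{2}$ into the constant); more generally, Lipschitzness of $\ell$ plus the loss bound controls $|\ell_t'|$. Summing (or averaging) over $t$ gives the per-example bound uplifted to $\hatcL$, yielding the desired $\beta$.

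Finally, combine: the Hessian spectral norm bound at every $\theta_s \in \ballfrob$ gives $(\theta'-\theta)^\top \nabla^2_\theta \hatcL(\theta_s)(\theta'-\theta) \leq \beta \|\theta'-\theta\|_2^2$, and integrating against $(1-s)$ over $[0,1]$ contributes the factor $1/2$, producing the stated inequality. The high-probability qualification $1 - 2(L+1)/m$ is inherited verbatim from the event in Lemma~\ref{theo:bound-Hess} on which the Jacobian/Hessian bounds hold; since these bounds are uniform over $\theta \in \ballfrob$ and $\x \in \cX$, no additional union bound over $s$ or over the pair $(\theta,\theta')$ is needed.

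The main obstacle I anticipate is bookkeeping the bound on $|\ell_t'|$ so that the resulting $\beta$ exactly matches the form $b\varrho^2 + c_H\sqrt{c_{\rho_1,\gamma}}/\sqrt{m}$ (in particular, routing the bound through $\sqrt{c_{\rho_1,\gamma}}$ rather than a naive Lipschitz constant), and verifying that the uniform Hessian spectral bound in Lemma~\ref{theo:bound-Hess}, which is stated pointwise in $\theta$, is valid simultaneously for every $\theta_s$ on the segment on the same high-probability event — this follows because the event concerns random initialization, not $\theta$, and the bounds hold for all $\theta \in \ballfrob$ deterministically on that event.
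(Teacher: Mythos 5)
The paper does not actually prove this lemma---it imports it verbatim as Theorem 5.2 of \citet{banerjee2023restricted}---and your reconstruction (Taylor expansion with integral remainder over the convex ball, the per-example Hessian decomposition $\ell_t''\,\nabla_\theta f\nabla_\theta f^\top+\ell_t'\,\nabla^2_\theta f$, and the bounds $|\ell_t''|\le b$, $\|\nabla_\theta f\|_2\le\varrho$, $\|\nabla^2_\theta f\|_2\le c_H/\sqrt{m}$ together with $|\ell_t'|\lesssim\sqrt{c_{\rho_1,\gamma}}$ via Lemma~\ref{lem:BoundTotalLoss}, all on the single initialization event) is exactly the argument behind that cited result. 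The only point to state carefully is that $\beta$ is a per-sample smoothness constant, so you should either keep the bound at the level of a single $\ell(y_t,f(\theta;\x_t))$ or average rather than sum over $t$; summing gives smoothness constant $\beta T$ for the cumulative loss, which is in fact how the paper invokes this lemma in the proof of Theorem~\ref{thm:interpolation}.
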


\section{Proof of Claims for Neural Online Regression (Section~\ref{sec:Neural_online_learning_regret})}
\label{app:app_neural_online_regression}
% \abcomment{update section structure etc.}
\subsection{Regret Bound under QG condition (Proof of Theorem~\ref{thm:PLregret})}
\label{app:app_QG_regret}

\QGregretsmooth*
\begin{proof}
Take any $\theta' \in B$. By Assumption~\ref{asmp:QG-thm-A} (almost convexity) we have
\begin{align*}
    \ell(y_t,g(\theta_t;\x_t)) - \ell(y_t,g(\theta';\x_t)) & \leq \langle \theta_t - \theta', \nabla \ell(y_t,g(\theta_t;\x_t)) \rangle + \epsilon
\end{align*}
Note that for any $\theta \in B$, we have 
$$\Bigg\| \prod_{B}(\theta'') - \theta \Bigg\|_2 \leq \| \theta'' - \theta \|_2.$$ 
As a result, for $\theta' \in B$, we have 
\begin{align*}
\| \theta_{t+1} - \theta' \|_2^2 - \| \theta_t - \theta' \|_2^2 & \leq \| \theta_t - \eta_t \nabla \ell(y_t,g(\theta_t;\x_t)) - \theta' \|_2^2 - \| \theta_t - \theta' \|_2^2 \\
& = -2 \eta_t \langle \theta_t - \theta', \nabla \ell(y_t,g(\theta_t;\x_t)) \rangle + \eta_t^2 \big\| \nabla \ell(y_t,g(\theta_t;\x_t)) \big\|_2^2 \\
& \leq -2 \eta_t \left(\ell(y_t,g(\theta_t;\x_t)) - \ell(y_t,g(\theta';\x_t)) - \epsilon \right) + \eta_t^2 \lambda^2 \varrho^2 ~. 
\end{align*}
Rearranging sides and dividing by $2\eta_t$ we get
\begin{align}
    \ell(y_t,g(\theta_t;\x_t)) - \ell(y_t,g(\theta';\x_t)) \leq \frac{\| \theta_t - \theta' \|_2^2  - \| \theta_{t+1} - \theta' \|_2^2}{2\eta_t} + \frac{\eta_t}{2} \lambda^2 \varrho^2 + \epsilon~.  
    \label{eq:plstep0}
\end{align}

Let $\eta_t = \frac{4}{\mu t}$. Then, summing \cref{eq:plstep0} over $t=1,\ldots,T$, we have
%\bancomment{may want to add the first term here.}\abcomment{all terms stay active here}
\begin{align}
\Rl(T) &= \sum_{t=1}^T \ell(y_t,g(\theta_t;\x_t))  - \sum_{t=1}^T \ell(y_t,g(\theta';\x_t)) \nonumber \\
& \leq  \sum_{t=1}^T \| \theta_t - \theta'\|_2^2\left( \frac{1}{2\eta_{t}} - \frac{1}{2\eta_{t-1}}   \right) + \frac{\lambda^2 \varrho^2}{2} \sum_{t=1}^T \eta_t  + \epsilon \nonumber \\
& \leq  \frac{\mu}{8} \sum_{t=1}^T \| \theta_t - \theta'\|_2^2 + \cO(\lambda^2\log T) + \frac{ c_0T }{\sqrt{m}} \nonumber \\
& \leq  \frac{\mu}{4} \sum_{t=1}^T \| \theta_t - \theta_t^*\|_2^2 + \frac{\mu}{4} \sum_{t=1}^T \| \theta' - \theta_t^*\|_2^2 + \cO(\lambda^2\log T) + \epsilon
\label{eq:plstep1}
\end{align}
where $\theta^*_t \in \arginf_{\theta} \ell(y_t,g(\theta;\x_t))$.
By Assumption~\ref{asmp:QG-thm-B}, the loss satisfies the QG condition and by Assumption~\ref{asmp:QG-thm-C}, it has unique minimizer, so that for $t \in [T]$
\begin{align}
\ell(y_t,g(\theta_t;\x_t)) - \ell(y_t,g(\theta_t^*;\x_t)) & \geq \frac{\mu}{2} \| \theta_t - \theta_t^* \|_2^2~, \label{eq:plstep2a} \\
\ell(y_t,g(\theta';\x_t)) - \ell(y_t,g(\theta_t^*;\x_t)) & \geq \frac{\mu}{2} \| \theta' - \theta_t^* \|_2^2~. \label{eq:plstep2b}
\end{align}
Then, we can lower bound the regret as
\begin{align}
\tilde{\Rl}(T) &= \sum_{t=1}^T \ell(y_t,g(\theta_t;\x_t))  - \sum_{t=1}^T \ell(y_t,g(\theta';\x_t)) \nonumber \\
& = \sum_{t=1}^T \left(\ell(y_t,g(\theta_t;\x_t)) - \ell(y_t,g(\theta_t^*;\x_t))\right)  - \sum_{t=1}^T (\ell(y_t,g(\theta';\x_t)) - \ell(y_t,g(\theta_t^*;\x_t))) \nonumber \\
& \geq \frac{\mu}{2} \sum_{t=1}^T \| \theta_t - \theta_t^* \|_2^2 - \sum_{t=1}^T \ell(y_t,g(\theta';\x_t)) ~,
\label{eq:plstep4}
\end{align}
from \cref{eq:plstep2a}
% \szdelete{and \cref{eq:plstep3}}. 
Multiplying \cref{eq:plstep4} by $-\frac{1}{2}$ and adding to \cref{eq:plstep1}, we get
\begin{align*}
\frac{1}{2} \tilde{\Rl}(T) & = \sum_{t=1}^T \ell(y_t,g(\theta_t;\x_t))  - \sum_{t=1}^T \ell(y_t,g(\theta';\x_t)) \\
& \leq \frac{\mu}{4} \sum_{t=1}^T \| \theta' - \theta_t^*\|_2^2 + \cO(\lambda^2\log T) +\epsilon T + \frac{\sum_{t=1}^T \ell(y_t,g(\theta';\x_t)) }{2} \nonumber \\
& {\leq} \frac{1}{2} \sum_{t=1}^T ( \ell(y_t,g(\theta';\x_t)) - \ell(y_t,g(\theta_t^*;\x_t)) ) + \cO(\lambda^2\log T) + \epsilon T + \frac{\sum_{t=1}^T \ell(y_t,g(\theta';\x_t))}{2} \nonumber \\
& {\leq} \sum_{t=1}^T \ell(y_t,g(\theta';\x_t))  + \cO(\lambda^2\log T) + \epsilon T ~.
%& = O(\szedit{\sum_{t=1}^T \cL(\theta^*; \z_t) } + \log T) + O\left(\frac{ T }{\sqrt{m}}\right)~,
\end{align*}
Since $\theta' \in B$ was arbitrary, taking an infimum completes the proof.
% Noting that $c_0 = 4 \big( 4a \varrho (L\rho+\rho_1)  +  \lambda\big) c_H (L\rho+\rho_1)^2 = \Theta(a\rho^3)$ and using $m = \Omega(\frac{T^{5}}{\lambda_0^6})$ completes the proof.
% \rdcomment{Need to change $\cL$ to $\ell$.}
% \rdcomment{Need to show a $\lambda$ dependence in the log term and later instantiate it.}
\end{proof}

\subsection{Regret Bound for Square Loss (Proof of Theorem~\ref{thm:hp_reg_sq_loss_online})}
\label{app-subsec:sq_regret_proof}
\hpthmregsq*

\begin{proof}
The proof follows along the following four steps.
\begin{enumerate}[left=0em]
    \item \textbf{Square loss} \textbf{is Lipschitz, strongly convex, and smooth w.r.t. the output:}

    This step ensures that Assumption~\ref{asmp:smooth} (lipschitz, strong convexity and smoothness) is satisfied.
    We show that the loss function $\ell_{\sq}$ is lipschitz, strongly convex and smooth with respect to the output $\tilde{f}(\theta;\x,\epsvec)$ inside $\ballfrob$ with high probability over the randomness of initialization and $\{\epsvec\}_{s=1}^{S}$.
    We will denote $\lambda_{\sq}, a_{\sq}$ and $b_{\sq}$ as the lipschitz, strong convexity and smoothness parameters respectively as defined in Assumption~\ref{asmp:smooth}.

    \begin{restatable}[]{lemmma}{hpsqlemmaone}
    \label{lemma:hp_sq_loss_lip_sc}
    For $\theta \in \ballfrob$, with probability $\left(1-\frac{2(L+1)+1}{m}\right)$ over the randomness of initialization and $\{\epsvec\}_{s=1}^{S}$, the loss $\ell_{\sq}\big(y_t, \tilde{f}(\theta;\x_t,\epsvec)\big)$ is lipschitz, strongly convex and smooth with respect to its output $\tilde{f}(\theta;\x_t,\epsvec)$. Further the corresponding parameters for square loss, $\lambda_{\sq},a_{\sq}$ and $b_{\sq}$ are $\cO(1)$.
    \end{restatable}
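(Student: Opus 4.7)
}

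The strong convexity and smoothness parts are essentially immediate. Writing $\hat y := \tilde f(\theta;\x_t,\epsvec)$, we have $\ell_{\sq}(y_t,\hat y) = \tfrac{1}{2}(y_t-\hat y)^2$, so $\ell_{\sq}'(\hat y) = \hat y - y_t$ and $\ell_{\sq}''(\hat y) = 1$. Hence Assumption~\ref{asmp:smooth}(b),(c) hold with $a_{\sq}=b_{\sq}=1$, both trivially $\cO(1)$, with no randomness required.

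The substantive part is the Lipschitz bound, which amounts to showing $|\hat y - y_t| = \cO(1)$ uniformly for $\theta \in \ballfrob$ with high probability. Since $y_t \in [0,1]$, it suffices to show $|\tilde f(\theta;\x_t,\epsvec)| = \cO(1)$. I would split the output as
\begin{equation*}
\tilde f(\theta;\x_t,\epsvec) \;=\; f(\theta;\x_t) \;+\; \frac{c_p}{m^{1/4}}\, \langle \theta-\theta_0,\epsvec\rangle,
\end{equation*}
and bound the two terms separately. For the first term, I would invoke the existing loss/output bounds at initialization (the analogue of Lemma~\ref{lem:BoundTotalLoss}, combined with a Lipschitz-in-$\theta$ argument using Corollary~\ref{cor:total-bound}) to obtain $|f(\theta;\x_t)| \le c_{\rho_1,\gamma}^{1/2} + \varrho\cdot \|\theta-\theta_0\|_2 = \cO(1)$ on $\ballfrob$, on an event of probability at least $1-\tfrac{2(L+1)}{m}$ over the initialization; here the choices $\rho=\Theta(\sqrt{T}/\lambda_0)$ and $\rho_1=\Theta(1)$ together with the width requirement $m = \Omega(\poly(T,L,\rho,\rho_1))$ set in Theorem~\ref{thm:hp_reg_sq_loss_online} keep this quantity $\cO(1)$.

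For the perturbation term, $\langle \theta-\theta_0,\epsvec\rangle = \sum_{j=1}^p (\theta-\theta_0)_j\,\varepsilon_j$ is a Rademacher sum; it is sub-Gaussian with parameter $\|\theta-\theta_0\|_2$. Hoeffding's inequality gives, with probability at least $1-\tfrac{1}{m}$,
\begin{equation*}
\bigl|\langle \theta-\theta_0,\epsvec\rangle\bigr| \;\le\; \|\theta-\theta_0\|_2\,\sqrt{2\log(2m)} \;\le\; \sqrt{L\rho^2+\rho_1^2}\,\sqrt{2\log(2m)}.
\end{equation*}
Multiplying by $c_p/m^{1/4}$ and plugging in $\rho=\Theta(\sqrt{T}/\lambda_0)$, the perturbation is $\cO\!\left(c_p\sqrt{TL\log m}/(\lambda_0 m^{1/4})\right)$, which becomes $\cO(1)$ provided $c_p$ is chosen as $c_p = \cO\!\bigl(\lambda_0 m^{1/4}/\sqrt{TL\log m}\bigr)$ — exactly the sort of scaling for $c_p$ that will be fixed by the main theorem. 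A union bound over the two events yields total probability $1-\tfrac{2(L+1)+1}{m}$, giving $\lambda_{\sq} \le 1 + |\tilde f| = \cO(1)$.

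The only real obstacle is extending the initialization-time bound on $f(\theta_0;\x_t)$ to a uniform bound over $\theta\in\ballfrob$. This is where I would need a clean Lipschitz-in-$\theta$ estimate for $f$: either pointwise via $\|\nabla_\theta f(\theta;\x)\|_2\le \varrho$ (Lemma~\ref{theo:bound-Hess}) and integration along the segment from $\theta_0$ to $\theta$, or by reusing Lemma~\ref{lem:BoundTotalLoss}'s loss bound $\ell_{\sq}(\theta)\le c_{\rho_1,\gamma}$ to directly conclude $|f(\theta;\x_t)-y_t|\le \sqrt{2c_{\rho_1,\gamma}}$. Either way the constants need to remain $\cO(1)$ under the regime of $\rho,\rho_1,m$ announced in Theorem~\ref{thm:hp_reg_sq_loss_online}, and the rest is bookkeeping.
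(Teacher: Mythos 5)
Your overall architecture matches the paper's: strong convexity and smoothness are immediate from $\ell_{\sq}'' = 1$, and the substantive content is to show $|\tilde f(\theta;\x_t,\epsvec)| = \cO(1)$ uniformly on $\ballfrob$ with high probability, which both you and the paper do by splitting into the network output $f(\theta;\x_t)$ and the Rademacher perturbation and controlling each. The paper's route for the perturbation is slightly different in bookkeeping — it first bounds $\E_{\epsvec}|\tilde f|^2$ (using $\E[\varepsilon_i\varepsilon_j]=\delta_{ij}$), then Jensen gives a bound on $\E|\tilde f|$, and finally McDiarmid on the map $\epsvec\mapsto|\tilde f(\theta;\x_t,\epsvec)|$ gives concentration — whereas you apply Hoeffding directly to the sub-Gaussian Rademacher sum $\langle\theta-\theta_0,\epsvec\rangle$. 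The two are functionally equivalent and yield the same $\cO(c_p\,m^{-1/4}\|\theta-\theta_0\|_2\sqrt{\log m})$ deviation.

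The genuine gap is in your option (i) for bounding $|f(\theta;\x_t)|$: the Lipschitz-transport estimate $|f(\theta;\x_t)| \le |f(\theta_0;\x_t)| + \varrho\,\|\theta-\theta_0\|_2$ does \emph{not} give $\cO(1)$. On the comparator ball you have $\|\theta-\theta_0\|_2 = \cO(L\rho + \rho_1) = \cO(L\sqrt{T}/\lambda_0)$ with $\varrho = \cO(1)$, so the transported bound grows like $\sqrt{T}$, which would make $\lambda_{\sq}$ grow with $T$ and kill the $\cO(\log T)$ regret in Theorem~\ref{thm:hp_reg_sq_loss_online}. Your option (ii) is the correct one and is essentially what the paper does: it bounds $|f(\theta;\x_t)|$ \emph{directly at $\theta$} using the layerwise propagation argument behind Lemma~\ref{lem:BoundTotalLoss} — namely $|f(\theta;\x_t)|^2 \le \frac{1}{m}\|\v\|^2\|\alpha^{(L)}(\x_t)\|^2 \le (1+\rho_1)^2\big(\gamma^L + |\phi(0)|\sum_i\gamma^{i-1}\big)^2$ with $\gamma = \sigma_1 + \rho/\sqrt{m}$. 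The crucial point is that the $\rho$-dependence enters only through $\gamma$, and $\gamma = \cO(1)$ when $m$ is polynomially large in $T$, so this bound is $\cO(1)$ even though $\rho = \Theta(\sqrt{T}/\lambda_0)$. You should drop option (i), or at least flag that it does not yield the required scale; only the direct per-sample propagation bound does.
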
 
    Strong convexity and smoothness follow trivially from the definition of the $\ell_{\sq}$. To show that $\ell_{\sq}$ is Lipschitz we show that the output of the neural network $\tilde{f}(\theta;\x,\epsvec)$ is bounded for any $\theta \in \ballfrob$. Also note from Theorem~\ref{thm:PLregret} that $\lambda^2$ appears in the $\log T$ term and therefore to obtain a $\cO(\log T)$ regret we must ensure that $\lambda_{\sq} = \cO(1)$, which Lemma~\ref{lemma:hp_sq_loss_lip_sc} does. 
    Note that using a union bound over $t \in [T]$ and Lemma~\ref{lemma:hp_sq_loss_lip_sc} it follows that with probability $\left(1-\frac{2(L+1)+1}{m}\right)$, $ \cL_{\sq}^{(S)}\Big(y_t,\big\{\sigma(\tilde{f}(\theta;\x_t,\epsvec_s))\big\}_{s=1}^{S}\Big)$ is also lipschitz, strongly convex and smooth  with respect to each of the outputs $\tilde{f}\big(\theta;\x_t,\epsvec_s), s \in [S]$.
    
    \item \textbf{The average loss in \cref{eq:avg_loss} is almost convex and has a unique minimizer:}
    
    We show that with $S=\Theta(\log m),$ the average loss in \cref{eq:avg_loss} is  $\nu$ - Strongly Convex (SC) with $\nu = \cO\left(\frac{1}{\sqrt{m}}\right)$ w.r.t. $\theta \in \ballfrob$, $\forall t\in [T]$ which immediately implies Assumption~\ref{asmp:QG-thm-A} (almost convexity) and \ref{asmp:QG-thm-C} (unique minima). 
    \begin{restatable}[]{lemmma}{hpsqlemmatwo}
        \label{lemma:sq_sc}
        Under Assumption \ref{asmp:ntk} and $c_{p} = \sqrt{8\lambda_{\sq}C_H}$, with probability $\left(1-\frac{2T(L+2)}{m}\right)$ over the randomness of the initialization and $\{\epsvec_s\}_{s=1}^S$, $\cL_{\sq}^{(S)}\Big(y_t,\big\{\tilde{f}(\theta;\x_t,\epsvec_s)\big\}_{s=1}^{S}\Big)$ is $\nu$-strongly convex with respect to $\theta \in \ballfrob$, where $\nu = \mathcal{O}\left(\frac{1}{\sqrt{m}}\right).$
    \end{restatable}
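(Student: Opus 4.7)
The plan is to bound the Hessian of $\cL_{\sq}^{(S)}(\theta) := S^{-1}\sum_s \ell_{\sq}(y_t,\tilde f(\theta;\x_t,\epsvec_s))$ from below by $\nu I$ with $\nu = \Theta(1/\sqrt m)$. The key structural simplification is that the Rademacher correction in $\tilde f$ is \emph{linear} in $\theta$, so $\nabla^2_\theta \tilde f_s = \nabla^2_\theta f$ and the per-sample loss Hessian is
\[
\nabla^2_\theta \ell_{\sq}(y_t,\tilde f_s) \;=\; \nabla_\theta \tilde f_s\,(\nabla_\theta \tilde f_s)^{\!\top} \;-\; (y_t-\tilde f_s)\,\nabla^2_\theta f.
\]
Writing $g := \nabla_\theta f(\theta;\x_t)$, $\eta_s := (c_p/m^{1/4})\epsvec_s$, $\bar\eta := S^{-1}\sum_s \eta_s$, $M_S := S^{-1}\sum_s \epsvec_s\epsvec_s^{\!\top}$ and $\bar\epsvec := S^{-1}\sum_s \epsvec_s$, averaging over $s$ and collecting the cross terms gives the clean decomposition
\[
\nabla^2_\theta \cL_{\sq}^{(S)} \;=\; (g+\bar\eta)(g+\bar\eta)^{\!\top} \;+\; \tfrac{c_p^2}{\sqrt m}\bigl(M_S-\bar\epsvec\,\bar\epsvec^{\!\top}\bigr) \;-\; \tfrac{1}{S}\sum_s(y_t-\tilde f_s)\,\nabla^2_\theta f.
\]

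Next I would control the curvature-error term using Lemma~\ref{theo:bound-Hess}, which gives $\|\nabla^2_\theta f\|_2 \leq c_H/\sqrt m$, together with the boundedness of $|\tilde f_s - y_t|$ implied by $\lambda_{\sq} = \cO(1)$ from Lemma~\ref{lemma:hp_sq_loss_lip_sc}. This bounds the last summand in operator norm by $C_H/\sqrt m$ where $C_H := \lambda_{\sq} c_H$. Since the first summand is positive semidefinite, the task reduces to showing that the Rademacher sample-covariance $M_S - \bar\epsvec\bar\epsvec^{\!\top}$ admits an eigenvalue lower bound $\lambda_{\min}(M_S - \bar\epsvec\bar\epsvec^{\!\top}) \geq \alpha$ for some numerical constant $\alpha > 0$, with failure probability $O(1/m^2)$, using $S = \Theta(\log m)$ independent Rademacher draws. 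Given this, the choice $c_p = \sqrt{8\lambda_{\sq} C_H}$ makes the positive contribution $c_p^2\alpha/\sqrt m$ dominate the $C_H/\sqrt m$ error term, producing $\nabla^2_\theta \cL_{\sq}^{(S)} \succeq \nu I$ with $\nu = \Theta(1/\sqrt m)$.

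The main obstacle is exactly this matrix-concentration lower bound on $M_S - \bar\epsvec\bar\epsvec^{\!\top}$. While $\E[M_S]=I$ (the diagonal of $M_S$ is identically $1$ and the off-diagonal entries concentrate at scale $\sqrt{(\log p)/S}$ by Hoeffding, which is driven small by a sufficiently large implicit constant in $S = \Theta(\log m)$), converting this entry-wise control into a sharp spectral lower bound on a $p\times p$ matrix formed from only $\Theta(\log m)$ samples is delicate and is the crux of the argument; one would additionally have to absorb the rank-one subtraction $\bar\epsvec\bar\epsvec^{\!\top}$, exploiting the deterministic $\ell_2$ norm $\sqrt p$ of Rademacher vectors along with a matrix-Hoeffding/Chernoff-type argument for Rademacher outer products. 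Once this event is secured, the bound is automatic uniformly in $\theta \in \ballfrob$ since $M_S - \bar\epsvec\bar\epsvec^{\!\top}$ is $\theta$-independent, and a union bound over the $T$ rounds together with the initialization event of Lemma~\ref{theo:bound-Hess} (failure probability $2(L+1)/m$ per round) yields the stated overall probability $1-2T(L+2)/m$.
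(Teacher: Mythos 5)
Your decomposition is algebraically equivalent to the paper's expansion of the Hessian, but the concentration step your plan rests on is not merely ``delicate'' --- it is deterministically false. The matrix $M_S - \bar\epsvec\bar\epsvec^{\!\top} = S^{-1}\sum_{s=1}^S \epsvec_s\epsvec_s^{\!\top} - \bar\epsvec\bar\epsvec^{\!\top}$ has range contained in $\spann\{\epsvec_1,\dots,\epsvec_S\}$, so its rank is at most $S = \Theta(\log m)$, which is negligible compared to $p = md + (L-1)m^2 + m$. Hence $\lambda_{\min}(M_S - \bar\epsvec\bar\epsvec^{\!\top}) = 0$ always, and no matrix-Chernoff, Bernstein, or $\E[M_S]=I$ argument can produce $\lambda_{\min}\geq\alpha>0$; a sample covariance built from $O(\log m)$ vectors in $\R^p$ is always massively rank-deficient. (The entrywise scale you compute, $\sqrt{(\log p)/S}=\Theta(1)$, is already a red flag: it does not shrink with $m$.) Since $(g+\bar\eta)(g+\bar\eta)^{\!\top}$ is rank one, the entire positive part of your decomposition vanishes on a subspace of codimension at most $S+1$, on which the curvature error term can then make the quadratic form negative, so the plan as stated cannot be completed.

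The paper's proof avoids this by never attempting a spectral lower bound on the Rademacher covariance. It works direction by direction: fix $u\in\mathcal{S}^{p-1}$, set $\Gamma_s := \langle u,\epsvec_s\rangle$, and write
\begin{align*}
u^{\!\top}\nabla^2_\theta\cL_{\sq}^{(S)}\,u \;=\; \langle g,u\rangle^2 \;+\; 2\langle g,u\rangle\frac{c_p}{m^{1/4}}\cdot\frac1S\sum_s\Gamma_s \;+\; \frac{c_p^2}{\sqrt m}\cdot\frac1S\sum_s\Gamma_s^2 \;+\; \ell'_t\,u^{\!\top}\nabla^2_\theta f\,u .
\end{align*}
Hoeffding applied to $\frac1S\sum_s\Gamma_s$ (sub-gaussian, mean $0$) and Bernstein applied to $\frac1S\sum_s\Gamma_s^2$ (sub-exponential, mean $1$) give $\frac1S\sum_s\Gamma_s\geq-\frac12$ and $\frac1S\sum_s\Gamma_s^2\geq\frac12$ with the stated failure probability once $S=\Theta(\log m)$, uniformly over $t\in[T]$. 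The key step is then to treat $\langle g,u\rangle$ as a free scalar and complete the square: $\langle g,u\rangle^2-\langle g,u\rangle\frac{c_p}{m^{1/4}}\geq-\frac{c_p^2}{4\sqrt m}$, so the quadratic form is at least $-\frac{c_p^2}{4\sqrt m}+\frac{c_p^2}{2\sqrt m}-\frac{\lambda_{\sq}c_H}{\sqrt m}$, which the choice of $c_p$ makes positive and $\Theta(1/\sqrt m)$. The negative cross term is thus quadratically dominated by the $\Gamma_s^2$ contribution on a per-direction basis --- precisely the interplay your rearrangement discards by absorbing the cross term into the rank-one square and then requiring the residual $M_S-\bar\epsvec\bar\epsvec^{\!\top}$ to carry the positivity by itself.
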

    
    \item \textbf{The average loss in \cref{eq:avg_loss} satisfies the QG condition: }
    
    It is known that square loss with wide networks under Assumption~\ref{asmp:ntk} (positive definite NTK) satisfies the PL condition (eg. \cite{liu2022loss}) with $\mu = \cO(1)$. We show that the average loss in \cref{eq:avg_loss} with $S=\Theta(\log m),$ also satisfies the PL condition with $\mu = \cO(1)$, which implies that it satisfies the QG condition with same $\mu$ with high probability over the randomness of initialization $\epsvec_s$.
    \begin{restatable}[]{lemmma}{hpsqlemmathree}
        \label{lemma:QG_sq_loss_lemma}
        Under Assumption~\ref{asmp:ntk} with probability $\big(1 - \frac{2T(L+1)C}{m}\big)$, for some absolute constant $C>0$, $\cL_{\sq}^{(S)}\Big(y_t,\big\{\tilde{f}(\theta;\x_t,\epsvec_s)\big\}_{s=1}^{S}\Big)$ satisfies the QG condition over the randomness of the initialization and $\{\varepsilon_s\}_{s=1}^{S}$, with QG constant $\mu = \cO(1)$.
    \end{restatable}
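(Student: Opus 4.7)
My plan is to prove Lemma~\ref{lemma:QG_sq_loss_lemma} by showing that $\cL_{\sq}^{(S)}$ satisfies the PL condition with constant $\mu = \mathcal{O}(1)$, and then invoking Remark~\ref{rem:PL_and_QG} to conclude QG with the same constant. The starting point is the well-established fact (from e.g.\ \citet{liu2022loss} or the restricted strong convexity framework of \citet{banerjee2023restricted}) that under Assumption~\ref{asmp:ntk}, the unperturbed square loss $\tfrac{1}{2}(y_t - f(\theta;\x_t))^2$ satisfies PL on $\ballfrob$ with constant $\mu_0 = \Theta(\lambda_0)$ whenever $m$ is large enough, which in turn boils down to a lower bound $\|\nabla_\theta f(\theta;\x_t)\|_2^2 \geq c\lambda_0$ that persists throughout the Frobenius ball (by NTK stability of \citet{banerjee2023restricted}). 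The task is to show this lower bound survives the averaging over the $S$ perturbations.

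First I would write the gradient of $\cL_{\sq}^{(S)}$ explicitly. Let $\alpha_s = \tilde f(\theta;\x_t,\epsvec_s)-y_t$, so that
\begin{align*}
\nabla_\theta \cL_{\sq}^{(S)} \;=\; \bar\alpha\,\nabla_\theta f(\theta;\x_t) \;+\; \frac{c_p}{S\, m^{1/4}} \sum_{s=1}^S \alpha_s\,\epsvec_s,
\qquad \bar\alpha := \tfrac{1}{S}\sum_s \alpha_s.
\end{align*}
Expanding $\alpha_s = (f-y_t) + \tfrac{c_p}{m^{1/4}}(\theta-\theta_0)^\top \epsvec_s$ and averaging, one sees that the average perturbed loss is, up to a concentration error, $\cL_{\sq}^{(S)} \approx \tfrac{1}{2}(f-y_t)^2 + \tfrac{c_p^2}{2\sqrt{m}}\|\theta-\theta_0\|_2^2$, and analogously the gradient is approximately $(f-y_t)\nabla_\theta f + \tfrac{c_p^2}{\sqrt m}(\theta-\theta_0)$, since $\tfrac{1}{S}\sum_s \epsvec_s\epsvec_s^\top \approx I_p$.

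The technical core is the concentration step. I would apply a matrix/vector Hoeffding bound to $\tfrac{1}{S}\sum_s \epsvec_s$ (entries bounded Rademacher), and a scalar Hoeffding bound to the inner product $\tfrac{1}{S}\sum_s \langle \theta-\theta_0,\epsvec_s\rangle$, showing each concentrates at rate $\mathcal{O}(\sqrt{\log m / S})$ about its mean. Similarly, $\tfrac{1}{S}\sum_s (\theta-\theta_0)^\top \epsvec_s\epsvec_s^\top(\theta-\theta_0)$ concentrates to $\|\theta-\theta_0\|_2^2$. Taking $S = \Theta(\log m)$ makes the deviations smaller than any inverse polynomial in $m$, with failure probability at most $m^{-C}$; a union bound over $t \in [T]$ yields the claimed probability $1-\tfrac{2T(L+1)C}{m}$. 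Under these events, the cross terms and the perturbation-induced corrections in $\|\nabla_\theta \cL_{\sq}^{(S)}\|_2^2$ are $o(1)$ compared to the dominant $(f-y_t)^2 \|\nabla_\theta f\|_2^2$ contribution, provided $m$ is taken large enough.

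Combining the NTK lower bound $\|\nabla_\theta f(\theta;\x_t)\|_2^2 \geq c\lambda_0$ with the concentration of cross terms, I would conclude
\begin{align*}
\|\nabla_\theta \cL_{\sq}^{(S)}\|_2^2 \;\geq\; c\lambda_0\,(f-y_t)^2 - o(1) \;\geq\; 2\mu\,\cL_{\sq}^{(S)}
\end{align*}
for some $\mu = \Theta(\lambda_0) = \mathcal{O}(1)$ on $\ballfrob$. The PL condition then yields QG with the same $\mu$ via Remark~\ref{rem:PL_and_QG}. The main obstacle, which I expect to require the most care, is controlling the cross-term $\tfrac{1}{S}\sum_s \alpha_s \epsvec_s$: naively each $\epsvec_s$ has norm $\sqrt p = \Theta(m)$ with $m^{-1/4}$ scaling making $\|\nabla_\theta \tilde f\|_2$ very large; one must exploit the fact that after multiplying by the \emph{signed} residual $\alpha_s$ and averaging, the bulk of this contribution aligns with $(\theta-\theta_0)$ (giving the effective $L_2$-regularization structure above) and cancels rather than adding noise, so that the PL constant inherits from the unperturbed NTK bound rather than being diluted by the perturbation.
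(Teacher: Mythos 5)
Your plan rests on a misdiagnosis of where the PL constant comes from, and as written it cannot deliver $\mu = \cO(1)$. The heuristic you use — that $\tfrac{1}{S}\sum_s \epsvec_s\epsvec_s^\top \approx I_p$ so that the perturbation term in the gradient reduces to an effective $\tfrac{c_p^2}{\sqrt m}(\theta-\theta_0)$, and the PL constant is ``inherited from the unperturbed NTK'' — is quantitatively wrong in exactly the regime the paper operates in ($S = \Theta(\log m)$, $p = \Theta(m^2)$, so $S \ll p$). The sample covariance $\tfrac{1}{S}\sum_s\epsvec_s\epsvec_s^\top$ is rank-$S$ with top eigenvalues $\Theta(p/S)$, and applied to $(\theta-\theta_0)$ it produces a vector whose norm is on the order of $\sqrt{p/S}\,\|\theta-\theta_0\|_2$, not $\|\theta-\theta_0\|_2$. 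Concretely, your approximation
\begin{align*}
\nabla_\theta \cL_{\sq}^{(S)} \approx (f-y_t)\nabla_\theta f + \frac{c_p^2}{\sqrt m}(\theta-\theta_0)
\end{align*}
drastically underestimates the gradient's norm in the perturbation directions. If you carry this approximation through, PL at any $\theta$ with $f(\theta;\x_t)=y_t$, $\theta\neq\theta_0$ forces $\tfrac{c_p^4}{m}\|\theta-\theta_0\|^2 \geq \mu\,\tfrac{c_p^2}{\sqrt m}\|\theta-\theta_0\|^2$, i.e.\ $\mu \leq c_p^2/\sqrt m = \cO(1/\sqrt m)$ — the opposite of what the lemma claims. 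Your stated conclusion $\|\nabla_\theta\cL_{\sq}^{(S)}\|^2 \geq c\lambda_0 (f-y_t)^2 - o(1)$ is similarly inadequate: at such a $\theta$ the left side of PL is bounded below by something vanishing, but the right side $2\mu\,\cL_{\sq}^{(S)}$ still contains the strictly positive regularization piece $\Theta\!\big(\tfrac{c_p^2}{\sqrt m}\|\theta-\theta_0\|^2\big)$.

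The paper's route is essentially opposite to yours: far from the NTK supplying the curvature and the perturbation being innocuous, it is the perturbation that supplies the dominant curvature. The proof writes the squared gradient norm as a quadratic form $\tfrac{1}{S^2}(F-y_t\mathbbm{1}_S)^\top \tilde K(\theta,\x_t)(F-y_t\mathbbm{1}_S)$ over the $S\times S$ Gram matrix $\tilde K$ of the perturbed prediction gradients, decomposes $\tilde K$ into (i) the rank-one NTK part $\langle\nabla f,\nabla f\rangle\mathbbm{1}_S\mathbbm{1}_S^\top$, (ii) cross terms involving $\nabla f^\top\bar\epsvec_s$, and (iii) the Rademacher Gram piece $\tfrac{c_p^2}{\sqrt m}\,\epsvec_M^\top\epsvec_M$, and then lower-bounds $\lambda_{\min}(\tilde K)$. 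The NTK piece is only used to show non-negativity (via NTK stability plus Assumption~\ref{asmp:ntk}); the cross terms are shown to be $O(1)$ by Hoeffding plus an $\epsilon$-net over $\mathcal{S}^{S-1}$; and the \emph{perturbation} piece (iii) is the one that delivers a polynomially-large-in-$m$ lower bound, via Vershynin's smallest-singular-value estimate for a $p\times S$ Rademacher matrix with $p \gg S$. It is the magnitude of that term relative to $\cL_{\sq}^{(S)}$, after accounting for $S = \Theta(\log m)$, that yields $\mu = \cO(1)$ for $m$ large enough. If you want to salvage your plan, the fix is not to average away the perturbation but to recognize and exploit the strong anti-concentration of the perturbation gradients, precisely as the paper does with the $\epsvec_M^\top\epsvec_M$ term.
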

    
    \item  \textbf{Bounding the final regret.}
    
    Steps 1 and 2 above surprisingly show that with a small output perturbation, square loss satisfies (a) almost convexity, (b) QG, and (c) unique minima as in Assumption~\ref{asmp:qg_thm_asmp}. Combining with step 3, we have that all the assumptions of Theorem~\ref{thm:PLregret} are satisfied by the average loss. Using union bound over the three steps, invoking Theorem~\ref{thm:PLregret} we get with probability $\big(1 - \frac{T(L+1){{C}}}{m}\big)$ for some absolute constant ${C}>0$, with $\tilde{\theta}^* = \min_{\theta \in \ballfrob} \sum_{t=1}^{T}\cL_{\sq}^{(S)}\Big(y_t,\big\{\tilde{f}(\theta;\x_t,\epsvec_s)\big\}_{s=1}^{S}\Big)$,
    \begin{gather}
    \label{eq:final_regret_exp-app}
        \sum_{t=1}^{T}\cL_{\sq}^{(S)}\Big(y_t,\big\{\tilde{f}(\theta_t;\x_t,\epsvec_s)\big\}_{s=1}^{S}\Big) - \cL_{\sq}^{(S)}\Big(y_t,\big\{\tilde{f}(\tilde{\theta}^*;\x_t,\epsvec_s)\big\}_{s=1}^{S}\Big)\\
        \leq 2 \sum_{t=1}^{T}\cL_{\sq}^{(S)}\Big(y_t,\big\{\tilde{f}(\tilde{\theta}^*;\x_t,\epsvec_s)\big\}_{s=1}^{S}\Big) + \cO(\log T).
    \end{gather}
    Next we bound $\sum_{t=1}^{T}\cL_{\sq}^{(S)}\Big(y_t,\big\{\tilde{f}(\tilde{\theta}^*;\x_t,\epsvec_s)\big\}_{s=1}^{S}\Big)$ using the following lemma.
    \begin{restatable}[]{lemmma}{sqlemmafour}
        \label{lemma:opt_sq_loss_lemma}
        Under Assumption~\ref{asmp:real} and \ref{asmp:ntk} with probability $\left(1-\frac{2T(L+1)}{m}\right)$ over the randomness of the initialization and $\{\epsvec_s\}_{s=1}^S$ we have $\sum_{t=1}^{T}\cL_{\sq}^{(S)}\Big(y_t,\big\{\tilde{f}(\tilde{\theta}^*;\x_t,\epsvec_s)\big\}_{s=1}^{S}\Big) = \cO(1).$
    \end{restatable}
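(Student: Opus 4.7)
The plan is to upper bound the infimum by evaluating the objective at an interpolating parameter $\theta^{\dagger} \in \ballfrob$ distinct from the minimizer $\tilde{\theta}^*$. By Theorem~\ref{thm:interpolation} and the NTK assumption $K_{\ntk}(\theta_0)\succeq \lambda_0 I$, such an interpolator exists with $f(\theta^{\dagger};\x_t)=y_t$ for every $t\in[T]$, and under the scaling $\rho=\Theta(\sqrt{T}/\lambda_0)$, $\rho_1=\Theta(1)$ it sits inside $\ballfrob$ with $\|\theta^{\dagger}-\theta_0\|_2^2 = \cO(T)$. The key observation is that at such a $\theta^{\dagger}$ the unperturbed residual vanishes and the entire loss comes from the random perturbation, which is easy to control.

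Concretely, substituting $f(\theta^{\dagger};\x_t)=y_t$ into \cref{eq:output_reg_def} gives
\begin{align*}
y_t - \tilde{f}(\theta^{\dagger};\x_t,\epsvec_s) \;=\; -\frac{c_p}{m^{1/4}}\,\langle \theta^{\dagger}-\theta_0,\epsvec_s\rangle,
\end{align*}
which does not depend on $t$. Hence the entire cumulative objective collapses to
\begin{align*}
\sum_{t=1}^T \cL_{\sq}^{(S)}\!\left(y_t,\{\tilde{f}(\theta^{\dagger};\x_t,\epsvec_s)\}_{s=1}^S\right) \;=\; \frac{T c_p^2}{2 S \sqrt{m}}\sum_{s=1}^S \langle \theta^{\dagger}-\theta_0,\epsvec_s\rangle^2.
\end{align*}
Since $\epsvec_s$ is a Rademacher vector, $\E_{\epsvec_s}[\langle \theta^{\dagger}-\theta_0,\epsvec_s\rangle^2] = \|\theta^{\dagger}-\theta_0\|_2^2 = \cO(T)$, so the expectation of the right-hand side is $\cO(T^2/\sqrt{m})$; this is $\cO(1)$ as soon as $m=\Omega(T^4)$, a condition already implied by the polynomial width requirements of Lemmas~\ref{lemma:hp_sq_loss_lip_sc}--\ref{lemma:QG_sq_loss_lemma}. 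To pass from expectation to a high-probability statement I would invoke a standard Bernstein/Hanson--Wright bound on the sum of i.i.d.\ quadratic forms $\langle \theta^{\dagger}-\theta_0,\epsvec_s\rangle^2$, each of which is sub-exponential with parameter proportional to $\|\theta^{\dagger}-\theta_0\|_2^2$; with $S = \Theta(\log m)$ the empirical mean is within a constant factor of its expectation except on an event of probability $\mathrm{poly}(1/m)$, which is absorbed into the stated $\cO(T(L+1)/m)$ failure bound.

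Finally, by optimality of $\tilde{\theta}^*$ for the very objective under study,
\begin{align*}
\sum_{t=1}^T \cL_{\sq}^{(S)}\!\left(y_t,\{\tilde{f}(\tilde{\theta}^*;\x_t,\epsvec_s)\}\right) \;\leq\; \sum_{t=1}^T \cL_{\sq}^{(S)}\!\left(y_t,\{\tilde{f}(\theta^{\dagger};\x_t,\epsvec_s)\}\right),
\end{align*}
yielding the claimed $\cO(1)$ bound.

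The main obstacle I anticipate is verifying that the interpolator produced by Theorem~\ref{thm:interpolation} actually lies in $\ballfrob$ with the prescribed layerwise Frobenius radii, rather than only in some larger Euclidean ball; this is exactly what fixes the scaling $\rho = \Theta(\sqrt{T}/\lambda_0)$, since an NTK-regime gradient-flow argument places a zero-loss interpolator within Euclidean distance $\cO(\sqrt{T}/\lambda_0)$ of initialization and this must then be redistributed across the $L$ layers. A secondary point is ensuring that $\theta^{\dagger}$ is independent of $\{\epsvec_s\}_{s=1}^S$ so that the Hanson--Wright step applies verbatim; this is automatic, because $\theta^{\dagger}$ depends only on the data $\{(\x_t,y_t)\}$ and the initialization $\theta_0$, both drawn before the perturbation vectors.
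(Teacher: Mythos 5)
Your proposal is correct and follows the same high-level strategy as the paper's proof: both invoke Theorem~\ref{thm:interpolation} to locate an interpolating $\theta^\dagger \in \ballfrob$, exploit the fact that at this point the unperturbed residual vanishes, and then use sub-exponential concentration (Bernstein/Hanson--Wright) of the Rademacher quadratic forms $\langle\theta^\dagger-\theta_0,\epsvec_s\rangle^2$ to control what remains, before finally invoking optimality of $\tilde{\theta}^*$.

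The one genuine difference is organizational, and it is in your favor. The paper's proof establishes the generic decomposition
\begin{align*}
\cL_{\sq}^{(S)}\big(y_t,\{\tilde{f}(\theta;\x_t,\epsvec_s)\}_s\big) \leq \ell_{\sq}(y_t, f(\theta;\x_t)) + \frac{c_p^2}{2\sqrt{m}}\|\theta - \theta_0\|_2^2 + \frac{2c_p\lambda_{\sq}}{m^{1/4}}\|\theta - \theta_0\|_2
\end{align*}
for arbitrary $\theta\in\ballfrob$ (bounding the cross term $-2c_p(y_t-f)\cdot\frac{1}{S}\sum_s m^{-1/4}\langle\theta-\theta_0,\epsvec_s\rangle$ via Hoeffding and the Lipschitz constant $\lambda_{\sq}$), and only afterwards specializes to the interpolator to kill the first term. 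You instead substitute $\theta=\theta^\dagger$ from the outset, so both the unperturbed loss and the entire cross term vanish identically rather than being bounded probabilistically, and the sum collapses to a single $t$-independent quadratic form times $T$. This skips one concentration step and yields a marginally cleaner bound of the same order. One small caveat: you write $m=\Omega(T^4)$ for the width; accounting for the $\lambda_0$ dependence of $\rho$ the requirement is $m=\Omega(T^4/\lambda_0^4)$, which is in any case dominated by the paper's $m=\Omega(T^5 L/\lambda_0^6)$.
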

    Using Lemma~\ref{lemma:opt_sq_loss_lemma} and $\cref{eq:final_regret_exp-app}$ 
    we have $\sum_{t=1}^{T}\cL_{\sq}^{(S)}\Big(y_t,\big\{\tilde{f}({\theta_t};\x_t,\epsvec_s)\big\}_{s=1}^{S}\Big) = \cO(\log T)$ which using \cref{eq:loss_jensen} and recalling the definition in \cref{eq:avg_loss} implies $\Rsqt(T)= \cO(\log T)$
    \end{enumerate}
\end{proof}
% Let $\bar{\varepsilon}_s = (\varepsilon_{s,1},\varepsilon_{s,2},\ldots,\varepsilon_{s,p})^T$ where each $\varepsilon_{s,i}$ is drawn iid from a Rademacher distribution. Consider $S$ such iid draws $\{\bar{\varepsilon}_s\}_{s=1}^S$, and the following loss function

% \rdcomment{Need to add $\bar{\varepsilon}$ in the def of $\cL$}
% \begin{align*}
%     \tcL_{\sq}^{(S)}(\theta;\z) = \frac{1}{S}\sum_{s=1}^S\tcL_{\sq}(\theta;\z,\bar{\varepsilon}_s).
% \end{align*}
% \begin{lemma}
%     For $\theta \in \ball$, with high probability over the randomness of initialization and the randomness of $\{\varepsilon_i\}_{i=1}^{p}$ the loss function $\ell_{\sq}(y_{t,i},\tilde{f}(\theta;\x_t))$, is lipschitz and strongly convex with respect to the output $\tilde{f}(\theta;\x_t)$.
% \end{lemma}
Next we prove all the intermediate lemmas in the above steps.
\hpsqlemmaone*
\begin{proof}
% Recall from Lemma~\ref{lemma:sq_loss_lip_sc} that 

 We begin by showing that the output of both regularized and un-regularized network defined in \cref{eq:output_reg_def} and \cref{eq:DNN_smooth} respectively is bounded with high probability over the randomness of the initialization and in expectation over the randomness of $\epsvec$. Consider any $\theta \in \ballfrob$ and $\x \in  \mathcal{X}$. Then, the output of the network in expectation w.r.t. $\varepsilon_i$'s can be bounded as follows.
    \begin{align*}
        \E_{\epsvec}|\tilde{f}(\theta;\x_t,{\epsvec})|^2 &\leq 2 |f(\theta;\x)|^2 + 2c_{\rg}^2 \E_{\epsvec} \left(\sum_{j=1}^{p} \frac{(\theta - \theta_0)^Te_j\varepsilon_j}{m^{1/4}}\right)^2\\
        &= 2 |f(\theta;\x)|^2 + 2c_{\rg}^2 \left(\sum_{i=1}^{p}\sum_{j=1}^{p} \frac{(\theta - \theta_0)^Te_ie_j^T(\theta - \theta_0)}{\sqrt{m}} \E_{\epsvec}[\varepsilon_i\varepsilon_j]\right)\\
        &\overset{(a)}{\leq} 2 |f(\theta;\x)|^2 + 2c_{\rg}^2 \sum_{j=1}^{p}\frac{(\theta-\theta_0)_i^2}{\sqrt{m}}\\
        &\overset{}{=} \frac{2}{{m}}|\v^\top\alpha^{(L)}(\x)|^2 + \frac{2c_{\rg}^2}{\sqrt{m}} \|\theta - \theta_0\|^2\\
        &\overset{(b)}{\leq} \frac{2}{{m}}\|\v\|^2 \|\alpha^{(L)}(\x)\|^2 + \frac{2}{\sqrt{m}} c_{\rg}^2 (L\rho + \rho_1)^2\\    
        &\overset{(c)}{\leq} \frac{2}{m} (1+\rho_1)^2 \left(\gamma^L+|\phi(0)|\sum^L_{i=1}\gamma^{i-1}\right)^2{m} + \frac{2}{\sqrt{m}} c_{\rg}^2 (L\rho + \rho_1)^2
    \end{align*}
where $\gamma = \sigma_1 + \frac{\rho}{\sqrt{m}}$.
Here $(a)$ follows from the fact that $\E[\varepsilon_i \varepsilon_j] = 0$ when $i\neq j$, and $\E[\varepsilon_i \varepsilon_j] = 1$ when $i=j$, $(b)$ follows from $\|\theta-\theta_0\|_2\leq L\rho + \rho_1$, and $(c)$ holds with probability $\left(1-\frac{2(L+1)}{m}\right)$ and follows from the fact that $\norm{\v}\leq\norm{\v_0} + \norm{\v-\v_0}\leq 1+\rho_1$ and thereafter using the arguments in proof of Lemma 4.2 from \cite{banerjee2023restricted}. Finally with a union bound over $t \in [T]$ and using Jensen we get with probability $\left(1-\frac{2T(L+1)}{m}\right)$
\begin{align}
\label{eq:f_bound}
    \E |\tilde{f}(\theta;\x_t,{\epsvec})| &\leq \sqrt{\E|\tilde{f}(\theta;\x_t,{\epsvec})|^2}\nonumber\\
    &\leq \sqrt{2(1+\rho_1)^2 \left(\gamma^L + |\phi(0)| \sum_{i=1}^{L}\gamma^{i-1}\right)^2 + \frac{2}{\sqrt{m}}c_{\rg}^2(L\rho + \rho_1)^2}.
\end{align}

% We have $\E\ell'_{\sq} = \E|\tilde{f}(\theta;\z) - y|$ and therefore, $\E\ell_{\sq}$ is lipschitz with constant,
% \begin{align}
% \label{eq:lambda_sq}
%     \lambda_{\sq} = \sqrt{2(1+\rho_1)^2 \left(\gamma^L + |\phi(0)| \sum_{i=1}^{L}\gamma^{i-1}\right)^2 + \frac{2}{\sqrt{m}}c_{\rg}^2(L\rho + \rho_1)^2} + 1
% \end{align}
% Note that $\rho = \Theta\left(\frac{\sqrt{N}}{\lambda_0}\right), \rho_1 = \Theta(1)$ and $m = \Omega(N^5)$ and therefore $\lambda_{\sq} = \Theta(1)$.
% Further $\ell''_{\sq} = 1$ and therefore $\ell_{\sq}$ is strongly convex and smooth with $a_{\sq} = b_{\sq} = 1$.

Now consider ${\epsvec} = (\varepsilon_1,\ldots, \varepsilon_{j-1},\varepsilon_{j},\varepsilon_{j+1} \ldots, \varepsilon_p)$ and ${\epsvec}' = (\varepsilon_1,\ldots, \varepsilon_{j-1},\varepsilon_{j}',\varepsilon_{j+1} \ldots, \varepsilon_p)$ that differ only at the $j$-th variable where $\varepsilon_j$ is an independent copy of $\varepsilon_j'$. Now,
    \begin{align*}
        |\tilde{f}(\theta;\x_t,{\epsvec}) - \tilde{f}(\theta;\x_t,{\epsvec}')| &= \frac{c_{\rg}}{m^{1/4}} |(\theta - \theta_0)^Tv_j \varepsilon_j - (\theta - \theta_0)^Tv_j \varepsilon_j'|\\
        &\leq \frac{2c_{\rg}}{m^{1/4}} |(\theta - \theta_0)_j|
    \end{align*}
By McDiarmid's inequality we have with probability $(1-\delta)$ over the randomness of $\{\varepsilon_i\}_{i=1}^{p}$
\begin{align*}
    \Big||\tilde{f}(\theta;\x_t,{\epsvec})| - \E|\tilde{f}(\theta;\x_t,{\epsvec})| \Big| &\leq \frac{\sqrt{2}c_{\rg}}{m^{1/4}} \sqrt{{\sum_{j=1}^{p}(\theta -\theta_0)_j^2} \ln(1/\delta)}\\
    &= \frac{\sqrt{2}c_{\rg}}{m^{1/4}}\|\theta - \theta_0\|_2 \sqrt{\ln (1/\delta)}\\
    &\leq \frac{\sqrt{2}c_{\rg}}{m^{1/4}} (L\rho + \rho_1) \sqrt{\ln (1/\delta)}
\end{align*}
Taking a union bound over $t \in [T]$, with probability $1 - {\delta}$ over the randomness of $\{\varepsilon_i\}_{i=1}^{p}$ we have
\begin{align*}
    \Big||\tilde{f}(\theta;\x_t,{\epsvec})| - \E|\tilde{f}(\theta;\x_t,{\epsvec})| \Big| \leq \frac{\sqrt{2}c_{\rg}}{m^{1/4}} (L\rho + \rho_1) \sqrt{\ln (T/\delta)}
\end{align*}
Choosing $\delta = \frac{1}{m}$ we get with probability $\left(1 - \frac{1}{m}\right)$ over the randomness of $\{\varepsilon_i\}_{i=1}^{p}$
\begin{align*}
    \Big||\tilde{f}(\theta;\x_t,{\epsvec})| - \E|\tilde{f}(\theta;\x_t,{\epsvec})| \Big| \leq \frac{\sqrt{2}c_{\rg}}{m^{1/4}} (L\rho + \rho_1) \sqrt{\ln (mT)}
\end{align*}
Combining with $\cref{eq:f_bound}$, we have with probability at least $\left(1-\frac{2T(L+2) + 1}{m}\right)$ over the randomness of the initialization and $\{\varepsilon_i\}_{i=1}^{p}$ we have
\begin{align}
\label{eq:f_tilde_bound_hp}
    |\tilde{f}(\theta;\x_t,{\epsvec})| &\leq \sqrt{2(1+\rho_1)^2 \left(\gamma^L + |\phi(0)| \sum_{i=1}^{L}\gamma^{i-1}\right)^2
    + \frac{2}{\sqrt{m}}c_{\rg}^2(L\rho + \rho_1)^2}\nonumber \\
    &\qquad + \frac{\sqrt{2}c_{\rg}}{m^{1/4}} (L\rho + \rho_1) \sqrt{\ln (mT)}
\end{align}
% Noting that $S = \cO(\ln(m))$ we get with probability at least $(1 - \frac{2(L+2)}{m})$ $\forall t \in [T], i \in [n_t]$
% \begin{align*}
%     \tilde{f}^{(S)}\left(\theta;\x_t,{\epsvec}^{(1:S)}\right) & \leq \sqrt{2(1+\rho_1)^2 \left(\gamma^L + |\phi(0)| \sum_{i=1}^{L}\gamma^{i-1}\right)^2 + \frac{2}{\sqrt{m}}c_{\rg}^2(L\rho + \rho_1)^2} \\
%     &\qquad \qquad + \frac{\sqrt{2}c_{\rg}}{m^{1/4}} (L\rho + \rho_1) \sqrt{\ln (N)}
% \end{align*}
Finally taking a union bound over $\{\epsvec_{s}\}_{s=1}^{S}$ for $S = \cO(\log m)$ and using the fact that $m = {\Omega}(T^5L/\lambda_0^6)$ we get with probability at least $\left(1-\frac{2T(L+2) + 1}{m}\right)$ over the randomness of the initialization and $\{\epsvec\}_{s=1}^{S},$ $\forall t \in [T],$ $ \cL_{\sq}^{(S)}\Big(y_t,\big\{\tilde{f}(\theta;\x_t,\epsvec_s)\big\}_{s=1}^{S}\Big)$ is lipschitz with $\lambda_{\sq} = \Theta(1)$.

Further $\ell''_{\sq} = 1$ and therefore the loss is strongly convex and smooth with $a_{\sq} = b_{\sq} = 1$ a.s. over the randomness of $\{\epsvec_s\}_{s=1}^S$, which completes the proof.
\end{proof}

\hpsqlemmatwo*
\begin{proof}
From \cref{eq:output_reg_def} we have a.s.
\begin{align}
\label{eq:grad_f_tilde}
    \nabla_{\theta}\ftilde{\theta} &= \nabla_{\theta}f(\theta;\x) + c_{\rg}\sum_{i = 1}^{p} \frac{e_i\varepsilon_i}{m^{1/4}}~,\\
    \label{eq:hess_f_tilde}
    \nabla_{\theta}^2 \ftilde{\theta} &= \nabla_{\theta}^2 f(\theta;\x).
\end{align}
Next, with $\ell'_{t} = (\ftilde{\theta} - y_{t}) $
\begin{align*}
    \nabla_{\theta} \losstildesq{\theta} &=  \ell'_{t} \nabla_{\theta}\ftilde{\theta}\\
    \nabla_{\theta}^2\losstildesq{\theta} &=  \nabla_{\theta}\ftilde{\theta}\nabla_{\theta}\ftilde{\theta}^T + \ell'_{t} \nabla_{\theta}^2 \ftilde{\theta}
\end{align*}
where we have used the fact that $\ell''_{t} = 1$, and $\nabla_{\theta}^2 \ftilde{\theta} = \nabla_{\theta}^2 {f}(\theta;\x_{t})$ from \cref{eq:hess_f_tilde}.

Consider $u\in \mathcal{S}^{p-1}$, the unit ball in $\R^p$. Then
\begin{gather}
    u^T \nabla_{\theta}^2\cL_{\sq}^{(S)}\Big(y_t,\big\{\tilde{f}(\theta;\x_t,\epsvec_s)\big\}_{s=1}^{S}\Big) u = \frac{1}{S}\sum_{s=1}^S u^T\Bigg[\nabla_{\theta}f(\theta;\x_t)\nabla_{\theta}f(\theta;\x_t)^T\nonumber \\
    + c_{\rg} \sum_{i=1}^{p} \frac{v_i\nabla_{\theta}f(\theta;\x_t)^T}{m^{1/4}}\varepsilon_{s,i} 
       + c_{\rg} \sum_{i=1}^{p} \frac{\nabla_{\theta}f(\theta;\x_t)v_i^T}{m^{1/4}}\varepsilon_{s,i} + c_{\rg}^2 \sum_{i=1}^{p}\sum_{j=1}^{p} \frac{v_iv_j^T}{\sqrt{m}}\varepsilon_{s,i}\varepsilon_{s,j} \Bigg]u \nonumber\\
       + \ell'_{t,i} u^T\nabla_{\theta}^2 {f}(\theta;\x_t)u \nonumber\\
    =  \langle\nabla_{\theta}f(\theta;\x_t),u\rangle^2 + 2 \langle\nabla_{\theta}f(\theta;\x_t),u \rangle \frac{c_{\rg}}{m^{1/4}}\frac{1}{S}\sum_{s=1}^S\underbrace{\sum_{i=1}^{p}u_i\varepsilon_{s,i}}_{\Gamma_s}\nonumber \\
    \label{eq:loss_hessian}
     \qquad + \frac{c_{\rg}^2}{\sqrt{m}}\frac{1}{S}\sum_{s=1}^S\underbrace{\sum_{i=1}^{p}\sum_{j=1}^{p} u_iu_j\varepsilon_{s,i}\varepsilon_{s,j}}_{\Gamma_s^2}
     + \ell'_{t,i} u^T\nabla_{\theta}^2 {f}(\theta;\x_t)u
\end{gather}
Notice that $\forall s \in [S]$,  $\Gamma_s = \langle u,{\epsvec}_s \rangle$ is a weighted sum of Rademacher random variables and therefore is $\|u\|_2 = 1$ sub-gaussian. Using Hoeffding's inequality, for a given $t\in[T]$,
\begin{align*}
    P\Bigg\{\frac{1}{S}\sum_{s=1}^{S}\Gamma_s \leq -\omega_1\Bigg\} & \leq e^{-S^2\omega_1^2/2}.
\end{align*}
Taking a union bound we get $t\in [T]$
\begin{align*}
    P\Bigg\{\frac{1}{S}\sum_{s=1}^{S}\Gamma_s \leq -\omega_1\Bigg\} & \leq Te^{-S^2\omega_1^2/2}.
\end{align*}
% With $S = \sqrt{m}\log(m N)$ and $\omega_2 = \frac{1}{m^{1/4}}$ we have 
Further if $\Gamma_s$ is $\sigma_s$ sub-gaussian, then $\Gamma_s^2$ is $(\nu^2,\alpha)$ sub-exponential with $\nu = 4\sqrt{2}\sigma^2_s, \alpha = 4\sigma_s^2$ (see \citet[Appendix B]{pmlr-v33-honorio14}) and therefore $\Gamma_s^2$ is $(4\sqrt{2}, 4)$ sub-exponential. Using Bernstein's inequality for a given $t\in[T]$,
\begin{align*}
    P \Bigg\{\frac{1}{S}\sum_{s=1}^{S} \Gamma_s^2 - \E \Gamma_s^2 \leq -\omega_2 \Bigg\} \leq e^{-\frac{1}{2}\min\left(\frac{S^2\omega_2^2}{32},\frac{S\omega_2}{4} \right)}
\end{align*}
Taking a union bound we get for any $t\in[T]$
\begin{align*}
    P \Bigg\{\frac{1}{S}\sum_{s=1}^{S} \Gamma_s^2 - \E \Gamma_s^2 \leq -\omega_2 \Bigg\} \leq Te^{-\frac{1}{2}\min\left(\frac{S^2\omega_2^2}{32},\frac{S\omega_2}{4} \right)}
\end{align*}

Now choosing $\omega_1 = \omega_2 = \frac{1}{2},$ we get for any $t\in[T]$
\begin{align*}
    P\Bigg\{\frac{1}{S}\sum_{s=1}^{S}\Gamma_s \leq -\frac{1}{2}\Bigg\} & \leq Te^{-S^2/8}.\\
    P \Bigg\{\frac{1}{S}\sum_{s=1}^{S} \Gamma_s^2 - \E \Gamma_s^2 \leq -\frac{1}{2} \Bigg\} &\leq Te^{-\frac{1}{2}\min\left(\frac{S^2}{128},\frac{S}{8} \right)}\\
    &\leq T e^{-{S}/{8}}~, \quad \forall S \geq 16
\end{align*}
Observing that $\E\Gamma_s = 0$, $\E\Gamma_s^2 = 1$, combining with \cref{eq:loss_hessian} and recalling that with probability $\left(1-\frac{2(L+1)T}{m}\right)$ over the randomness of initialization, $\|\nabla_{\theta}f(\theta;\x_t)\|_2 \leq \frac{c_{H}}{\sqrt{m}}$ we get with probability at least $1 - T(e^{-S^2/8} + e^{-S/8}) - \frac{2(L+1)T}{m}$
\begin{align*}
     u^T \nabla_{\theta}^2\cL_{\sq}^{(S)}\Big(y_t,\big\{\tilde{f}(\theta;\x_t,\epsvec_s)\big\}_{s=1}^{S}\Big) u \geq \underbrace{\langle\nabla_{\theta}f(\theta;\x_t),u\rangle^2 - \langle\nabla_{\theta}f(\theta;\x_t),u\rangle \frac{c_{\rg}}{m^{1/4}}}_{I} + \frac{c_{\rg}^2}{2\sqrt{m}} - \frac{\lambda_{\sq}C_H}{\sqrt{m}}.
\end{align*}
Term $I$ is minimized for $\displaystyle \langle\nabla_{\theta}f(\theta;\x_t),u\rangle = \frac{c_{\rg}}{2m^{1/4}} $ and the minimum value is $-\frac{c_{\rg}^2}{4\sqrt{m}}$. Substituting this back to the above equation we get
\begin{align*}
    u^T \nabla_{\theta}^2\cL_{\sq}^{(S)}\Big(y_t,\big\{\tilde{f}(\theta;\x_t,\epsvec_s)\big\}_{s=1}^{S}\Big) u &\geq -\frac{c_{\rg}^2}{4\sqrt{m}}  + \frac{c_{\rg}^2}{2\sqrt{m}} - \frac{\lambda_{\sq}C_H}{\sqrt{m}}\\
    &= \frac{c_{\rg}^2}{4\sqrt{m}} -  \frac{\lambda_{\sq}C_H}{\sqrt{m}}\\
    &= \frac{\lambda_{\sq}C_H}{\sqrt{m}}
\end{align*}
where the last equality follows because $c_{\rg}^2 = 8 \lambda_{\sq}C_H$. Since $S^2/8 \geq S/8$ for $S\geq1$, we have for $S\geq16$ with probability $1 - 2Te^{-S/8} - \frac{2(L+1)T}{m}$
\begin{align*}
     u^T \nabla_{\theta}^2\cL_{\sq}^{(S)}\Big(y_t,\big\{\tilde{f}(\theta;\x_t,\epsvec_s)\big\}_{s=1}^{S}\Big) u \geq \frac{\lambda_{\sq}C_H}{\sqrt{m}} >0.
\end{align*}
Choosing $S = \max\{8 \log m, 16\}$, we have with probability 
% $\left(1-\frac{2N}{m}\right)$ over the randomness $\{\epsvec\}_{s=1}^{S}$
$\left(1-\frac{2T(L+2)}{m}\right)$ over the randomness of initialization and $\{\epsvec_s\}_{s=1}^{S}$ we have 
\begin{align*}
    u^T \nabla_{\theta}^2\cL_{\sq}^{(S)}\Big(y_t,\big\{\tilde{f}(\theta;\x_t,\epsvec_s)\big\}_{s=1}^{S}\Big) u \geq \frac{\lambda_{\sq}C_H}{\sqrt{m}} >0.
\end{align*}
which completes the proof.
\end{proof}
\hpsqlemmathree*
\begin{proof}
    % Recall that $\cL_{\sq}\left(y_t,\tilde{f}(\theta;\x_t,\epsvec_s)\right)$ satisfies the PL condition a.s. for every $s \in [S]$ (see $\cref{eq:PL_sq}$ in  Lemma~\ref{lemma:QG_sq_loss_lemma}) and therefore
    % \begin{align*}
    %     \cL_{\sq}^{(S)}\Big(y_t,\big\{\tilde{f}(\theta;\x_t,\epsvec_s)\big\}_{s=1}^{S}\Big) &= \frac{1}{S} \sum_{s=1}^{S} \cL_{\sq}\left(y_t,\tilde{f}(\theta;\x_t,\epsvec_s)\right) \leq \frac{1}{S} \sum_{s=1}^{S} \frac{1}{2\mu}\bigg\|\nabla_{\theta}\cL_{\sq}\left(y_t,\tilde{f}(\theta;\x_t,\epsvec_s)\right) \bigg\|_2^2\\
    %     &\leq \frac{1}{2\mu} \bigg\|\nabla_{\theta}\frac{1}{S} \sum_{s=1}^{S} \cL_{\sq}\left(y_t,\tilde{f}(\theta;\x_t,\epsvec_s)\right)\bigg\|_2^{2}
    % \end{align*}
We have
\begin{gather}
    \Big\|\nabla\cL_{\sq}^{(S)}\Big(y_t,\big\{\tilde{f}(\theta;\x_t,\epsvec_s)\big\}_{s=1}^{S}\Big)\Big\|_2^2 = \Big\|\frac{1}{S} \sum_{s=1}^{S} \nabla\cL_{\sq}\left(y_t,\tilde{f}(\theta;\x_t,\epsvec_s)\right)\Big\|_2^2\nonumber\\
    = \frac{1}{S^2} \sum_{s=1}^{S} \sum_{s'=1}^{S} (\tilde{f}(\theta;\x_t,\varepsilon_s) - y_t)(\tilde{f}(\theta;\x_t,\varepsilon_s) - y_t) \Big \langle \nabla \tilde{f}(\theta;\x_{t},\varepsilon_s), \nabla\tilde{f}(\theta;\x_{t},\varepsilon_{s'})\Big \rangle\nonumber
\end{gather}
\begin{align}
\label{eq:PL_cond}
     = \frac{1}{S^2} ({F}(\theta;\x_{t}) - y_t\mathbbm{1}_{S})^T\tilde{K}(\theta,\x_t) ({F}(\theta;\x_{t}) - y_t\mathbbm{1}_{S})
\end{align}
where ${F}(\theta,\x_t): \R^{p\times d} \rightarrow \R^{S}$ such that $({F}(\theta,\x_t))_s = \tilde{f}(\theta;\x_t,\epsvec_s)$, $\mathbbm{1}_{S}$ is an $S$-dimensional vector of $1's$ and $\tilde{K}(\theta,\x_t) = \big[\big\langle \nabla_{\theta}\tilde{f}(\theta;\x_t,\epsvec_s), \nabla_{\theta}\tilde{f}(\theta;\x_t,\epsvec_s') \big\rangle\big]$.  
\end{proof}
Now,
\begin{align*}
    \tilde{K}(\theta,\x_t)_{s,s'} &= \Big \langle \nabla f(\theta;\x_t) + \frac{c_{\rg}}{m^{1/4}} \sum_{j=1}^{p} e_j\varepsilon_{s,j}, f(\theta;\x_t) + \frac{c_{\rg}}{m^{1/4}} \sum_{j=1}^{p} e_j\varepsilon_{s',j}\Big \rangle \\
    &= \Big \langle \nabla f(\theta;\x_t) + \frac{c_{\rg}}{m^{1/4}} \bar{\epsvec}_s, f(\theta;\x_t) + \frac{c_{\rg}}{m^{1/4}} \bar{\epsvec}_{s'}\Big \rangle\\
    &= \big \langle \nabla f(\theta;\x_t),\nabla f(\theta;\x_t) \big\rangle + \frac{c_{\rg}}{m^{1/4}} \langle\bar{\epsvec}_s, \nabla f(\theta;\x_t) \rangle \\
    & \qquad +  \frac{c_{\rg}}{m^{1/4}} \langle\bar{\epsvec}_{s'}, \nabla f(\theta;\x_t) \rangle + \frac{c_{\rg}^2}{\sqrt{m}} \langle\bar{\epsvec}_s, \bar{\epsvec}_{s'} \rangle
\end{align*}
where  $\bar{\epsvec}_s \in \R^{S}$ with $(\bar{\epsvec}_s)_j = \varepsilon_{s,j}$. Therefore,
\begin{align*}
    \tilde{K}(\theta,\x_t) &= \big \langle \nabla f(\theta;\x_t),\nabla f(\theta;\x_t) \big\rangle \mathbbm{1}_{S}\mathbbm{1}_{S}^T  + \frac{c_{\rg}^2}{\sqrt{m}} \epsvec_{M}^T\epsvec_{M}\\
    & + \frac{c_{\rg}}{m^{1/4}} 
    \begin{pmatrix}
        \nabla f(\theta;\x_t)^T\bar{\epsvec}_1 & \nabla f(\theta;\x_t)^T\bar{\epsvec}_2& \cdots & \nabla f(\theta;\x_t)^T\bar{\epsvec}_S\\
        \nabla f(\theta;\x_t)^T\bar{\epsvec}_1 & \nabla f(\theta;\x_t)^T\bar{\epsvec}_2& \cdots & \nabla f(\theta;\x_t)^T\bar{\epsvec}_S\\
        & & \vdots &\\
        \nabla f(\theta;\x_t)^T\bar{\epsvec}_1 & \nabla f(\theta;\x_t)^T\bar{\epsvec}_2& \cdots & \nabla f(\theta;\x_t)^T\bar{\epsvec}_S
    \end{pmatrix}\\
    & + \frac{c_{\rg}}{m^{1/4}} 
    \begin{pmatrix}
        \nabla f(\theta;\x_t)^T\bar{\epsvec}_1 & \nabla f(\theta;\x_t)^T\bar{\epsvec}_1 & \cdots & \nabla f(\theta;\x_t)^T\bar{\epsvec}_1\\
        \nabla f(\theta;\x_t)^T\bar{\epsvec}_2 & \nabla f(\theta;\x_t)^T\bar{\epsvec}_2& \cdots & \nabla f(\theta;\x_t)^T\bar{\epsvec}_2\\
        \vdots & & \vdots &\\
        \nabla f(\theta;\x_t)^T\bar{\epsvec}_S & \nabla f(\theta;\x_t)^T\bar{\epsvec}_S & \cdots & \nabla f(\theta;\x_t)^T\bar{\epsvec}_S
    \end{pmatrix}
\end{align*}
where $\epsvec_{M}$ is an $p \times S$ matrix whose columns are $\bar{\epsvec}_s$. Next we give a uniform bound on the smallest eigenvalue of $\tilde{K}(\theta,\x_t)$. Consider $u \in \mathcal{S}^{S-1}$, unit sphere in $\R^{S}$. We have
\begin{align*}
    u^T\tilde{K}(\theta,\x_t)u &= \underbrace{\big\langle\nabla f(\theta;\x_t),\nabla f(\theta;\x_t) \big\rangle  u^T\mathbbm{1}_{S}\mathbbm{1}_{S}^Tu}_{I} + \underbrace{\frac{c_{\rg}^2}{\sqrt{m}} u^T\epsvec_{M}^T\epsvec_{M}u}_{II} \\
    & \qquad + \underbrace{\frac{c_{\rg}}{m^{1/4}}\sum_{s=1}^{S}\sum_{s'=1}^{S} u_s u_{s'} \big(\nabla f(\theta;\x_t)^T \bar{\epsvec}_s + \nabla f(\theta;\x_t)^T \bar{\epsvec}_{s'}\big)}_{III}
\end{align*}
Consider term $I$. We can lower bound it as follows
\begin{align*}
    \big\langle\nabla f(\theta;\x_t),\nabla f(\theta;\x_t) \big\rangle  u^T\mathbbm{1}_{S}\mathbbm{1}_{S}^Tu \geq \big\langle\nabla f(\theta;\x_t),\nabla f(\theta;\x_t) \big\rangle \geq \frac{\lambda_{0}}{2}.
\end{align*}
where the first inequality follows because $\lambda_{\min}(\mathbbm{1}_{S}\mathbbm{1}_{S}^T) \geq 0$ and the second inequality because for any $\theta \in \ballfrob$ with probability $\left(1-\frac{2(L+1)}{m}\right)$ over the initialization
\begin{align}
\label{eq:ntk_relation_theta}
    \big\langle\nabla f(\theta;\x_t),\nabla f(\theta;\x_t) \big\rangle \geq \lambda_{\min}(K_{\ntk}(\theta)) &\geq \lambda_0 - \frac{2c_H \varrho{T} }{\sqrt{m}}  L(\rho + \rho_1) ~
    \geq \frac{\lambda_0}{2}\geq0~,
\end{align}
where the last inequality holds by choosing $m \geq \frac{16c_H^2\varrho T^2 (L\rho + \rho_1)^2}{\lambda_0^2} = \Omega(T^3/\lambda_0^4)$.
To see why $\cref{eq:ntk_relation_theta}$ holds observe that using the exact same argument as in $\cref{eq:jacobian_diff_bound}$ and $\cref{eq:lower_NTK_aux2}$ we have
\begin{align*}
    \norm{K_{\ntk}(\theta)-K_{\ntk}(\theta_0)}_2 \leq 2\sqrt{T} \varrho \frac{c_H \sqrt{T} }{\sqrt{m}}  \norm{\theta-\theta_0}_2
\end{align*}
Using the fact that  $\theta \in \ballfrob$ we have 
\begin{align*}
    \norm{K_{\ntk}(\theta)-K_{\ntk}(\theta_0)}_2 {\leq}\frac{2c_H \varrho{T} }{\sqrt{m}}  L(\rho + \rho_1)
\end{align*}
and therefore 
\begin{align*}
\lambda_{\min}(K_{\ntk}(\theta)) 
& \geq \lambda_{\min}(K_{\ntk}(\theta_0)) - \norm{K_{\ntk}(\theta)-K_{\ntk}(\theta_0)}_2 \\
&{\geq} \lambda_0 -  \frac{2c_H \varrho{T} }{\sqrt{m}}  L(\rho + \rho_1)~.
\end{align*}
Next consider term $II$. Since every entry of the $p\times S$ matrix $\epsvec_M$ is i.i.d Rademacher, using Lemma 5.24, \cite{vershynin2011introduction}, it follows that the rows are  independent sub-gaussian, with the sub-gaussian norm of the any row $\|(\epsvec_{M})_i\|_{\psi_2} \leq C_i$, where $C_i$ is an absolute constant. Further using Theorem 5.39 from \citet{vershynin2011introduction} we have with probability at least $1-2\exp(-c_{M}t^2)$
\begin{align*}
    \lambda_{\min}(\epsvec_M ^T \epsvec_M) \geq \sqrt{p} - C_M\sqrt{S} - t
\end{align*}
where $C_M$ and $c_M$ depend on the maximum of sub-gaussian norm of the rows of $\epsvec_M$, i.e., $\max_{i} \|(\epsvec_{M})_i\|_{\psi_2}$, which is an absolute constant.
Choosing $t = \sqrt{S}$, for some absolute constant $c$, with probability at least $\left(1 - \frac{c}{m}\right)$, uniformly for any $u \in \mathcal{S}^{S-1}$ we have
\begin{align*}
    \lambda_{\min}(\epsvec_M ^T \epsvec_M) \geq \sqrt{p} - (C_M+1)\sqrt{S} 
\end{align*}
% Therefore for $u \in \mathcal{S}^{S-1}$, uniformly we have
% \begin{align*}
%     \frac{c_{\rg}^2}{\sqrt{m}} u^T\epsvec_{M}^T\epsvec_{M}u \geq \frac{c_{\rg}^2\sqrt{m}}{2}
% \end{align*}
Noting that $S = \max\{16, 8 \log m\}$ and $p = md + (L-1)m^2 + m$ we have 
\begin{align*}
    \lambda_{\min}(\epsvec_M ^T \epsvec_M) \geq \frac{m + \sqrt{m}}{2} - (C_M+1)\sqrt{\max\{16, 8 \log m\}}.
\end{align*}
Using $m \geq 64(C_M + 1)^2, \sqrt{m \log m} \geq 8\sqrt{2}(C_M+1)$, with probability at least $\left(1 - \frac{c}{m}\right)$, uniformly for any $u \in \mathcal{S}^{S-1}$ we have
\begin{align*}
   u^T(\epsvec_M ^T \epsvec_M)u \geq m/2
\end{align*}

Finally consider term $III$. We have
\begin{align*}
    \frac{c_{\rg}}{m^{1/4}}\sum_{s=1}^{S}\sum_{s'=1}^{S} u_s u_{s'} \big(\nabla f(\theta;\x_t)^T \bar{\epsvec}_s + \nabla f(\theta;\x_t)^T \bar{\epsvec}_{s'}\big) \geq \frac{2\sqrt{S}c_{\rg}}{m^{1/4}}\sum_{s=1}^{S} u_s \nabla f(\theta;\x_t)^T\epsvec_s.
\end{align*}
Since each $\varepsilon_{s,j}$ is $1$ sub-gaussian, therefore $\nabla f(\theta;\x_t)^T\epsvec_s$ is $\|\nabla f(\theta;\x_t)\|_2\, (\leq\varrho)$ sub-gaussian with zero mean. Using Hoeffding we have with probability at least $1 - \exp(-S^2 t^2)$
\begin{align*}
    \frac{2\sqrt{S}c_{\rg}}{m^{1/4}}\sum_{s=1}^{S} u_s \nabla f(\theta;\x_t)^T\epsvec_s \geq -\frac{2\sqrt{S}c_{\rg}}{m^{1/4}} St 
\end{align*}
Choosing $t = \frac{1}{8}$ and noting that $S = \max\{16, 8 \log m\}$, we get with probability at least $\left(1-\frac{1}{m^2} \right)$
\begin{align*}
    \frac{2\sqrt{S}c_{\rg}}{m^{1/4}}\sum_{s=1}^{S} u_s \nabla f(\theta;\x_t)^T\epsvec_s \geq \frac{ -16c_{\rg}}{m^{1/4}}(\log m)^{3/2} \geq -1
\end{align*}
where the last inequality used $m^{1/4} \geq 16 c_{\rg}$ and $m^{1/4}(\log m)^{-3/2} \geq 4\sqrt{2}$. Finally to get a uniform bound over $\mathcal{S}^{S-1}$, we can use a standard $\epsilon$-net argument with $\epsilon = 1/4$ and metric entropy $S \log 9$ (see eg. proof of Theorem 5.39 in \citet{vershynin2011introduction}). 
Using  $m (\log m)^{-1} \geq 8 \log 9$, with probability at least $\left(1-\frac{1}{m} \right)$ uniformly for any $u \in \mathcal{S}^{S-1}$ we have
\begin{align*}
    \frac{2\sqrt{S}c_{\rg}}{m^{1/4}}\sum_{s=1}^{S} u_s \nabla f(\theta;\x_t)^T\epsvec_s \geq -1
\end{align*}
Combining all the terms and using $m\geq4$, we get with probability $\big(1 - \frac{2(L+1)C}{m}\big)$ for some absolute constant $C>0$
\begin{align*}
    \frac{1}{S^2} ({F}(\theta;\x_{t}) - y_t\mathbbm{1}_{S})^T\tilde{K}(\theta,\x_t) ({F}(\theta;\x_{t}) - y_t\mathbbm{1}_{S}) &\geq  \frac{m}{4S} \frac{1}{S}\big\|({F}(\theta;\x_{t}) - y_t\mathbbm{1}_{S})\big\|_2\\
    &= \frac{m}{64 \log m} \frac{1}{S} \sum_{s=1}^{S} \cL_{\sq}\left(y_t,\tilde{f}(\theta;\x_t,\epsvec_s)\right)\\
    &\geq 2\mu \, \cL_{\sq}^{(S)}\Big(y_t,\big\{\tilde{f}(\theta;\x_t,\epsvec_s)\big\}_{s=1}^{S}\Big)
\end{align*}
where $\mu = 128$ where the last inequality follows from $m (\log m)^{-1} \geq 1$. Combining with $\cref{eq:PL_cond}$ and using the fact that PL implies QG (see Remark~\ref{rem:PL_and_QG}) along with a union bound over $t \in [T]$ completes the proof.

\sqlemmafour*
\begin{proof} Consider any $\theta \in \ballfrob$. We have
\begin{align}
    \cL_{\sq}^{(S)}\Big(y_t,\big\{\tilde{f}(\theta;\x_t,\epsvec_s)\big\}_{s=1}^{S}\Big) &= \frac{1}{S}\sum_{s=1}^S \cL_{\sq}\left(y_t,\tilde{f}(\theta;\x_t,\epsvec_s)\right) = \frac{1}{2S}\sum_{s=1}^S \big(y_t - \tilde{f}(\theta;\x_t,\epsvec_s) \big)^2\nonumber\\
    &= \frac{1}{2S}\sum_{s=1}^S\left(y_{t} -f(\theta_t; \x_{t}) - c_{\rg}\sum_{j=1}^{p} \frac{(\theta_t - \theta_0)^Te_j \varepsilon_{s,j}}{m^{1/4}} \right)^2\nonumber\\
    &= \frac{1}{2}\Bigg[ \big(y_{t} -f(\theta_t; \x_{t})\big)^2 - 2c_{\rg}\big(y_{t} -f(\theta_t; \x_{t})\big)\underbrace{\frac{1}{S}\sum_{s=1}^S\Big(\sum_{j=1}^{p} \frac{(\theta - \theta_0)^Te_j\varepsilon_{s,j}}{m^{1/4}}\Big)}_{I}\nonumber\\
    &\qquad + c_{\rg}^2\underbrace{\frac{1}{S}\sum_{s=1}^S  \Big(\sum_{j=1}^{p} \frac{(\theta - \theta_0)^Te_j\varepsilon_{s,j}}{m^{1/4}}\Big)^2}_{II} \Bigg]
    \label{eq:avg_loss_exp}
\end{align}
Consider term $I$.
\begin{align*}
    \frac{1}{S}\sum_{s=1}^S\Big(\sum_{j=1}^{p} \frac{(\theta - \theta_0)^Te_j\varepsilon_{s,j}}{m^{1/4}}\Big) = \frac{1}{m^{1/4}}\frac{1}{S}\sum_{s=1}^S\underbrace{\Big(\sum_{j=1}^{p} (\theta - \theta_0)_j\varepsilon_{s,j}\Big)}_{\Gamma_s}
\end{align*}
Since $\varepsilon_{s,j}$ is 1 sub-gaussian, $\Gamma_s$ is $\|\theta - \theta_0\|_2^2$ sub-gaussian. Using Hoeffding's inequality
\begin{align*}
    P\Bigg\{\frac{1}{S}\sum_{s=1}^{S}\Gamma_s \geq \omega_1\Bigg\} & \leq e^{-S^2\omega_1^2/2\|\theta - \theta_0\|_2}.
\end{align*}
Taking a union bound we get $t\in [T]$
\begin{align*}
    P\Bigg\{\frac{1}{S}\sum_{s=1}^{S}\Gamma_s \geq \omega_1\Bigg\} & \leq Te^{-S^2\omega_1^2/2\|\theta - \theta_0\|_2}.
\end{align*}
Next consider term $II$.
\begin{align*}
   \frac{1}{S}\sum_{s=1}^S  \Big(\sum_{j=1}^{p} \frac{(\theta - \theta_0)^Te_j\varepsilon_{s,j}}{m^{1/4}}\Big)^2 = \frac{1}{\sqrt{m}}\frac{1}{S}\sum_{s=1}^S  \underbrace{\sum_{i=1}^{p} \sum_{j=1}^{p} (\theta - \theta_0)_i(\theta - \theta_0)_j \varepsilon_{s,i}\varepsilon_{s,j}}_{\Gamma_s^2}
\end{align*}
Further if $\Gamma_s$ is $\sigma_s$ sub-gaussian, then $\Gamma_s^2$ is $(\nu^2,\alpha)$ sub-exponential with $\nu = 4\sqrt{2}\sigma^2_s, \alpha = 4\sigma_s^2$ (see \citet[Appendix B]{pmlr-v33-honorio14}) and therefore $\Gamma_s^2$ is $\big(4\sqrt{2}\|\theta - \theta_0\|_2^2, 4\|\theta - \theta_0\|_2^2\big)$ sub-exponential. Using Bernstein's inequality for a given $t\in[T]$,
\begin{align*}
    P \Bigg\{\frac{1}{S}\sum_{s=1}^{S} \Gamma_s^2 - \E \Gamma_s^2 \geq \omega_2 \Bigg\} \leq e^{-\frac{1}{2}\min\left(\frac{S^2\omega_2^2}{32\|\theta - \theta_0\|_2^4},\frac{S\omega_2}{4\|\theta - \theta_0\|_2^2} \right)}
\end{align*}
Taking a union bound we get for any $t\in[T]$
\begin{align*}
    P \Bigg\{\frac{1}{S}\sum_{s=1}^{S} \Gamma_s^2 - \E \Gamma_s^2 \geq \omega_2 \Bigg\} \leq Te^{-\frac{1}{2}\min\left(\frac{S^2\omega_2^2}{32\|\theta - \theta_0\|_2^4},\frac{S\omega_2}{4\|\theta - \theta_0\|_2^2} \right)}
\end{align*}
Now choosing $\omega_1 = \|\theta - \theta_0\|_2, \omega_2 = \|\theta - \theta_0\|_2^2,$ we get for any $t\in[T]$
\begin{align*}
    P\Bigg\{\frac{1}{S}\sum_{s=1}^{S}\Gamma_s \geq\|\theta - \theta_0\|_2\Bigg\} & \leq Te^{-S^2/2}.\\
    P \Bigg\{\frac{1}{S}\sum_{s=1}^{S} \Gamma_s^2 - \E \Gamma_s^2 \geq \frac{1}{2}\|\theta - \theta_0\|_2^2 \Bigg\} &\leq Te^{-\frac{1}{2}\min\left(\frac{S^2}{128},\frac{S}{8} \right)}\\
    &\leq T e^{-{S}/{8}}~, \quad \forall S \geq 16
\end{align*}
Observing that $\E\Gamma_s = 0$, $\E\Gamma_s^2 = \frac{c_{\rg}^2}{\sqrt{m}}\|\theta - \theta_0\|_2^2$, using the fact that $\lambda_{\sq} = |y_t - {f}(\theta;\x_t)| =\Theta(1)$ with probability $\left(1-\frac{2T(L+1)}{m}\right)$ (see proof of Lemma~\ref{lemma:hp_sq_loss_lip_sc}) and finally combining with $\cref{eq:avg_loss_exp}$ we get with probability at least $1 - T(e^{-S^2/2} + e^{-S/8}) -\frac{2T(L+1)}{m}$ 
\begin{align*}
    \cL_{\sq}^{(S)}\Big(y_t,\big\{\tilde{f}(\theta;\x_t,\epsvec_s)\big\}_{s=1}^{S}\Big) &\leq \cL_{\sq}\left(y_t,f(\theta;\x_{t})\right) + \frac{c_{\rg}^2}{2\sqrt{m}}\|\theta - \theta_0\|_2^2 + \frac{2c_{\rg}\lambda_{\sq}}{m^{1/4}}\|\theta - \theta_0\|_2
\end{align*}
% First consider the 
%     For any $\theta \in \ballfrob$ we have
%     \begin{align*}
%         \E_{\epsvec}\big[\losstildesq{\theta}\big] &= \E_{\epsvec}\left(\frac{1}{2n_t}\sum_{i=1}^{n_t} \Big(y_{t,i} - \ftilde{\theta}\Big)^2\right)\\
%         &= \frac{1}{2n_t}\sum_{i=1}^{n_t} \E_{\epsvec}\left(y_{t,i} -f(\theta_t; \x_{t,i}) - c_{\rg}\sum_{j=1}^{p} \frac{(\theta_t - \theta_0)^Te_j \varepsilon_j}{m^{1/4}} \right)^2\\
%         &= \frac{1}{2n_t}\sum_{i=1}^{n_t}\Bigg[ \big(y_{t,i} -f(\theta_t; \x_{t,i})\big)^2 + c_{\rg}^2 \E_{\epsvec} \Big(\sum_{j=1}^{p} \frac{(\theta - \theta_0)^Te_j\varepsilon_j}{m^{1/4}}\Big)^2\\
%         &\qquad - 2\E_{\epsvec}\big(y_{t,i} -f(\theta_t; \x_{t,i})\big)\Big(\sum_{j=1}^{p} \frac{(\theta - \theta_0)^Te_j\varepsilon_j}{m^{1/4}}\Big) \Bigg]\\
%         &= \frac{1}{n_t} \sum_{i=1}^{n_t} \frac{1}{2}\big(y_{t,i} -f(\theta_t; \x_{t,i})\big)^2 + \frac{c_{\rg}^2}{2\sqrt{m}}\sum_{j=1}^{p}(\theta - \theta_0)_j^2\\
%         &= \cL_{\sq}\left(y_t,f(\theta;\x_{t})\right) + \frac{c_{\rg}^2}{2\sqrt{m}}\|\theta - \theta_0\|_2^2
%     \end{align*}
Summing over $t$ and taking a $\min_{\theta \in \ballfrob}$ over the left hand side we get for any $\theta \in \ballfrob$
    \begin{align*}
        \min_{\theta \in \ballfrob} \sum_{t=1}^{T} \cL_{\sq}^{(S)}\Big(y_t,\big\{\tilde{f}(\theta;\x_t,\epsvec_s)\big\}_{s=1}^{S}\Big) &\leq \sum_{t=1}^{T}\cL_{\sq}\left(y_t,f(\theta;\x_{t})\right) + \sum_{t=1}^{T}\frac{c_{\rg}^2}{2\sqrt{m}}\|\theta - \theta_0\|_2^2 + \frac{2c_{\rg}\lambda_{\sq}}{m^{1/4}}\|\theta - \theta_0\|_2
    \end{align*}
    From Theorem~\ref{thm:interpolation} we know there exists $\bar{\theta} \in \ballfrob$ such that with probability at least $(1 - \frac{2(L+1)}{m})$ over the randomness of initialization we have $f(\bar{\theta},\x_{t}) = y_{t}$ for any set of $y_{t} \in [0,1], t \in [T]$ which implies $\cL_{\sq}\left(y_t,f(\bar{\theta};\x_{t})\right) = 0, \, \forall t \in  [T]$. Therefore
    \begin{align}
    \label{eq:finalloss-sq-bound}
        \sum_{t=1}^{T} \cL_{\sq}^{(S)}\Big(y_t,\big\{\tilde{f}(\theta^*;\x_t,\epsvec_s)\big\}_{s=1}^{S}\Big) &\leq 0 + \sum_{t=1}^{T}\frac{c_{\rg}^2}{2\sqrt{m}}\|\theta' - \theta_0\|_2^2 + \frac{2c_{\rg}\lambda_{\sq}}{m^{1/4}}\|\theta' - \theta_0\|_2\nonumber\\
        & \overset{(a)}{\leq} \frac{c_{\rg}^2}{2\sqrt{m}}(L\rho + \rho_1)^2 + \frac{2c_{\rg}\lambda_{\sq}}{m^{1/4}}(L\rho + \rho_1)\overset{(b)}{=} \cO(1)
    \end{align}
    where $(a)$ follows because $\bar{\theta'} \in \ballfrob$ implies $\theta' \in \balleuc$ and $(b)$ follows by choosing $m = \Omega(T^5L/\lambda_0^6)$.
\end{proof}

\subsection{Proof of Theorem~\ref{thm:hp_reg_kl_loss_online}}
\label{subsec:Appendix_log_proof}
\hpthmregkl*
\emph{Proof.}
The proof of the claim follows along similar lines as in the previous subsection. It consists of the following four steps:
\begin{enumerate}
    \item[\textbf{1.}] \textbf{Binary KL loss} \textbf{is lipschitz, strongly convex and smooth w.r.t. the output.}

    We show that $\ell_{\kl}(y_t, \hat{y}_t)$ is lipschitz, strongly convex and smooth with respect to the output $\hat{y}_t = \tilde{f}(\theta;\x_t,\epsvec)$ inside $\ballfrob$ almost surely over the randomness of initialization and $\epsvec$. We will use $\lambda_{\kl}, a_{\kl}$ and $b_{\kl}$ respectively to denote the lipschitz, strong convexity and smoothness parameter for $\ell_{\kl}(y_t, \hat{y}_t)$ (c.f. Assumption~\ref{asmp:smooth}(lipschitz, strongly convex and smooth)).

    \begin{restatable}[]{lemmma}{hpkllemmaone}
    \label{lemma:hp_kl_loss_lip_sc}
    For $\theta \in \ballfrob$, the loss $\ell_{\kl}\big(y_t, \tilde{f}(\theta;\x_t,\epsvec)\big)$ is lipschitz, strongly convex and smooth with respect to the output $\tilde{f}(\theta;\x_t,\epsvec)$ a.s. over the randomness of initialization and $\epsvec$. Further the parameters $\lambda_{{\kl}},a_{\kl}$ and $b_{\kl}$ are $\cO(1)$.
    \end{restatable}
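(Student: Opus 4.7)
My plan is to reduce the claim to three scalar inequalities for $\ell_{\kl}(y,\hat{y})$ viewed as a function of $\hat{y}$ alone, and then verify each one using the boundedness of $\tilde{f}(\theta;\x_t,\epsvec)$ already established in Lemma~\ref{lemma:hp_sq_loss_lip_sc} (specifically the high-probability bound in \eqref{eq:f_tilde_bound_hp}). Writing $\hat{y}=\tilde{f}(\theta;\x_t,\epsvec)$, a direct computation gives
\begin{align*}
\ell'_{\kl}(\hat{y}) = \sigma(\hat{y}) - y_t, \qquad \ell''_{\kl}(\hat{y}) = \sigma(\hat{y})(1-\sigma(\hat{y})),
\end{align*}
using $\tfrac{d}{d\hat{y}}\ln\sigma(\hat{y})=1-\sigma(\hat{y})$ and $\tfrac{d}{d\hat{y}}\ln(1-\sigma(\hat{y}))=-\sigma(\hat{y})$.

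From these formulas the Lipschitz and smoothness bounds are immediate and hold deterministically: since $\sigma(\cdot),y_t\in[0,1]$ we get $|\ell'_{\kl}|\le 1$, so $\lambda_{\kl}\le 1$; and since $\sigma(\hat{y})(1-\sigma(\hat{y}))\le 1/4$ we get $b_{\kl}\le 1/4$. Both constants are plainly $\mathcal{O}(1)$, independent of $\theta$, $\epsvec$, and the initialization.

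The only nontrivial piece is strong convexity, because $\sigma(\hat{y})(1-\sigma(\hat{y}))\to 0$ as $|\hat{y}|\to\infty$. This is the main obstacle. To obtain a uniform positive lower bound I plan to invoke the output bound \eqref{eq:f_tilde_bound_hp} from the proof of Lemma~\ref{lemma:hp_sq_loss_lip_sc}, which guarantees that, on the high-probability event over the initialization and $\{\epsvec_s\}_{s=1}^S$, there exists an absolute constant $M=\mathcal{O}(1)$ (depending only on $L,\rho,\rho_1,\sigma_1,c_p$, which are all $\mathcal{O}(1)$ by the parameter choices in Theorem~\ref{thm:hp_reg_kl_loss_online}) such that $|\tilde{f}(\theta;\x_t,\epsvec)|\le M$ uniformly in $\theta\in\ballfrob$ and $t\in[T]$. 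On this event,
\begin{align*}
\ell''_{\kl}(\hat{y}) \;=\; \sigma(\hat{y})(1-\sigma(\hat{y})) \;\ge\; \sigma(-M)\sigma(M) \;=\; \frac{1}{(1+e^{-M})(1+e^{M})} \;=:\; a_{\kl},
\end{align*}
and since $M=\mathcal{O}(1)$ we get $a_{\kl}=\Omega(1)$. The additional assumption $y_t\in[z,1-z]$ in Theorem~\ref{thm:hp_reg_kl_loss_online} is not needed for this lemma, but it will later be useful when converting to QG for the loss composed with $\sigma$.

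Finally, I would note that the ``a.s.'' in the statement should really be read as ``on the high-probability event from Lemma~\ref{lemma:hp_sq_loss_lip_sc} over the randomness of initialization and $\{\epsvec_s\}_{s=1}^S$'', and the bounds $\lambda_{\kl},b_{\kl}$ actually hold deterministically while $a_{\kl}=\Omega(1)$ holds with the same probability as the output bound \eqref{eq:f_tilde_bound_hp}. A union bound over $t\in[T]$ and $s\in[S]$ (with $S=\Theta(\log m)$) preserves a failure probability $\mathcal{O}(T(L+1)/m)$, matching the probability guarantees used downstream in the proof of Theorem~\ref{thm:hp_reg_kl_loss_online}.
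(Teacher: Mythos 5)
Your proof is correct and follows essentially the same computation as the paper, but it is noticeably more careful on the strong-convexity piece. The paper's proof computes $\ell'_{\kl}=\sigma(\hat y)-y_t$ and $\ell''_{\kl}=\sigma(\hat y)(1-\sigma(\hat y))$, records only $0 \le \ell''_{\kl} \le 1/2$, and stops; a lower bound of zero gives convexity, not strong convexity, and indeed no uniform a.s.\ positive lower bound is possible since the perturbation $\frac{c_p}{m^{1/4}}(\theta-\theta_0)^{\top}\epsvec$ is only bounded a.s.\ by $\frac{c_p}{m^{1/4}}\|\theta-\theta_0\|_1$, which scales with $\sqrt p$. You correctly flag this as the only nontrivial step and resolve it by restricting $\tilde f$ to $[-M,M]$ via the high-probability output bound \eqref{eq:f_tilde_bound_hp}, giving $a_{\kl} \ge \sigma(-M)\sigma(M) = \Omega(1)$. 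This is exactly the bound the paper re-derives later inside the proof of Lemma~\ref{lemma:kl_sc}, where it is stated that $\sigma(\tilde f) \in [q,1-q]$ with $q = \sigma(\tilde f_{\max})$ and $q(1-q)$ serves as the effective strong-convexity constant; the paper thus defers the positive lower bound to a later lemma, whereas you make it explicit in place. Your reading that the ``a.s.'' qualifier should be weakened to ``with high probability over initialization and $\{\epsvec_s\}_{s=1}^S$'' for the strong-convexity part is accordingly the right one, and your observation that $y_t\in[z,1-z]$ plays no role here is also correct since $\ell''_{\kl}$ is independent of $y_t$.
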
 

    Note that Lemma~\ref{lemma:hp_kl_loss_lip_sc} implies that $\displaystyle \cL_{\kl}^{(S)}\Big(y_t,\big\{\sigma(\tilde{f}(\theta;\x_t,\epsvec_s))\big\}_{s=1}^{S}\Big) = \frac{1}{S}\sum_{s=1}^S \ell_{\kl}\left(y_t,\sigma(\tilde{f}\big(\theta;\x_t,\epsvec_s)\big)\right)$ is also lipschitz, strongly convex and smooth a.s. with respect to each of the outputs $\tilde{f}\big(\theta;\x_t,\epsvec_s)$.
    
    \item[\textbf{2.}] \textbf{The average loss at} $t \in [T]$, $\cL_{\kl}^{(S)}\Big(y_t,\big\{\sigma(\tilde{f}(\theta;\x_t,\epsvec_s))\big\}_{s=1}^{S}\Big)$ \textbf{is almost convext and has a unique minimizer w.r.t.} $\theta$.
    
    We show that the random perturbation to the output in $\cref{eq:output_reg_def}$ assures that $\cL_{\kl}^{(S)}\Big(y_t,\big\{\sigma(\tilde{f}(\theta;\x_t,\epsvec_s))\big\}_{s=1}^{S}\Big)$ is strongly convex with respect to $\theta \in\ballfrob$ at every $t\in [T]$ (with a very small $\mathcal{O}\left(\frac{1}{\sqrt{m}}\right)$ strong convexity parameter), which implies that it satisfies \ref{asmp:QG-thm-A} and \ref{asmp:QG-thm-C}. 
    \begin{restatable}[]{lemmma}{hpkllemmatwo}
        \label{lemma:kl_sc}
        Under Assumption \ref{asmp:ntk} and $c_{p} = \sqrt{8\lambda_{\sq}C_H}$, with probability $\left(1-\frac{2T(L+2)}{m}\right)$ over the randomness of the initialization and $\{\epsvec_s\}_{s=1}^S$, $\cL_{\kl}^{(S)}\Big(y_t,\big\{\sigma(\tilde{f}(\theta;\x_t,\epsvec_s))\big\}_{s=1}^{S}\Big)$ is $\nu$-strongly convex with respect to $\theta \in \ballfrob$, where $\nu = \mathcal{O}\left(\frac{1}{\sqrt{m}}\right).$
    \end{restatable}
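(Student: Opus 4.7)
\emph{Proof proposal.} The plan is to mirror the proof of Lemma~\ref{lemma:sq_sc} (square loss case), adjusting only those computations that change once a sigmoid link is inserted between the network output and the loss. Write $h_s(u) := \ell_{\kl}(y_t, \sigma(u))$ so that the per-sample loss is $h_s(\tilde{f}(\theta;\x_t,\epsvec_s))$. A direct computation gives $h_s'(u) = \sigma(u) - y_t$ and $h_s''(u) = \sigma(u)(1-\sigma(u))$, which is independent of $y_t$. The chain rule then yields
\begin{align*}
\nabla_\theta^2 \, h_s(\tilde{f}(\theta;\x_t,\epsvec_s))
= h_s''(\tilde{f}_s)\, \nabla_\theta \tilde{f}_s\, \nabla_\theta \tilde{f}_s^\top + h_s'(\tilde{f}_s)\, \nabla_\theta^2 \tilde{f}_s,
\end{align*}
where $\tilde{f}_s := \tilde{f}(\theta;\x_t,\epsvec_s)$ and $\nabla_\theta^2 \tilde{f}_s = \nabla_\theta^2 f(\theta;\x_t)$ since the perturbation in \cref{eq:output_reg_def} is linear in $\theta$.

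Next, I would establish two structural facts in parallel with Lemma~\ref{lemma:hp_sq_loss_lip_sc}. First, by repeating the McDiarmid-plus-moment-bound argument used there (which relies only on $\theta\in\ballfrob$ and the form of $\tilde f$), $|\tilde f_s|\le M$ uniformly in $s,t$ with high probability for some $M=\Theta(1)$. This immediately gives the uniform lower bound $h_s''(\tilde f_s)\ge \sigma(-M)(1-\sigma(-M)) =: a_\kl > 0$, and also the uniform upper bound $|h_s'(\tilde f_s)| \le 1 =: \lambda_\kl$. Second, I would reuse $\cref{eq:grad_f_tilde}$, which holds for the KL case verbatim, to write
\begin{align*}
\nabla_\theta \tilde f_s = \nabla_\theta f(\theta;\x_t) + \frac{c_p}{m^{1/4}}\sum_{j=1}^p e_j \varepsilon_{s,j}.
\end{align*}

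Now for any unit vector $u\in\mathcal{S}^{p-1}$, the quadratic form $u^\top \nabla_\theta^2 \cL^{(S)}_{\kl}\, u$ splits into the $h_s''$-weighted average of the expansion analyzed in \cref{eq:loss_hessian} plus the error term $\frac{1}{S}\sum_s h_s'(\tilde f_s)\, u^\top\nabla_\theta^2 f(\theta;\x_t) u$. For the $h''$-weighted part, the exact same Hoeffding bound on $\Gamma_s = \langle u,\epsvec_s\rangle$ and Bernstein bound on $\Gamma_s^2$ (taking $S=\Theta(\log m)$) show that the cross term $\frac{2c_p}{m^{1/4}}\langle \nabla_\theta f,u\rangle\cdot\frac{1}{S}\sum_s\Gamma_s$ cannot offset more than half of the $\frac{c_p^2}{\sqrt m}$ contribution coming from $\frac{1}{S}\sum_s \Gamma_s^2$; combined with $h_s''\ge a_\kl$ and the Hessian bound $\|\nabla_\theta^2 f\|_2\le c_H/\sqrt m$ from Lemma~\ref{theo:bound-Hess}, this yields
\begin{align*}
u^\top \nabla_\theta^2 \cL^{(S)}_{\kl}\, u \;\ge\; \frac{a_\kl\, c_p^2}{4\sqrt{m}} - \frac{\lambda_\kl c_H}{\sqrt m}.
\end{align*}
Choosing $c_p^2 = 8\lambda_{\sq} c_H$ (which exceeds $8\lambda_\kl c_H/a_\kl$ for appropriate constants, or rescaling by an absolute constant if needed) leaves a positive remainder of order $1/\sqrt m$, giving $\nu = \Omega(1/\sqrt m)$. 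A union bound over $t\in[T]$ together with the probability estimates from the square-loss analysis produces the claimed $1 - \frac{2T(L+2)}{m}$ bound.

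The only genuinely new step relative to Lemma~\ref{lemma:sq_sc} is establishing the uniform lower bound $h_s''(\tilde f_s)\ge a_\kl>0$, which in turn requires a uniform boundedness estimate on the perturbed network output. I expect this to be the main (mild) obstacle, since $h''$ decays to zero as $|\tilde f|\to\infty$; however the layerwise Frobenius ball $\ballfrob$ with $\rho=\Theta(\sqrt T/\lambda_0),\rho_1=\Theta(1)$ together with the choice $m=\Omega(\mathrm{poly}(T,L,\rho,\rho_1))$ is exactly what the McDiarmid argument in Lemma~\ref{lemma:hp_sq_loss_lip_sc} needs to pin $|\tilde f_s|$ down to a constant $M$, so the resulting $a_\kl$ is an absolute constant and the rest of the argument goes through with constants only.
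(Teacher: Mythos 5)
Your proposal follows essentially the same route as the paper's own proof: bound $|\tilde f_s|$ uniformly (the paper recycles \cref{eq:f_tilde_bound_hp} from Lemma~\ref{lemma:hp_sq_loss_lip_sc}, exactly the McDiarmid-plus-moment argument you invoke), deduce a uniform positive lower bound on $\sigma'(\tilde f_s)$ (the paper writes this as $q(1-q)$ with $q=\sigma(\tilde f_{\max})$, identical to your $a_{\kl}$), pull that constant out of the sum so the gradient-outer-product part reduces verbatim to the square-loss expansion of \cref{eq:loss_hessian}, apply the same Hoeffding/Bernstein concentration on $\Gamma_s$ and $\Gamma_s^2$, and absorb the Hessian term via Lemma~\ref{theo:bound-Hess}. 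One small point worth noting: the lemma statement uses $c_p=\sqrt{8\lambda_{\sq}C_H}$, but the paper's proof internally sets $c_p^2=8\lambda_{\kl}C_H/(q(1-q))$ to make the final cancellation exact; you noticed this mismatch and handled it correctly by observing that an absolute-constant rescaling suffices, which is the right fix. In sum, your proof is correct and mirrors the paper's argument step for step.
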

    
    \item[\textbf{3.}] \textbf{The average loss at} $t \in [T]$, $\cL_{\kl}^{(S)}\Big(y_t,\big\{\sigma(\tilde{f}(\theta;\x_t,\epsvec_s))\big\}_{s=1}^{S}\Big)$  \textbf{satisfies the QG condition.}
    
    We show that the loss $\cL_{\kl}^{(S)}\Big(y_t,\big\{\sigma(\tilde{f}(\theta;\x_t,\epsvec_s))\big\}_{s=1}^{S}\Big)$ satisfies the QG condition (with QG constant $\mu = \cO(1)$) with high probability over the randomness of initialization and $\{\epsvec_s\}_{s=1}^{S}$.
    \begin{restatable}[]{lemmma}{hpkllemmathree}
        \label{lemma:QG_kl_loss_lemma}
        Under Assumption~\ref{asmp:ntk} with probability $\big(1 - \frac{2T(L+1)C}{m}\big)$, for some absolute constant $C>0$, $\cL_{\kl}^{(S)}\Big(y_t,\big\{\tilde{f}(\theta;\x_t,\epsvec_s)\big\}_{s=1}^{S}\Big)$ satisfies the QG condition over the randomness of the initialization and $\{\varepsilon_s\}_{s=1}^{S}$, with constant $\mu = \cO(1)$.
    \end{restatable}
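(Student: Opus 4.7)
The plan mirrors the proof of Lemma~\ref{lemma:QG_sq_loss_lemma}: I will first establish the PL inequality for $\cL_{\kl}^{(S)}$ with a $\cO(1)$ constant, and then invoke the implication PL $\Rightarrow$ QG from Remark~\ref{rem:PL_and_QG}. All steps are conditioned on the same high-probability events that already power the square-loss analysis, so a final union bound over $t\in[T]$ will match the advertised failure probability $\big(1-\frac{2T(L+1)C}{m}\big)$.

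The first step is to compute and factor the gradient. Since $\frac{\partial}{\partial \hat y}\ell_{\kl}(y_t,\sigma(\hat y)) = \sigma(\hat y) - y_t$, one has
\begin{equation*}
    \nabla_\theta \cL_{\kl}^{(S)} \;=\; \frac{1}{S}\sum_{s=1}^{S}\bigl(\sigma(\tilde f(\theta;\x_t,\epsvec_s)) - y_t\bigr)\,\nabla_\theta \tilde f(\theta;\x_t,\epsvec_s),
\end{equation*}
so
\begin{equation*}
    \|\nabla_\theta \cL_{\kl}^{(S)}\|_2^2 \;=\; \frac{1}{S^2}\,\bigl(\sigma(F(\theta,\x_t)) - y_t\mathbbm{1}_S\bigr)^{\!\top}\tilde K(\theta,\x_t)\,\bigl(\sigma(F(\theta,\x_t)) - y_t\mathbbm{1}_S\bigr),
\end{equation*}
with $\tilde K$ and $F$ defined exactly as in the proof of Lemma~\ref{lemma:QG_sq_loss_lemma}. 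The $\epsilon$-net plus sub-Gaussian / sub-exponential concentration argument developed there already yields $\lambda_{\min}(\tilde K(\theta,\x_t)) = \Omega(\sqrt{m})$ on an event of probability $\big(1-\frac{2(L+1)C}{m}\big)$, and I would import that bound verbatim.

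The second step, which is the only genuinely new ingredient, is to upper bound the per-sample KL loss by the squared sigmoid residual. As a function of the pre-sigmoid argument $\hat y$, $\ell_{\kl}(y_t,\sigma(\hat y))$ is $\tfrac{1}{4}$-smooth everywhere (its second derivative equals $\sigma'(\hat y)\leq \tfrac{1}{4}$) and attains its unique minimum at $\hat y_t^\star := \log(y_t/(1-y_t))$. The assumption $y_t\in[z,1-z]$ confines $\hat y_t^\star$ to a bounded interval, and the uniform output bound $|\tilde f(\theta;\x_t,\epsvec_s)|\leq M$ with $M=\cO(1)$ proved inside Lemma~\ref{lemma:hp_sq_loss_lip_sc} confines $\tilde f$ to the same type of interval. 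Applying the mean value theorem to $\sigma$ on the segment joining $\tilde f$ and $\hat y_t^\star$ gives $c_0 := \min_{|\xi|\leq M'}\sigma'(\xi)>0$, so $(\hat y-\hat y_t^\star)^2 \leq c_0^{-2}(\sigma(\hat y)-y_t)^2$; quadratic smoothness then yields
\begin{equation*}
    \ell_{\kl}\bigl(y_t,\sigma(\tilde f(\theta;\x_t,\epsvec_s))\bigr) \;\leq\; \frac{1}{8 c_0^2}\bigl(\sigma(\tilde f(\theta;\x_t,\epsvec_s)) - y_t\bigr)^2,
\end{equation*}
and averaging over $s$ delivers $\|\sigma(F)-y_t\mathbbm{1}_S\|_2^2 \geq 8c_0^2\,S\,\cL_{\kl}^{(S)}$.

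Combining the two displays produces $\|\nabla_\theta \cL_{\kl}^{(S)}\|_2^2 \geq \mu'\cdot \cL_{\kl}^{(S)}$ with $\mu' = \Omega(\sqrt m/\log m)$ once $S=\Theta(\log m)$ is substituted, which is in particular bounded below by an absolute positive constant, so Remark~\ref{rem:PL_and_QG} converts this PL inequality into the desired QG condition with $\mu=\cO(1)$. A union bound over $t\in[T]$ and over the high-probability events controlling $\lambda_{\min}(\tilde K)$ and $|\tilde f|$ gives the stated failure probability. The main obstacle is clearly the second step: unlike the square loss, the KL divergence is not globally dominated by squared prediction error because $\ell_{\kl}''$ blows up at the sigmoid boundary, and the argument only succeeds by exploiting both the target assumption $y_t\in[z,1-z]$ and the output bound $|\tilde f|\leq M$ to keep everything inside a region where $\sigma'$ is uniformly lower bounded.
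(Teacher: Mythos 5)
Your proof is correct and structurally parallels the paper's argument: both reduce the problem to the square-loss quadratic form $\frac{1}{S^2}(\sigma(F)-y_t\mathbbm{1}_S)^\top \tilde K (\sigma(F)-y_t\mathbbm{1}_S)$ and then import the $\lambda_{\min}(\tilde K)$ lower bound from Lemma~\ref{lemma:QG_sq_loss_lemma}. The one place you diverge is the bridge from the squared sigmoid residual back to the KL excess loss. The paper invokes a reverse Pinsker inequality (citing \citet{sason2015reverse}), giving $D_{\kl}(y_t\|\sigma(\tilde f)) \leq \frac{(y_t-\sigma(\tilde f))^2}{4q}$ on the event that $\sigma(\tilde f)\in[q,1-q]$, and then drops the non-negative term $D_{\kl}(y_t\|\sigma(\tilde f(\theta^*)))$ to get the excess-loss form of PL. You instead use the fact that $\ell_{\kl}(y_t,\sigma(\cdot))$ is globally $\tfrac14$-smooth in the pre-sigmoid argument (second derivative $\sigma'\leq\tfrac14$), so $\ell_{\kl}\leq\tfrac18(\hat y-\hat y_t^\star)^2$, and then a mean-value bound $\sigma'\geq c_0>0$ on the bounded interval (guaranteed by $y_t\in[z,1-z]$ and $|\tilde f|\leq M$) converts $(\hat y-\hat y_t^\star)^2$ into $(\sigma(\hat y)-y_t)^2$. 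Both routes deliver the same $\ell_{\kl}\lesssim(\sigma(\tilde f)-y_t)^2$ inequality under the same boundedness events; the reverse-Pinsker route is a one-liner that keeps the argument in probability space, while yours is more elementary and self-contained, requiring no citation and exposing the role of the interval confinement more explicitly. One small note: you claim $\|\nabla\cL_{\kl}^{(S)}\|^2\geq\mu'\cL_{\kl}^{(S)}$, whereas the PL condition formally requires $\|\nabla J\|^2\geq 2\mu(J-J^\star)$; since $\cL_{\kl}^{(S)}(\theta^*)\geq 0$, your bound is strictly stronger and the implication goes through, but it is worth stating the inequality $\cL_{\kl}^{(S)}\geq\cL_{\kl}^{(S)}-\cL_{\kl}^{(S)}(\theta^*)$ explicitly rather than leaving it implicit.
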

    
    \item[\textbf{4.}] \textbf{Bounding the final regret.}

    The above three steps ensure that all the assumptions of Theorem~\ref{thm:PLregret} are satisfied by the loss function $\cL_{\kl}^{(S)}\Big(y_t,\big\{\sigma(\tilde{f}(\theta;\x_t,\epsvec_s))\big\}_{s=1}^{S}\Big)$. Taking a union over the events in the above three steps, invoking Theorem~\ref{thm:PLregret}, we get with probability $\big(1 - \frac{TL{{C}}}{m}\big)$ for some absolute constant ${C}>0$,
    \begin{gather}
        \sum_{t=1}^{T}\cL_{\kl}^{(S)}\Big(y_t,\big\{\sigma(\tilde{f}(\theta;\x_t,\epsvec_s))\big\}_{s=1}^{S}\Big) - \cL_{\kl}^{(S)}\Big(y_t,\big\{\sigma(\tilde{f}(\theta^*;\x_t,\epsvec_s))\big\}_{s=1}^{S}\Big)\nonumber\\
        \leq 2 \sum_{t=1}^{T}\cL_{\kl}^{(S)}\Big(y_t,\big\{\sigma(\tilde{f}(\tilde{\theta}^*;\x_t,\epsvec_s))\big\}_{s=1}^{S}\Big) + \cO(\log T).\label{eq:final_regret_exp-kl}\\
        \text{where} \quad \tilde{\theta}^* = \min_{\theta \in \ballfrob} \sum_{t=1}^{T}\cL_{\kl}^{(S)}\Big(y_t,\big\{\sigma(\tilde{f}(\theta;\x_t,\epsvec_s))\big\}_{s=1}^{S}\Big) \nonumber
    \end{gather}
    Next we bound $\displaystyle\sum_{t=1}^{T}\cL_{\kl}^{(S)}\Big(y_t,\big\{\sigma(\tilde{f}(\theta^*;\x_t,\epsvec_s))\big\}_{s=1}^{S}\Big)$ using the following lemma.
    \begin{restatable}[]{lemmma}{kllemmafour}
        \label{lemma:opt_kl_loss_lemma}
        Under Assumption~\ref{asmp:real} and \ref{asmp:ntk} with probability $\left(1-\frac{TLC}{m}\right)$ for some absolute constant $C>0$ over the randomness of the initialization we have
        \begin{align*}
            \sum_{t=1}^{T}\cL_{\kl}^{(S)}\Big(y_t,\big\{\sigma(\tilde{f}(\theta^*;\x_t,\epsvec_s))\big\}_{s=1}^{S}\Big) = \cO(1).
        \end{align*}
        % Further $\displaystyle\min_{\theta \in \ballfrob}\sum_{t=1}^{T}\ell_{\kl}\big(y_t, \sigma({f}(\theta;\x_t)) \big) = 0$.
    \end{restatable}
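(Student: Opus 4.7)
The plan is to mirror the proof of Lemma~\ref{lemma:opt_sq_loss_lemma}, replacing the square loss with $\ell_{\kl}(y_t, \sigma(\cdot))$ viewed as a function of the pre-sigmoid output. Fix any $\theta \in \ballfrob$. By Lemma~\ref{lemma:hp_kl_loss_lip_sc}, $\ell_{\kl}(y_t, \sigma(\cdot))$ is $\lambda_{\kl}$-Lipschitz and $b_{\kl}$-smooth with respect to its pre-sigmoid argument, with both constants $\cO(1)$. Applying the smoothness inequality pointwise for each $s \in [S]$, I would get
\begin{align*}
\ell_{\kl}\!\big(y_t,\sigma(\tilde f(\theta;\x_t,\epsvec_s))\big) \leq \ell_{\kl}\!\big(y_t,\sigma(f(\theta;\x_t))\big) + \lambda_{\kl}\,|\Delta_s| + \tfrac{b_{\kl}}{2}\Delta_s^2,
\end{align*}
where $\Delta_s := \tilde f(\theta;\x_t,\epsvec_s)-f(\theta;\x_t) = \frac{c_p}{m^{1/4}}\sum_j (\theta-\theta_0)_j \varepsilon_{s,j}$.

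Averaging over $s\in [S]$, the linear-in-$\Delta_s$ term becomes a Rademacher sum that is $\|\theta-\theta_0\|_2$-sub-Gaussian (after the $1/m^{1/4}$ factor), and the quadratic term is a sub-exponential quadratic form. These are exactly the quantities bounded in the proof of Lemma~\ref{lemma:opt_sq_loss_lemma} via Hoeffding and Bernstein concentration, together with a union bound over $t\in[T]$. With $S = \Theta(\log m)$ and $c_p = \Theta(1)$, this yields with high probability over $\{\epsvec_s\}_{s=1}^S$ and the initialization,
\begin{align*}
\cL_{\kl}^{(S)}\Big(y_t,\big\{\sigma(\tilde f(\theta;\x_t,\epsvec_s))\big\}_{s=1}^S\Big) \leq \ell_{\kl}\!\big(y_t,\sigma(f(\theta;\x_t))\big) + \frac{c_p \lambda_{\kl}}{m^{1/4}}\|\theta-\theta_0\|_2 + \frac{c_p^2 b_{\kl}}{\sqrt m}\|\theta-\theta_0\|_2^2.
\end{align*}

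Summing over $t\in[T]$ and then taking $\inf_{\theta\in\ballfrob}$ on both sides, it remains to exhibit a comparator $\bar\theta \in \ballfrob$ with zero cumulative unperturbed KL loss. The assumption $y_t \in [z,1-z]$ from Theorem~\ref{thm:hp_reg_kl_loss_online} makes $\sigma^{-1}(y_t) \in [-\ln\frac{1-z}{z},\,\ln\frac{1-z}{z}]$ uniformly bounded in $t$, so Theorem~\ref{thm:interpolation} applied to the bounded targets $\{\sigma^{-1}(y_t)\}$ produces $\bar\theta \in \ballfrob$ with $f(\bar\theta;\x_t) = \sigma^{-1}(y_t)$, i.e.\ $\sigma(f(\bar\theta;\x_t)) = y_t$, so $\ell_{\kl}(y_t,\sigma(f(\bar\theta;\x_t))) = 0$ for all $t$. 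Plugging in $\theta=\bar\theta$ and using $\|\bar\theta-\theta_0\|_2 \leq L\rho+\rho_1$ reduces the right-hand side to $\cO\!\big(\tfrac{T(L\rho+\rho_1)}{m^{1/4}}\big) + \cO\!\big(\tfrac{T(L\rho+\rho_1)^2}{\sqrt m}\big)$, which is $\cO(1)$ for the chosen width $m = \Omega(T^5 L/\lambda_0^6)$.

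The main (mild) obstacle, distinct from the square-loss case, is that the interpolator now has to match $\sigma^{-1}(y_t)$ rather than $y_t$ itself; this is precisely why the theorem statement requires $y_t \in [z,1-z]$ with $0<z<1$, which keeps $\sigma^{-1}(y_t)$ bounded and lets Theorem~\ref{thm:interpolation} go through with the same radius $\rho = \Theta(\sqrt T/\lambda_0)$. Everything else is a direct translation of the square-loss argument using the $\cO(1)$ Lipschitz/smoothness constants of the KL loss with respect to its pre-sigmoid argument.
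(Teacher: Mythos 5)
Your proof takes a genuinely different route from the paper's. The paper proves this lemma by invoking reverse Pinsker's inequality to bound the binary KL divergence by a scaled squared difference,
$D_{\kl}\big(y_t \,\|\, \sigma(\tilde f)\big) \leq \tfrac{1}{2q}\,\ell_{\sq}\big(y_t,\sigma(\tilde f)\big)$
(valid whenever $\sigma(\tilde f) \in [q,1-q]$, which requires the high-probability bound on $|\tilde f|$), and then simply reuses the already-established $\cO(1)$ bound on the summed perturbed square loss from the proof of Lemma~\ref{lemma:opt_sq_loss_lemma}. You instead apply a Taylor/smoothness expansion of $\ell_{\kl}(y_t,\sigma(\cdot))$ directly in the pre-sigmoid argument, using $|\ell'_{\kl}| \leq 2$ and $\ell''_{\kl} = \sigma(1-\sigma) \leq 1/4$, and then redo the Rademacher concentration for the linear and quadratic perturbation terms. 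Both routes are correct, and both rely on the same final step: interpolating $\sigma^{-1}(y_t)$ (bounded since $y_t \in [z,1-z]$) via Theorem~\ref{thm:interpolation} to kill the unperturbed KL loss at $\bar\theta$. Your approach has a small technical advantage: the Lipschitz and smoothness constants of $\ell_{\kl}$ with respect to the pre-sigmoid output are $\cO(1)$ almost surely, so you do not need the reverse-Pinsker step to be conditioned on $\sigma(\tilde f)$ being bounded away from $\{0,1\}$. The paper's approach is more economical in that it piggybacks on the square-loss computation instead of re-running the Hoeffding/Bernstein concentration. One minor point worth noting: your linear term is $\lambda_{\kl}\tfrac{1}{S}\sum_s|\Delta_s|$, whose average concentrates around $\E|\Delta_s| = \Theta(m^{-1/4}\|\theta-\theta_0\|_2)$ rather than around zero (as the signed term in the square-loss proof would), but since it carries the $m^{-1/4}$ factor this gives the same $\cO(T(L\rho+\rho_1)/m^{1/4})$ contribution, so the final $\cO(1)$ conclusion still holds for the stated width.
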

    Using Lemma~\ref{lemma:opt_kl_loss_lemma} and $\cref{eq:final_regret_exp-kl}$ we have $\displaystyle\sum_{t=1}^{T}\cL_{\kl}^{(S)}\Big(y_t,\big\{\sigma(\tilde{f}(\theta;\x_t,\epsvec_s))\big\}_{s=1}^{S}\Big) \leq \cO(\log T)$.
    which implies $\Rkl(T)\leq \cO(\log T)$.

    % The above three steps ensure that all the assumptions of Theorem~\ref{thm:PLregret} are satisfied for the expected loss $\E_{\epsvec}\losstildesq{\theta}$. Therefore invoking Theorem~\ref{thm:PLregret} we get and using the fact that $m = \Omega(T^5/\lambda_0^6)$
    % \begin{gather}
    % \labe l{eq:final_regret_exp}
    %     \sum_{t=1}^{T}\E_{\epsvec}\losstildesq{\theta_t} - \E_{\epsvec}\losstildesq{\tilde{\theta}^*} \leq 2 \sum_{t=1}^{T}\E_{\epsvec}\losstildesq{\tilde{\theta}^*} + \cO(\log T).\\
    %     \text{where} \quad \tilde{\theta}^* = \min_{\theta \in \ballfrob} \sum_{t=1}^{T}\E_{\epsvec}\losstildesq{\tilde{\theta}} \nonumber
    % \end{gather}
    % Next we bound $\sum_{t=1}^{T}\E_{\epsvec}\losstildesq{\tilde{\theta}^*}$ using the following lemma.
    % \begin{restatable}[]{lemmma}{sqlemmafour}
    %     \label{lemma:opt_sq_loss_lemma}
    %     Under Assumption~\ref{asmp:real} and \ref{asmp:ntk} with probability $\left(1-\frac{2(L+1)}{m}\right)$ over the randomness of the initialization we have
    %     \begin{align*}
    %         \sum_{t=1}^{T}\E_{\epsvec}\losstildesq{\tilde{\theta}^*} \leq \Theta(1).
    %     \end{align*}
    % \end{restatable}
    % Using Lemma~\ref{lemma:opt_sq_loss_lemma} and $\cref{eq:final_regret_exp}$ we have $\sum_{t=1}^{T}\E_{\epsvec}\losstildesq{\theta_t} \leq \cO(\log T)$.
    % Finally using the fact that $\min_{\theta \in \ballfrob}\sum_{t=1}^{T}\losssq{\theta} = 0$ from Theorem~\ref{thm:interpolation} we have with high probability over the randomness of initialization $\E_{\epsvec}\big[\Rsq(T)\big]\leq \cO(\log T)$.
    \end{enumerate}
\qed

% \rdcomment{can simply use the fact that $\sigma(f) \in [0,1]$}
% \begin{restatable}[]{lemmma}{kllemma1}
% \label{lemma:kl_loss_lip_sc}
% % \label{lemma:sq_loss_lip_sc}
% For $\theta \in \ball$, with high probability over the randomness of initialization, the loss function $\E\ell_{\kl}(y_{t,i},\sigma(\tilde{f}(\theta;\x_{t,i})))$, where the expectation is over the randomness of $\{\varepsilon_i\}_{i=1}^p$, is lipschitz and strongly convex with respect to the output $\sigma(\tilde{f}(\theta;\x_{t,i}))$.
% \end{restatable}

Next we prove Lemmas~\ref{lemma:hp_kl_loss_lip_sc},\ref{lemma:kl_sc},\ref{lemma:QG_kl_loss_lemma} and \ref{lemma:opt_kl_loss_lemma}.
\hpkllemmaone*
\begin{proof}
    Consider $\theta \in \ball$.
        Now,
        \begin{align*}
            \ell_{\kl}' &:= \frac{d \ell_{\log}(y_{t},\tilde{f}(\theta,\x_{t},\epsvec))}{d \tilde{f}(\theta,\x_{t},\epsvec)} \\
            &= -\left(\frac{y_{t}\sigma(\tilde{f}(\theta,\x_{t},\epsvec))(1-\sigma(\tilde{f}(\theta,\x_{t},\epsvec)))}{\sigma(\tilde{f}(\theta,\x_{t},\epsvec))} - \frac{(1-y_{t})\sigma(\tilde{f}(\theta,\x_{t},\epsvec)(1-\sigma(\tilde{f}(\theta,\x_{t},\epsvec))}{1-\sigma(\tilde{f}(\theta,\x_{t},\epsvec))}\right)\\
            &= - y_{t} + y_{t}\sigma(\tilde{f}(\theta,\x_{t},\epsvec)) + \sigma(\tilde{f}(\theta,\x_{t},\epsvec)) - y_{t}\sigma(\tilde{f}(\theta,\x_{t},\epsvec))\\
            &= \sigma(\tilde{f}(\theta,\x_{t},\epsvec)) - y_{t}.
        \end{align*} 
It follows that $\lambda_{\kl} \leq 2$ a.s. since $\sigma(\tilde{f}(\theta,\x_{t},\epsvec)),y_{t} \in [0,1]$.  Further $ 0 \leq \ell_{\kl}'' = \sigma(\tilde{f}(\theta,\x_{t},\epsvec)) \big(1-\sigma(\tilde{f}(\theta,\x_{t},\epsvec))\big) \leq 1/2$ a.s. which completes the proof.
\end{proof}

\hpkllemmatwo*
\begin{proof}
In Lemma~\ref{lemma:hp_sq_loss_lip_sc} we showed that $|\tilde{f}(\theta;\x_{t},{\epsvec})|$ is bounded with high probability over the randomness of initialization and $\epsvec$. Specifically from $\cref{eq:f_tilde_bound_hp}$) we have with probability at least $\left(1-\frac{2T(L+2) + 1}{m}\right)$ over the randomness of the initialization and $\epsvec$
\begin{align}
\label{eq:f_max}
    |\tilde{f}(\theta;\x_{t},{\epsvec})| &\leq \sqrt{2(1+\rho_1)^2 \left(\gamma^L + |\phi(0)| \sum_{i=1}^{L}\gamma^{i-1}\right)^2
    + \frac{2}{\sqrt{m}}c_{\rg}^2(L\rho + \rho_1)^2}\nonumber \\
    &\qquad + \frac{\sqrt{2}c_{\rg}}{m^{1/4}} (L\rho + \rho_1) \sqrt{\ln (mN)}\nonumber\\
    &:= \tilde{f}_{\max}(\theta;\x_{t},{\epsvec})
\end{align}
Since $m = {\Omega}(T^5/\lambda_0^6)$ it follows that $\tilde{f}_{\max}(\theta;\x_{t},{\epsvec})$ is $\cO(1)$. Now $\forall t \in [T]$, with 
\begin{align}
\label{eq:q_def}
    q := \sigma(\tilde{f}_{\max}(\theta;\x_{t},{\epsvec}))
\end{align}
with probability at least $\left(1-\frac{2T(L+2) + 1}{m}\right)$ we have $\sigma(\tilde{f}(\theta,\x_{t},\epsvec)) \in [q,(1-q)]$.

Next we show that $\cL_{\kl}$ is strongly convex. Recall that $$\cL_{\kl}^{(S)}\Big(y_t,\big\{\sigma(\tilde{f}(\theta;\x_t,\epsvec_s))\big\}_{s=1}^{S}\Big) = \frac{1}{S}\sum_{s=1}^S \ell_{\kl}\left(y_t,\sigma(\tilde{f}\big(\theta;\x_t,\epsvec_s)\big)\right),$$ and therefore we have
        \begin{gather}
            \nabla_{\theta}\cL_{\kl}^{(S)}\Big(y_t,\big\{\sigma(\tilde{f}(\theta;\x_t,\epsvec_s))\big\}_{s=1}^{S}\Big) = \frac{1}{S}\sum_{s=1}^S \frac{y_{t}\sigma'(\tilde{f}(\theta,\x_{t},\epsvec_s))\nabla_{\theta}\tilde{f}(\theta,\x_{t},\epsvec_s)}{\sigma(\tilde{f}(\theta,\x_{t},\epsvec_s))} \nonumber\\
            \qquad \qquad- \frac{(1-y_{t})\sigma'(\tilde{f}(\theta,\x_{t},\epsvec_s))\nabla_{\theta}\tilde{f}(\theta,\x_{t},\epsvec_s)}{1-\sigma(\tilde{f}(\theta,\x_{t},\epsvec_s))}\nonumber\\
            = \frac{1}{S}\sum_{s=1}^S \Big(y_{t}\big(1-\sigma(\tilde{f}(\theta,\x_{t},\epsvec_s))\big) - (1-y_{t})\sigma(\tilde{f}(\theta,\x_{t},\epsvec_s))\Big)\nabla_{\theta}\tilde{f}(\theta,\x_{t},\epsvec_s)\nonumber\\
            = \frac{1}{S}\sum_{s=1}^S \Big(\sigma(\tilde{f}(\theta,\x_{t},\epsvec_s)) - y_{t}\Big)\nabla_{\theta}\tilde{f}(\theta,\x_{t},\epsvec_s)
            \label{eq:grad_kl_loss}
        \end{gather}
        where we have used the fact that $\sigma'(\cdot) = \sigma(\cdot)(1-\sigma(\cdot))$. Further the hessian of the loss is given by,
        \begin{align*}
            \nabla_{\theta}^2\cL_{\kl}^{(S)}\Big(y_t,\big\{\sigma(\tilde{f}(\theta;\x_t,\epsvec_s))\big\}_{s=1}^{S}\Big) &= \frac{1}{S}\sum_{s=1}^S \sigma'(\tilde{f}(\theta,\x_{t},\epsvec_s)) \nabla_{\theta}\tilde{f}(\theta,\x_{t},\epsvec_s)\nabla_{\theta}\tilde{f}(\theta,\x_{t},\epsvec_s)^T \\
            &\qquad \qquad + (\sigma(\tilde{f}(\theta,\x_{t},\epsvec_s)) - y_{t}) \nabla_{\theta}^2\tilde{f}(\theta,\x_{t},\epsvec_s)\\
            &\geq \frac{1}{S}\sum_{s=1}^S q(1-q) \nabla_{\theta}\tilde{f}(\theta,\x_{t},\epsvec_s)\nabla_{\theta}\tilde{f}(\theta,\x_{t},\epsvec_s)^T  - \lambda_{\kl} \nabla_{\theta}^2{f}(\theta;\x_{t})
        \end{align*}
        
        Consider $u\in \mathcal{S}^{p-1}$, the unit ball in $\R^p$. We want to show that $u^T \nabla_{\theta}^2\cL_{\kl}^{(S)}\Big(y_t,\big\{\sigma(\tilde{f}(\theta;\x_t,\epsvec_s))\big\}_{s=1}^{S}\Big) u > 0$.
        Now as in the proof of Lemma~\ref{lemma:sq_sc} we can show that with probability $\left(1-\frac{2T(L+2)}{m}\right)$
        \begin{align*}
            u^T \nabla_{\theta}^2\cL_{\kl}^{(S)}\Big(y_t,\big\{\sigma(\tilde{f}(\theta;\x_t,\epsvec_s))\big\}_{s=1}^{S}\Big) u &\geq q(1-q)\frac{c_{p}^2}{4\sqrt{m}} - \frac{\lambda_{\kl}C_{H}}{\sqrt{m}}\\
            &= \frac{\lambda_{\kl}C_{H}}{\sqrt{m}}~,
        \end{align*}
where the last equality follows because $\displaystyle c_{p}^2 = \frac{8\lambda_{\kl}C_{H}}{q(1-q)} = \cO(1)$.
\end{proof}
% \begin{restatable}[]{lemmma}{kllemma3}
% \label{lemma:QG_kl_loss_lemma}
%     Under Assumption~\ref{asmp:ntk} with high probability over the randomness of the initialization $\tilde{\cL}_{\kl}(\theta;\z)$ satisfies the QG condition a.s. over the randomness of $\{\varepsilon_i\}_{i=1}^{p}$ with constant $\mu = \lambda_0q/2n_t$, where $\sigma(\tilde{f(\theta;\x)}) \in [q,(1-q)]$.
% \end{restatable}
\hpkllemmathree*
\begin{proof}
Recall from \cref{eq:grad_kl_loss} that we have,
        \begin{align*}
            \nabla_{\theta}\cL_{\kl}^{(S)}\Big(y_t,\big\{\sigma(\tilde{f}(\theta;\x_t,\epsvec_s))\big\}_{s=1}^{S}\Big) = \frac{1}{S}\sum_{s=1}^S \Big(\sigma(\tilde{f}(\theta,\x_{t},\epsvec_s)) - y_{t}\Big)\nabla_{\theta}\tilde{f}(\theta,\x_{t},\epsvec_s),
        \end{align*}
        and therefore,
        \begin{gather*}
            \Big\|\nabla_{\theta}\cL_{\kl}^{(S)}\Big(y_t,\big\{\sigma(\tilde{f}(\theta;\x_t,\epsvec_s))\big\}_{s=1}^{S}\Big)\Big\|_2^2 = \Big\|\frac{1}{S}\sum_{s=1}^S \Big(\sigma(\tilde{f}(\theta,\x_{t},\epsvec_s)) - y_{t}\Big)\nabla_{\theta}\tilde{f}(\theta,\x_{t},\epsvec_s)\Big\|_2^2\\
            = \frac{1}{S^2} \sum_{s=1}^{S} \sum_{s'=1}^{S} \big(\sigma(\tilde{f}(\theta;\x_t,\varepsilon_s)) - y_t\big)\big(\sigma(\tilde{f}(\theta;\x_t,\varepsilon_s)) - y_t\big) \Big \langle \nabla \tilde{f}(\theta;\x_{t},\varepsilon_s), \nabla\tilde{f}(\theta;\x_{t},\varepsilon_{s'})\Big \rangle\\
            = \frac{1}{S^2} ({F}(\theta;\x_{t}) - y_t\mathbbm{1}_{S})^T\tilde{K}(\theta,\x_t) ({F}(\theta;\x_{t}) - y_t\mathbbm{1}_{S})
            \label{eq:PL_prf_main}
        \end{gather*}
        where ${F}(\theta,\x_t): \R^{p\times d} \rightarrow \R^{S}$ such that $({F}(\theta,\x_t))_s = \sigma\big(\tilde{f}(\theta;\x_t,\epsvec_s)\big)$, $\mathbbm{1}_{S}$ is an $S$-dimensional vector of $1's$ and $\tilde{K}(\theta,\x_t) = \big[\big\langle \nabla_{\theta}\tilde{f}(\theta;\x_t,\epsvec_s), \nabla_{\theta}\tilde{f}(\theta;\x_t,\epsvec_s') \big\rangle\big]$. As in the proof of Lemma~\ref{lemma:QG_sq_loss_lemma} with probability $\left(1-\frac{2T(L+1)C}{m}\right)$ for some absolute constant $C>0$, we have
        \begin{align}
        \label{eq:grad_kl_bound}
            \frac{1}{S^2} ({F}(\theta;\x_{t}) - y_t\mathbbm{1}_{S})^T\tilde{K}(\theta,\x_t) ({F}(\theta;\x_{t}) - y_t\mathbbm{1}_{S}) \geq 2\mu \, \cL_{\sq}^{(S)}\Big(y_t,\big\{\sigma\big(\tilde{f}(\theta;\x_t,\epsvec_s)\big)\big\}_{s=1}^{S}\Big)~,
        \end{align}
        with $\mu = 128$ and
        $$\displaystyle \cL_{\sq}^{(S)}\Big(y_t,\big\{\sigma\big(\tilde{f}(\theta;\x_t,\epsvec_s)\big)\big\}_{s=1}^{S}\Big) := \frac{1}{S} \sum_{s=1}^{S} \ell_{\sq} \big(y_t, \sigma(\tilde{f}(\theta,\x_{t},\epsvec_s))\big).$$
        Now using reverse Pinsker's inequality (see eg. \cite{sason2015reverse}, eq (10)) we have
        \begin{align*}
            D_{\kl}\Big(y_{t}||\sigma\big(\tilde{f}(\theta,\x_t,\epsvec)\big)\Big) \leq \frac{\ell_{\sq} \big(y_t, \sigma(\tilde{f}(\theta,\x_{t},\epsvec_s))\big)}{2q}
        \end{align*}
        where $D_{\kl}(p||q)$ is the KL divergence between $p$ and $q$ and $\sigma(\tilde{f}(\theta,\x_t,\epsvec)) \in [q,(1-q)]$ holds with probability at least $\left(1-\frac{2T(L+2) + 1}{m}\right)$. Using this we have with probability at least $\left(1-\frac{TLC}{m}\right)$ for some absolute constant $C>0$,
        \begin{gather*}
            \cL_{\kl}^{(S)}\Big(y_t,\big\{\sigma(\tilde{f}(\theta;\x_t,\epsvec_s))\big\}_{s=1}^{S}\Big) - \cL_{\kl}^{(S)}\Big(y_t,\big\{\sigma(\tilde{f}(\theta^*;\x_t,\epsvec_s))\big\}_{s=1}^{S}\Big) \\
            = \frac{1}{S}\sum_{s=1}^{S}D_{\kl}\Big(y_{t}||\sigma\big(\tilde{f}(\theta,\x_t,\epsvec_s)\big)\Big) - D_{\kl}\Big(y_{t}||\sigma\big(\tilde{f}(\theta^*,\x_t,\epsvec_s)\big)\Big)\\ 
            \leq \frac{1}{S}\sum_{s=1}^{S} \frac{\ell_{\sq} \big(y_t, \sigma(\tilde{f}(\theta,\x_{t},\epsvec_s))\big)}{2q} = \frac{1}{2q}\cL_{\sq}^{(S)}\Big(y_t,\big\{\sigma\big(\tilde{f}(\theta;\x_t,\epsvec_s)\big)\big\}_{s=1}^{S}\Big).
        \end{gather*}
        Combining with \cref{eq:grad_kl_bound} we get with probability at least $\left(1-\frac{TLC}{m}\right)$ over the randomness of the initialization and $\{\epsvec_s\}_{s=1}^S$
        \begin{gather*}
            \Big\|\nabla_{\theta}\cL_{\kl}^{(S)}\Big(y_t,\big\{\sigma(\tilde{f}(\theta;\x_t,\epsvec_s))\big\}_{s=1}^{S}\Big)\Big\|_2^2 \\
            \geq \mu'\Bigg(\cL_{\kl}^{(S)}\Big(y_t,\big\{\sigma(\tilde{f}(\theta;\x_t,\epsvec_s))\big\}_{s=1}^{S}\Big) - \cL_{\kl}^{(S)}\Big(y_t,\big\{\sigma(\tilde{f}(\theta^*;\x_t,\epsvec_s))\big\}_{s=1}^{S}\Big)\Bigg)
        \end{gather*}
        Therefore $\cL_{\kl}^{(S)}\Big(y_t,\big\{\sigma(\tilde{f}(\theta;\x_t,\epsvec_s))\big\}_{s=1}^{S}\Big)$ satisfies the QG condition with $\mu' = \cO(1)$.
\end{proof}
\kllemmafour*
\begin{proof}
    Recall from the proof of Lemma~\ref{lemma:QG_kl_loss_lemma} that using reverse Pinsker's inequality we have with probability at least $\left(1-\frac{TLC}{m}\right)$ for some absolute constant $C>0$,
        \begin{gather*}
            \cL_{\kl}^{(S)}\Big(y_t,\big\{\sigma(\tilde{f}(\theta;\x_t,\epsvec_s))\big\}_{s=1}^{S}\Big) \leq \frac{1}{2q}\cL_{\sq}^{(S)}\Big(y_t,\big\{\sigma\big(\tilde{f}(\theta;\x_t,\epsvec_s)\big)\big\}_{s=1}^{S}\Big).
        \end{gather*}
    Therefore
    \begin{align*}
        \sum_{t=1}^{T} \cL_{\kl}^{(S)}\Big(y_t,\big\{\sigma(\tilde{f}(\theta;\x_t,\epsvec_s))\big\}_{s=1}^{S}\Big) \leq \frac{1}{2q}\sum_{t=1}^{T} \cL_{\sq}^{(S)}\Big(y_t,\big\{\sigma\big(\tilde{f}(\theta;\x_t,\epsvec_s)\big)\big\}_{s=1}^{S}\Big)
        = \cO(1)
    \end{align*}
    where the last part follows from \ref{eq:finalloss-sq-bound} in the proof of Lemma~\ref{lemma:opt_sq_loss_lemma}.

    Finally we can use Theorem~\ref{thm:interpolation} to conclude that there exists $\bar{\theta} \in \ballfrob$ such that with probability at least $\big(1 - \frac{2T(L+1)}{m}\big)$ over the randomness of initialization we have $\sigma(f(\bar{\theta},\x_{t})) = y_{t}$ for any set of $y_{t} \in [q,1-q], \;  \forall t \in [T]$ where $q$ is as defined in \cref{eq:q_def}. Without loss of generality assume $z=q$ (otherwise the predictor can be changed to $\hat{y_t} = \sigma(k_z f(\theta,\x_t))$ for some constant $k_z$ that depends on $z$) and therefore we have with probability at least $\left(1-\frac{TLC}{m}\right)$ for some absolute constant $C>0$ $\min_{\theta \in \ballfrob}\sum_{t=1}^{T}\ell_{\kl}\big(y_t, \sigma({f}(\theta;\x_t)) \big) = 0$.
\end{proof}

\subsection{Auxiliary Lemmas}
With slight abuse of notation, we have defined $\z_t = (\x_t,y_t)$ and $\cL(\theta;\z_t) = \ell(y_t,f(\theta;\x_t))$.
\begin{restatable}[{\bf Almost Convexity of Loss}]{lemmma}{theoremasc}
Under the conditions of Lemma~\ref{theo:bound-Hess}, with a high probability, $\forall \theta' \in  \ballfrob$,
\begin{gather}
        \cL(\theta';\z_t) \geq \cL(\theta_t;\z_t) + \langle \theta'-\theta_t, \nabla_\theta\cL(\theta_t;\z_t) \rangle +  a \left\langle \theta'-\theta_t, \nabla f(\theta'';\z_t) \right\rangle^2 - \epsilon_{t} ~,\\
        \text{where} \quad \epsilon_{t} = \frac{4 \big( 4a \varrho (L\rho+\rho_1)  +  \lambda\big) c_H (L\rho+\rho_1)^2}{\sqrt{m}}~, \nonumber
\end{gather}
for any $\theta'' \in \ballfrob$, where
$c_H$ and $\varrho$ are as in~Theorem~\ref{theo:bound-Hess} and $\lambda, a$ are as in Assumption~\ref{asmp:smooth} (lipschitz, strongly convex, smooth). 
\label{lemm:almst_c}
\end{restatable}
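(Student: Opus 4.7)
The plan is to combine the $a$-strong convexity of $\ell$ in its second argument with two Taylor expansions of the network output $f$, using the Hessian spectral-norm bound from Lemma~\ref{theo:bound-Hess} to control the remainders. All bounds below hold on the high-probability event of that lemma.

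First I would write, for $\hat y = f(\theta_t;\x_t)$ and $\hat y' = f(\theta';\x_t)$, the scalar strong-convexity inequality
\begin{equation*}
\ell(y_t,\hat y')\;\geq\;\ell(y_t,\hat y)+\ell'(y_t,\hat y)(\hat y'-\hat y)+a(\hat y'-\hat y)^{2},
\end{equation*}
which is guaranteed by Assumption~\ref{asmp:smooth} (up to the constant convention on $a$). The two output differences $\hat y'-\hat y$ need to be replaced by inner products with the right gradients, producing both the $\langle\theta'-\theta_t,\nabla_\theta\cL(\theta_t;\z_t)\rangle$ term and the $a\langle\theta'-\theta_t,\nabla f(\theta'';\z_t)\rangle^{2}$ term stated in the lemma.

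For the linear term I would Taylor-expand $f(\theta';\x_t)$ around $\theta_t$:
\begin{equation*}
f(\theta';\x_t)-f(\theta_t;\x_t)=\langle\theta'-\theta_t,\nabla f(\theta_t;\x_t)\rangle+R_1,\qquad |R_1|\leq \tfrac{c_H}{2\sqrt m}\|\theta'-\theta_t\|_2^{2},
\end{equation*}
by Lemma~\ref{theo:bound-Hess}. Multiplying by $\ell'(y_t,\hat y)$ and recognizing that $\ell'(y_t,\hat y)\nabla f(\theta_t;\x_t)=\nabla_\theta\cL(\theta_t;\z_t)$ produces the required first-order term plus an error bounded by $|\ell'|\cdot|R_1|\leq \lambda\cdot\tfrac{c_H}{2\sqrt m}\cdot 4(L\rho+\rho_1)^2$, since $\|\theta'-\theta_t\|_2\leq 2(L\rho+\rho_1)$ for $\theta_t,\theta'\in\ballfrob$.

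For the quadratic term I would expand both $f(\theta';\x_t)$ and $f(\theta_t;\x_t)$ about the arbitrary reference point $\theta''\in\ballfrob$, giving
\begin{equation*}
f(\theta';\x_t)-f(\theta_t;\x_t)=\langle\theta'-\theta_t,\nabla f(\theta'';\x_t)\rangle+R_2,\qquad |R_2|\leq \tfrac{c_H}{\sqrt m}\big(\|\theta'-\theta''\|_2^{2}+\|\theta_t-\theta''\|_2^{2}\big).
\end{equation*}
Squaring and expanding, the main term is exactly $\langle\theta'-\theta_t,\nabla f(\theta'';\x_t)\rangle^{2}$, the $R_2^{2}$ term is nonnegative and can be dropped, and the cross term is controlled by $2|\langle\theta'-\theta_t,\nabla f(\theta'';\x_t)\rangle|\cdot|R_2|$. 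Using $\|\nabla f(\theta'';\x_t)\|_2\leq\varrho$ (Lemma~\ref{theo:bound-Hess}), $\|\theta'-\theta_t\|_2\leq 2(L\rho+\rho_1)$, and $|R_2|\lesssim (L\rho+\rho_1)^{2}c_H/\sqrt m$, the cross term is bounded by a constant multiple of $a\varrho c_H(L\rho+\rho_1)^{3}/\sqrt m$. Adding the linear-term and quadratic-term errors gives a total residual of the form
\begin{equation*}
\epsilon_t\;\leq\;\frac{4\bigl(4a\varrho(L\rho+\rho_1)+\lambda\bigr)c_H(L\rho+\rho_1)^{2}}{\sqrt m},
\end{equation*}
matching the stated bound.

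The only real obstacle is the bookkeeping on the quadratic-term Taylor expansion: one must expand about $\theta''$ (not $\theta_t$) in order to obtain $\nabla f(\theta'';\x_t)$, which forces remainder control on $\|\theta'-\theta''\|_2$ and $\|\theta_t-\theta''\|_2$, each only bounded by the Frobenius-ball diameter $2(L\rho+\rho_1)$; tracking the cross term and absorbing constants cleanly into the stated form is the step most likely to produce a stray factor.
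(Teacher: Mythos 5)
Your proposal is correct and does reach the stated bound (up to the same looseness in constants that the paper's own proof exhibits), but it takes a genuinely different route. The paper performs a single second-order Taylor expansion of the \emph{composite} loss $\cL(\theta';\z_t)$ around $\theta_t$ with a mean-value Hessian at an intermediate $\tilde\theta_t = \xi\theta' + (1-\xi)\theta_t$, then splits that Hessian via the chain rule into $\tilde\ell''\,\nabla f(\tilde\theta_t)\nabla f(\tilde\theta_t)^\top + \tilde\ell'\,\nabla^2 f(\tilde\theta_t)$, lower-bounding the first piece by $a$-strong convexity and perturbing $\nabla f(\tilde\theta_t)\to\nabla f(\theta'')$ inside the square, and bounding the second piece by $|\tilde\ell'|\leq\lambda$ together with the Hessian spectral-norm bound. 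You instead ``uncouple'' the composition: you invoke strong convexity of $\ell$ directly at the scalar level to get the three-term inequality in $\hat y'-\hat y$, and then Taylor-expand $f$ twice in $\theta$-space --- once about $\theta_t$ (so that the linear term in $\hat y'-\hat y$ reconstructs $\nabla_\theta\cL(\theta_t;\z_t)$ via the chain rule) and once about $\theta''$ (so that the quadratic term produces $\langle\theta'-\theta_t,\nabla f(\theta'';\x_t)\rangle^2$). Both proofs ultimately pay for the same two sources of error (a $\lambda\cdot\|\nabla^2 f\|\cdot\|\theta'-\theta_t\|^2$ term and an $a\varrho\cdot\|\nabla^2 f\|\cdot(L\rho+\rho_1)^3$ cross-term) and both lean on the identical ingredients from Lemma~\ref{theo:bound-Hess} (Hessian bound $c_H/\sqrt m$, gradient bound $\varrho$) and the diameter $2(L\rho+\rho_1)$ of $\ballfrob$. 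What your version buys is a cleaner separation between the scalar curvature of the loss and the approximate linearity of the network, which makes it transparent \emph{why} one is free to pick an arbitrary center $\theta''$ for the quadratic term: it is simply the Taylor base point for the second expansion of $f$. What the paper's version buys is a single unified $\theta$-space Taylor step and an explicit handling of where the intermediate point lives. On that last point you should state, as the paper does for its $\tilde\theta_t$, that the mean-value points in your two Taylor remainders of $f$ lie on segments joining points of $\ballfrob$, and hence lie in $\ballfrob$ by its convexity; that is what licenses the Hessian spectral-norm bound for the remainders $R_1$ and $R_2$. Finally, both your argument and the paper's silently trade $a/2$ for $a$ in front of the quadratic term (the scalar strong-convexity inequality and the paper's $\tfrac12(\theta'-\theta_t)^\top\nabla^2\cL(\tilde\theta_t)(\theta'-\theta_t)$ both carry a factor $\tfrac12$ that the stated lemma absorbs), which you already flagged.
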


\begin{proof}
Consider any $\theta' \in \ballfrob$. By the second order Taylor expansion around $\theta_t$, we have
\begin{align*}
\cL(\theta';\z_t) & = \cL(\theta_t;\z_t) + \langle \theta' - \theta_t, \nabla_\theta\cL(\theta_t;\z_t) \rangle + \frac{1}{2} (\theta'-\theta_t)^\top \frac{\partial^2 \cL(\tilde{\theta}_t;\z_t)}{\partial \theta^2}
(\theta'-\theta_t)~,
\end{align*}
where $\tilde{\theta}_t = \xi \theta' + (1-\xi) \theta_t$ for some $\xi \in [0,1]$. 
Since $\theta_t, \theta' \in \ballfrob$, we have 
\begin{align*}
    \big\|\vec{(\tilde{W}^{(l)}_t)} - \vec({W}^{(l)}_0)\big\|_2 &= \big\| \xi \vec{({W}^{'(l)})} + (1-\xi)\vec{({W}^{(l)}_t)} -  \vec({W}^{(l)}_0)\big\|_2 \\
    &\leq \xi \big\|\vec{({W}^{'(l)})} - \vec({W}^{(l)}_0)\big\|_2 + (1-\xi) \big\|\vec{({W}^{(l)}_t)} - \vec({W}^{(l)}_0)\big\|_2\\
    &\leq \xi \rho + (1-\xi) \rho = \rho,\\
    \|\tilde{\v}_t - \v_0\|_2 &\leq \|\xi \v' + (1-\xi)\v_t - \v_0\|_2 \leq \xi \|v^{'} - \v_0\|_2 + (1-\xi) \|\v_t - \v_0\|_2\\ 
    &\leq \xi \rho_1 + (1-\xi) \rho_1 =\rho_1
\end{align*}
and therefore $\tilde{\theta}_t \in \ballfrob$ which implies $\tilde{\theta}_t \in \balleuc$.
Focusing on the quadratic form in the Taylor expansion, we have
\begin{align*}
(\theta'-\theta_t)^\top \frac{\partial^2 \cL(\tilde{\theta}_t;\z_t)}{\partial \theta^2} (\theta'-\theta_t) 
& = (\theta'-\theta_t)^\top  \frac{1}{n_t}\sum_{i=1}^{n_t}\left[ \tilde{\ell}''_{t,i} \frac{\partial f(\tilde{\theta}_t;\x_{t})}{\partial \theta}   \frac{\partial f(\tilde{\theta}_t;\x_{t})}{\partial \theta}^\top   + \tilde{\ell}'_{t,i}   \frac{\partial^2 f(\tilde{\theta}_t;\x_{t})}{\partial \theta^2}\right] (\theta'-\theta_t) \\
& = \frac{1}{n_t}\sum_{i=1}^{n_t}\underbrace{\tilde{\ell}''_{t,i} \left\langle \theta'-\theta_t, \frac{\partial f(\tilde{\theta}_t;\x_{t})}{\partial \theta} \right\rangle^2}_{I_1}    + \underbrace{\tilde{\ell}'_{t,i}  (\theta'-\theta_t)^\top \frac{\partial^2 f(\tilde{\theta}_t;\x_{t})}{\partial \theta^2}  (\theta'-\theta_t) }_{I_2}~,
\end{align*}

$\tilde{\ell}_{t,i}$ corresponds to the loss $\ell(y_{t,i}, f(\tilde{\theta};\x_{t}))$, and $\tilde{\ell}'_{t,i}, \tilde{\ell}''_{t,i}$ denote the corresponding first and second derivatives w.r.t.~${\tilde{y}}_{t,i}= f(\tilde{\theta};\x_{t})$. For analyzing $I_1$, choose any $\theta'' \in \ballfrob$. Now, note that
\begin{align*}
I_1 & =  \ell''_{t,i} \left\langle \theta'-\theta_t, \frac{\partial f(\tilde{\theta}_t;\x_{t})}{\partial \theta} \right\rangle^2 \\
& \geq a  \left\langle \theta'-\theta_t, \frac{\partial f(\theta'';\x_{t})}{\partial \theta} + \left( \frac{\partial f(\tilde{\theta}_t;\x_{t})}{\partial \theta} - \frac{\partial f(\theta'';\x_{t})}{\partial \theta} \right) \right\rangle^2 \\
& = a   \left\langle \theta'-\theta_t, \frac{\partial f(\theta'';\x_{t})}{\partial \theta} \right\rangle^2 + a  \left\langle  \theta' - \theta_t, \frac{\partial f(\tilde{\theta}_t;\x_{t})}{\partial \theta} - \frac{\partial f(\theta'';\x_{t})}{\partial \theta} \right\rangle^2 \\
& \qquad \qquad + 2a  \left\langle  \theta' -\theta_t, \frac{\partial f(\theta'';\x_{t})}{\partial \theta} \right\rangle  \left\langle  \theta' - \theta_t, \frac{\partial f(\tilde{\theta}_t;\x_{t})}{\partial \theta} - \frac{\partial f(\theta'';\x_{t})}{\partial \theta} \right\rangle \\
& \geq a \left\langle \theta'-\theta_t, \frac{\partial f(\theta'';\x_{t})}{\partial \theta} \right\rangle^2 - 2a \left\| \frac{\partial f(\theta'';\x_{t})}{\partial \theta} \right\|_2  \left\| \frac{\partial f(\tilde{\theta}_t;\x_{t})}{\partial \theta} - \frac{\partial f(\theta'';\x_{t})}{\partial \theta} \right\|_2 \| \theta' - \theta_t \|_2^2   \\
& \overset{(a)}{\geq} a  \left\langle \theta'-\theta_t, \nabla f(\theta'';\x_{t}) \right\rangle^2 - 2a \varrho  \frac{c_H}{\sqrt{m}} \| \tilde{\theta}_t - \theta'' \|_2  \| \theta' - \theta_t \|_2^2 \\
& \overset{(b)}{\geq} a \left\langle \theta'-\theta_t, \nabla f(\theta'';\x_t) \right\rangle^2 - \frac{16 a \varrho c_H (L\rho + \rho_1)^3}{\sqrt{m}}~, 
%& \overset{(d)}{\geq} a  \kappa^2 \left\| \frac{\partial f(\theta_t;\x_i)}{\partial \theta} \right\|_2^2 \| \theta' - \theta_t \|_2^2 - \frac{2a \varrho c_H}{\sqrt{m}} \| \theta' - \theta_t \|_2^3  \\
%& \geq \left( a\kappa^2 \left\| \frac{\partial f(\theta_t;\x_i)}{\partial \theta} \right\|_2^2 - \frac{2a\varrho c_H  \| \theta'-\theta_t\|_2}{\sqrt{m}}  \right) \| \theta' - \theta_t \|_2^2 ~,
\end{align*}
where (a) follows from Proposition~\ref{theo:bound-Hess} since $\tilde{\theta_t}\in B_{\rho}(\theta_0)$ and since $\| \nabla f(\theta'';\x_i) \|_2 \leq \varrho$, and (b) follows since $\| \tilde{\theta}_t - \theta'' \|_2, \| \theta' - \theta_t \|_2, \| \tilde{\theta}_t - \theta''\|_2  \leq 2(L\rho+\rho_1)$ by triangle inequality because $\tilde{\theta}, \theta', \theta_t \in \balleuc$.

For analyzing $I_2$,  with $Q_{t,i} = (\theta'-\theta_t)^\top \frac{\partial^2 f(\tilde{\theta}_t;\x_i)}{\partial \theta^2}  (\theta'-\theta_t)$, we have
\begin{align*}
| Q_{t,i} | = \left| (\theta'-\theta_t)^\top \frac{\partial^2 f(\tilde{\theta}_t;\x_i)}{\partial \theta^2}  (\theta'-\theta_t) \right| \leq \| \theta'-\theta_t\|_2^2 \left\| \frac{\partial^2 f(\tilde{\theta}_t;\x_i)}{\partial \theta^2} \right\|_2 \leq \frac{c_H \|\theta' - \theta_t \|_2^2}{\sqrt{m}} \leq \frac{4c_H (L\rho+\rho_1)^2}{\sqrt{m}}~,
\end{align*}
since $\| \theta' - \theta_t \|_2 \leq 2(L\rho+\rho_1)$ by triangle inequality because $\theta', \theta_t \in \balleuc$. Further, since $|\tilde{\ell}_i'| \leq \lambda$ by Assumption~\ref{asmp:smooth} (lipschitz, strongly convex, smooth), we have
\begin{align*}
I_2 & =  \tilde{\ell}'_{i}  Q_{t,i} \geq - |\tilde{\ell}'_{i}| | Q_{t,i} | \geq - \frac{4(L\rho+\rho_1)^2 c_H \lambda}{\sqrt{m}}~.
\end{align*}
Putting the lower bounds on $I_1$ and $I_2$ back, we have
\begin{align*}
(\theta'-\theta_t)^\top \frac{\partial^2 \cL(\tilde{\theta}_t)}{\partial \theta^2} (\theta'-\theta_t) 
& \geq a \left\langle \theta'-\theta_t, \nabla f(\theta'';\x_t) \right\rangle^2 - \frac{4 \big( 4a \varrho (L\rho+\rho_1)  +  \lambda\big) c_H (L\rho+\rho_1)^2}{\sqrt{m}}  ~.
\end{align*}
That completes the proof.
\end{proof}

\begin{restatable}[]{lemmma}{auxlemma}
        \label{lemma:sq_sc_exp}
        Under Assumption \ref{asmp:ntk} and $c_{\rg} = \sqrt{8\lambda_{\sq}C_H}$, with probability $\left(1-\frac{2T(L+1)}{m}\right)$ over the randomness of the initialization, the expected loss $\E_{\epsvec}\losstildesq{\theta}$ is $\nu$-strongly convex with respect to $\theta \in \ballfrob$, where $\nu = \mathcal{O}\left(\frac{1}{\sqrt{m}}\right).$
    \end{restatable}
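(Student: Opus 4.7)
The plan is to mirror the Hessian computation in the proof of Lemma~\ref{lemma:sq_sc}, but replace the Rademacher averaging over $S$ draws with an expectation, which collapses the cross terms cleanly. Concretely, from \cref{eq:grad_f_tilde}--\cref{eq:hess_f_tilde} we have $\nabla_\theta \tilde f(\theta;\x_t,\epsvec) = \nabla_\theta f(\theta;\x_t) + \tfrac{c_p}{m^{1/4}}\sum_j e_j\varepsilon_j$ and $\nabla_\theta^2 \tilde f = \nabla_\theta^2 f$, so
\begin{equation*}
\nabla_\theta^2 \losstildesq{\theta} \;=\; \nabla_\theta \tilde f\,\nabla_\theta \tilde f^\top + (\tilde f - y_t)\,\nabla_\theta^2 f.
\end{equation*}
Since $\tilde f$ and its derivatives are smooth and bounded on $\ballfrob$ (Lemma~\ref{lemma:hp_sq_loss_lip_sc}), one may exchange $\nabla_\theta^2$ and $\E_{\epsvec}$.

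First I would take the expectation term by term. Using $\E[\varepsilon_i]=0$ and $\E[\varepsilon_i\varepsilon_j]=\delta_{ij}$,
\begin{align*}
\E_{\epsvec}\!\left[\nabla_\theta\tilde f\,\nabla_\theta\tilde f^\top\right] &= \nabla f\,\nabla f^\top + \frac{c_p^2}{\sqrt{m}}\,I_p,\\
\E_{\epsvec}\!\left[(\tilde f - y_t)\,\nabla_\theta^2 f\right] &= (f - y_t)\,\nabla_\theta^2 f,
\end{align*}
where the cross terms vanish by the zero-mean of $\varepsilon_j$ and the deterministic nature of $\nabla_\theta^2 f$. Hence for any unit $u\in\mathcal{S}^{p-1}$,
\begin{equation*}
u^\top \nabla_\theta^2 \E_{\epsvec}\!\left[\losstildesq{\theta}\right]\, u \;=\; \langle \nabla f(\theta;\x_t), u\rangle^2 + \frac{c_p^2}{\sqrt{m}} + (f(\theta;\x_t)-y_t)\, u^\top \nabla_\theta^2 f(\theta;\x_t)\, u.
\end{equation*}

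Next I would lower bound this quadratic form. The first summand is nonnegative, and for the third I would apply the Hessian spectral norm bound from Lemma~\ref{theo:bound-Hess} (valid on $\ballfrob \subset B_{\rho,\rho_1}^{\spec}$) together with Lipschitzness of $\ell_{\sq}$ in its output to obtain $|f(\theta;\x_t)-y_t|\leq \lambda_{\sq}$ and $\|\nabla_\theta^2 f\|_2 \leq c_H/\sqrt{m}$, yielding
\begin{equation*}
u^\top \nabla_\theta^2 \E_{\epsvec}\!\left[\losstildesq{\theta}\right]\, u \;\geq\; \frac{c_p^2}{\sqrt{m}} - \frac{\lambda_{\sq} c_H}{\sqrt{m}}.
\end{equation*}
Choosing $c_p^2 = 8\lambda_{\sq} C_H$ as in the statement gives a positive lower bound of order $\tfrac{\lambda_{\sq} C_H}{\sqrt{m}}$, so $\nu = \mathcal{O}(1/\sqrt{m})$. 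A union bound of the Hessian/gradient events in Lemma~\ref{theo:bound-Hess} over $t\in[T]$ delivers the claimed probability $(1-\tfrac{2T(L+1)}{m})$.

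The only delicate step I foresee is the interchange of $\nabla_\theta^2$ and $\E_{\epsvec}$; however, since $\epsvec$ takes values in a finite set of size $2^p$, the expectation is literally a finite sum and differentiation under the sum is immediate. The rest is bookkeeping that parallels the proof of Lemma~\ref{lemma:sq_sc}, with the simplification that taking a true expectation removes the need for the Hoeffding/Bernstein concentration that was needed there to control $\tfrac{1}{S}\sum_s \varepsilon_{s,i}\varepsilon_{s,j}$ around its mean.
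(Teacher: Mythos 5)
Your proposal is correct and follows essentially the same route as the paper: compute $\nabla_\theta^2 \losstildesq{\theta}$, take the expectation over $\epsvec$ so that $\E[\epsvec]=0$ kills the cross terms and $\E[\epsvec\epsvec^\top]=I_p$ yields the $\tfrac{c_p^2}{\sqrt{m}}I_p$ spike, then lower bound the quadratic form using $\langle\nabla f,u\rangle^2\geq 0$, the Hessian spectral bound $\|\nabla_\theta^2 f\|_2\leq c_H/\sqrt{m}$ from Lemma~\ref{theo:bound-Hess}, and the $O(1)$ bound on $|f-y_t|$. Your writeup is if anything slightly cleaner than the paper's in two places: you make explicit that $\E_\epsvec[(\tilde f-y_t)\nabla^2 f]=(f-y_t)\nabla^2 f$ (the paper carries an $\ell'_{t,i}$ through the expectation without comment, which is really $\E_\epsvec[\ell'_t]$), and you record the trivial justification for swapping $\nabla_\theta^2$ and $\E_\epsvec$; the paper also carries a spurious $\tfrac{1}{n_t}\sum_{i=1}^{n_t}$ through the display which has no role here, and its final line cites $c_{\rg}^2=2\lambda_{\sq}C_H$ despite the lemma statement fixing $c_{\rg}^2=8\lambda_{\sq}C_H$, but either way the conclusion $\nu=\mathcal{O}(1/\sqrt{m})$ goes through exactly as you note.
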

\begin{proof}
From \cref{eq:output_reg_def} we have a.s.
\begin{align}
\label{eq:grad_f_tilde_ex}
    \nabla_{\theta}\ftilde{\theta} &= \nabla_{\theta}f(\theta;\x) + c_{\rg}\sum_{i = 1}^{p} \frac{e_i\varepsilon_i}{m^{1/4}}~,\\
    \label{eq:hess_f_tilde_ex}
    \nabla_{\theta}^2 \ftilde{\theta} &= \nabla_{\theta}^2 f(\theta;\x).
\end{align}
Next, with $\ell'_{t} = (\ftilde{\theta} - y_{t}) $
\begin{align*}
    \nabla_{\theta} \losstildesq{\theta} &= \frac{1}{n_t}\sum_{i=1}^{n_t} \ell'_{t} \nabla_{\theta}\ftilde{\theta}\\
    \nabla_{\theta}^2\losstildesq{\theta} &= \frac{1}{n_t}\sum_{i=1}^{n_t} \nabla_{\theta}\ftilde{\theta}\nabla_{\theta}\ftilde{\theta}^T + \ell'_{t} \nabla_{\theta}^2 \ftilde{\theta}
\end{align*}
where we have used the fact that $\ell''_{t} = 1$, and $\nabla_{\theta}^2 \ftilde{\theta} = \nabla_{\theta}^2 {f}(\theta;\x_{t})$ from \cref{eq:hess_f_tilde_ex}. Taking expectation with respect to $\varepsilon_i$ and using $\cref{eq:grad_f_tilde_ex}$ we get
\begin{gather*}
    \nabla_{\theta}^2 \E_{\epsvec}\losstildesq{\theta} = \frac{1}{n_t}\sum_{i=1}^{n_t} \E_{\epsvec} \nabla_{\theta}\ftilde{\theta}\nabla_{\theta}\ftilde{\theta}^T + \ell'_{t,i} \nabla_{\theta}^2 {f}(\theta;\x_{t})\\
    = \frac{1}{n_t}\sum_{i=1}^{n_t} \E_{\epsvec}\left( \nabla_{\theta}f(\theta;\x_{t}) + c_{\rg} \sum_{i=1}^{p} \frac{e_i}{m^{1/4}}\varepsilon_i \right) \left( \nabla_{\theta}f(\theta;\x_{t}) + c_{\rg} \sum_{i=1}^{p} \frac{e_i}{m^{1/4}}\varepsilon_i \right)^T
     + \ell'_{t,i} \nabla_{\theta}^2 {f}(\theta;\x_{t})\\
    = \frac{1}{n_t}\sum_{i=1}^{n_t} \E_{\epsvec} \Bigg[\nabla_{\theta}f(\theta;\x_{t})\nabla_{\theta}f(\theta;\x_{t})^T + c_{\rg} \sum_{i=1}^{p} \frac{e_i\nabla_{\theta}f(\theta;\x_{t})^T}{m^{1/4}}\varepsilon_i \\
      + c_{\rg} \sum_{i=1}^{p} \frac{\nabla_{\theta}f(\theta;\x_{t})e_i^T}{m^{1/4}}\varepsilon_i + c_{\rg}^2 \sum_{i=1}^{p}\sum_{j=1}^{p} \frac{e_ie_j^T}{\sqrt{m}}\varepsilon_i\varepsilon_j \Bigg] + \ell'_{t,i} \nabla_{\theta}^2 {f}(\theta;\x_{t})\\
    = \frac{1}{n_t}\sum_{i=1}^{n_t} \nabla_{\theta}f(\theta;\x_{t})\nabla_{\theta}f(\theta;\x_{t})^T + \frac{c_{\rg}^2}{\sqrt{m}}I + \ell'_{t,i} \nabla_{\theta}^2 {f}(\theta;\x_{t})
\end{gather*}
where the last equality follows from the fact that $\E[\varepsilon_i] = 0$ and $\E[\varepsilon_i \varepsilon_j] = 0$ for $i\neq j$ and $\E[\varepsilon_i \varepsilon_j] = 1$ for $i = j$ and therefore $\displaystyle \sum_{i=1}^{p} e_ie_i^T = I$, the identity matrix. Now consider any $u\in\R^p, \|u\| = 1$. We want to show that $u^T \nabla_{\theta}^2 \E_{\epsvec}\losstildesq{\theta} u > 0$.
\begin{align*}
    u^T \nabla_{\theta}^2 \E_{\epsvec}\losstildesq{\theta} u &= \frac{1}{n_t} \sum_{i=1}^{n_t} \langle u,\nabla_{\theta}{f}(\theta;\x_{t})\rangle^2 + \frac{c_{\rg}^2}{\sqrt{m}}\|u\|^2 + \ell'_{t,i} u^T\nabla_{\theta}^2 {f}(\theta;\x_{t})u\\
    &\overset{(a)}{\geq} \frac{c_{\rg}^2}{\sqrt{m}} - \frac{\lambda_{\sq}C_{H}}{\sqrt{m}}\\
    &\overset{(b)}{=} \frac{\lambda_{\sq}C_{H}}{\sqrt{m}}
\end{align*}
where $(a)$ uses the fact that $|\ell_{t}'| \leq \lambda_{\sq}$ and that $\|\nabla_{\theta}^2 {f}(\theta;\x_{t})\|_2 \leq \frac{C_H}{\sqrt{m}}$ holds with probability $\left(1-\frac{2(L+1)}{m}\right)$ from Proposition~\ref{theo:bound-Hess}. Further $(b)$ uses the fact that $c_{\rg}^2 = 2\lambda_{\sq}C_H$.
Finally with a union bound over $i_t \in [n_t], t \in [T]$ and noting that $u$ was arbitrary we conclude that with probability $\left(1-\frac{2T(L+1)}{m}\right)$ over the randomness of initialization $\E_{\epsvec}\losstildesq{\theta}$ is $\nu$-strongly convex with $\nu = \frac{\lambda_{\sq}C_{H}}{\sqrt{m}}$.
\end{proof}

% \section{Neural Online Regression with high probability (Section~\ref{sec:high_prob_regret})}
% \label{sec:app_online_reg_whp}
% \input{arxiv/sec/app/Appendix_high_probability}

\section{Proof of Results for Contextual Bandits (Section~\ref{sec:bandit_regret})}
\label{sec:app_bandit_reg}
%\subsection{Proof of Theorem~\ref{thm:NeuSquareCB_main_regret}}
\NeuSquareCBmainregret*

\begin{proof}
Choosing $\delta = \frac{1}{T}$ and $\gamma = \sqrt{KT/(\Rsq(T)) + \log(2T)}$ in Theorem 1 of \citet{foster2020beyond} we get:
\begin{align*}
    \E\big[\reg(T)\big] &\leq 4\sqrt{KT \Rsq(T)} + 8\sqrt{KT \ln(2T)} + 1
\end{align*}
Using Theorem~\ref{thm:NeuSquareCB_main_regret} we have with probability at least $\left(1-\frac{2LC}{m}\right)$ for some absolute constant $C>0$,
\begin{align*}
    \E\big[\reg(T)\big] &\leq \tilde{\cO}(\sqrt{KT \log T}) + 8\sqrt{KT \ln(2T)} + 1\\
    &\leq \tilde{\cO}(\sqrt{KT})
\end{align*}
\end{proof}
which completes the proof.
%\subsection{Proof of Theorem~\ref{thm:Neur2ig_main_regret}}
\NeuFastCBmainregret*
\begin{proof}
    Choosing $\gamma = \max(\sqrt{KL^*/3\Rkl(T)},10K)$ and using Theorem 1 from \citet{foster2021efficient} we get
    \begin{align*}
        \E\big[\reg(T)\big] &= 40 \sqrt{L^*K \Rkl(T)} + 600K\; \Rkl(T)\\
    \end{align*}
Using Theorem~\ref{thm:NeuFastCB_main_regret} we have with probability at least $\left(1-\frac{2LC}{m}\right)$ for some absolute constant $C>0$,
    \begin{align*}
        \E\big[\reg(T)\big] &\leq \cO(\sqrt{L^*K\log T}) + 600 K \cO(\log T)\\
        &\leq \tilde{\cO}(\sqrt{KL^*} + K)
    \end{align*}
which completes the proof.
\end{proof}

\begin{remark}
    A keen reader might notice that the reduction in \citep{foster2020beyond} requires us to control
$
    \text{R}_{\sq}(T) = \sum_{t=1}^{T}\ell_{\sq}(y_t,\hat{y}_t) - \sum_{t=1}^{T} \ell_{\sq}(y_t,h(\x_t)),
$
but our regret guarantee is for ${\Rsqt}(T)$ in \cref{eq:reg_sq}. However, note that in step-4 of the proof of Theorem~\ref{thm:hp_reg_sq_loss_online} we argued that $\sum_{t=1}^{T}\cL_{\sq}^{(S)}\Big(y_t,\big\{\tilde{f}(\tilde{\theta}^*;\x_t,\epsvec_s)\big\}_{s=1}^{S}\Big) = \cO(1)$ and therefore our regret bound implies $\sum_{t=1}^{T} \ell_{\sq}(y_t,\hat{y}_t) \leq \cO(\log T)$ with $\hat{y}_T = \tilde{f}^{(S)}\big(\theta_t;\x_{t},{\epsvec}^{(1:S)}\big)$ which immediately implies $\text{Reg}_{\sq}(T) \leq \cO(\log T).$ A similar argument follows for \citet{foster2021efficient}.
\end{remark}

\section{Interpolation with Wide Networks}
\label{app:app_interpolation}

In this section, we focus on showing that under suitable assumptions, wide networks can interpolate any given data. 
We assume $\ell$ to be the squared loss throughout this subsection.
\begin{restatable}{theo}{interpolation}
\label{thm:interpolation}
    Under Assumptions~\ref{asmp:init} and \ref{asmp:ntk}, 
    for any $h : \cX \mapsto [0,1]$ and any set of inputs $\x_{t} \in \cX, t \in [T]$, for $f(\theta;\x)$ of the form \eqref{eq:DNN_smooth}, if the width $m = \Omega(T^4)$, there exists $\tilde{\theta} \in \ballfrob$ {with $\rho = \Theta(\frac{\sqrt{T}}{\lambda_0})$ and $\rho_1 = \Theta(1)$,} such that with probability at least $(1 - \frac{2(L+1)}{m})$ we have $f(\tilde{\theta},\x_{t}) = h(\x_{t}), \forall t \in [T]$; 
further, there exists $\bar{\theta} \in \ballfrob$ such that with probability at least $(1 - \frac{2(L+1)}{m})$ we have $f(\bar{\theta},\x_{t}) = y_{t}$, for any set of $y_{t} \in [0,1], t \in [T]$.
\end{restatable}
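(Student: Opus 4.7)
The plan is to produce the interpolator $\tilde{\theta}$ as the limit of gradient descent on the (unnormalized) empirical squared loss $\widehat{\cL}(\theta) := \frac{1}{2}\sum_{t=1}^T (f(\theta;\x_t) - h(\x_t))^2$ initialized at $\theta_0$ satisfying Assumption~\ref{asmp:init}, and then to verify that this limit (i) lies in $\ballfrob$ with $\rho=\Theta(\sqrt{T}/\lambda_0)$ and $\rho_1=\Theta(1)$, and (ii) achieves $\widehat{\cL}(\tilde{\theta}) = 0$, equivalently $f(\tilde{\theta};\x_t)=h(\x_t)$ for all $t$. The argument is by induction on the iteration index $k$: as long as the trajectory $\{\theta_0,\theta_1,\ldots,\theta_k\}\subset\ballfrob$, quantitative PL/smoothness bounds hold, and those bounds in turn force $\theta_{k+1}\in\ballfrob$.

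Inside $\ballfrob$, Assumption~\ref{asmp:ntk} together with the Hessian bound of Lemma~\ref{theo:bound-Hess} and the Jacobian-continuity estimate used in the proof of Lemma~\ref{lemma:QG_sq_loss_lemma}, namely $\|K_{\ntk}(\theta) - K_{\ntk}(\theta_0)\|_2 \lesssim \|\theta-\theta_0\|_2/\sqrt{m}$, yield $\lambda_{\min}(K_{\ntk}(\theta))\geq \lambda_0/2$ provided $m$ is large enough relative to $T$ and $\lambda_0$; the choice $m=\Omega(T^4)$ suffices. This NTK lower bound converts to the PL inequality $\|\nabla\widehat{\cL}(\theta)\|_2^2 \geq \lambda_0\,\widehat{\cL}(\theta)$. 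Combined with local $\beta$-smoothness from Lemma~\ref{theo:smoothnes} (with $\beta=O(1)$), gradient descent with step size $\eta=1/\beta$ contracts the loss geometrically, $\widehat{\cL}(\theta_k)\leq (1-\lambda_0/\beta)^k\,\widehat{\cL}(\theta_0)$, while Lemma~\ref{lem:BoundTotalLoss} gives $\widehat{\cL}(\theta_0)=O(T)$ with high probability since each $h(\x_t)\in[0,1]$.

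A standard telescoping argument, using $\|\nabla\widehat{\cL}(\theta_k)\|_2\leq \sqrt{2\beta\,\widehat{\cL}(\theta_k)}$ (a consequence of smoothness plus nonnegativity of the loss), then bounds the total displacement:
\[
\|\theta_k-\theta_0\|_2 \;\leq\; \sum_{j<k}\eta\,\|\nabla\widehat{\cL}(\theta_j)\|_2 \;\leq\; \frac{2\sqrt{2\beta}}{\lambda_0}\sqrt{\widehat{\cL}(\theta_0)} \;=\; O\!\left(\sqrt{T}/\lambda_0\right).
\]
This closes the induction: each layerwise Frobenius distance is controlled by the total Euclidean distance and thus by $\rho=\Theta(\sqrt{T}/\lambda_0)$, while the last-layer radius $\rho_1=\Theta(1)$ is maintained because $\nabla_{\v}f$ has magnitude $O(1)$ at initialization and its growth inside the ball is benign in $m$. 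Since the per-iteration steps are geometrically summable, the iterates form a Cauchy sequence converging to some $\tilde{\theta}\in\ballfrob$ with $\widehat{\cL}(\tilde{\theta})=0$, establishing the first claim.

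The main obstacle will be making this self-referential loop close: the PL constant, the smoothness constant, and the NTK perturbation bound all require $\theta\in\ballfrob$ with specific radii, while those radii are themselves chosen to accommodate the distance bound derived from the same estimates; the width requirement $m=\Omega(T^4)$ is precisely what makes the implicit inequalities on $\rho,\rho_1,m,T,\lambda_0$ consistent. The second claim---interpolation of any $y_t\in[0,1]$---is immediate: the argument uses the targets only through $\widehat{\cL}(\theta_0)=O(T)$ and through $|y_t|\leq 1$, both of which hold verbatim with $y_t$ replacing $h(\x_t)$, so running the same gradient descent on $\frac{1}{2}\sum_t(f(\theta;\x_t)-y_t)^2$ produces $\bar{\theta}\in\ballfrob$ with $f(\bar{\theta};\x_t)=y_t$ for all $t$.
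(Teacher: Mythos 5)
Your overall strategy is the same as the paper's: run gradient descent on the cumulative squared loss from $\theta_0$, use the NTK lower bound inside $\ballfrob$ to get a PL inequality, combine with local smoothness to obtain geometric loss decay, telescope the step norms to control $\|\theta_k - \theta_0\|_2 = O(\sqrt{T}/\lambda_0)$, and close the induction that the iterates never leave the ball. This is precisely the structure of Lemma~\ref{lemm:NTK_step} and Theorem~\ref{prop:conv-NTK-1} in the paper, and the reduction of the $h$-interpolation claim to the $y_t$-interpolation claim is also identical.

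The one genuine gap is your handling of the last-layer radius $\rho_1 = \Theta(1)$. You claim it is ``maintained because $\nabla_{\v}f$ has magnitude $O(1)$ at initialization and its growth inside the ball is benign in $m$,'' but this is not enough. The last-layer gradient $\nabla_\v f = m^{-1/2}\phi^{(L)}(\cdot)$ has norm $O(1)$, which is the \emph{same} order as the gradients with respect to the hidden-layer weights; there is nothing ``benign'' about it relative to the other coordinates. Consequently, if gradient descent is run on all parameters including $\v$, the same telescoping estimate you apply to the full parameter vector gives $\|\v_k - \v_0\|_2 = O(\sqrt{T}/\lambda_0)$, which does not fit inside $\rho_1 = \Theta(1)$. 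The paper avoids this by \emph{freezing the last layer}: the gradient descent in its proof is performed only on $W^{(1)},\dots,W^{(L)}$ with $\v = \v_0$ fixed (explicitly, the Jacobian in \eqref{eq:jacobian} omits the last-layer block, and the convergence theorem is proved with $\rho_1 = 0$, which is then observed to imply the result for $\rho_1 = \Theta(1)$ since $B^{\frob}_{\rho,0}(\theta_0)\subset B^{\frob}_{\rho,\Theta(1)}(\theta_0)$). To repair your argument you should adopt the same device: run GD only on the hidden layers, noting that the NTK restricted to those coordinates is still positive definite (Assumption~\ref{asmp:ntk} is stated for exactly that restricted NTK), and then the bound $\rho_1 = 0 \Rightarrow \rho_1 = \Theta(1)$ is immediate.

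A secondary, cosmetic remark: be careful with whether $\beta$ denotes the per-sample smoothness (as in Lemma~\ref{theo:smoothnes}) or the smoothness of the cumulative loss $\widehat{\cL}=\sum_t\ell_t$. The paper uses the per-sample $\beta$ and then a step size $\eta\lesssim 1/(\beta T)$, which is what makes the descent lemma and the final $O(\sqrt{T}/\lambda_0)$ displacement come out consistently after the $\eta$-factors cancel in the geometric sum. Your intermediate expression $\frac{2\sqrt{2\beta}}{\lambda_0}\sqrt{\widehat{\cL}(\theta_0)}$ only yields $O(\sqrt{T}/\lambda_0)$ if $\beta$ is interpreted as the per-sample constant, in which case the advertised step size $\eta=1/\beta$ is too large by a factor of $T$ and the single-step descent inequality breaks; use $\eta = \Theta(1/(\beta T))$ and track the cancellation as in the paper.
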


% \begin{restatable}{theo}{interpolation2}
% \label{thm:interpolation}
%     Under Assumptions~\ref{asmp:real} and \ref{asmp:ntk}, 
%     for any $h : \cX \mapsto [0,1]$ and any set of $N = \sum_{t=1}^{T} n_t$ inputs $\x_{t,i_t} \in \cX, t \in [T], i_t \in [n_t]$, for $f(\theta;\x)$ of the form \eqref{eq:DNN_smooth}, if the width $m = \Omega(N^4)$, there exists $\tilde{\theta} \in B_{\rho, \rho_1}(\theta_0)$ \abedit{with $\rho_1=0$, i.e., no change in last layer so that $\v = \v_0$ at initialization,} such that with probability at least $(1 - \frac{2(L+1)}{m})$ we have $f(\tilde{\theta},\x_{t,i_t}) = h(\x_{t,i_t}), \forall t \in [T], i_t \in [n_t]$; 
% further, there exists $\bar{\theta} \in B_{\rho, \rho_1}(\theta_0)$ such that with probability at least $(1 - \frac{2(L+1)}{m})$ we have $f(\bar{\theta},\x_{t,i_t}) = y_{t,i_t}$ for any set of $y_{t,i_t} \in [0,1], t \in [T], i_t \in [n_t]$.
% \end{restatable}

We start with an outline of the overall proof, which has four technical steps, some of which follow from direct observations, assumptions, or existing results (especially on the NTK), and some require new proofs. 
\begin{enumerate}
\item It is sufficient to prove the interpolation result showing $f(\bar{\theta}, \x_{t}) = y_{t}$ for any $y_{t}$ as the result for the interpolation  $f(\bar{\theta}, \x_{t}) = h(\x_{t})$ follow as a special case with $y_{t} = h(\x_{t})$. {Further we consider $\rho_1 = 0$ which immediately implies the result for $\rho_1 = \Theta(1)$.}
\item The interpolation analysis will utilize the fact that the NTK is positive definite at initialization. For simplicity, Assumption~\ref{asmp:ntk} (positive definite NTK) takes care of this aspect for Theorem~\ref{thm:interpolation}, with $\lambda_0 > 0$ being the lower bound to minimum eigen-value of the NTK.
\item To show existence of $\bar{\theta}$ which interpolates $f(\bar{\theta}, \x_{t}) = y_{t}, t \in [T]$, we in fact show that gradient descent on least squares loss with suitably small step size $\eta$ and suitably large width $m$ will have geometrically decreasing cumulative square loss. The decreasing loss along with the fact that the sequence of iterates $\theta_t \in \ballfrob$ {with $\rho = \Theta(\frac{\sqrt{T}}{\lambda_0}), \rho_1=0$}, i.e., stay within the closed ball, implies existence of $\bar{\theta} \in \ballfrob$ which interpolates the data.
\item Two key properties need to be maintained as the gradient descent iterations proceed: first, as discussed above, the iterates $\theta_t \in \ballfrob$ {with $\rho = \Theta(\frac{\sqrt{T}}{\lambda_0}),\rho_1=0$}, i.e., the iterates stay within the ball; and second, the NTK corresponding to all $\theta_t$ need to stay positive definite. As we will show, these two properties are coupled, and the geometric decrease of the loss helps in the analysis of both properties. 
\end{enumerate}

% For our analysis, we focus on the cumulative square loss over all $N = \sum_{t=1}^T n_t$:
% \begin{align}
%     L(\theta) = \sum_{t=1}^T \sum_{i_t=1}^{n_t} (y_{t,i_t} - f(\theta;\x_{t,i_t}))^2 := \sum_{n=1}^N (y_n - f(\theta;\x_n))^2~,
% \end{align}
% where the $(\x_n,y_n), n \in [N]$ is introduced for convenience. 
We define 
\begin{align}
\label{eq:interpolation_loss}
    \hatcL(\theta) := \sum_{t=1}^T \ell(y_t,f(\theta;\x_t))~,
\end{align}
% \begin{align}
% \hatcL(\bar{\theta})=0 \quad \Rightarrow \quad \sum_{t=1}^T \ell(\bar{\theta};\x_t)=0
% \end{align}
% where $\cL(\bar{\theta};\x_t) = \frac{1}{n_t} \sum_{i_t=1}^{n_t} (y_{t,i_t} - f(\theta;\x_{i_t}))^2$, i.e., as long as $\bar{\theta}$ is interpolating all the points, so that $f(\bar{\theta},\x_{t,i_t}) = y_{t,i_t}$, both losses are going to be 0. 

% \abcomment{Have to be careful with the notation: using $L$ for cumulative, $\cL$ for normalized (not good notation!), and all the results (including contsnats $\varrho,\beta$, etc.) in ICLR'23 are for $\cL$. Avoid $L$ for the loss, as $L$ is the depth.}

\begin{restatable}[\textbf{NTK condition per step}]{lemm}{lemmNTKstep}
\label{lemm:NTK_step}
Under Assumptions~\ref{asmp:ntk} and \ref{asmp:init}, for the gradient descent update $\theta_{t+1} = \theta_t - \eta_t \nabla L(\theta_t)$ for the cumulative square loss $\hatcL(\theta) = \sum_{n=1}^T (y_n - f(\theta;\x_n))^2$ with $\theta_t, \theta_{t+1} \in \ballfrob$ {with $\rho = \Theta(\frac{\sqrt{T}}{\lambda_0}),\rho_1=0$},  with probability at least $\left(1-\frac{2(L+1)}{m}\right)$ over the initialization of model,
\begin{equation}
\begin{split}
\lambda_{\min}(K_{\ntk}(\theta_{t+1})) &\geq \lambda_{\min}(K_{\ntk}(
\theta_t))
- 4c_H \varrho^2  \frac{T}{\sqrt{m}} \eta_t \sqrt{\hatcL(\theta_t)} ~,
\end{split}    
\end{equation}
where $c_H$ and $\varrho$ are as in Lemma~\ref{theo:bound-Hess}. 
\end{restatable}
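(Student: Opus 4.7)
The plan is to compare the NTK at consecutive iterates via a Weyl-type eigenvalue perturbation argument, where the perturbation is controlled by the Jacobian difference, which in turn is controlled by the Hessian spectral norm bound and the size of the GD step.

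Concretely, let $J(\theta)\in\R^{T\times p}$ denote the Jacobian whose $n$-th row is $\nabla_\theta f(\theta;\x_n)^\top$, so that $K_{\ntk}(\theta)=J(\theta)J(\theta)^\top$. The first step is to apply Weyl's inequality to get
\begin{equation*}
\lambda_{\min}(K_{\ntk}(\theta_{t+1})) \;\geq\; \lambda_{\min}(K_{\ntk}(\theta_t)) \;-\; \|K_{\ntk}(\theta_{t+1})-K_{\ntk}(\theta_t)\|_2,
\end{equation*}
so the entire task reduces to upper bounding $\|K_{\ntk}(\theta_{t+1})-K_{\ntk}(\theta_t)\|_2$. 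Using the identity $AA^\top - BB^\top = (A-B)A^\top + B(A-B)^\top$ with $A=J(\theta_{t+1})$, $B=J(\theta_t)$, we obtain
\begin{equation*}
\|K_{\ntk}(\theta_{t+1})-K_{\ntk}(\theta_t)\|_2 \;\leq\; 2\,\max\bigl(\|J(\theta_{t+1})\|_2,\|J(\theta_t)\|_2\bigr)\;\|J(\theta_{t+1})-J(\theta_t)\|_2.
\end{equation*}

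Next I would control each factor on the right. For the operator norm of the Jacobian, Lemma~\ref{theo:bound-Hess} gives $\|\nabla_\theta f(\theta;\x_n)\|_2\leq\varrho$ for every $\theta\in\ballfrob$ (hence for $\theta_t,\theta_{t+1}$) with probability at least $1-\tfrac{2(L+1)}{m}$, so $\|J(\theta)\|_2\leq\|J(\theta)\|_F\leq\sqrt{T}\,\varrho$. For the Jacobian difference, since the segment from $\theta_t$ to $\theta_{t+1}$ lies entirely in the convex set $\ballfrob$, the Hessian bound $\|\nabla_\theta^2 f(\theta;\x_n)\|_2\leq c_H/\sqrt{m}$ from the same lemma yields, via the fundamental theorem of calculus applied row-wise,
\begin{equation*}
\|J(\theta_{t+1})-J(\theta_t)\|_2 \;\leq\; \|J(\theta_{t+1})-J(\theta_t)\|_F \;\leq\; \sqrt{T}\,\frac{c_H}{\sqrt{m}}\,\|\theta_{t+1}-\theta_t\|_2.
\end{equation*}

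Finally, I would plug in the gradient descent update $\theta_{t+1}-\theta_t = -\eta_t\nabla\hatcL(\theta_t)$, and apply Lemma~\ref{cor:total-bound} to get $\|\nabla\hatcL(\theta_t)\|_2 \leq 2\sqrt{\hatcL(\theta_t)}\,\varrho$, so that $\|\theta_{t+1}-\theta_t\|_2\leq 2\eta_t\varrho\sqrt{\hatcL(\theta_t)}$. Chaining the three bounds produces the factor $2\cdot\sqrt{T}\varrho\cdot\sqrt{T}(c_H/\sqrt{m})\cdot 2\eta_t\varrho\sqrt{\hatcL(\theta_t)} = 4c_H\varrho^2\,T\,\eta_t\sqrt{\hatcL(\theta_t)}/\sqrt{m}$, which combined with Weyl gives the claimed inequality. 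The only nontrivial aspect is ensuring that the high-probability events from Lemma~\ref{theo:bound-Hess} (both the gradient bound and the Hessian bound) hold simultaneously for all relevant points in $\ballfrob$ and that the segment between $\theta_t$ and $\theta_{t+1}$ stays inside the ball; both follow from the hypothesis that $\theta_t,\theta_{t+1}\in\ballfrob$ together with the convexity of the Frobenius-norm ball, so there is no genuine obstacle beyond careful bookkeeping of the constants.
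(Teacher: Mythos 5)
Your proposal matches the paper's proof essentially step for step: the same identity for $J_{t+1}J_{t+1}^\top - J_tJ_t^\top$ (up to a sign/ordering of the rank-one correction, which is immaterial after the triangle inequality), the same bound $\|J(\theta)\|_2\le\sqrt{T}\varrho$, the same mean-value/FTC argument via the Hessian bound to control $\|J(\theta_{t+1})-J(\theta_t)\|_F$, the same use of the GD update together with $\|\nabla\hatcL(\theta_t)\|_2\le 2\varrho\sqrt{\hatcL(\theta_t)}$, and the same Weyl-type eigenvalue comparison at the end. The only cosmetic difference is writing $2\max$ in place of the paper's sum of operator norms, which yields identical constants here.
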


%\lemmNTKstep*

\proof
Observe that $K_{\ntk}(\theta)=J(\theta) J(\theta)^\top$, where the Jacobian
\begin{align}
\label{eq:jacobian}
J(\theta)=
\begin{bmatrix}
\left(\frac{\partial f(\theta;x_1)}{\partial W^{(1)}}\right)^\top&\dots& \left(\frac{\partial f(\theta;x_1)}{\partial W^{(L)}}\right)^\top\\
\vdots &\ddots&\vdots\\
\left(\frac{\partial f(\theta;x_n)}{\partial W^{(1)}}\right)^\top&\dots& \left(\frac{\partial f(\theta;x_n)}{\partial W^{(L)}}\right)^\top
\end{bmatrix}\in\R^{T\times (md +Lm^2)}~,
\end{align}
where the parameter matrices are vectorized {and the last layer $W^{(L+1)} = \v_0$ is ignored since we will not be doing gradient descent on the last layer and its kept fixed}.
Then, the spectral norm of the change in the NTK is given by
\begin{equation}
\label{eq:lower_NTK_aux1}
\begin{aligned}
\norm{K_{\ntk}(\theta_{t+1})-K_{\ntk}(\theta_t)}_2
    &=\norm{J(\theta_{t+1}) J(\theta_{t+1})^\top - J(\theta_t) J(\theta_t)^\top}_2\\
    &=\norm{J(\theta_{t+1}) (J(\theta_{t+1})-J(\theta_t))^\top - (J(\theta_{t+1}) - J(\theta_t)) J(\theta_t)^\top}_2\\
    &\leq (\norm{J(\theta_{t+1})}_2 + \norm{J(\theta_t)}_2) \norm{J(\theta_{t+1})-J(\theta_t)}_2~.
    \end{aligned}
\end{equation}
Now, for any $\theta\in \ballfrob$, 
$$
\norm{J(\theta)}_2^2\leq \norm{J(\theta)}_F^2 = \sum_{n=1}^N\norm{\frac{\partial f(\theta;x_n)}{\partial \theta}}_2^2 \overset{(a)}{\leq} T \varrho^2
$$
where (a) follows from by Lemma~\ref{theo:bound-Hess}. Assuming $\theta_t, \theta_{t+1} \in \ballfrob$, we have $\norm{J(\theta_t)}_2, \norm{J(\theta_{t+1})}_2 \leq \sqrt{T} \varrho$, so that from \eqref{eq:lower_NTK_aux1} we get
\begin{equation}
\label{eq:lower_NTK_aux2}
\begin{aligned}
\norm{K_{\ntk}(\theta_{t+1})-K_{\ntk}(\theta_t)}_2 \leq 2\sqrt{T} \varrho \norm{J(\theta_{t+1})-J(\theta_t)}_2~.
    \end{aligned}
\end{equation}
Now, note that
\begin{align}
\norm{J(\theta_{t+1})-J(\theta_t)}_2
& \leq \norm{J(\theta_{t+1})-J(\theta_t)}_F \\
& \leq \sqrt{\sum_{n=1}^T \norm{\frac{\partial f(\theta_{t+1};x_n)}{\partial \theta} - \frac{\partial f(\theta_t;x_n)}{\partial \theta}}_2^2}\\
    &\overset{(a)}{\leq} \sqrt{T} \sup_{\tilde{\theta}_t, i} \norm{\frac{\partial^2 f(\tilde{\theta}_t;x_i)}{\partial \theta^2}}_2 \norm{\theta_{t+1}-\theta_t}_2\nonumber \\
    \label{eq:jacobian_diff_bound}
    &\overset{(b)}{\leq}\frac{c_H \sqrt{T} }{\sqrt{m}}  \norm{\theta_{t+1}-\theta_t}_2\\
    &\overset{(c)}{=}  \frac{c_H \sqrt{T} }{\sqrt{{m}}} \eta_t \norm{ \nabla \hatcL(\theta_t)}_2\nonumber \\
    & \overset{(d)}{\leq} \frac{2 c_H \sqrt{T}  \varrho}{\sqrt{m}} \eta_t \sqrt{\hatcL(\theta_t)}~,\nonumber
\end{align}
where (a) follows from the mean-value theorem with $\tilde{\theta}_t\in\{(1-\xi) \theta_t+\xi \theta_{t+1} \text{for some } \xi\in[0,1]\}$, (b) follows from Lemma~\ref{theo:bound-Hess} since $\tilde{\theta}\in \ballfrob$, (c) follows from the gradient descent update, and (d) follows from Lemma~\ref{cor:total-bound}.
Then, using ~\eqref{eq:lower_NTK_aux2}, we have
\begin{equation}
\label{eq:lower_NTK_aux3}    
\norm{K_{\ntk}(\theta_{t+1})-K_{\ntk}(\theta_t)}_2
\leq 4 c_H \varrho^2  \frac{T}{\sqrt{m}} \eta_t \sqrt{\hatcL(\theta_t)}~.
\end{equation}
Then, by triangle inequality 
\begin{align*}
\lambda_{\min}(K_{\ntk}(\theta_{t+1})) 
& \geq \lambda_{\min}(K_{\ntk}(\theta_t)) - \norm{K_{\ntk}(\theta_{t+1})-K_{\ntk}(\theta_t)}_2 \\
&\overset{(a)}{\geq} \lambda_{\min}(K_{\ntk}(\theta_t)) -  4 c_H \varrho^2  \frac{T}{\sqrt{m}} \eta_t \sqrt{\hatcL(\theta_t)}~,
\end{align*}
where (a) follows from~\eqref{eq:lower_NTK_aux3}. That completes the proof.  \qed

\begin{restatable}[\textbf{Geometric convergence: Unknown Desired Loss}]{theo}{theoNTKconv}
\label{thm:conv-NTK} 
Under Assumptions~\ref{asmp:init} and \ref{asmp:ntk}, consider the gradient descent update $\theta_{t+1} = \theta_t - \eta_t \nabla \hatcL(\theta_t)$ for the cumulative loss $\hatcL(\theta) = \sum_{n=1}^T (y_i - f(\theta;\x_i))^2$ with step size 
\begin{align*}
\eta_t  = \eta < \min\left( \frac{1}{\beta N} , \frac{1}{\lambda_0} \right) ~,
\end{align*}
with $\beta$ as in Lemma~\ref{theo:smoothnes}.
Then, choosing width
$m=\Omega\left(\frac{T^{3}}{\lambda_0^4}\right)$ and depth $L=O(1)$,  with probability at least $\left(1-\frac{2(L+1)}{m}\right)$, we have $\{\theta_t\}_{t} \subset \ballfrob$ with $\rho = \Theta(\frac{\sqrt{T}}{\lambda_0})$ {$\rho_1 =0$}, and for every $t$,
\begin{equation}
    \hatcL(\theta_{t+1}) \leq \left(1- \eta \lambda_0 \right)^t \hatcL(\theta_0)~.
    \label{eq:conv-NTK}
\end{equation}
\label{prop:conv-NTK-1}
\end{restatable}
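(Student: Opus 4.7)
\textbf{Proof plan for Theorem~\ref{thm:conv-NTK}.} The plan is a coupled induction on the iteration index that simultaneously tracks three invariants: for every $s \leq t$,
\begin{enumerate}[label=(\roman*),leftmargin=2em]
\item $\theta_s \in \ballfrob$ with $\rho = \Theta(\sqrt{T}/\lambda_0)$ and $\rho_1 = 0$,
\item $\lambda_{\min}(K_{\ntk}(\theta_s)) \geq \lambda_0/2$,
\item $\hatcL(\theta_{s+1}) \leq (1-\eta\lambda_0)\,\hatcL(\theta_s)$, hence $\hatcL(\theta_s) \leq (1-\eta\lambda_0)^s \hatcL(\theta_0)$.
\end{enumerate}
The base case holds by Assumption~\ref{asmp:ntk} for (ii) and trivially for (i); (iii) at $s=0$ will follow from the $s=0$ inductive step.

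First I will derive (iii) from (i) and (ii). Since $\theta_t,\theta_{t+1}\in\ballfrob$ by the step-size choice and (i), the smoothness estimate in Lemma~\ref{theo:smoothnes} with $\eta \leq 1/\beta$ yields $\hatcL(\theta_{t+1}) \leq \hatcL(\theta_t) - (\eta/2)\|\nabla \hatcL(\theta_t)\|_2^2$. Writing $\nabla \hatcL(\theta_t) = -2 J(\theta_t)^\top (y - f(\theta_t))$ gives $\|\nabla \hatcL(\theta_t)\|_2^2 = 4(y-f)^\top K_{\ntk}(\theta_t) (y-f) \geq 4\lambda_{\min}(K_{\ntk}(\theta_t)) \hatcL(\theta_t) \geq 2\lambda_0 \hatcL(\theta_t)$ using (ii). Together these imply (iii).

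Next I will derive (i) from (iii). Using telescoping with Lemma~\ref{cor:total-bound}, which gives $\|\nabla \hatcL(\theta_s)\|_2 \leq 2\varrho\sqrt{\hatcL(\theta_s)}$ on $\ballfrob$,
\begin{align*}
\|\theta_{t+1} - \theta_0\|_2 \;\leq\; \eta \sum_{s=0}^{t} \|\nabla \hatcL(\theta_s)\|_2 \;\leq\; 2\eta\varrho \sqrt{\hatcL(\theta_0)}\sum_{s=0}^{t}(1-\eta\lambda_0)^{s/2} \;\leq\; \frac{4\varrho\sqrt{\hatcL(\theta_0)}}{\lambda_0},
\end{align*}
where the last step uses $1-\sqrt{1-x}\geq x/2$ for $x\in[0,1]$. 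Since Lemma~\ref{lem:BoundTotalLoss} gives $\hatcL(\theta_0) = \mathcal{O}(T)$, this yields $\|\theta_{t+1} - \theta_0\|_2 = \mathcal{O}(\sqrt{T}/\lambda_0)$, and a layerwise bound of the same order immediately follows since the last layer is frozen ($\rho_1 = 0$). Choosing $\rho$ a sufficiently large constant multiple of $\sqrt{T}/\lambda_0$ then secures (i).

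Finally I will verify (ii) at step $t+1$ from (iii) using Lemma~\ref{lemm:NTK_step} summed telescopically:
\begin{align*}
\lambda_{\min}(K_{\ntk}(\theta_{t+1})) \;\geq\; \lambda_0 - 4c_H\varrho^2 \frac{T}{\sqrt{m}} \eta \sum_{s=0}^{t}\sqrt{\hatcL(\theta_s)} \;\geq\; \lambda_0 - \frac{8c_H\varrho^2 T \sqrt{\hatcL(\theta_0)}}{\sqrt{m}\,\lambda_0}.
\end{align*}
Plugging in $\sqrt{\hatcL(\theta_0)} = \mathcal{O}(\sqrt{T})$, the correction term is $\mathcal{O}(T^{3/2}/(\sqrt{m}\lambda_0))$, which is at most $\lambda_0/2$ precisely when $m = \Omega(T^3/\lambda_0^4)$, exactly the width requirement in the statement. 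This closes the induction, and the claimed geometric rate \eqref{eq:conv-NTK} is just invariant (iii) unrolled.

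The main delicate point is the \emph{bootstrap} between (i) and (ii): the smoothness-based loss decrease requires the iterates to lie in the ball (so that Lemmas~\ref{theo:bound-Hess}--\ref{theo:smoothnes} apply), while the very fact that iterates stay in the ball relies on the geometric decay, which in turn needs NTK positive definiteness. Handling all three invariants jointly in a single induction — and choosing $\rho$ and $m$ to be exactly large enough that the induction closes — is the key technical hurdle. The rest reduces to routine telescoping and the pre-established lemmas.
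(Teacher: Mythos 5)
Your proposal is correct and follows essentially the same approach as the paper: a coupled induction that simultaneously maintains (i) the iterates stay in $\ballfrob$, (ii) $\lambda_{\min}(K_{\ntk}(\theta_t)) \ge \lambda_0/2$ via telescoping Lemma~\ref{lemm:NTK_step}, and (iii) geometric loss decay from smoothness plus the NTK lower bound, with the same telescoping bounds $\|\theta_{t+1}-\theta_0\|_2 \le 4\varrho\sqrt{\hatcL(\theta_0)}/\lambda_0$ and $\hatcL(\theta_0)=\cO(T)$ closing the loop, and the same identification of the width threshold $m=\Omega(T^3/\lambda_0^4)$ from the NTK perturbation term. The paper structures this as a one-step base case followed by an induction step rather than the fully symmetric three-invariant loop you describe, but the content, the lemmas invoked, and the parameter choices are identical.
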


% \abcomment{Avoid $L$ for the loss, as $L$ is the depth.}

%\theoNTKconv*
\proof First note that with probability at least $\left(1-\frac{2(L+1)}{m}\right)$, all the bounds in Theorem~\ref{theo:bound-Hess}, Lemma~\ref{lem:BoundTotalLoss}, and Corollary~\ref{cor:total-bound} hold, and so 
does Lemma~\ref{lemm:NTK_step}, since its proof uses these bounds. 

Further, we note that for $L=O(1)$ we obtain the constant $c_H = O((1+\rho_1)(1+(4\nu_0+\frac{\rho}{\sqrt{m}})^{O(1)}))=O(1)$ following Lemma~\ref{theo:bound-Hess} since $\rho_1=0$, where the last equality follows from the fact that {$m=\Omega(\frac{T^4}{\lambda_0^2})> \Omega(\frac{T}{\lambda_0^2})$} and $\rho=\Theta(\frac{\sqrt{T}}{\lambda_0})$.
We will use {$c_H \leq  c_2$} for some suitable constant $c_2>0$. 

We also note that $\varrho^2=O((1+\frac{1}{m}(1+\rho_1)^2(1+\gamma^{2(L+1)})$. Then, using the fact that, $L=O(1)$, $\rho=\Theta(\frac{\sqrt{T}}{\lambda_0}), \rho_1=0$ and $m=\Omega(\frac{T^4}{\lambda_0^2})> \Omega(\frac{T}{\lambda_0^2})$, we obtain that $\varrho^2=O(1)$. We will use 
$\varrho^2 \leq  c_3$ for some suitable constant $c_3>0$.

Finally, we observe that $\bar{c}_{\rho_1,\gamma}=O((1+\rho_1^2)(1+\gamma^L))$ (see definition in Lemma~\ref{lem:BoundTotalLoss}). Then, taking the definition of $\beta$ (as in Lemma~\ref{theo:smoothnes}), we have that $\beta=b\varrho^2+\frac{1}{\sqrt{m}}O(\poly(L)(1+\gamma^{3L})(1+\rho_1^2)$. Again, in a similar fashion as in the analysis of the expressions $c_H$ and $\varrho$, we have that in our problem setting {$\beta=O(1)$ since $\rho_1=0$}. We will use 
{$\beta \leq  c_4 $} for some suitable constant $c_4>0$.
%\abcomment{$\beta$ should stay a constant, dependence on $\frac{\rho_1^2}{\sqrt{m}}$; also, $\beta$ is the smoothness of the normalized loss}

We now proceed with the proof by induction. First, for $t=1$, we show that, based on the choice of the step size, $\theta_{1} \in \ballfrob$ {for $\rho_1=0$}. To see this, note that
\begin{align*}
\| \theta_{1} - \theta_0 \|_2 & = \eta  \| \nabla \hatcL(\theta_{0})\|_2\overset{(a)}{\leq}2 \varrho \eta    \sqrt{\hatcL(\theta_0)} \\
& \overset{(b)}{\leq} 2\varrho \eta \sqrt{T \bar{c}_{0,4\nu_0}}= 2\varrho \eta \lambda_0 \frac{\sqrt{T \bar{c}_{0,4\nu_0}}}{\lambda_0}\\
& \leq 2\varrho \sqrt{\bar{c}_{0,4\nu_0}}\frac{\sqrt{T} }{\lambda_0} \overset{(c)}{\leq} \rho~,
\end{align*}
where (a) follows from Lemma~\ref{cor:total-bound}, 
(b) from Lemma~\ref{lem:BoundTotalLoss}, (c) follows since $\rho = \Theta(\frac{\sqrt{T}}{\lambda_0})$ and $\rho_1=0$ so the last layer is not getting updated. Hence, $\theta_1 \in \ballfrob$. We now take the smoothness property from Lemma~\ref{theo:smoothnes}, %with $\beta=O((1+\frac{\sqrt{T}}{\lambda_0}))$ given that $L=O(1)$, 
and further obtain
\begin{equation}
\begin{aligned}
\hatcL(\theta_{1})-\hatcL(\theta_{0})&\leq \langle \theta_{1}-\theta_0,\nabla_{\theta}\hatcL(\theta_0)\rangle  +\frac{\beta{T}}{2}\norm{\theta_{1}-\theta_0}^2_2\\
& \overset{(a)}{\leq} -\eta\norm{\nabla_\theta \hatcL(\theta_0)}_2^2 +\frac{\beta\eta^2{T}}{2}\norm{\nabla_\theta \hatcL(\theta_0)}^2_2\\
&= - \eta \left(1 - \frac{\beta\eta{T}}{2} \right)\norm{\nabla_\theta \hatcL(\theta_t)}^2_2\\
&\overset{(b)}{\leq} - \frac{\eta}{2}\norm{\nabla_\theta \hatcL(\theta_0)}^2_2 \overset{(c)}{\leq} - \frac{\eta}{2} \ell_0'^\top K_{\ntk}(\theta_0)\ell_0' \\
&\overset{(d)}{\leq} - \frac{\eta}{2} ~\lambda_{\min}(K_{\ntk}(\theta_0))~ \|\ell'_0 \|_2^2 
%&\overset{(c)}{\leq} - \frac{\eta_t}{2} \left( \lambda_0 - \tilde{c} \frac{N}{\sqrt{m}} \sum_{\tau=1}^t \eta_{\tau} \right) \|\ell'_t \|_2^2 
\overset{(e)}{\leq} - \frac{\eta}{2}  \lambda_0 4 \hatcL(\theta_0) \\
&\leq - \eta\lambda_0 \hatcL(\theta_0)\\
\implies \qquad \hatcL(\theta_{1})&\leq \left(1- \eta  \lambda_0 \right) \hatcL(\theta_0),
\end{aligned}
\end{equation}
where (a) follows from the gradient descent update;  
(b) follows from our choice of step-size $\eta  \leq \frac{1}{\beta N}$ so that $-(1-\frac{\beta \eta {T}}{2}) \leq -\frac{1}{2}$; 
(c) follows from the following property valid for any iterate $\theta_t\in\R^p$, 
\begin{align*}
\left\| \nabla_\theta \hatcL(\theta_t) \right\|_2^2 = \left\| \sum_{n=1}^N \ell'_{t,n} \nabla_\theta f(\theta_t;\x_n) \right\|_2^2 =  \sum_{n=1}^N \sum_{n'=1}^N \ell'_{t,n} \ell'_{t,n'} \langle \nabla_{\theta} f(\theta_t;\x_n), \nabla_{\theta} f(\theta_t;\x_{n'}) \rangle = \ell_t'^T K_{\ntk}(\theta_t) \ell_t'~,
\end{align*}
where $\ell_t' := [\ell'_{t,n}] \in \R^N$, with $\ell'_{t,n} = -2(y_i-f(\theta_t;\x_n))$ and  $\ell_{t,n} = (y_n - f(\theta_t;\x_n))^2$; 
(d)  follows from the definition of minimum eigenvalue;
and (e) follows from the following property valid for any iterate $\theta_t\in\R^p$,
\begin{equation}
\norm{\ell'_t}^2_2 =\sum_{n=1}^N \ell'^2_{t,n} = 4\sum_{n=1}^N (y_n - f(\theta_t;\x_n))^2 = 4 \hatcL(\theta_t)~.
\label{eq:sqlossgrad}
\end{equation}
Notice that, from our choice of step-size $\eta < \frac{1}{\lambda_0}$, we have that $1-\eta\lambda_0\in(0,1)$.

Continuing with our proof by induction, we take the following induction hypothesis: we assume that 
\begin{align}
\hatcL(\theta_t) \leq \left(1- \eta \lambda_0 \right)^{t-1} \hatcL(\theta_0)
\end{align}
and that $\theta_\tau\in \ballfrob$ {with $\rho_1=0$} for $\tau\leq t$.

First, based on the choice of the step sizes, we show that $\theta_{t+1} \in \ballfrob$ {with $\rho_1=0$}. To see this, note that, using similar inequalities as in our analysis for the case $t=1$,
\begin{align*}
\| \theta_{t+1} - \theta_0 \|_2 & \leq \sum_{\tau=0}^{t} \| \theta_{\tau+1} - \theta_{\tau} \|_2 = \sum_{\tau=0}^t \eta \| \nabla_\theta \hatcL(\theta_{\tau})\|_2  \leq 2 \varrho \eta   \sum_{\tau=0}^t  \sqrt{\hatcL(\theta_\tau)} \\
& \overset{(a)}{\leq} 2 \varrho \eta   \left(\sum_{\tau=0}^t \left(1 - \eta \lambda_0 \right)^{\tau/2} \right) \sqrt{\hatcL(\theta_0)}\leq 2 \varrho \eta \frac{\sqrt{\hatcL(\theta_0)}}{1 - \sqrt{1- \eta \lambda_0}} \\
& \overset{(b)}{\leq} \frac{4 \varrho \sqrt{\hatcL(\theta_0)}}{\lambda_0}\leq 4 \varrho \sqrt{c_{0,4\nu_0}} \frac{\sqrt{T} }{\lambda_0} \overset{(c)}{\leq} \rho~,
\end{align*}

%\abcomment{need to handle $\rho_1$ suitably, or can we make $\rho_1 = \rho$ for this work}
where (a) follows from our induction hypothesis, (b) follows from $\frac{x}{1-\sqrt{1-x\lambda_0}}\leq \frac{2}{\lambda_0}$ for $x<\frac{1}{\lambda_0}$, and (c) follows since $\rho = \Theta(\frac{\sqrt{T}}{\lambda_0})$.   

Now, we have
\begin{align*}
\lambda_{\min}(K_{\ntk}(\theta_t)) 
& \overset{(a)}{\geq} \lambda_{\min}(K_{\ntk}(\theta_{t-1})) - 4 c_H \varrho^2 \frac{T}{\sqrt{{m}}} \eta \sqrt{\hatcL(\theta_{t-1})} \\
& \geq \lambda_{\min}(K_{\ntk})(\theta_0) - 4 c_H \varrho^2 \eta  \frac{T}{\sqrt{m}} \sum_{\tau=0}^{t-1}  \sqrt{\hatcL(\theta_{\tau})}\\
& \overset{(b)}{\geq} \lambda_0 - 4 c_2 \varrho^2 \eta \frac{T}{ \sqrt{m}}  \left(\sum_{\tau=0}^t \left(1 - \eta \lambda_0 \right)^{\tau/2} \right) \sqrt{\hatcL(\theta_0)}\\
& \geq \lambda_0 - 8 c_2 \varrho^2 \frac{{T} \sqrt{\hatcL(\theta_0)}}{ \sqrt{m}} \frac{\eta}{1 - \sqrt{1- \eta \lambda_0}} \\
& \overset{(c)}{\geq} \lambda_0 -  \frac{T^{3/2}}{\sqrt{m}}  \frac{\bar{c} \eta }{1 - \sqrt{1- \eta \lambda_0}} \\
& \geq \lambda_0 -   \frac{2\bar{c} T^{3/2}}{\lambda_0 \sqrt{m}}
\end{align*}
where (a) follows from Lemma~\ref{lemm:NTK_step}, (b) follows by the induction hypothesis, and (c) follows with $\bar{c} = 8 \sqrt{c_{0,\sigma_1}} {c_3}$. %\varrho^2$. 
Then, with $m \geq 16 \bar{c}^2 \frac{T^{3}}{\lambda_0^4}$ we have 
\begin{align}
    \label{eq:ntk_lower}
    \lambda_{\min}(K_{\ntk}(\theta_t)) \geq \lambda_0/2~.
\end{align}
% \abdelete{A similar analysis would also give for $m {\geq 64\bar{c}^2\max\{\frac{N^{3}}{\lambda_0^4},\frac{N^4}{\lambda_0^6}\}}$,
% \begin{align}
% \label{eq:ntk_upper}
%     K_{\ntk}(\theta_t) \leq 3/2\lambda_m
% \end{align}
% where $\lambda_m = \lambda_{\max}(\theta_0)$}

Since $\theta_t,\theta_{t+1}\in \ballfrob$ {with $\rho_1=0$}, we now take the smoothness property and further obtain, using similar inequalities as in our analysis for the case $t=1$,
\begin{equation}
\begin{aligned}
\hatcL(\theta_{t+1})-\hatcL(\theta_{t})&\leq \langle \theta_{t+1}-\theta_t,\nabla_{\theta}\hatcL(\theta_t)\rangle  +\frac{\beta{T}}{2}\norm{\theta_{t+1}-\theta_t}^2_2\\
& \overset{(a)}{\leq} -\eta\norm{\nabla_\theta \hatcL(\theta_t)}_2^2 +\frac{\beta\eta^2{T}}{2}\norm{\nabla_\theta \hatcL(\theta_t)}^2_2\\
&= - \eta \left(1 - \frac{\beta\eta {T}}{2} \right)\norm{\nabla_\theta \hatcL(\theta_t)}^2_2\\
&\leq - \frac{\eta}{2} \ell_t'^\top K_{\ntk}(\theta_t)\ell_t' \\
&\leq - \frac{\eta}{2} ~\lambda_{\min}(K_{\ntk}(\theta_t))~ \|\ell'_t \|_2^2 \\
%&\overset{(c)}{\leq} - \frac{\eta_t}{2} \left( \lambda_0 - \tilde{c} \frac{N}{\sqrt{m}} \sum_{\tau=1}^t \eta_{\tau} \right) \|\ell'_t \|_2^2 \\
&\overset{(b)}{\leq} - \frac{\eta}{2}  \frac{\lambda_0}{2} 4 \hatcL(\theta_t) \\
\implies \qquad \hatcL(\theta_{t+1})&\leq \left(1- \eta  \lambda_0 \right) \hatcL(\theta_t),
\end{aligned}
\end{equation}
where (a) follows from the gradient descent update and (b) from our recently derived result. That establishes the induction step and completes the proof. \qed 

\section{Effect of perturbation}
\label{sec:app_experiments}
Although our regret bounds are for the regularized network as defined in \cref{eq:output_reg_def}, the results presented in Section~\ref{sec:expt} are for the un-perturbed network. In this section we compare the un-perturbed network with the regularized one for different choices of perturbation constant $c=c_{\rg}/{m}^{1/4}$.
\begin{figure}[!htbp]
    \centering
    \subfigure{\includegraphics[width=0.49\textwidth]{./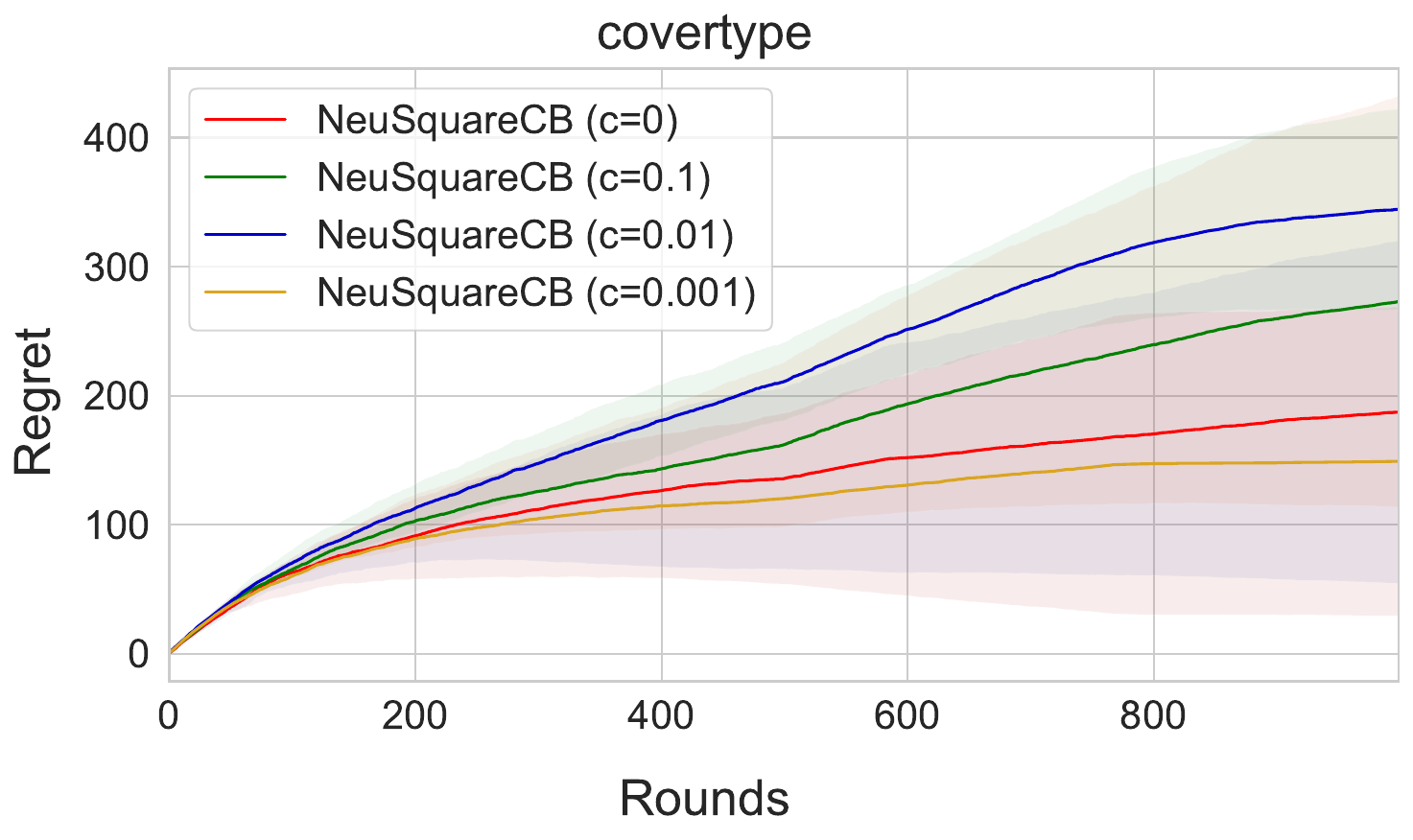}} 
    \subfigure{\includegraphics[width=0.49\textwidth]{./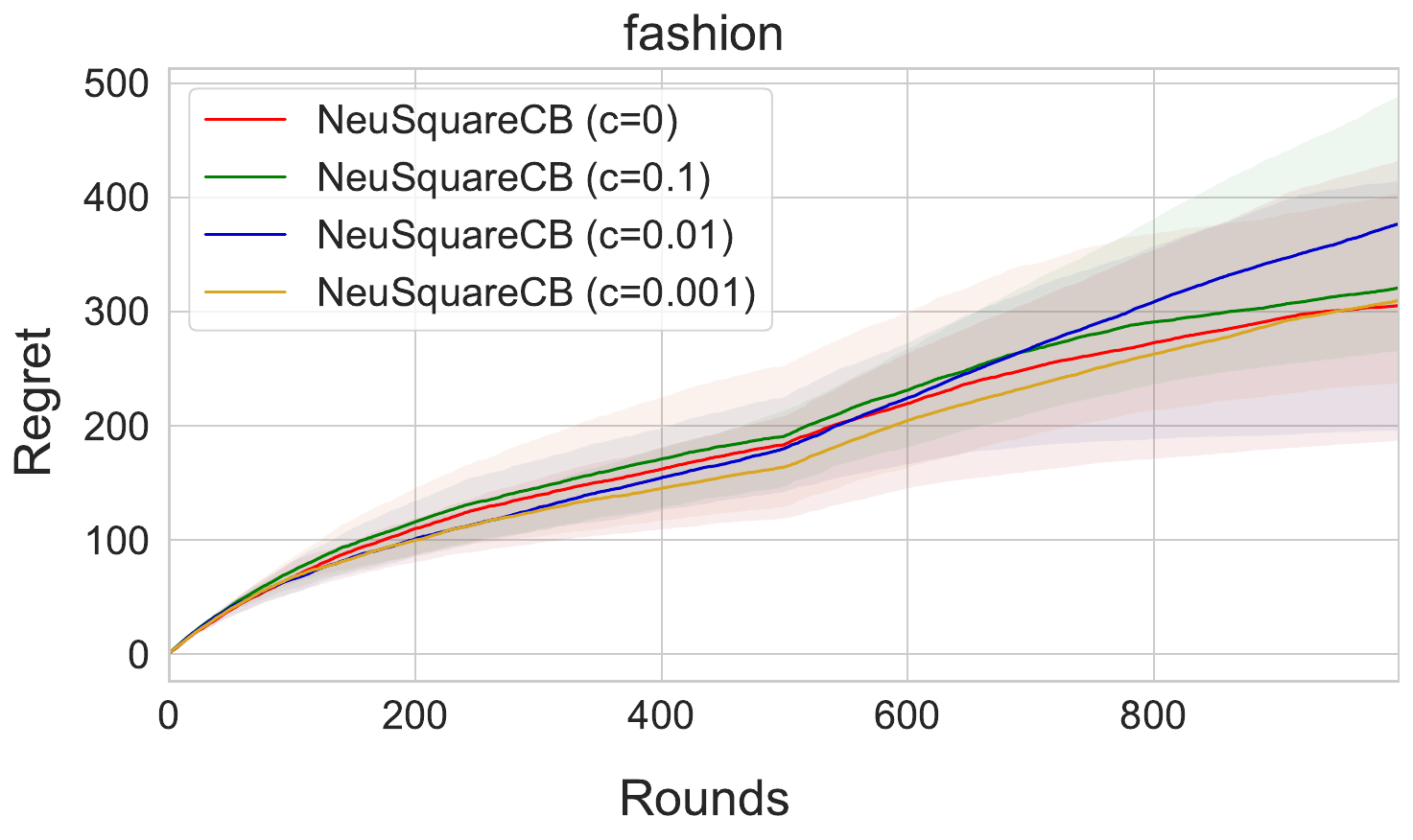}} 
    \subfigure{\includegraphics[width=0.49\textwidth]{./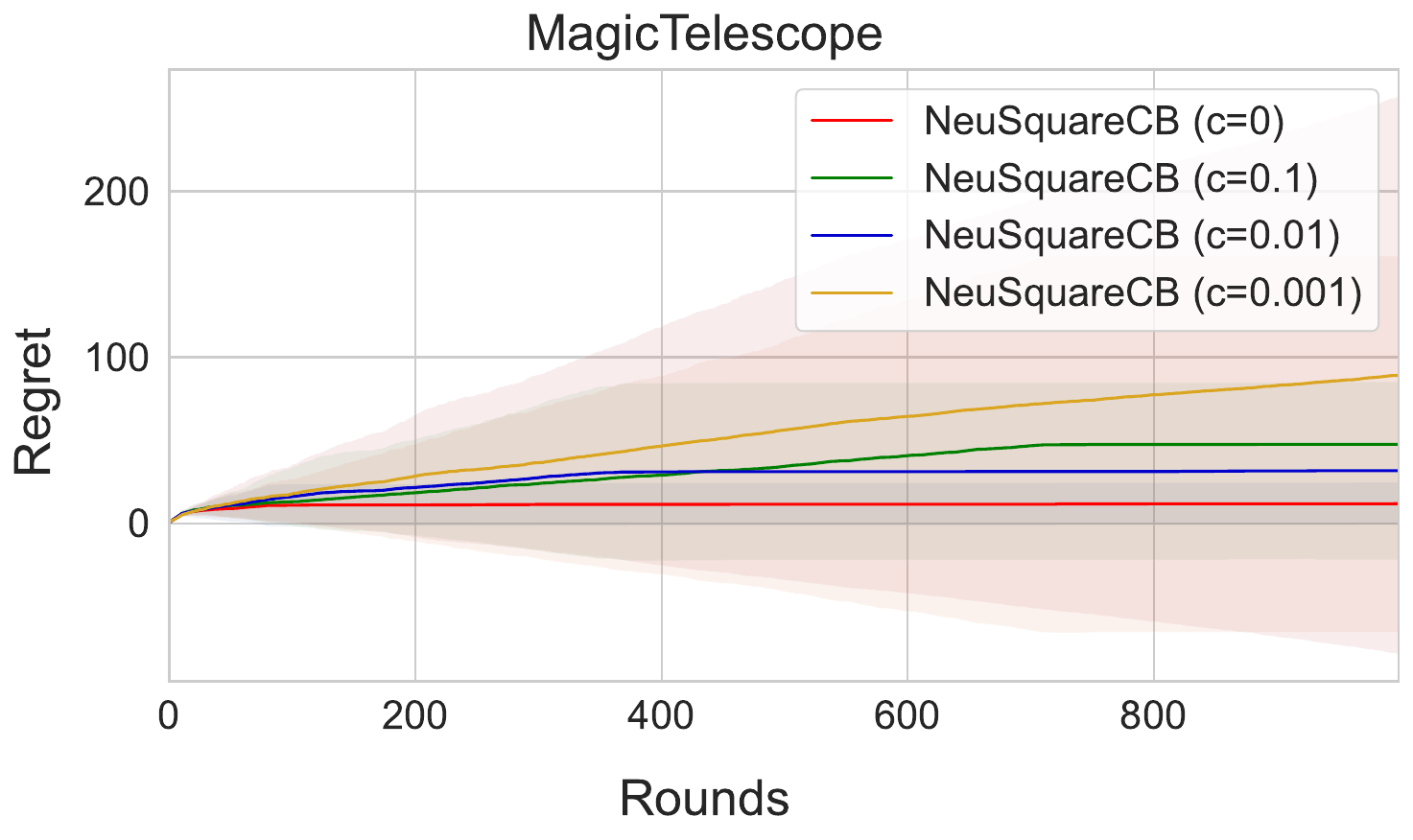}}
    %\caption{Comparison of NeuralAD with baselines on }
    \centering
    \subfigure{\includegraphics[width=0.49\textwidth]{./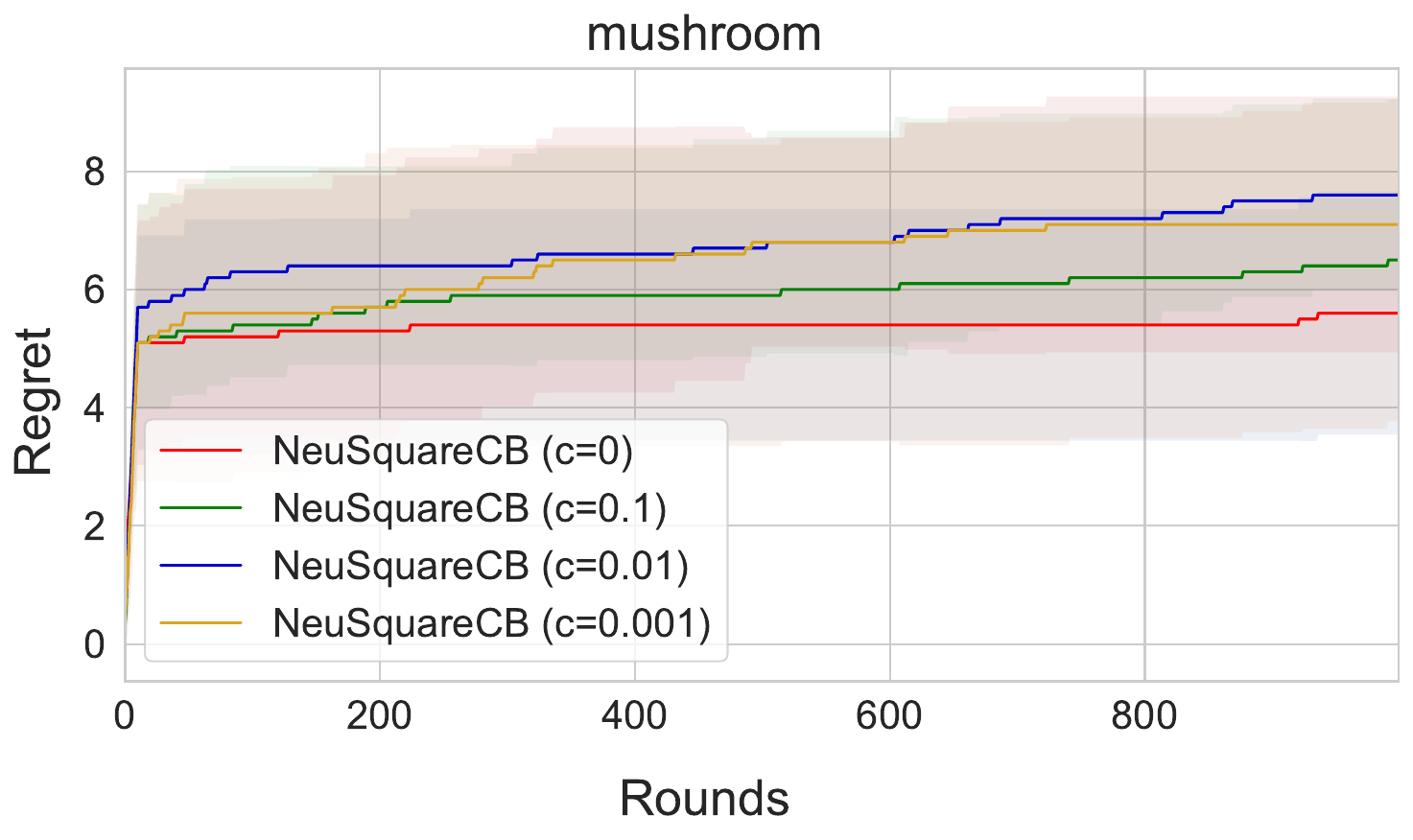}} 
    \subfigure{\includegraphics[width=0.49\textwidth]{./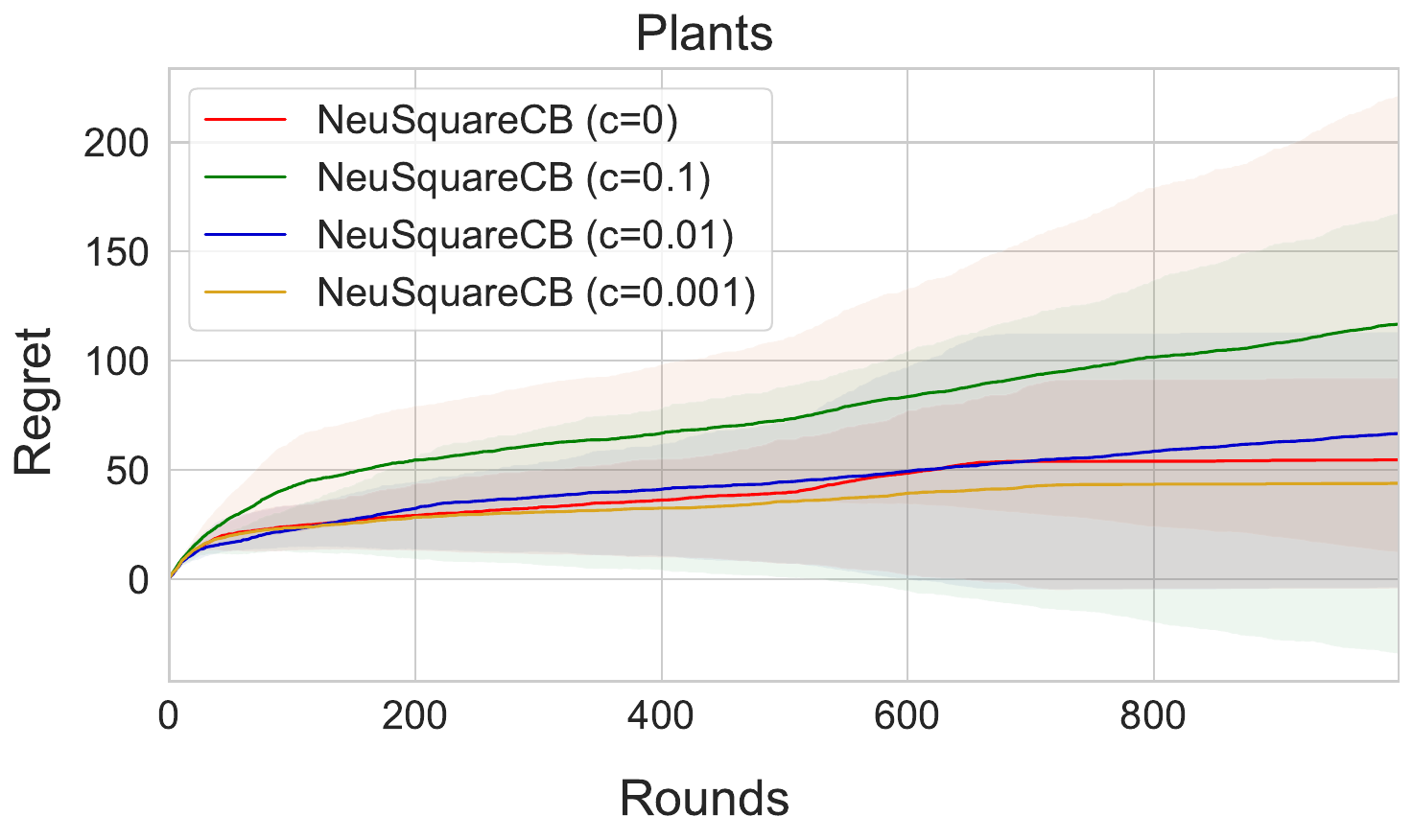}} 
    \subfigure{\includegraphics[width=0.49\textwidth]{./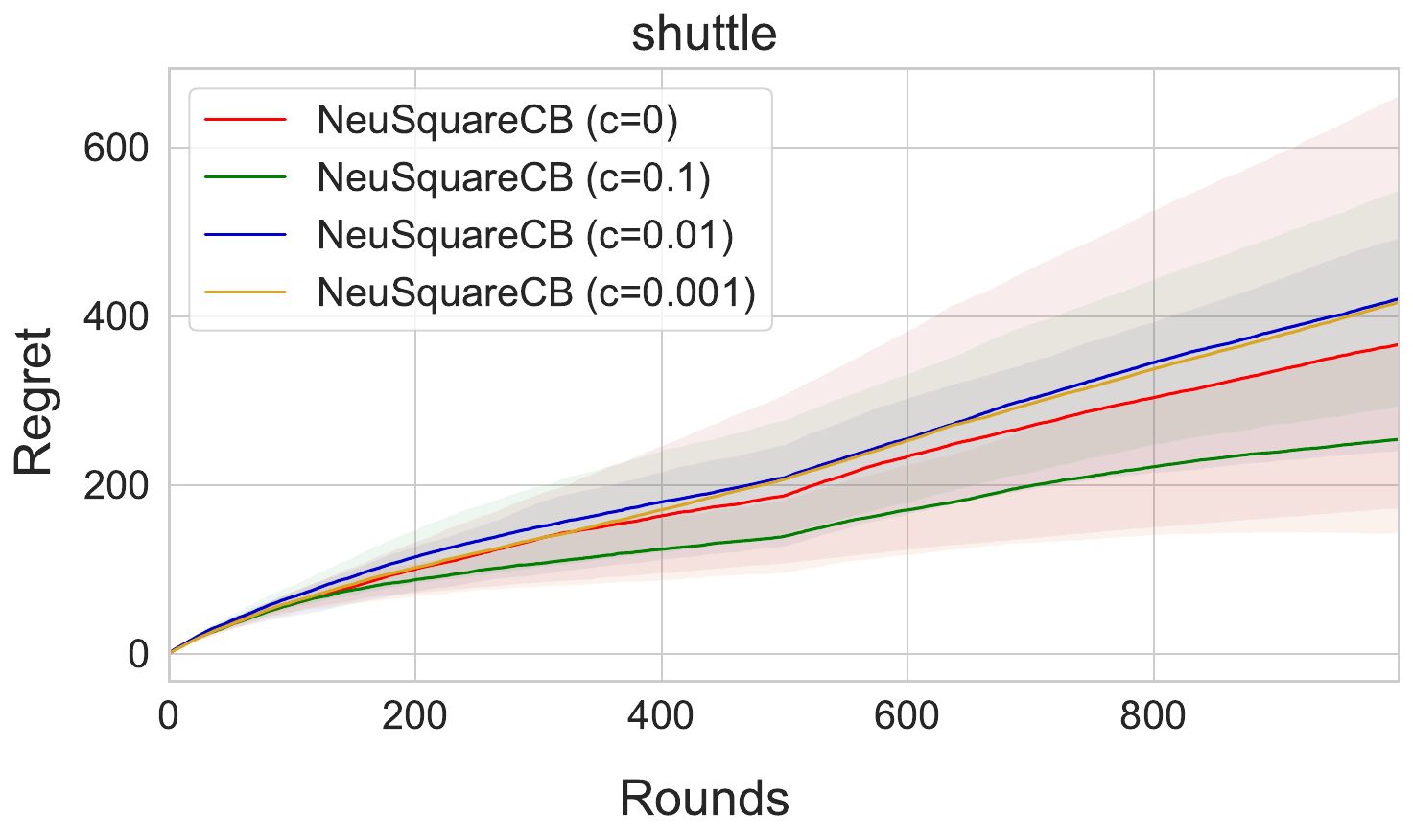}}
    \caption{Comparison of cumulative regret of $\mathtt{NeuSquareCB}$ for different choices of the regularization parameter $c=c_{\rg}/m^{1/4}$ for the model defined in \cref{eq:output_reg_def} (averaged over 10 runs).}
    \label{fig:2}
\end{figure}
\begin{figure}[!htbp]
    \centering
    \subfigure{\includegraphics[width=0.49\textwidth]{./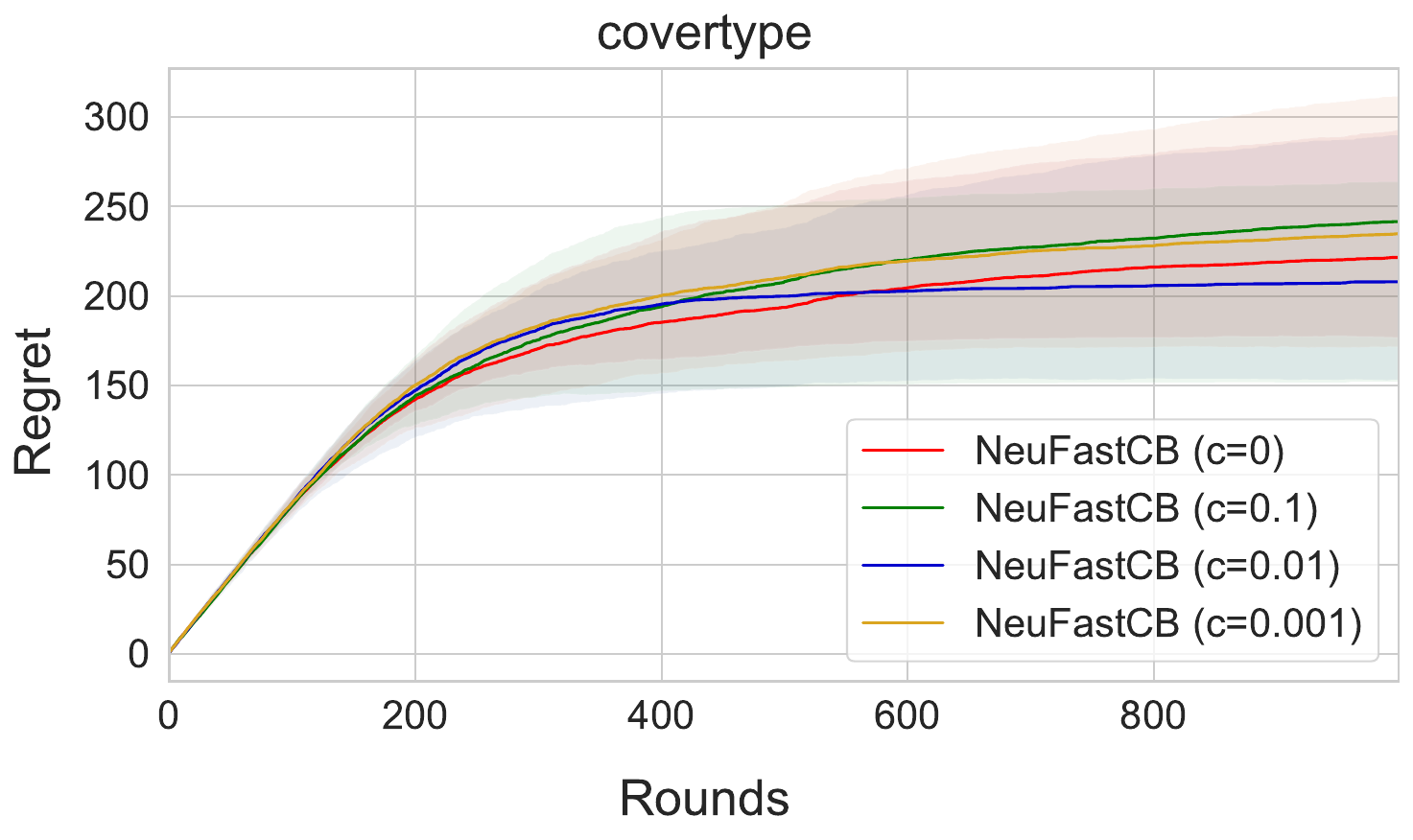}} 
    \subfigure{\includegraphics[width=0.49\textwidth]{./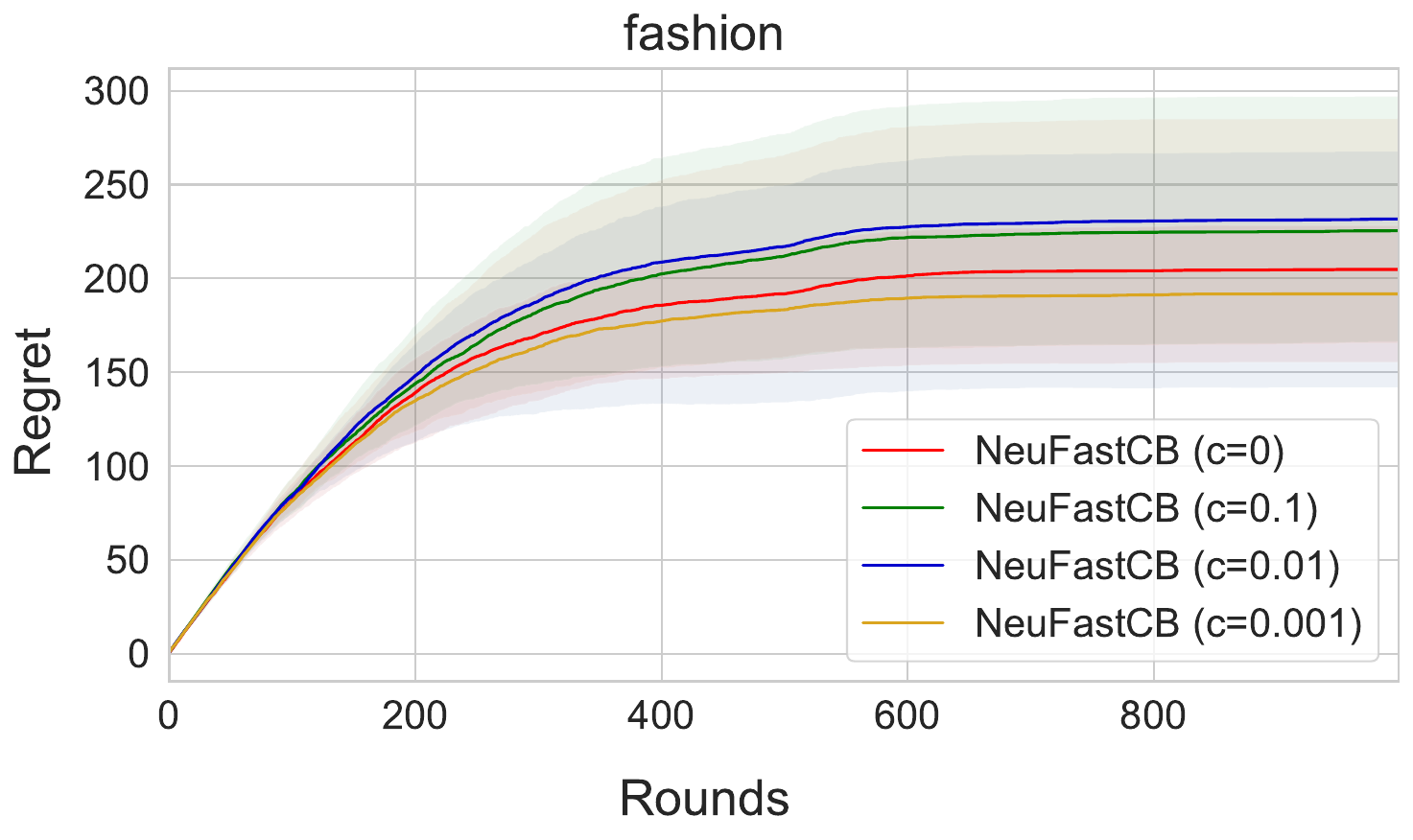}} 
    \subfigure{\includegraphics[width=0.49\textwidth]{./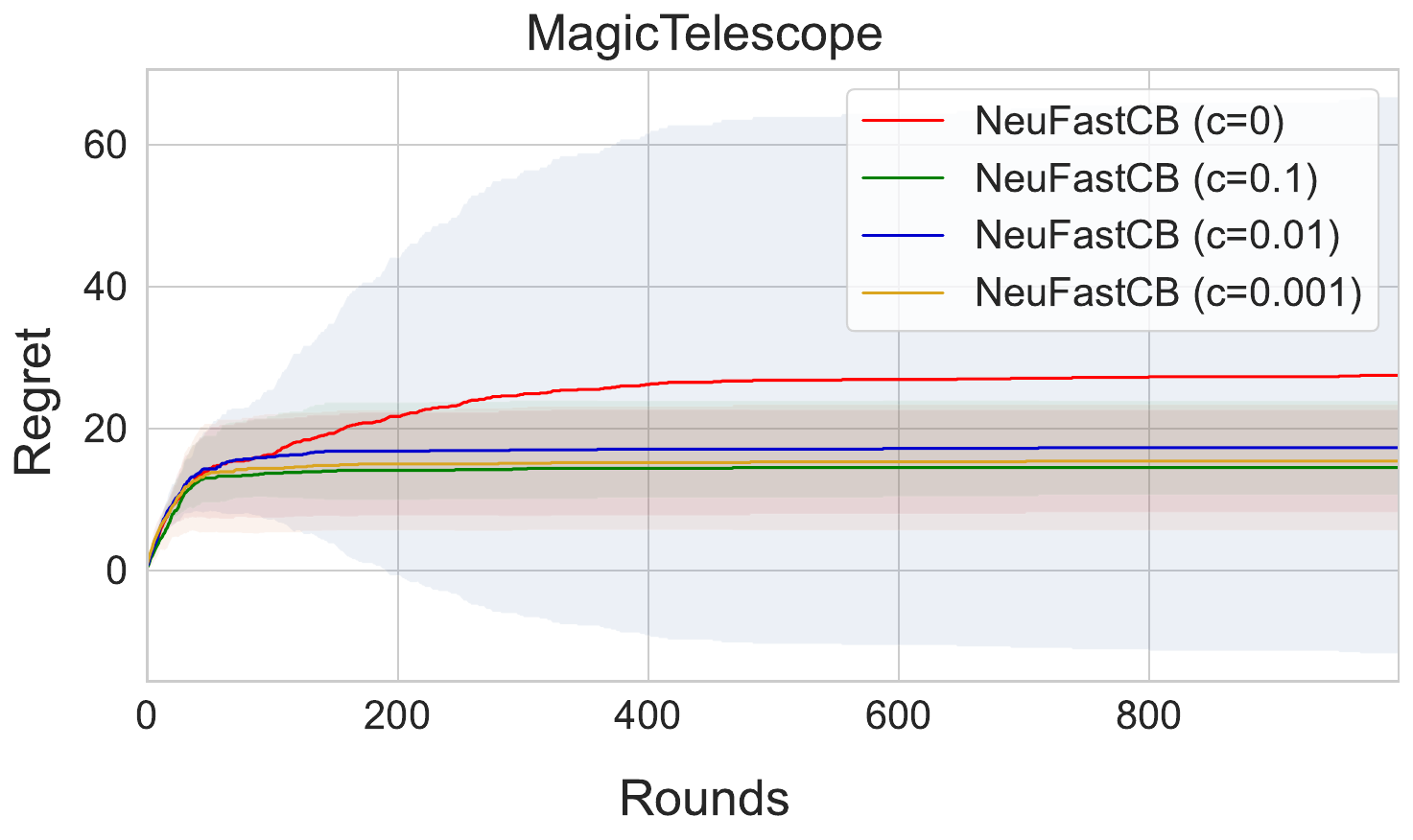}}
    %\caption{Comparison of NeuralAD with baselines on }
    \centering
    \subfigure{\includegraphics[width=0.49\textwidth]{./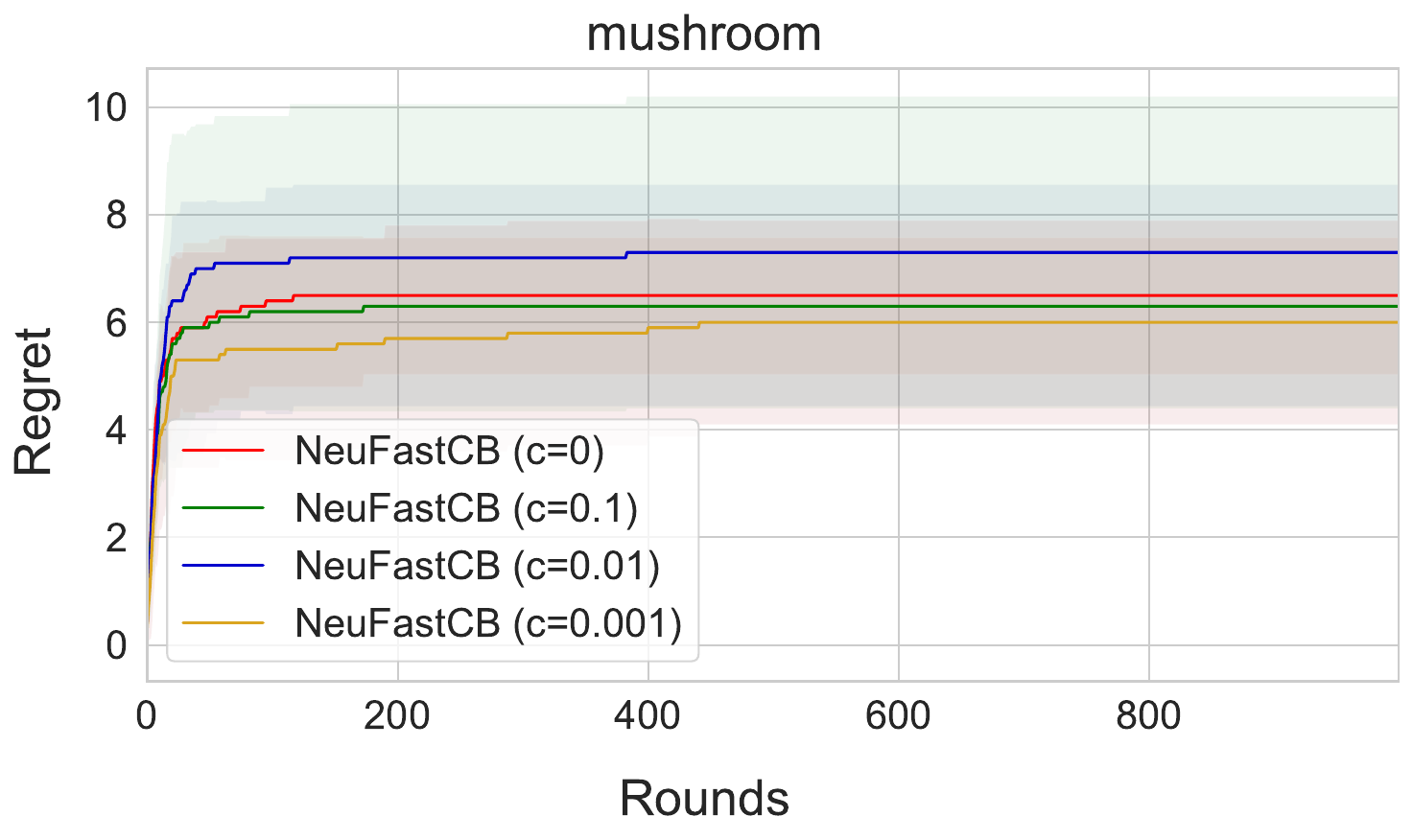}} 
    \subfigure{\includegraphics[width=0.49\textwidth]{./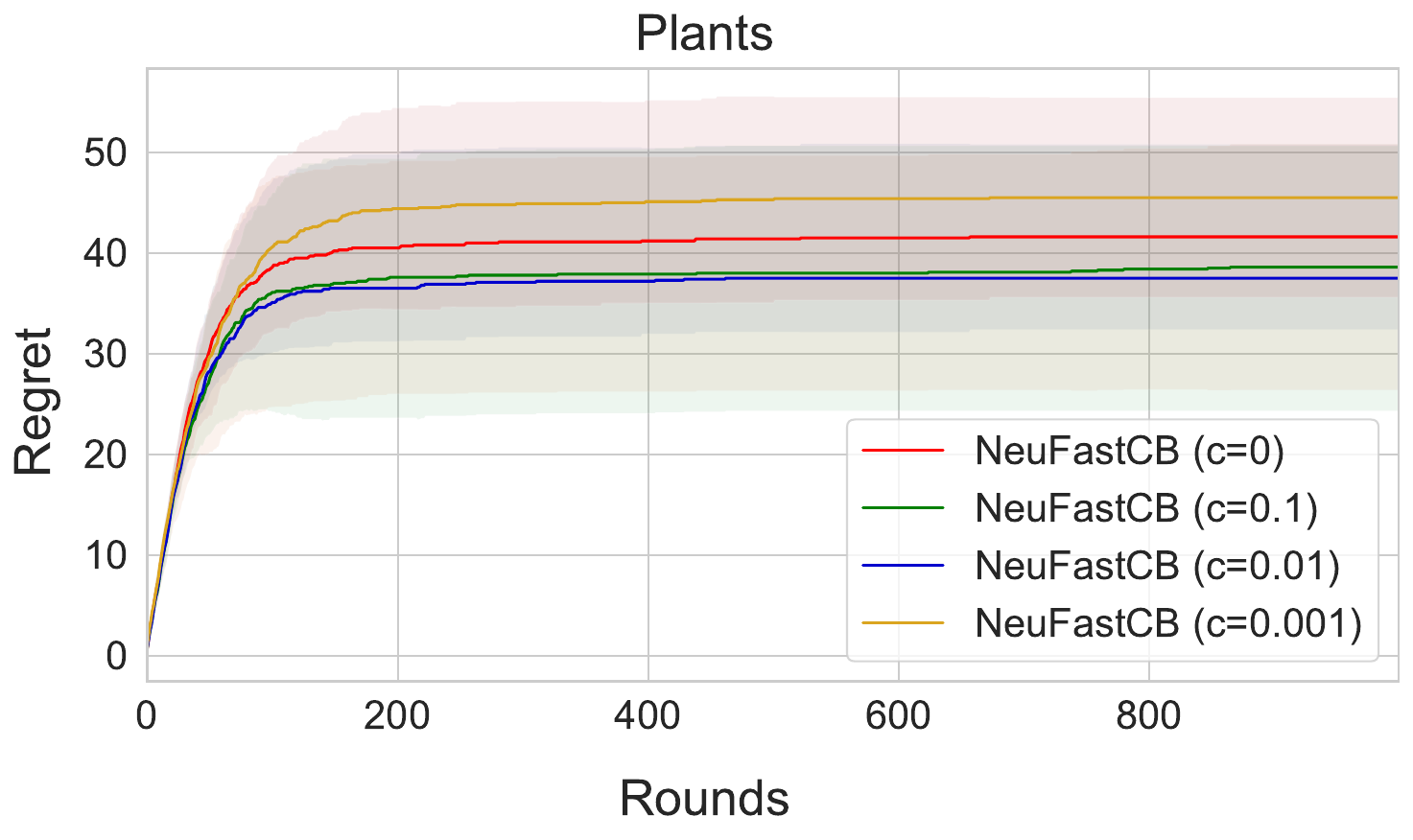}} 
    \subfigure{\includegraphics[width=0.49\textwidth]{./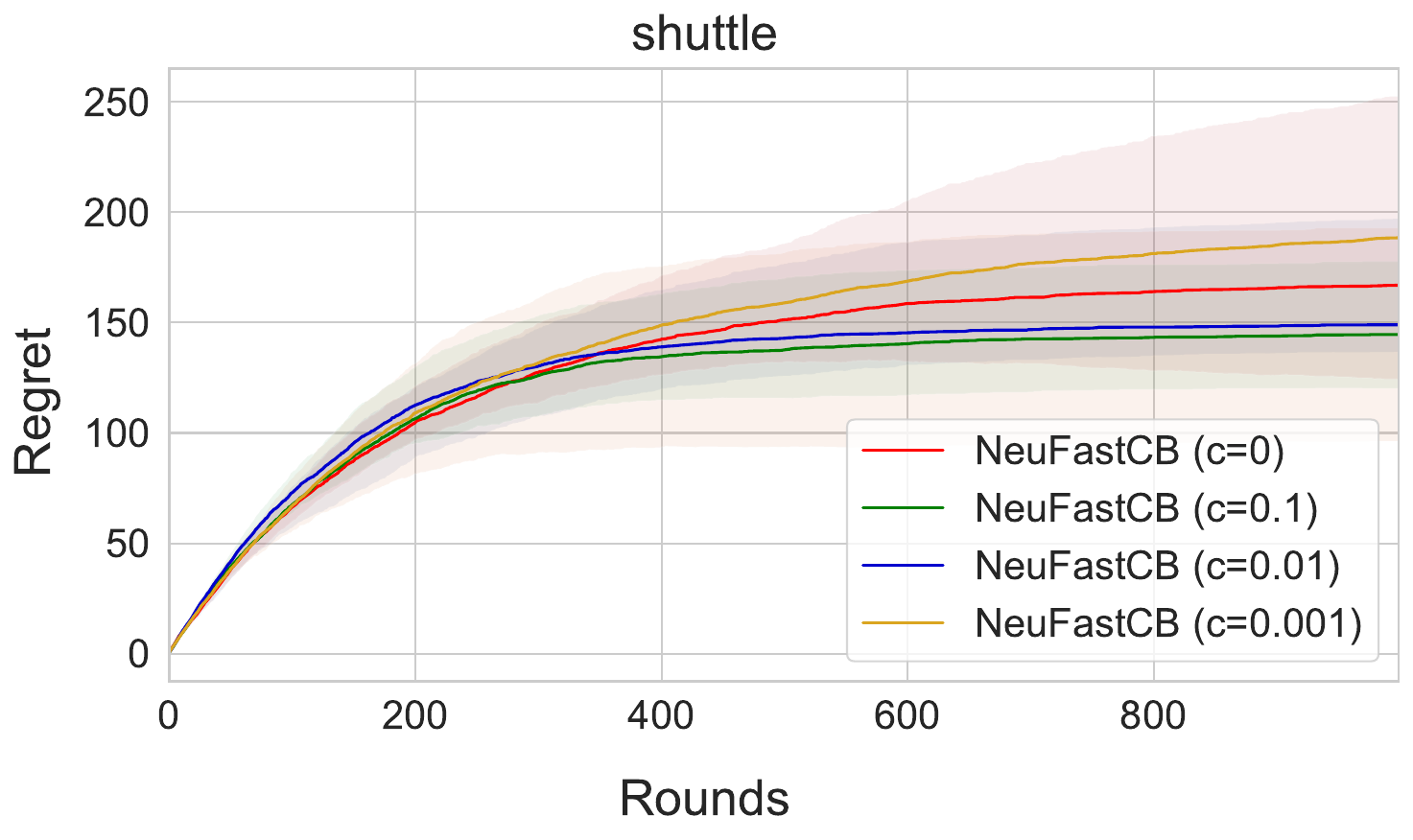}}
    \caption{Comparison of cumulative regret of $\mathtt{NeuFastCB}$ for different choices of the regularization parameter $c=c_{\rg}/m^{1/4}$ for the model defined in \cref{eq:output_reg_def} (averaged over 10 runs).}
    \label{fig:3}
\end{figure}

As is evident from Figure~\ref{fig:2} and \ref{fig:3}, the cumulative regrets attained by the un-perturbed networks $(c=0)$ are more or less similar to the perturbed one on these data-sets. However, the output perturbation ensured a provable $\mathcal{O}(\sqrt{KT})$ and $\mathcal{O}(\sqrt{KL^*} + K)$ regret bound for NeuSquareCB and NeuFastCB respectively. For the set of problems in our experiments we observe that the non-perturbed versions of NeuSquareCB and NeuFastCB behave similar to the perturbed versions, but do not come with a provable regret bound. In particular, one may be able to construct a problem where the non-perturbed version performs poorly.

% \section{Interpolation with Wide Networks}
% \input{NeurIPS/appendix/ntkopt}

\end{document}